\documentclass{article}

\usepackage{microtype}
\usepackage{graphicx}
\usepackage{subfigure}
\usepackage{enumitem}
\usepackage{booktabs} 
\usepackage{tikz-cd}
\usepackage{float}
\usepackage{capt-of}
\usepackage{etoc}
\usetikzlibrary{calc}

\usepackage[colorlinks=true, linkcolor=blue]{hyperref}

\newcommand{\narmeen}[1]{\textcolor{red}{(Narmeen: #1)}}

\usepackage[accepted]{icml2025}

\makeatletter
\renewcommand{\ICML@appearing}{}
\def\@icmlcorrespondingauthor{} 
\makeatother

\usepackage{amsmath}
\usepackage{amssymb}
\usepackage{mathtools}
\usepackage{amsthm}
\usepackage{thmtools, thm-restate}
\usepackage[capitalize,noabbrev]{cleveref}

\theoremstyle{plain}
\newtheorem{theorem}{Theorem}

\newtheorem{corollary}[theorem]{Corollary}
\newtheorem{lemma}{Lemma}[section]
\crefname{lemma}{Lemma}{Lemmas}
\numberwithin{equation}{section}
\theoremstyle{definition}

\theoremstyle{remark}

\usepackage[most]{tcolorbox}
\newtcolorbox{lemmaproofbox}{
  breakable, colback=white, colframe=black, arc=0mm,
  boxrule=0.25pt, left=10pt, right=10pt, top=10pt, bottom=10pt
}
\usepackage[textsize=tiny]{todonotes}

\icmltitlerunning{}

\begin{document}

\twocolumn[
\icmltitle{Spectral Superposition: A Theory of Feature Geometry}

\icmlsetsymbol{corr}{$\dagger$}
\icmlsetsymbol{senior}{$\ddagger$}

\begin{icmlauthorlist}
\icmlauthor{Georgi Ivanov}{Theopha,Harvard,corr}
\icmlauthor{Narmeen Oozeer}{Martian}
\icmlauthor{Shivam Raval}{Harvard}
\icmlauthor{Tasana Pejovic}{Martian}
\icmlauthor{Shriyash Upadhyay}{Martian}
\icmlauthor{Amir Abdullah}{Martian,Thoughtworks,senior}
\end{icmlauthorlist}

\icmlaffiliation{Theopha}{Theopha}
\icmlaffiliation{Martian}{Martian}
\icmlaffiliation{Harvard}{Harvard University}
\icmlaffiliation{Thoughtworks}{Thoughtworks}

\icmlcorrespondingauthor{}{georgi@theopha.com}

\icmlkeywords{Machine Learning, Feature Geometry}

\vskip 0.3in
]

\printAffiliationsAndNotice{\textsuperscript{$\dagger$}Corresponding Author.\textsuperscript{$\ddagger$}Senior Author.}

\thispagestyle{empty} 


\begin{abstract}

Neural networks represent more features than they have dimensions via superposition, forcing features to share representational space. Current methods decompose activations into sparse linear features but discard geometric structure. We develop a theory for studying the geometric structre of features by analyzing the spectra (eigenvalues, eigenspaces, etc.) of weight derived matrices. In particular, we introduce the frame operator $F = WW^\top$, which gives us a spectral measure that describes how each feature allocates norm across eigenspaces. While previous tools could describe the pairwise interactions between features, spectral methods capture the global geometry (``how do all features interact?'').
In toy models of superposition, we use this theory to prove that capacity saturation forces spectral localization: features collapse onto single eigenspaces, organize into tight frames, and admit discrete classification via association schemes, classifying all geometries from prior work (simplices, polygons, antiprisms).
The spectral measure formalism applies to arbitrary weight matrices, enabling diagnosis of feature localization beyond toy settings. These results point toward a broader program: applying operator theory to interpretability.

\end{abstract}

\section{Introduction}
\begin{figure*}
    \centering
    \begin{tikzpicture}[
    >=Stealth,
    node distance=2.5cm,
    label/.style={font=\small\itshape, align=center},
    stage/.style={font=\footnotesize, align=center, text width=2cm},
    arrow/.style={->, thick, shorten >=2pt, shorten <=2pt}
]

\begin{scope}[local bounding box=stage1,
    x={(0.5cm, -0.25cm)},
    y={(0.5cm, 0.25cm)},
    z={(0cm, 0.5cm)}]
    
    \node[stage] at (0, 0, 3.5) {Representation\\in model};
    
    \foreach \i in {-1.5,-1,...,1.5} {
        \draw[gray!40, thin] (\i, -1.5, 0) -- (\i, 1.5, 0);
        \draw[gray!40, thin] (-1.5, \i, 0) -- (1.5, \i, 0);
    }
    
    \draw[thick, blue!70!black, dashed] (0, 0, 0) -- (0, 0, -2.2);
    \fill[blue!70!black] (0, 0, -2.2) circle (2pt);
    \node[font=\footnotesize, blue!70!black] at (0.3, 0.3, -2.2) {T};
    
    \coordinate (R1) at (1.3, 0, 0);
    \coordinate (P1) at (-0.65, -1.13, 0);
    \coordinate (S1) at (-0.65, 1.13, 0);
    
    \draw[thick, red!70!black] (R1) -- (P1) -- (S1) -- cycle;
    \fill[red!70!black] (R1) circle (2pt);
    \fill[red!70!black] (P1) circle (2pt);
    \fill[red!70!black] (S1) circle (2pt);
    \node[font=\footnotesize, red!70!black] at (1.7, 0, 0) {R};
    \node[font=\footnotesize, red!70!black] at (-0.9, -1.5, 0) {P};
    \node[font=\footnotesize, red!70!black] at (-0.9, 1.5, 0) {S};
    
    \draw[thick, blue!70!black] (0, 0, 0) -- (0, 0, 2.2);
    \fill[blue!70!black] (0, 0, 2.2) circle (2pt);
    \node[font=\footnotesize, blue!70!black] at (0.3, 0.3, 2.2) {H};
    
\end{scope}

\draw[arrow] (1.3, 0) -- node[above, font=\scriptsize] {feature} node[below, font=\scriptsize] {extraction} (2.7, 0);

\begin{scope}[shift={(4, 0)}, local bounding box=stage2]
    \node[stage, above, text width=2.5cm] at (0, 1.2) {Separate concept\\vectors\\(current methods)};
    \node[font=\footnotesize] at (0, 0.8) {$\vec{R}$};
    \node[font=\footnotesize] at (0, 0.4) {$\vec{P}$};
    \node[font=\footnotesize] at (0, 0) {$\vec{S}$};
    \node[font=\footnotesize] at (0, -0.4) {$\vec{H}$};
    \node[font=\footnotesize] at (0, -0.8) {$\vec{T}$};
\end{scope}

\draw[arrow] (5.3, 0) -- node[above, font=\scriptsize] {spectral} node[below, font=\scriptsize] {measure} (6.7, 0);

\begin{scope}[shift={(8, 0)}, local bounding box=stage3]
    \node[stage, above, text width=2.5cm] at (0, 1.2) {Independent\\activation\\subspaces};
    \node[font=\footnotesize, align=left, blue!70!black] at (0, 0.4) {Rank 1: $\vec{H}$, $\vec{T}$};
    \node[font=\footnotesize, align=left, red!70!black] at (0, -0.3) {Rank 2: $\vec{R}$, $\vec{P}$, $\vec{S}$};
\end{scope}

\draw[arrow] (9.5, 0) -- node[above, font=\scriptsize] {spectral} node[below, font=\scriptsize] {signature} (11, 0);

\begin{scope}[shift={(12.5, 0)}, local bounding box=stage4]
    \node[stage, above] at (0, 1.5) {Recovered\\geometry};
    
    \draw[thick, blue!70!black] (0, -0.7) -- (0, 0.7);
    \node[font=\footnotesize, blue!70!black] at (-0.25, 0.7) {H};
    \node[font=\footnotesize, blue!70!black] at (-0.25, -0.7) {T};
    \fill[blue!70!black] (0, 0.7) circle (2pt);
    \fill[blue!70!black] (0, -0.7) circle (2pt);
    
    \coordinate (R4) at (1, 0.5);
    \coordinate (P4) at (0.567, -0.25);
    \coordinate (S4) at (1.433, -0.25);
    \draw[thick, red!70!black] (R4) -- (P4) -- (S4) -- cycle;
    \node[font=\footnotesize, red!70!black] at (1, 0.75) {R};
    \node[font=\footnotesize, red!70!black] at (0.32, -0.4) {P};
    \node[font=\footnotesize, red!70!black] at (1.68, -0.4) {S};
    \fill[red!70!black] (R4) circle (2pt);
    \fill[red!70!black] (P4) circle (2pt);
    \fill[red!70!black] (S4) circle (2pt);
\end{scope}

\end{tikzpicture}

\centering
\caption{A diagram showing the concepts of "rock(R)/paper(P)/scissors(S)" and "heads(H)/tails(T)" embedded in a three dimensional space. This is an example of structural interference, where a model causes concepts to interfere because they are related (the hand-game features interfere and the coin features interfere based on the relationships between the objects in each set, but the two sets do not interfere with each other). Current methods, which only extract features, will create a basis with 5 vectors. We capture the relationships between features in activation space using the \textit{spectral measure} to bin features by the eigenspaces they occupy. When each feature is spectrally localized (occupies a single eigenspace), we can recover the full geometry by covering it with an additional \textit{spectral signature}. These tools from operator theory let us explore the global character of interference: not just how pairs of features interact, but how \textit{all} features relate.}
\label{fig:rps-ht}

\end{figure*}
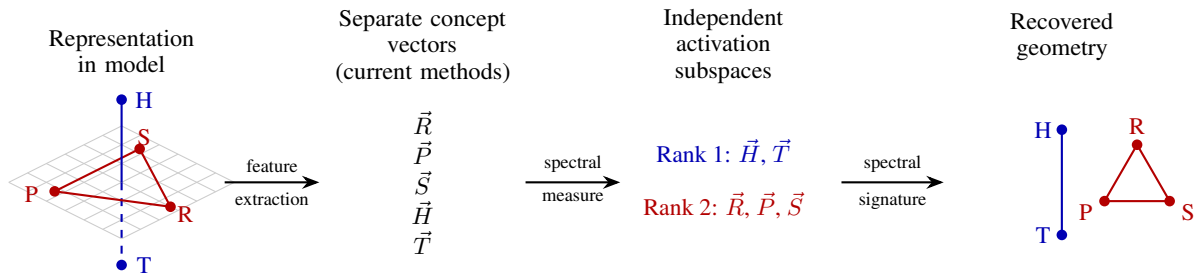

Neural networks represent far more features than they have dimensions in their activation space, a phenomenon known as \textit{superposition} \citep{toymodels, superposition_chan}. When a model needs to encode $f$ features in $d$ dimensions with $f > d$, it cannot give each feature its own orthogonal direction. Instead, features must share representational space, compressed into overlapping directions that exploit the high-dimensional geometry available to them.

A direct consequence of superposition is \textit{feature interference}: multiple features compete for the same representational directions, so modifying one inevitably perturbs others. In practice, we see activation steering methods \citep{steering_base_1, steering_base_2, steering_base_3} often produce unintended side effects, where pushing on one concept inadvertently shifts others \citep{bad_steering_1, bad_steering_2, bad_steering_3}.

Current interpretability approaches address interference by decomposing activations into sparse, independent linear features. Sparse autoencoders (SAEs) and related methods attempt to recover a dictionary of features that activate sparsely across inputs, effectively trying to undo the compression that superposition creates. The hope is to find a basis where each feature corresponds to a single, interpretable direction.

However, this approach discards something important. There are at least two distinct reasons why a model might cause features to share dimensions:

\begin{itemize}
    \item \textbf{Incidental interference.} Some features are anti-correlated in when they activate, they rarely or never co-occur in the same input \citep{lecomte2023incidental}. If ``marine biology'' and ``abstract algebra'' are both sparse features that almost never appear together, the model can efficiently reuse the same dimensions for both, disambiguating by context. This kind of sharing is a compression trick we would like to undo: the features are conceptually independent, just packed together for efficiency.
    \item \textbf{Structural interference.} Other features share dimensions because there is genuine structure in the data that makes this natural. The days of the week might be encoded in a shared subspace precisely \textit{because} the model needs to represent relationships between them, their cyclic structure, their ordering, the fact that Monday is ``between'' Sunday and Tuesday \citep{engels2024not}. Here, the geometry is not an artifact to be removed but a meaningful representation of how concepts relate.
\end{itemize}

A growing body of work demonstrates that models do encode meaningful geometric relationships between features, showing evidence of structural sharing that sparse linear decomposition discards. Models solving modular arithmetic trace helices in representation space \citep{modularaddition}. Spatial reasoning tasks induce geometric embeddings \citep{spaceandtime}. The Evo 2 DNA foundation model organizes biological species according to their phylogenetic relationships on a curved manifold \citep{treeoflife}. Similar geometric organization has been observed across domains and architectures \citep{ categoricalconcepts, anthropic2025manifolds, projectingassumptions, sequencemodels, flattohierarchical, pan2025hidden, nguyen2025angular}. These findings suggest that geometry is not an incidental artifact but a core mechanism by which features coexist and interact.

When features are embedded with non-trivial geometry, treating them as independent directions loses information. Consider a stylized example (Figure~\ref{fig:rps-ht}) of how models might represent game-playing: five features representing ``rock,'' ``paper,'' ``scissors,'' ``heads,'' and ``tails,'' embedded in three dimensions. The rock-paper-scissors features form a triangle in a shared plane; the heads-tails features lie along a separate axis. This is structural interference, the hand-game features share a subspace because their relationships matter, and likewise for the coin features, but the two groups remain orthogonal. Current feature extraction methods return five individual vectors. The geometric structure—which features share a subspace, which are orthogonal, how they are arranged, is lost.

We want to recover this global structure: not just features in isolation, but how they organize collectively into sub-geometries that may carry functional meaning. \citet{higher_order_features} show that inducing refusal in Gemma2-2B-Instruct requires ablating 2,538 SAE features, and that ``backup'' features activate when others are suppressed, suggesting that model functionality lives in collective geometric structure, not individual directions.

\textbf{Our contributions are as follows:}
\begin{itemize}
    \item We motivate the use of spectral theory (eigenspaces, etc.) to study interference by developing an intuition through our game-playing example: interference only occurs when features share subspaces of activation space (the eigenspaces).
    \item We develop a spectral theory of superposition, providing generalizable tools that can be applied in both toy and realistic settings to study feature geometry.
    \item We apply these tools to the canonical toy model of superposition \citep{toymodels}, fully characterizing the geometry of features that form in this setting.
\end{itemize}

\section{Motivating Spectral Methods}
\label{section:warm-up}

In this section, we will motivate the study of feature geometry using spectral theory by looking more closely at our games of chance example (Figure \ref{fig:triangle-digon}). There are three notable features about this example:
\begin{enumerate}
    \item \textbf{The geometric structure is permutation- invariant.} Permuting feature vectors within clusters does not change the global structure and interaction. 
    \item \textbf{The HT and RPS subspaces are independent.} No coin toss would provide a better winning chance at RPS and vice versa, and thus the concepts are encoded in orthogonal subspaces.
    \item \textbf{The geometric arrangement of features encodes precise symmetries.} In RPS: the cyclic rule of rock beats scissors, scissors beat paper, paper beats rock. In HT: and the binary opposition of heads or tails.
\end{enumerate}

The classic way of studying interference between features is the Gram Matrix $M = W^\top W$, where each entry $M_{ij} = \langle W_i, W_j\rangle = W_i^\top W_j$ is the Euclidean norm between column vectors. We could try to use this method to determine which vectors interact and which don't.
\vspace{0.25cm}
\begin{figure}[h]\label{fig:triangle-digon}
    \centering
    \includegraphics[width=\linewidth]{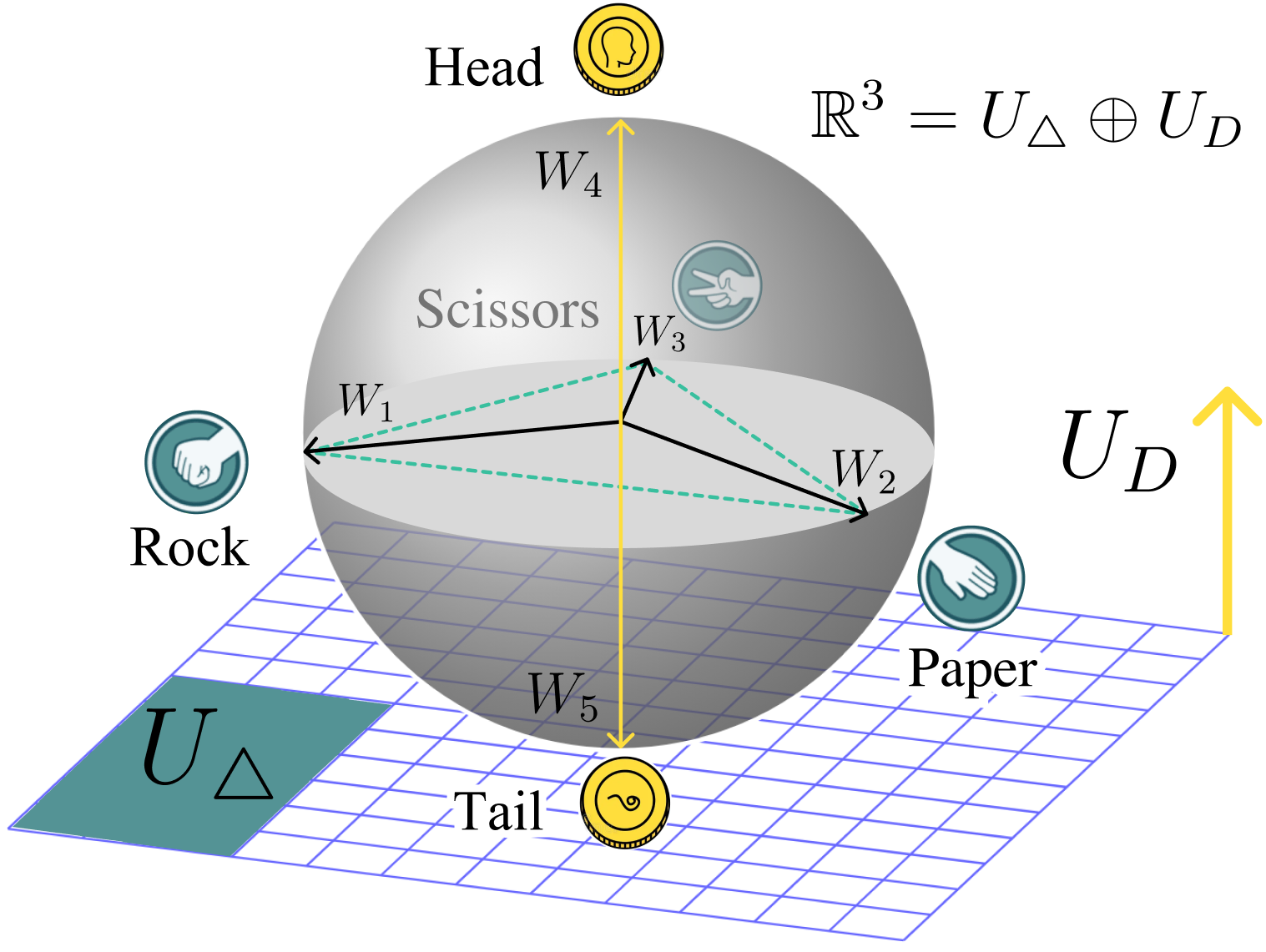}
    \label{fig:placeholder}
    \vspace{-20pt}

    \centering
    $$W = \begin{bmatrix}
      \vdots  & & \vdots  \\
      W_1 & \cdots  & W_5 \\
        \vdots & & \vdots 
      \end{bmatrix}=  \begin{bmatrix}
    \frac{1}{2} & \frac{1}{2} & -1 & 0 & 0 \\
    \frac{\sqrt{3}}{2} & \frac{-\sqrt{3}}{2} & 0 & 0 & 0 \\
    0 & 0 & 0 & 1 & -1
    \end{bmatrix}$$

    \caption{The games of chance example in more detail, both geometrically and as a matrix. Here, $W_\triangle = [W_1, W_2, W_3]$ represent the feature concepts for ``rock", ``paper", and ``scissors", respectively, arranged as an equilateral triangle in the $xy$-plane. Similarly, $W_D= [W_4, W_5]$ correspond to ``heads" and ``tails" and form an antipodal pair along the $z$-axis.}
\end{figure}

Given how we have written $W$ (Figure \ref{fig:triangle-digon}), this would work: direct matrix multiplication reveals a block matrix $M = M_\triangle\oplus M_D= (W_\triangle^\top W_\triangle)\oplus (W_D^\top W_D)$. However, this clean block-diagonal structure is an artifact of indexing choice. Only permutations (\ref{appendix:orbit} via matrices $P_\gamma$) within the local feature clusters $\Omega_\triangle = \{1, 2, 3\}$ and $\Omega_D = \{4, 5\}$ leave the block decomposition intact.

Imagine that we rearrange the column vectors or rotate the matrix; this structure would no longer be visually evident in $W$ or in $M$, but the underlying geometry would remain unchanged.

Our goal in this section is to find a way to express the global symmetries of features in $W$ (to capture the geometry), in a way that is basis invariant. This is precisely what spectral analysis allows us to do.

\noindent\textbf{2.1. Geometric Invariance}

At a high level, our argument proceeds as follows: geometric structure induces symmetry constraints; these constraints can be expressed as algebraic relations among matrices; and these relations are fully captured by spectral data. In this subsection, we assume the geometry is known and show how spectral structure encodes it. In subsection 2.2, we reverse the direction: given only the weight matrix, we recover the geometry from spectra in activation space.

We established that we need to get information about interference in a permutation invariant way. As such, consider arbitrary column labels $\Omega_\triangle = \{i_1, i_2, i_3 \}$ and $\Omega_D=\{i_4, i_5\}$ with $W_\triangle = [W_{i_1}, W_{i_2}, W_{i_3}]$ and $W_D = [W_{i_4}, W_{i_5}]$. 

The group that preserves the equilateral triangle is the dihedral group $D_3 \cong S_3$. We can denote these elements in permutation notation $\sigma=(i\ j\ k)$, where $i\rightarrow j\rightarrow k\rightarrow i$ represents the order in which the sub-indices get switched around (Figure \ref{fig:dihedral}).

\begin{figure}[h]
    \vspace{-10pt}
    \centering
    \includegraphics[width=\linewidth]{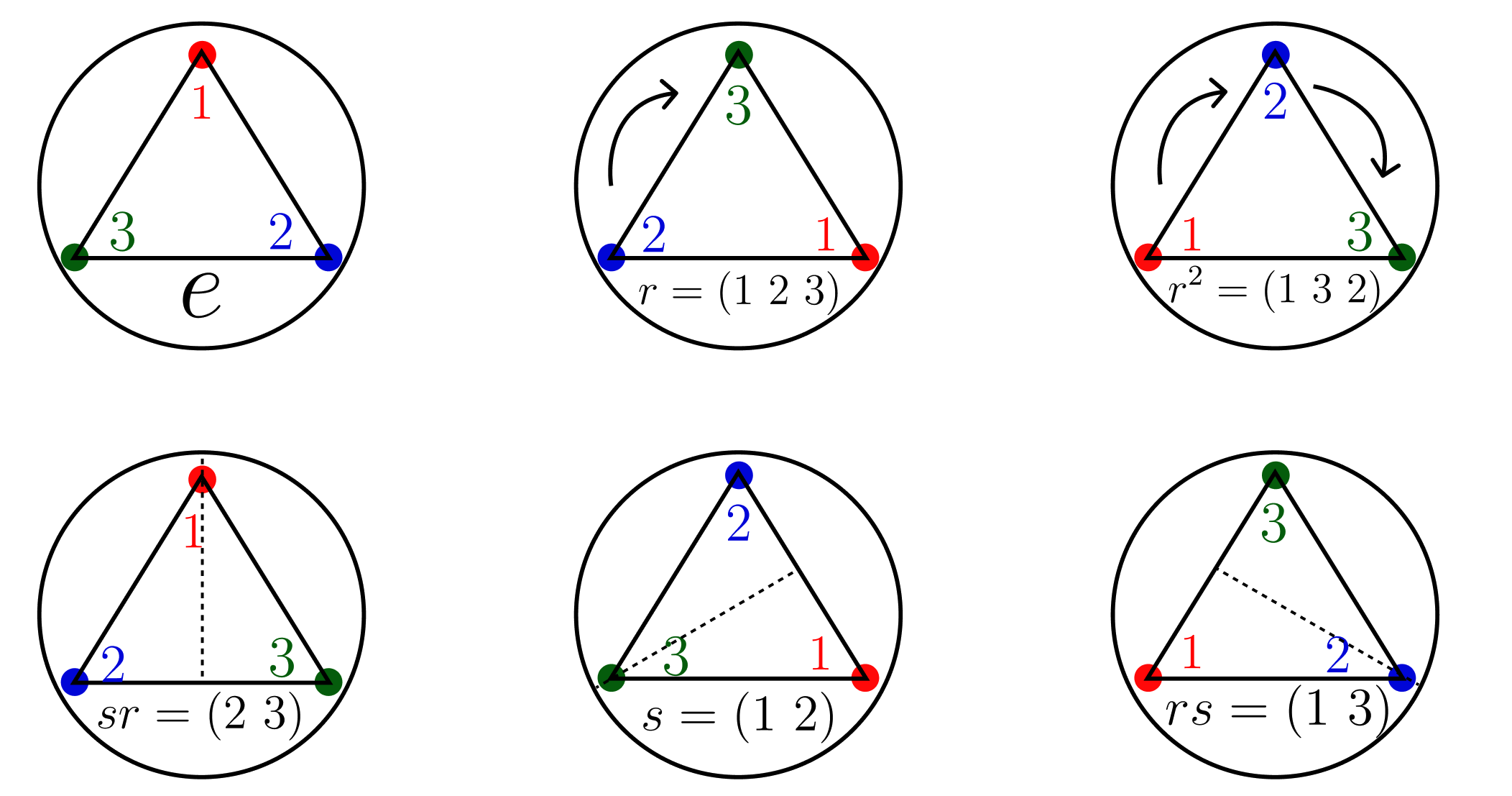}
    \label{fig:dihedral}
    \vspace{-20pt}
    \caption{The elements of $D_3$ in permutation notation.}
\end{figure}

We can express the group action in terms of adjacency matrices over $\Omega\times \Omega$, each representing the stable orbits (the trajectory of a single index after applying every possible permutation $\sigma\in D_3$). In our context,every vertex is accessible within $1$ step. As such, outside of the trivial adjacency matrix $A_0 =I_3$, representing the stationary orbit $\mathcal{O}_0 = \text{diag}(\Omega)$, we only have $A_1 = J_3-I_3$ (P.f. in \ref{appendix:simplex}), where $J_3\in \mathbb{R}^3$ is the matrix of all ones. We will focus on $\mathcal{A}_\triangle= \operatorname{span}\left\{A_0, A_1\right\}$, called the Bose-Mesner algebra. Why work with this algebra rather than simply diagonalizing $M_\triangle$ directly? The answer is that $\mathcal{A}_\triangle$ captures exactly the matrices that commute with all symmetry operations---it is the centralizer of $D_3$ (Pf.\ in \ref{appendix:orbit}). Any $D_3$-invariant operator, including $M_\triangle$, must lie in this algebra and hence decompose over its spectral projectors $S_0, S_1$. This is more powerful than generic diagonalization: it tells us not just the eigenvalues, but \textit{why} they arise from symmetry. The Bose-Mesner algebra is a central object in the theory of association schemes \cite{rabailey} and comes equipped with its own spectral theory over strata $\mathbb{R}^{\Omega_\triangle} \cong \mathbb{R}^3 \cong V_0 \oplus V_1$ (Lemma \ref{lemma:2.6}).
The corresponding stratum projectors $S_0$ and $S_1$ serve as a more convenient basis of $\mathcal{A}$. There are orthogonality relations connecting them (Lemma \ref{lemma:characters}): 
\vspace{-0.2cm}
\begin{equation}A_i = \sum_{e=0}^1 C(i, e) S_e, \quad S_e = \sum_{i=0}^1 D(e, i)A_i\end{equation}\vspace{-0.2cm}

The $C(i, e)$ entries are the eigenvalue of $A_i$ on $W_e$, forming a character table $C\in \mathbb{R}^{2\times 2}$. The values of $D(e, i)$ comprise a matrix that is the inverse of $C$. The same lemma states that $S_0 = |\Omega_\triangle|^{-1}J = J/3$ and provides us with explicit values for some of the coefficients, such as $C(0, 0) = C(0, 1) = 1$ and $C(1, 0) = a_i= 2$. Here $a_i$ is the non-trivial orbit's degree, representing the number of neighbors an index can reach under the group action. By using the latter information along with $(2.1)$, we can explicitly calculate $S_1 = A_0 - S_0 = I_3-\frac{1}{3}J_3$. With both projection matrices we recover $V_0 = \text{span}(\mathbf{1}_3)$ and $V_1 = \mathbf{1}_3^\perp$. Lastly, by solving for $A_1 = 2S_0 + C(1,1)S_1$, we complete the character table with $C(1, 1) = -1$ and $D$ along with it per Lemma \ref{lemma:characters}: $$C = \begin{bmatrix}1 & 1 \\ 2 & -1\end{bmatrix}, \quad D = \frac{1}{3}\begin{bmatrix}
1 & 1 \\ 2 & -1
\end{bmatrix}$$
Using the characters and the fact that $M_\triangle$ is $D_3$-invariant (Pf. in \ref{appendix:simplex}), we can express it over the basis $A_i$ as:
$$M_\triangle  =  I - \frac{1}{2} (J-I) =\frac{3}{2}I - \frac{1}{2}J = \frac{3}{2}S_1$$
Now, let us proceed with the digon, whose group is $\Gamma \cong S_3 \cong C_2$. This is the cyclic group of two elements $\{e, s\}$, namely the identity $e$ and $s$ is an element of order $2$, meaning that $s^2=1$. This visually represents swapping around the two nodes of the digon (i.e. going from heads-tails to tails-heads). Per the fact that this is a simplex (i.e one can reach from one node to every other), we have the same number of orbits/associate classes $\mathcal{C}_0 =\operatorname{diag}(\Omega_D)$ and $\mathcal{C}_1 = \{(4, 5); (4, 5)\}$ with adjacency matrices $A_0' = I_2$ and  $A_1'=J_2-I_2$. Similarly, Lemma \ref{lemma:2.6} for $\mathcal{A}_D = \operatorname{span}(A_0', A_1')$ yields $\mathbb{R}^{\Omega_D} \cong \mathbb{R}^2 = U_0\oplus U_1$ and by Lemma \ref{lemma:characters} $S_0' = \frac{1}{2}I$, $S_1' = I-\frac{1}{2}J$. As illustrated below, the spaces $\mathbb{R}^{\Omega_\triangle}$(left) and $\mathbb{R}^{\Omega_D}$ (right) decompose into a centroid subspace (red arrows) and a difference subspace (the shaded plane/yellow diagonal, corresponding to $S_1$ and $S_1'$). The identities $M_\triangle = \frac{3}{2}S_1$ and $\operatorname{rank}(M_\triangle)=2$ show that, within each symmetric cluster, the Gram operator acts as a projector that annihilates the centroid with eigenvalue $0$ and scales the difference plane by $\lambda_\triangle = \frac{3}{2}$ or $\lambda_D = 2$, respectively. This immediately implies an intrinsic rank drops $\operatorname{rank}(M_\triangle)=2$ and $\operatorname{rank}(M_D)=1$. 
 
\begin{figure}[h]
    \vspace{-10pt}
    \centering
    \includegraphics[width=0.6\linewidth]{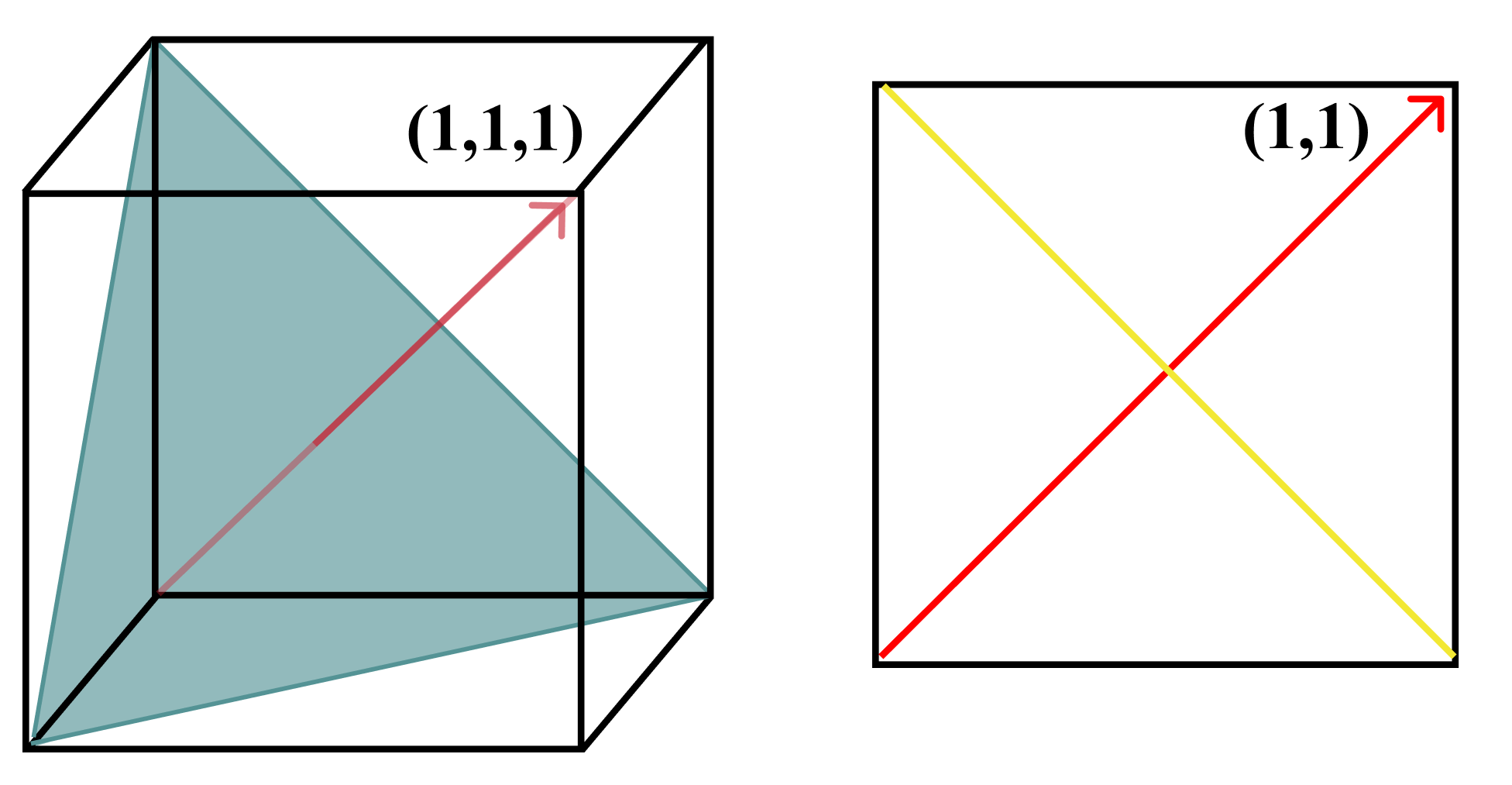}
    \label{fig:strata}
    \caption{Association Scheme Strata}
    \vspace{-10pt}
\end{figure}
 However, this description is not visible at the level of Gram entries once feature indices are scrambled. What survives the index permutation is only the spectral content (eigenvalues/eigenspaces up to relabeling), which is why we next pass to an activation-space operator that is itself invariant under relabeling. \vspace{0.1cm}
\newline 
\noindent\textbf{2.2. Spectral Bridge}

Section 2.1 showed that \textit{known} geometry is encoded in spectral data. But in practice, we observe only the weight matrix $W$, not the underlying feature structure. How can we recover geometry without knowing it in advance? The key is to work in activation space rather than index space. The Gram matrix $M = W^\top W$ transforms under index permutations as $M \mapsto P^\top M P$, scrambling its entries. The frame operator $F = WW^\top$, by contrast, is invariant: $(WP)(WP)^\top = WW^\top = F$. This makes $F$ the natural object for basis-invariant geometric analysis.

Up to now we have described the underlying symmetry as spectral modes of the Gram $M$ (index-dependent) in the abstract concept/index space $\mathbb{R}^{\Omega_\triangle}\oplus \mathbb{R}^{\Omega_D}$. However, the geometry that actually determines interference and capacity lives in activation space $\mathbb{R}^3$. Scrambling feature indices corresponds to right-multiplying $W$ by a permutation matrix $P_\gamma$, i.e. $W\rightarrow WP_\gamma$ (Pf. in \ref{appendix:orbit}). This conjugates the Gram $M\rightarrow P^\top MP$, obscuring the local block structure in the entries. Thus, if we want a genuinely label-free description of geometry-informed interference we must pass to an activation-space object that is invariant under such relabelings and preserves the spectral information we uncovered earlier. The canonical such object is the frame operator $F: = WW^\top\in \mathbb{R}^{3\times 3}$, which remains unchanged under index permutations $(WP_\gamma)(WP_\gamma)^\top = WP_\gamma P_\gamma^\top W^\top =WW^\top = F$. And while gluing back together $M_\triangle$ and $M_D$ in the concept space is non-trivial due to the arbitrary indexing, the process becomes straightforward for the Frame operator $F=\sum_{i=1}^n W_iW_i^\top $: 

 \vspace{-10pt}
 \begin{equation}
 F= \sum_{i\in \Omega_\triangle}W_iW_i^\top + \sum_{i\in \Omega_D}W_iW_i^\top = F_\triangle + F_D
 \end{equation}

We use the frame operator because of a precise spectral correspondence between $M$ and $F$. Let $M = \sum_e \lambda_e E_e$ be the Gram's spectral decomposition \ref{lemma:spectral-decomposition}. Then, by the Spectral Correspondence Lemma \ref{lemma:spectral_correspondence}, $M$ and $F$ share the same non-zero eigenvalues $\lambda_e>0$ and the primitive idempotents $E_e$ of $M$ lift to 
 \begin{equation}
 \label{equation:spectral_decomposition}
    P_e=\lambda_e^{-1}WE_eW^\top.
 \end{equation}
 We can apply this lemma to derive a corollary establishing the precise relationship between the local cluster projectors and the frame's decomposition (Corollary \ref{corollary:triangle-digon}): $F = 2P_D+\frac{3}{2}P_\triangle$, where $P_D$ and $P_\triangle$ are projectors onto the subspaces along which the digon and triangle live, respectively. Furthermore, for $C\in \mathcal\{\triangle, D\}$, we have the following consequence: $\frac{1}{\lambda_C} = \frac{\dim (U_C)}{\sum_{I\in \Omega_C}\|W_i\|^2}= \frac{\operatorname{rank}(P_C)}{|\Omega_C|},$ namely that the fractional dimensionality $D_i=d/p$, denoting the fraction of features that share dimension, is recoverable from spectral signature. In the following section, we provide detailed analysis about the meaning of our result and how the ideas applied here generalize to an arbitrary weight matrix setting.

\section{Generalizable Tools}
\label{section:generalizable}
In the previous section, we show that the geometry can recovered from the frame operator that lives in the latent space and can encode structured representations. In this section, we focus on a set of tools from the previous section that generalize to arbitrary weight matrices. In particular,  we show that the fact that features arrange along eigenspaces can determine their interaction patterns (given by the spectral bridge), the capacity consumed by a given feature (fractional dimensionality) as well as how localized the feature is.

\textbf{Spectral Bridge: }
The spectral bridge introduced in Section \ref{section:warm-up} is generalizable to arbitrary weight matrices $W$ and classifies the intertwining of the input space (via the Gram matrix $M$) and the activation space (via the Frame operator $F$). We are able to recover the geometry from the spectral behavior of $F$, classifying it by its invariance with the help of association schemes (Appendix \ref{appendix:orbit}). More specifically, $W$ intertwines the decompositions of $M$ and $F$. As a result, the nonzero eigenspaces of $M$ and $F$ are in one-to-one correspondence. An  equally important consequence of the decomposition of the frame operator into orthogonal subspaces is that feature interactions are confined to shared spectral subspaces: components supported in the same induced eigenspace can interact, components supported in different eigenspaces are non-interacting, and directions orthogonal to $\operatorname{Im}(W)$ are inert, meaning they are annihilated by $W^\top$ and therefore do not participate in feature interactions or capacity usage.
 
\textbf{Spectral Measure: } While the global Empirical Spectral Density \citep{htsr1} describes \emph{where capacity exists}, the collection of spectral measures describes \emph{how individual features use that capacity}. Two models may therefore exhibit similar heavy-tailed global spectra while possessing very different patterns of feature localization, interference, and effective dimensionality which is revealed by the spectral measure.

The primary object of study here is the per-feature dimensionality, defined in Toy models \cite{toymodels} to study interactions between features encoded as:
\begin{equation}D_i=\frac{\|W_i\|^4}{\sum_{j=1}^n\left({W}_i^\top{W}_j\right)^2}\end{equation}

It represents the degree of superposition with other features, where $D_i\approx 1$ when the feature vector $W_i$ has its own dedicated dimension (i.e. it is approximately orthogonal to $W_j$, where $i\neq j$) and $0\leq D_i\ll 1$ when it co-interacts with many others in superposition.  The current form hides some of the more interesting relations. We can rewrite it as an expression of the Gram $M$ and the frame operator $F$ matrices as follows (Lemmas \ref{lemma:dimensionality_gram} and \ref{lemma:dimensionality_kappa}):
\begin{equation}D_i = \frac{(M_{ii}^2)}{(M^2)_{ii}} = \frac{1}{\kappa_i} \|W_i\|^2,\end{equation}\label{eq:3.2}
where $\kappa_i = \frac{W_i^\top F W_i}{\|W_i\|^2}$ is the Rayleigh quotient of $F$ at $W_i$. Note that $\||W_i\||>0$ at initialization with probability $1$ and under the capacity saturation we observe, it is stable w.r.t. to rank collapse (after noticing $\epsilon u$ with $\epsilon\rightarrow 0$ preserves the bound \ref{lemma:cyclic_cs}). 

 We can view the Gram first form as a signal-to-noise ratio derived from pairwise correlations, while the second one recasts this as a spectral problem.
We now introduce the \emph{spectral measure}, whose role is to provide a measure of how features allocate their mass across eigenmodes.

Consider the weight matrix, with associated Gram and frame operators $M$ and $F$. As shown earlier, all nontrivial interaction structure in activation space is captured by the spectral decomposition (Equation \ref{equation:spectral_decomposition}) where the $P_e(t)$ are mutually orthogonal projectors corresponding to independent spectral modes. To describe how an individual feature participates in this structure, for each feature vector $W_i$, we define $p_{i,e} := \frac{\|P_e W_i\|^2}{\|W_i\|^2}$ which measures the fraction of the feature’s squared norm supported in spectral mode $e$. We note that these coefficients are nonnegative and Lemma \ref{lemma:spectral_measure_probability_distribution} shows that they sum to one, and therefore form a probability distribution. This motivates the definition of the spectral measure
\begin{equation}
    \label{equation:probability_distribution_eigenmodes}
    \mu_i(t) := \sum_{e \in \mathcal{E}_+} p_{i,e}(t)\,\delta_{\lambda_e(t)}.
\end{equation}

The spectral measure provides a compact description of how a feature allocates its mass across interaction modes. Moreover, operator-level quantities can now be interpreted probabilistically. Lemma \ref{lemma:moments} shows that all moments of $\mu_i$ are accessible via powers of the frame operator. In particular, the Rayleigh quotient admits the represention $\kappa_i
= \frac{W_i^\top F W_i}{\|W_i\|^2}
= \mathbb{E}_{\mu_i}[\lambda].
$

Finally, fractional dimensionality can be written as
\begin{equation}
    D_i(t) = \frac{\|W_i(t)\|^2}{\mathbb{E}_{\mu_i(t)}[\lambda]},
\end{equation}

making explicit that effective dimensionality is governed by how a feature distributes its mass across the spectrum.
While the ESD provides a \emph{global} summary of how capacity is distributed across spectral scales, it does not describe how individual features participate in this structure. To address this, our framework introduces a \emph{per-feature spectral measure}. For each feature vector $W_i$, the coefficients $p_{i,e}$ in Equation \ref{equation:probability_distribution_eigenmodes} form a probability distribution over spectral modes and quantify the fraction of the feature’s squared norm supported in each eigenspace.

This construction allows us to distinguish between two regimes at the level of individual features. A feature is said to be \emph{delocalized} if its spectral measure $\mu_i$ spreads mass across many eigenvalues, indicating participation in multiple interaction modes. Conversely, a feature is \emph{localized} if $\mu_i$ concentrates on a small number of eigenvalues, or collapses to a single Dirac mass. In the extreme case $\mu_i = \delta_{\lambda}$, the feature lies entirely within a single eigenspace of the frame operator.

\section{Solving Toy Models of Superposition}
\label{section:tms}

In this section, we show how some of machinery from the spectral framework developed above can be used to exhaustively explain superposition in toy models. We begin by empirically observing that trained models operate near capacity saturation, motivating a regime in which fractional dimensionality exhausts the rank of the weight matrix. Under this assumption, we prove that all features must spectrally localize, collapsing onto individual eigenspaces of the frame operator (Theorem \ref{theorem:spectral_localization}). Once localized, features no longer trade capacity across eigenspaces: fractional dimensionality becomes linearly determined by feature norm and spectral scale (Corollary \ref{corollary:projective_linearity_section4}), and features within each eigenspace organize into tight frames (Theorem \ref{theorem:decomp}). This tight-frame structure makes feature geometry identifiable via association schemes (Theorem \ref{theorem:spectral-identification}), enabling discrete geometric classification. Finally, we connect these static results to training dynamics, showing how gradient flow induces spectral mass transport and eigenvalue drift, and why capacity-saturated tight-frame configurations emerge as stable fixed points. Together, these results provide a complete picture of how superposition resolves into structured, stable geometry under spectral constraints.

The ``Toy Models of Superposition" (TMS) paper by Anthropic \cite{toymodels} is one of the seminal technical contributions to the understanding of polysemanticity. The phenomenon is explored by studying  synthetic input vectors $x$ that simulate the properties of the underlying features. Each has an associated sparsity $S_i$ (i.e $\mathbb{P}(x_i=0) = S_i$ and $x_i\sim \text{Unif}(0, 1)$ otherwise) and importance $I_i$ for each dimension $x_i$. Define the projection as a linear map $h = W x ,$ where each column $W_i$ corresponds to the direction in the lower-dimensional space that represents a feature $x_i$. To recover the original vector, we use the transpose $W^T$, giving the reconstruction $x'$. With the addition of the ReLU nonliearity, the model becomes $x' = \text{ReLU}(W^T h + b) = \text{ReLU}(W^T W x + b)$. \newline 
\textbf{Experimental Setup}\newline
To investigate the emergence of superposition in toy models, we are conducting a high-throughput parameter sweep of 3,200 experiments, using mean squared error (MSE), weighted by feature importance $L = \sum_x \sum_i I_i (x_i - x'_i)^2$, as in the original paper \cite{toymodels}. We fix the input dimension to $n=1024$ features and sample linearly 32 discrete values for $M\in [16,512]$ and 50 discrete values for sparsity $(S\in[0.0,0.99])$ ($2$ seeds per run). We set uniform importance $I_i=I=1$ and sparsity $S_i=S$, but discuss in the appendix \ref{appendix:non_uniform_sparsity} our results for non-uniform sparsity.

\subsection{Capacity Saturation is Spectral Measure Localization} 

Our first important empirical observation $(E1)$ is that the toy models utilize approximately all of their hidden dimension capacity $\sum_i D_i \approx m$. The following 
plot shows sum of individual feature fractional dimensionalities (as defined in \ref{eq:3.2}.2) over all 3200 training runs:
\label{empirical:1}
\begin{figure}[h]
\includegraphics[width=0.47\textwidth]{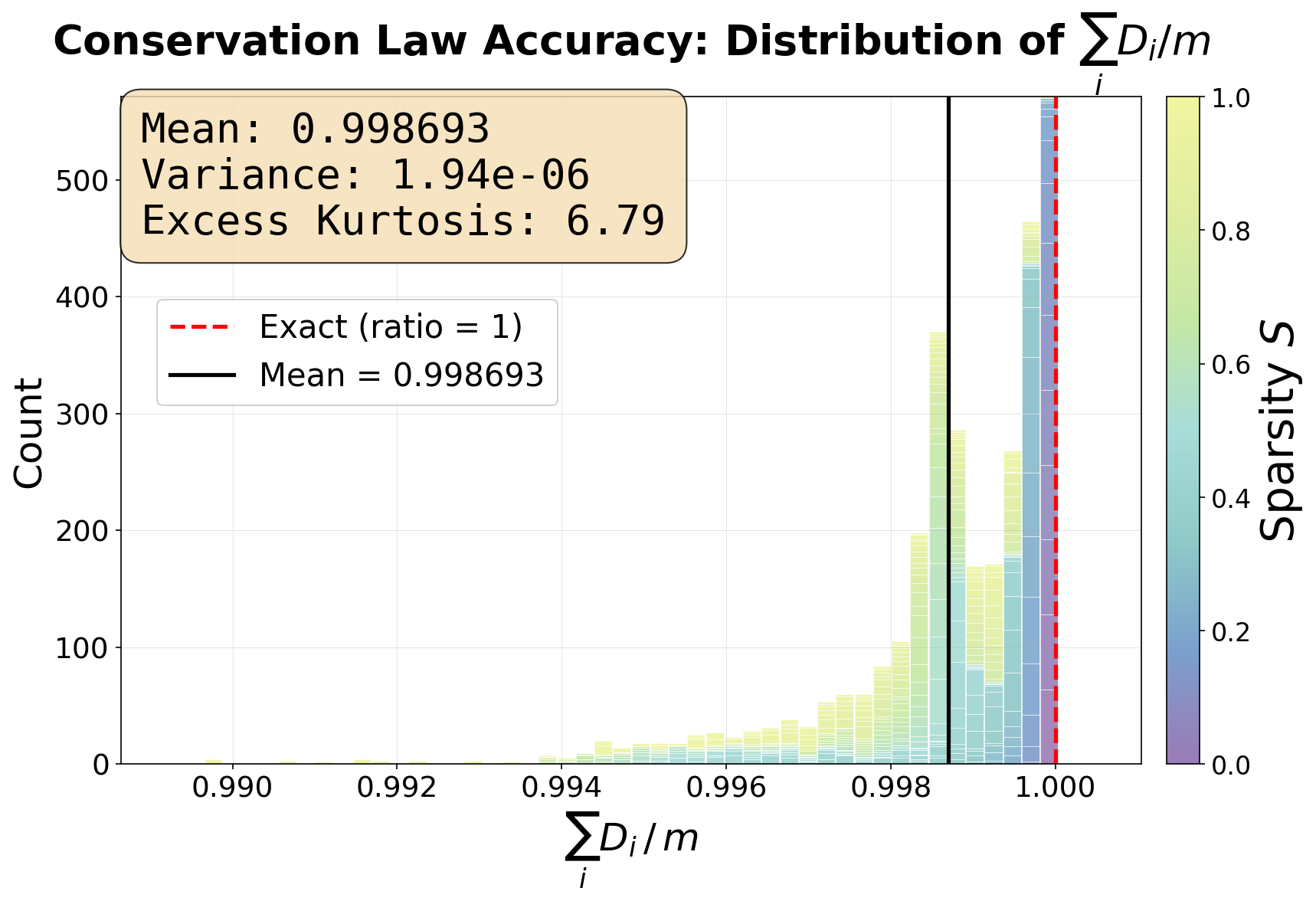}
\vspace{-10pt}
\caption{Capacity Saturation across 3,200 runs}
\vspace{-10pt}
\end{figure}

This is the most important fact that we use to justify our theoretical analysis and later predictions. It provides a significant reduction on the complexity of training dynamics, namely by forcing all of the individual features to localize to exactly one \textbf{eigenspace} of the frame operator, characterized by the corresponding projector $P_k$:
\begin{lemmaproofbox}
\begin{theorem}[Spectral Localization]\label{theorem:spectral_localization}
    Assume the model saturates the fractional dimensionality capacity bound, i.e. $\sum_{i=1}^n D_i = \text{rank}(W) = m$. Then, for every feature $i$, the spectral measure collapses to a single Dirac mass $\mu_i = \delta_{\lambda_k}$, centered at some eigenvalue $\lambda_k>0$. This is equivalent to $FW_i = \lambda_kW_i$.
\end{theorem}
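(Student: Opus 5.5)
The plan is to prove the general inequality $\sum_{i=1}^n D_i \le \operatorname{rank}(W)$ with an explicit per-feature slack term. Saturation forces every slack term to vanish, and vanishing slack is exactly Dirac localization. The natural tool is Jensen's inequality applied to the spectral measure $\mu_i$ and the strictly convex function $\lambda\mapsto 1/\lambda$ on $(0,\infty)$.

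\textbf{Step 1 (setup).} Write $F=\sum_{e\in\mathcal{E}_+}\lambda_e P_e$ with $\lambda_e>0$, where the $P_e$ are the mutually orthogonal lifted projectors of \eqref{equation:spectral_decomposition}. Let $P_{\mathrm{Im}}=\sum_e P_e$ be the orthogonal projector onto $\operatorname{Im}(W)=\operatorname{Im}(F)$, and let $F^+=\sum_e \lambda_e^{-1}P_e$ be the Moore--Penrose pseudoinverse. Each column $W_i$ lies in $\operatorname{Im}(W)$. So, as recorded in Lemma~\ref{lemma:spectral_measure_probability_distribution}, $\mu_i$ is a probability measure supported on the positive eigenvalues. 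By Lemma~\ref{lemma:moments} we have
\[
\mathbb{E}_{\mu_i}[\lambda]=\kappa_i \quad\text{and}\quad \mathbb{E}_{\mu_i}[\lambda^{-1}]=\frac{W_i^\top F^+W_i}{\|W_i\|^2}.
\]
The second identity follows the same way as the first: expand $\|W_i\|^{-2}\sum_e \lambda_e^{-1}\|P_eW_i\|^2$.

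\textbf{Step 2 (per-feature Jensen bound).} By Jensen, $1/\mathbb{E}_{\mu_i}[\lambda]\le \mathbb{E}_{\mu_i}[1/\lambda]$. Combined with the spectral form of $D_i$ (Lemma~\ref{lemma:dimensionality_kappa}), this gives
\[
D_i=\frac{\|W_i\|^2}{\mathbb{E}_{\mu_i}[\lambda]}\le \|W_i\|^2\,\mathbb{E}_{\mu_i}[\lambda^{-1}] = W_i^\top F^+ W_i .
\]
Because $1/\lambda$ is strictly convex, equality holds if and only if $\mu_i$ is a single Dirac mass. Equivalently, this is the equality case of Cauchy--Schwarz, $\|W_i\|^4\le (W_i^\top FW_i)(W_i^\top F^+W_i)$, which Lemma~\ref{lemma:cyclic_cs} presumably packages.

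\textbf{Step 3 (summing).} Sum over $i$ and use cyclicity of the trace:
\[
\sum_i W_i^\top F^+W_i=\operatorname{tr}(F^+WW^\top)=\operatorname{tr}(F^+F)=\operatorname{tr}(P_{\mathrm{Im}})=\operatorname{rank}(W).
\]
Hence $\sum_i D_i\le \operatorname{rank}(W)=m$. If equality holds, every term-wise inequality from Step 2 must be tight, since each is an inequality in the same direction. Therefore each $\mu_i=\delta_{\lambda_k}$ for some $k=k(i)$.

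\textbf{Step 4 (equivalence with the eigenvector condition).} $\mu_i=\delta_{\lambda_k}$ means $\|P_kW_i\|^2=\|W_i\|^2$. Since $P_k$ is an orthogonal projector, this forces $P_kW_i=W_i$, and then $FW_i=\lambda_kP_kW_i=\lambda_kW_i$. Conversely, if $FW_i=\lambda_kW_i$ with $W_i\neq 0$, then $W_i\in\operatorname{Im}P_k$ and the measure is $\delta_{\lambda_k}$.

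The main obstacle is degenerate bookkeeping rather than the inequality itself. Zero columns make $D_i$ and $\mu_i$ undefined. I would handle them with the convention $D_i=0$, which contributes $0$ to both sides; the paper's remark that $\|W_i\|>0$ almost surely, together with the $\epsilon u$ limiting argument, covers this. One also has to check that using $F^+$ rather than $F^{-1}$ is legitimate when $\operatorname{rank}(W)<d$; this works because every $W_i$ lies in $\operatorname{Im}(F)$.
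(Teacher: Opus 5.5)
Your proposal is correct and follows essentially the paper's proof: the leverage scores $\ell_i = W_i^\top F^+ W_i$ sum to $\operatorname{rank}(W)$, each feature obeys $D_i \le \ell_i$, and saturation forces every bound to be tight, which happens exactly when $\mu_i$ is a Dirac mass. The only difference is cosmetic: you obtain $\mathbb{E}_{\mu_i}[\lambda]\,\mathbb{E}_{\mu_i}[\lambda^{-1}]\ge 1$ from Jensen with the strictly convex $1/\lambda$, while the paper uses Cauchy--Schwarz and transports it to $L^2(\mu_i)$ via the cyclic isometry; it is the same inequality with the same equality case, and your route avoids that detour.
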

\begin{proof}
Define the leverage scores $\ell_i:=W_i^\top F^+W_i$, so $\sum_i \ell_i=\mathrm{tr}(W^\top F^+W)=\mathrm{tr}(F^+F)=\mathrm{rank}(W)$.
We can apply Lemma~\ref{lemma:cyclic_cs} to each feature $W_i\in\mathrm{Im}(F)$ for the Cauchy--Schwarz (CW) bound with $a:=F^{1/2}x$ and $b:=F^{+/2}x$:
\[
\|W_i\|^4 \le (W_i^\top F W_i)(W_i^\top F^+W_i),
\]
After rearrangement, we get the per-feature upper bound on fractional dimensionality $D_i=\|W_i\|^4/(W_i^\top F W_i)\le \ell_i$. By summing over all features we get the bound $\sum_i D_i \le \sum_i \ell_i=\mathrm{rank}(W)$. This makes the interpretation of $\sum_i D_i=\mathrm{rank}(W)=m$ one of saturating all of the fractional dimensionality bounds, i.e. that $D_i = \ell_i$. By Lemma~\ref{lemma:cyclic_isometry} and the finite-atomic nature of $\mu_i$ (formalized in Lemma~\ref{lemma:l2_atomic}) \(\kappa_i=\frac{W_i^\top F W_i}{\|W_i\|^2}=\mathbb{E}_{\mu_i}[\lambda]\) and 
\(\frac{\ell_i}{\|W_i\|^2}=\frac{W_i^\top F^+W_i}{\|W_i\|^2}=\mathbb{E}_{\mu_i}[\lambda^{+}].\) Thus $D_i=\|W_i\|^2/\mathbb{E}_{\mu_i}[\lambda^+]$. Even though we have now re-interpreted both $D_i$ and $\ell_i$  through the spectral measure, the former CW is in $(\mathbb{R}^m, \langle \cdot, \cdot \rangle)$, whereas the latter lives in  $L^2(\mu_i)$ (Lemma \ref{lemma:l2_atomic}) which is feature-dependent. The fundamental mismatch occurs because feature vectors $W_i$ live in $\mathbb{R}^m$, whereas the quantities controlling capacity and interference are spectral averages over eigenvalues of $F$. This is further validation for the need to consider $W$ as an intertwining operator. The way to bridge the two is to define the cyclic subspace \(\mathcal{K}_x:=\mathrm{span}\{x,Fx,F^2x,\dots\},\)
which is the smallest $F$-invariant slice of $\mathbb{R}^m$ space that captures all quadratic forms $W_i^\top h(F)W_i$ and vectors $F^{1/2}W_i, F^{+/2}W_i$. Per the cyclic-space isometry $U_{W_i}: L^2(\mu_i)\rightarrow \mathcal{K}_{W_i}$ (Lemma \ref{lemma:cyclic_isometry}), the functions $u(\lambda) = \sqrt{\lambda}$ and $v(\lambda) = \lambda^{+/2}$ correspond exactly to the vectors $F^{1/2}W_i/\|W_i\|$ and $F^{+/2}W_i/\|W_i\|$. Hence, CW in $L^2(\mu_i)$ is literally the Euclidean CW on $\mathcal{K}_{W_i}\subseteq\mathbb{R}^m$, making the equality $D_i=\ell_i$ is equivalent to $\mathbb{E}_{\mu_i}[\lambda]\mathbb{E}_{\mu_i}[\lambda^{+}]=1$, since $\langle u(\lambda), v(\lambda)\rangle_{L^2(\mu_i)} = 1$ (Lemma \ref{lemma:cs_L2}). The equality holds if and only if $\sqrt \lambda = c\lambda^{+/2}$ $\mu_i$-a.s., i.e. if $\lambda$ is $\mu_i$-a.s. constant. Therefore, $\mu_i$ collapses to a Dirac mass $\delta_{\lambda(i)}$, meaning $W_i$ lies entirely in a single positive-eigenvalue eigenspace of $F$,
equivalently $FW_i=\lambda(i)W_i$. Q.E.D.
\end{proof}
\end{lemmaproofbox}
This result is in fact quite strong. It provides us with an immediate handle on global geometry of features. The first corollary (Pf. in \ref{corollary:projective_linearity}) is that if Spectral Localization holds, the fractional dimensionality scales linearly with the norm, where the slope corresponds to the inverse eigenvalue $\lambda_k$:
\begin{corollary} [Projective Linearity]
    \label{corollary:projective_linearity_section4}
    Assume Spectral Localization holds. Then, the fractional dimensionality $D_i$ of any feature is linearly determined by its feature norm $\propto \|W_i\|^2$ with slope $k$ equal to the reciprocal of the eigenvalue $\lambda_e$ of the subspace it occupies $D_i = \lambda_e^+ \|W_i\|^2$
\end{corollary}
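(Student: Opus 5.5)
The argument is short, and it uses two facts already in hand: the identity $D_i = \|W_i\|^2/\kappa_i$ from Lemma~\ref{lemma:dimensionality_kappa}, and the conclusion of Theorem~\ref{theorem:spectral_localization}, namely $FW_i=\lambda_e W_i$ for some positive eigenvalue $\lambda_e$ of $F$. Once localization holds, the Rayleigh quotient $\kappa_i$ equals that eigenvalue. Substituting it into the formula for $D_i$ then gives the linear relation $D_i=\lambda_e^{-1}\|W_i\|^2=\lambda_e^+\|W_i\|^2$.

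\textbf{Step 1: compute $\kappa_i$.} Suppose $\|W_i\|>0$ and $FW_i=\lambda_e W_i$. Then
\[
\kappa_i=\frac{W_i^\top F W_i}{\|W_i\|^2}=\frac{\lambda_e\,W_i^\top W_i}{\|W_i\|^2}=\lambda_e .
\]
The same value follows from the spectral-measure side. We have $\mu_i=\delta_{\lambda_e}$ and $\kappa_i=\mathbb{E}_{\mu_i}[\lambda]$ by Lemma~\ref{lemma:moments}, so $\kappa_i=\lambda_e$ again. As a consistency check, $\mathbb{E}_{\mu_i}[\lambda^+]=\lambda_e^{-1}$.

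\textbf{Step 2: substitute.} From $D_i=\|W_i\|^2/\kappa_i$ we obtain $D_i=\|W_i\|^2/\lambda_e$. Since $\lambda_e>0$, we have $\lambda_e^{-1}=\lambda_e^+$, which gives the stated form $D_i=\lambda_e^+\|W_i\|^2$. The slope is $\lambda_e^+$. It depends only on the eigenspace $P_e$ containing $W_i$, and not on $i$. So within each eigenspace, $D_i$ is proportional to $\|W_i\|^2$ with a common constant.

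\textbf{Possible obstacle: degenerate features.} There is no real difficulty here; the only care needed is with degenerate features. If $W_i=0$, then $D_i$ is defined as $0/0$, and $\mu_i$ is undefined because the $p_{i,e}$ require division by $\|W_i\|^2$. I would handle this in one of two ways. The first is to adopt the convention $D_i=0$, in which case the identity holds trivially. The second is to appeal to the remark after Lemma~\ref{lemma:dimensionality_kappa} that $\|W_i\|>0$ almost surely.

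I would also point out why $\lambda_e>0$ is guaranteed. The localization theorem places $W_i$ in $\operatorname{Im}(F)$, since the columns of $W$ span $\operatorname{Im}(W)=\operatorname{Im}(F)$. A nonzero $W_i$ therefore cannot lie in $\ker F$, so its eigenvalue is positive and the use of the pseudo-inverse in the statement is consistent.
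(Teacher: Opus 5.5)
Your proof is correct and matches the paper's argument: it likewise combines $D_i=\|W_i\|^2/\kappa_i$ from Lemma~\ref{lemma:dimensionality_kappa} with $FW_i=\lambda_e W_i$ from Theorem~\ref{theorem:spectral_localization}, which gives $\kappa_i=\lambda_e$ and hence $D_i=\lambda_e^+\|W_i\|^2$. Your extra remarks on the case $W_i=0$ and on why $\lambda_e>0$ are harmless; the paper simply takes positivity of $\lambda_e$ directly from the statement of the localization theorem.
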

Since the model saturates its capacity, and hence its rank (as proved above), this means that with this information we can recover the exact density (and hence fractional dimensionality) of features living in that eigenspace. To investigate the validity of the Projective Linearity Corollary across the entire training sweep, we analyzed the geometric stability of feature clusters from 3,200 experimental runs (3,276,800 total features). For each feature $w_i$, we computed its spectral measure weights $p_{i,e} = \|P_e w_i\|^2 / \|w_i\|^2$. We quantified the degree of Spectral Localization for each cluster $C$ by the mean maximum spectral mass of its constituent features, defined as $\mathcal{L}_C = \frac{1}{|C|} \sum_{i \in C} \max_e (p_{i,e})$. We plot against the linear coefficient of determination $R^2$. The error bars correspond to the absolute error $|k\lambda -1|$ for each cluster:
\begin{figure}[h]
    \includegraphics[width=0.47\textwidth]
    {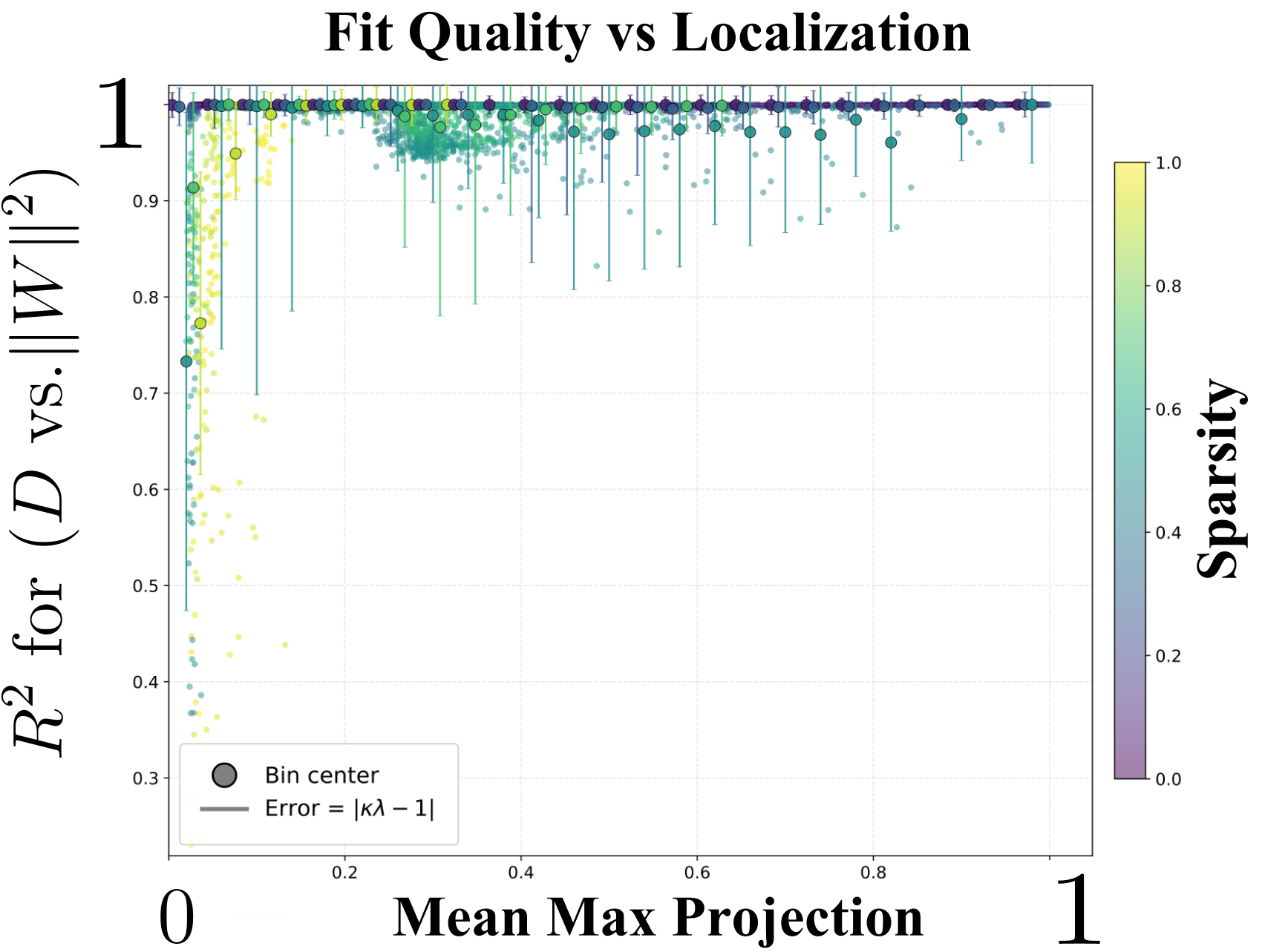}
    \caption{Projective Linearity test: y-axis represents the coefficient of determination $R^2$ of the  $\lambda_k^{-1}$ linear fit; error bars provide the absolute error $|k\lambda-1|$; $x$-axis represents the mean max metric for Spectral Measure localization}
\end{figure}

The added gradient coloring on the error bars reveals a strong relationship between the eigenvalue slope fit and feature sparsity - lower sparsity forces features into tighter configurations (and hence eigenspaces) due to the stronger interference, whereas high sparsity features are more diffuse across the spectrum. This observation aligns with our perturbative analysis for small deviations $\sigma_i = 1-D_i/\ell_i$ in the capacity bound (Appendix \ref{section:perturbative_analysis}). This is also the explanation behind the two peaks in the conservation law, clearly stratified across sparsity. While slack is minimal across all runs, the slight deviation in the high sparsity regime results in broad delocalization of a few features. The spectral localization also has a direct consequence for the type of geometry the model learns (Pf. \ref{theorem:decomp_tight_frame}): 


\begin{theorem}[Decomposition into Tight Frames]\label{theorem:decomp}
Assume Spectral Localization holds for all features $i\in \{1, \cdots,n\} $. Let $\Lambda = \{\lambda_1, \cdots, \lambda_k\}$ be the set of distinct eigenvalues of $F= WW^\top$. Then, the feature set partitions into disjoint clusters $C_k = \{i | \lambda(i) = \lambda_k\}$ and $F = \bigoplus_{k=1}^m \lambda_k P_{V_k}$, where $V_k = \operatorname{span}\{W_i| i\in C_k\}$. Furthermore, within each cluster $C_k$, the sub-matrix of weights $W_{C_k}$ forms a Tight Frame: $\sum_{i\in C_k} W_iW_i^\top  = \lambda_k I_k$ on $V_k$ with constant $\lambda_k$.
    
\end{theorem}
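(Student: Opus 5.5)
The plan is to take the eigenvector condition $FW_i=\lambda(i)W_i$ from Theorem~\ref{theorem:spectral_localization} as the sole input and read off the statement from the spectral theorem for the symmetric positive semidefinite operator $F=WW^\top$.

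\textbf{Step 1: the partition and the eigenspaces.} By Spectral Localization, each $W_i$ (assumed nonzero, as in Section~\ref{section:generalizable}) satisfies $FW_i=\lambda(i)W_i$ with $\lambda(i)>0$. The clusters $C_k=\{i\mid \lambda(i)=\lambda_k\}$ are therefore well defined and pairwise disjoint. Together they cover $\{1,\dots,n\}$, with one caveat: any positive eigenvalue with no feature attached would give an empty cluster, and Step 2 shows this cannot happen. Write $E_k=\ker(F-\lambda_k I)$. Then $V_k=\operatorname{span}\{W_i\mid i\in C_k\}\subseteq E_k$. Since $F$ is symmetric, its eigenspaces for distinct eigenvalues are orthogonal, so $V_k\perp V_l$ for $k\neq l$. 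This gives the geometric content: features in different clusters do not interfere.

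\textbf{Step 2: $V_k=E_k$ for every positive eigenvalue.} This is the step I expect to be the main obstacle, because it is where the direct sum must actually exhaust $\operatorname{Im}(F)$ and not just sit inside it. I would argue as follows.
\begin{itemize}
\item The range satisfies $\operatorname{Im}(F)=\operatorname{Im}(W)=\operatorname{span}\{W_i\}=\bigoplus_k V_k$.
\item On the other hand, $\operatorname{Im}(F)=\bigoplus_{\lambda>0}E_\lambda$.
\item Suppose some $V_k\subsetneq E_k$. Take $0\neq v\in E_k\ominus V_k$. Then $v$ is orthogonal to $V_k$, and also to every $V_l$ with $l\neq k$, since $V_l\subseteq E_l\perp E_k$.
\item So $v\perp\operatorname{Im}(W)$, hence $W^\top v=0$ and $Fv=0$. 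This contradicts $Fv=\lambda_k v$ with $\lambda_k>0$.
\end{itemize}
The same argument rules out a positive eigenvalue with an empty cluster. It follows that $V_k=E_k$ and the orthogonal projector is $P_{V_k}=P_{E_k}$. The spectral theorem then gives $F=\sum_k\lambda_k P_{V_k}$, which is the orthogonal direct sum over the positive eigenvalues $\Lambda$. The kernel is exactly $\operatorname{Im}(W)^\perp$, consistent with the inert directions of Section~\ref{section:generalizable}. The upper index $m$ in the statement should be read as $|\Lambda|\le\operatorname{rank}(W)=m$.

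\textbf{Step 3: tight frames.} Split the frame operator by cluster, $F=\sum_k F_{C_k}$ with $F_{C_k}:=\sum_{i\in C_k}W_iW_i^\top$. Each $F_{C_k}$ has range inside $V_k$. For $v\in V_k$ and $l\neq k$ we get $F_{C_l}v=\sum_{i\in C_l}W_i\langle W_i,v\rangle=0$, because $W_i\perp V_k$. Hence, for $v\in V_k$,
\[
F_{C_k}v=Fv=\lambda_k v .
\]
Combined with $F_{C_k}=0$ on $V_k^\perp$, this shows $F_{C_k}=\lambda_k P_{V_k}$. That is exactly the statement that $W_{C_k}$ is a tight frame for $V_k$ with frame bound $\lambda_k$.

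As a consistency check, take traces: $\sum_{i\in C_k}\|W_i\|^2=\lambda_k\dim V_k$. This recovers the identity $\lambda_C^{-1}=\operatorname{rank}(P_C)/\sum_{i\in\Omega_C}\|W_i\|^2$ from Section~\ref{section:warm-up}, and it is compatible with Corollary~\ref{corollary:projective_linearity_section4}.
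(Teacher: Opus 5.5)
Your proposal is correct and follows essentially the same route as the paper. The partition and orthogonality come from the eigenvector condition, and the identification $V_k=\ker(F-\lambda_k I)$ comes from $\operatorname{Im}(F)=\operatorname{span}\{W_i\}$ (you argue by an orthogonal-complement contradiction where the paper decomposes an eigenvector over $\bigoplus_\ell V_\ell$ and applies $F$); the tight-frame identity comes from $W_j^\top v=0$ for $j\notin C_k$. Your extra remarks are correct refinements that the paper leaves implicit: no positive eigenvalue can have an empty cluster, $F_{C_k}=\lambda_k P_{V_k}$ holds on all of $\mathbb{R}^m$, and the upper index $m$ should be read as $|\Lambda|$.
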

This result provides a complete description of the geometries learned in TMS \cite{toymodels}, including all the simplices, polygons and square antiprism. It is important to note that for any shape other than the simplex, the tight frame characterization is not unique. Nonetheless, we can use the Spectral Correspondence (Lemma \ref{lemma:spectral_correspondence}) to provide a method of classification of the exact geometry by its spectral signature in terms of association schemes (Pf. in \ref{theorem:spectral-identification}): 

\begin{theorem}[Spectral Identification of Geometry]
    Assume Spectral Localization holds, s.t per Theorem \ref{theorem:decomp} a feature cluster $C_k$ forms a Tight Frame. Let $M_k = W_{C_k}^\top W_{C_k}$ be the local Gram matrix. Then, the geometry of cluster $C_k$ is an instance of an association scheme $\mathcal{A}$ if and only if the eigenspaces of the Gram matrix $M_k$ coincide with the canonical stratums $\{W_0, W_1, \cdots, W_s\}$ of the algebra $\mathcal{A}$. 
\end{theorem}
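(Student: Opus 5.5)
We first make precise what ``the geometry of cluster $C_k$ is an instance of an association scheme $\mathcal{A}$'' means, following Section~2.1 and Appendix~\ref{appendix:orbit}. We take it to mean that the local Gram matrix lies in the Bose--Mesner algebra: $M_k\in\mathcal{A}=\operatorname{span}\{A_0,\dots,A_s\}$, where the $A_j$ are the adjacency matrices of the associate classes on $\Omega=C_k$. Equivalently, $\langle W_i,W_j\rangle$ depends only on the class of $(i,j)$. The strata $W_0,\dots,W_s$ are the images of the primitive idempotents $S_0,\dots,S_s$ of $\mathcal{A}$ from Lemma~\ref{lemma:2.6}. With this reading, the theorem reduces to a statement about commutative semisimple algebras of symmetric matrices. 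The tight-frame hypothesis enters only to fix the spectrum of $M_k$: by Theorem~\ref{theorem:decomp} and the Spectral Correspondence Lemma~\ref{lemma:spectral_correspondence}, $M_k$ has exactly two eigenvalues, $\lambda_k$ on $\operatorname{Im}(W_{C_k}^\top)$ and $0$ on $\ker W_{C_k}$. Hence $M_k=\lambda_k E$, where $E=\lambda_k^{-1}M_k$ is an orthogonal projector.

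\textbf{($\Rightarrow$)} Suppose $M_k\in\mathcal{A}$. By Lemma~\ref{lemma:characters}, $M_k=\sum_j c_jA_j=\sum_e\big(\sum_j c_jC(j,e)\big)S_e$, so $M_k$ acts as a scalar $\theta_e$ on each stratum $W_e$. Every eigenspace of $M_k$ is therefore a direct sum of strata. Since the eigenvalues lie in $\{0,\lambda_k\}$, the $\lambda_k$-eigenspace is $\bigoplus_{e:\theta_e=\lambda_k}W_e$ and the kernel is the complementary sum. This shows the eigenspaces are unions of strata. We read ``coincide with the canonical strata'' as this, which is the sense used in Section~2.1, e.g.\ $M_\triangle=\tfrac32 S_1$.

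\textbf{($\Leftarrow$)} Suppose each eigenspace of $M_k$ is a sum of strata. Then $E=\sum_{e\in I}S_e$ for some index set $I$. By Lemma~\ref{lemma:characters}, $S_e=\sum_i D(e,i)A_i$, so $M_k=\lambda_k\sum_{e\in I}\sum_iD(e,i)A_i\in\mathcal{A}$. Consequently $\langle W_a,W_b\rangle=\lambda_k\sum_{e\in I}D(e,i)$ whenever $(a,b)$ lies in class $i$. Inner products are constant on associate classes, which is exactly the statement that the configuration realizes the scheme.

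The main obstacle is not the algebra but the definitional bridge. We must state carefully that ``instance of $\mathcal{A}$'' means Gram entries constant on classes, i.e.\ $M_k\in\mathcal{A}$. We must also justify, by citing the centralizer fact from Appendix~\ref{appendix:orbit}, that for group-generated schemes this is equivalent to $\Gamma$-invariance of the configuration. Finally, we must handle the subtlety that an eigenspace may merge several strata rather than equal a single one. If the statement demands literal equality with single strata, we would restrict to the case where $M_k$ separates the strata, which holds automatically for simplices ($s=1$). Otherwise we phrase the result in terms of the projector $E$ being a sum of primitive idempotents.
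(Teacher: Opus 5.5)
Your proposal is correct and matches the paper's proof. Both directions rest on $\{S_e\}$ being a basis of orthogonal idempotents of $\mathcal{A}$, so that $M_k\in\mathcal{A}$ if and only if $M_k$ acts as a scalar on every stratum. Your reading ``each eigenspace is a sum of strata'' is exactly the paper's ``block-scalar on $\bigoplus_e W_e$'', and your caveat is well taken: literal one-to-one coincidence can fail when an eigenspace such as $\ker M_k$ merges several strata.

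The only deviation is that you use the tight-frame hypothesis to write $M_k=\lambda_k E$. The paper never uses it, and your argument works unchanged for any symmetric $M_k$: expand $M_k=\sum_\lambda \lambda E_\lambda$, where each $E_\lambda$ is a sum of stratum projectors.
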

This theorem has a direct corollary for the Simplex Algebra classification \ref{appendix:simplex}, namely that if the tight frame is made up of $d+1$ features in $d$ dimensions $\big(\frac{d+1}{d}I_d\big)$, then it necessarily must be a simplex geometry (Pf. in \ref{corollary:simplex}):

\begin{corollary} [Simplex Identification]
The geometry of a cluster of features $C_k$ is a simplex if and only if the corresponding Gram matrix $M_{k}$ has exactly two eigenspaces $W_0 = \ker(M_{k}) = \operatorname{span}(\mathbf{1})$, $W_1 = \operatorname{Im}(M_k) = \mathbf{1}^\perp$ with 
$\Lambda(W_0)=0$ and  $\Lambda(W_1) = \lambda_k = \frac{|C_k|}{\dim(V_k)}$
\end{corollary}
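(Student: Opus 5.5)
The argument runs both directions through the Spectral Correspondence (Lemma \ref{lemma:spectral_correspondence}) together with the tight-frame property from Theorem \ref{theorem:decomp}. Write $W_k := W_{C_k}$, $n_k := |C_k|$, $d_k := \dim(V_k)$.

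Theorem \ref{theorem:decomp} gives $W_k W_k^\top = \lambda_k P_{V_k}$. The nonzero spectrum of $M_k = W_k^\top W_k$ is therefore the single eigenvalue $\lambda_k$ with multiplicity $d_k$. Because $M_k$ is symmetric with only this nonzero eigenvalue, $M_k^2 = \lambda_k M_k$. Hence $M_k$ has exactly two eigenspaces, $\ker(M_k)$ and $\operatorname{Im}(M_k)$, with eigenvalues $0$ and $\lambda_k$. Taking traces gives $\operatorname{tr}(M_k) = \sum_{i\in C_k}\|W_i\|^2 = \lambda_k d_k$. This pins down $\lambda_k$ once norms are fixed.

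\textbf{($\Rightarrow$)} Assume the cluster is a simplex: $n_k = d_k+1$ equal-norm vectors with equal pairwise inner products, centered at the origin as in the TMS geometries and the triangle and digon of Section~\ref{section:warm-up}.
\begin{enumerate}
\item The Gram matrix is $M_k = aI + b(J-I)$, which lies in the simplex Bose--Mesner algebra $\operatorname{span}\{A_0, A_1\}$.
\item By Lemma \ref{lemma:characters} it decomposes over the strata projectors $S_0 = J/n_k$ and $S_1 = I - J/n_k$, with eigenvalues $a + (n_k-1)b$ on $\operatorname{span}(\mathbf 1)$ and $a-b$ on $\mathbf 1^\perp$.
\item Since $\operatorname{rank}(W_k) = d_k = n_k - 1$, one eigenvalue vanishes. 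It cannot be the $\mathbf 1^\perp$ one, since that stratum would then be the whole kernel and force rank $1$. So the centroid condition $\sum_i W_i = 0$ holds, i.e.\ $a + (n_k-1)b = 0$.
\item The trace identity then gives $\Lambda(W_1) = \lambda_k = n_k a/d_k$. Normalizing to the paper's convention of unit-norm features (as in Section~\ref{section:warm-up}, where $\lambda_\triangle = 3/2$) yields $\lambda_k = |C_k|/\dim(V_k)$.
\end{enumerate}

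\textbf{($\Leftarrow$)} Assume $M_k$ has eigenspaces $\operatorname{span}(\mathbf 1)$ with eigenvalue $0$ and $\mathbf 1^\perp$ with eigenvalue $\lambda_k$. Then $M_k = \lambda_k S_1 = \lambda_k(I - J/n_k)$.
\begin{enumerate}
\item This lies in the simplex association scheme, so Theorem \ref{theorem:spectral-identification} identifies the cluster as an instance of it.
\item Concretely, $\|W_i\|^2 = \lambda_k(1 - 1/n_k)$ for every $i$, and $\langle W_i, W_j\rangle = -\lambda_k/n_k$ for every $i\neq j$. All norms are equal and all pairwise angles are equal with cosine $-1/(n_k-1)$.
\item $\operatorname{rank} = n_k - 1 = d_k$, which is exactly the regular simplex.
\item If $\lambda_k = n_k/d_k$, the norms are $1$, matching the stated normalization.
\end{enumerate}

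The main obstacle is the normalization and scope in the forward direction. The stated value $\lambda_k = |C_k|/\dim(V_k)$ holds only for unit-norm features. I would state explicitly that "simplex" means a regular, centered simplex of unit vectors, as used throughout the paper. Under that convention the general formula is $\lambda_k = \sum_{i\in C_k}\|W_i\|^2/\dim V_k$, which is consistent with the fractional-dimensionality identity of Corollary~\ref{corollary:projective_linearity_section4}. The other point to check is the exclusion of the degenerate case where the roles of $\ker(M_k)$ and $\mathbf 1^\perp$ swap; the rank count in step 3 handles it.
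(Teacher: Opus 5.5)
Your proof is correct and follows essentially the paper's route. The Spectral Identification theorem, applied to the two simplex strata $S_0=J/n_k$ and $S_1=I-J/n_k$, gives the equivalence, and the tight-frame trace identity $\operatorname{tr}(M_k)=\lambda_k\dim(V_k)$ with unit norms pins down $\lambda_k=|C_k|/\dim(V_k)$. Your explicit check in ($\Leftarrow$), that $M_k=\lambda_k S_1$ forces equal norms, common cosine $-1/(n_k-1)$ and rank $n_k-1$, is a real improvement: the paper leaves this step to the identification theorem, and its notion of ``instance'' would also admit non-simplices such as $M_k=aI$ or $M_k=aJ$.

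The only slip is in step 3 of ($\Rightarrow$): the rank argument does not exclude $a=b$ when $n_k=2$, since two coincident vectors have rank $1=n_k-1$. Your centering hypothesis already gives $a+(n_k-1)b=0$ directly via $\mathbf 1^\top M_k\mathbf 1=\|\sum_{i\in C_k}W_i\|^2=0$, so nothing is lost.
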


We provide an analysis of why these features persist during training in \ref{appendix:gradient_flow}, hypothesizing that they remain stable under gradient descent.

\section{Limitations and Conclusion}

Interpretability seeks to understand neural networks in terms of their internal structure. Current methods focus on identifying features—sparse, interpretable directions in activation space. This paper argues that the geometry relating features is equally important, and that this geometry can be recovered using classical mathematical tools. By studying the eigenspaces of weight-induced operators, rather than individual feature vectors, we preserve geometric relationships between features, including shared subspaces, symmetry, and structured interference -- all of which are discarded by sparse or linear analyses. The framework converts questions about relationships (pairwise) into questions about structure (global). Instead of asking ``how does feature i relate to feature j," you ask ``what is the geometry of the feature space, and where do i and j sit in it?" This is the difference between knowing the pairwise distances between cities versus having a map.

The map presented by this paper is still limited. For the spectral measure to completely characterize interference, models need to exhibit spectral localization (suggested by HT-SR phenomenology, but not yet directly measured). To recover the geometry fully, we require capacity saturation, which we empirically verify only in toy models. And our toy-model specific results are particularly clean primarily due to the gradient is mediated only through the gram matrix because the toy model is an autoencoder.

Despite these limitations, the weight-induced operators, rather than individual feature vectors, appear to be the appropriate units for analyzing superposition. We take this to be an argument for using operator theory to study why models behave the way they do.

Operator theory has seen widespread success in studying complex physical systems. The spectral theorem's role in quantum mechanics, decomposing states into eigenbases of observables, is analogous to our spectral localization result, which decomposes features into eigenspaces of the frame operator. Representation theory's classification of symmetry actions parallels our use of association schemes to discretely classify feature geometries. These are not mere analogies: the mathematical structures are identical, suggesting that interpretability may benefit from the same tools that made complex physical systems tractable.

More fundamentally, however, by looking at how different spaces are intertwined, operator theory is designed to provide maps instead of mere pairwise comparisons. By analyzing the components of models in operator theoretic terms, we hope we can move from the study of mere features to that of model behavior.

\section{Impact statement}
This paper proposes a novel theoretical framework for analyzing superposition in neural networks using spectral tools. The primary contribution is methodological: it introduces basis-invariant diagnostics for classifying feature geometry and interference in learned representations. The work is foundational and does not introduce new model capabilities, datasets, or applications. The results are intended to support research in interpretability and representation analysis. While improved understanding of internal representations may indirectly inform downstream methods, this work itself poses minimal risk and has no direct societal impact.

\bibliography{bibliography}
\bibliographystyle{plainnat}

\newpage
\appendix
\onecolumn

\section*{\LARGE Appendix}
\addcontentsline{toc}{section}{Appendix}

\vspace{1em}

\makeatletter
\renewcommand{\l@section}{\@dottedtocline{1}{0em}{2.3em}}
\renewcommand{\l@subsection}{\@dottedtocline{2}{2.3em}{3.2em}}
\renewcommand{\l@subsubsection}{\@dottedtocline{3}{5.5em}{4.1em}}
\makeatother

\contentsline{section}{\numberline{A}Feature Geometry}{\pageref{appendix:feature_geometry}}{}
\contentsline{subsection}{\numberline{A.1}Orbit Algebra Derivation}{\pageref{appendix:orbit}}{}
\contentsline{subsection}{\numberline{A.2}Association Schemes}{\pageref{appendix:association_schemes}}{}
\contentsline{subsection}{\numberline{A.3}Algebra Classification}{\pageref{appendix:algebra_classification}}{}
\contentsline{subsubsection}{\numberline{A.3.1}Simplex algebra}{\pageref{appendix:simplex}}{}
\contentsline{subsection}{\numberline{A.4}Tight Frame}{\pageref{appendix:tight_frame}}{}

\contentsline{section}{\numberline{B}Spectral Theory}{\pageref{appendix:spectral_theory}}{}
\contentsline{subsection}{\numberline{B.1}Spectral Measure}{\pageref{lemma:spectral_measure_probability_distribution}}{}
\contentsline{subsection}{\numberline{B.2}Hilbert Spaces}{\pageref{lemma:cyclic_cs}}{}

\contentsline{subsection}{\numberline{B.6}Perturbation Theory}{\pageref{appendix:perturbation_theory}}{}
\contentsline{subsection}{\numberline{B.7}First-order drift of Gram eigenprojections}{\pageref{lemma:kato-eigenvalue-coeff}}{}
\contentsline{subsection}{\numberline{B.8}Moments of the Spectral Measure}{\pageref{lemma:moments}}{}
\contentsline{subsection}{\numberline{B.9}Cyclic Space and Pseudoinverse Cauchy--Schwarz}{\pageref{lemma:cyclic_cs}}{}
\contentsline{subsection}{\numberline{B.10}\texorpdfstring{$L^2(\mu)$}{L2(mu)} for Finite Atomic Measures}{\pageref{lemma:l2_atomic}}{}
\contentsline{subsection}{\numberline{B.11}Cauchy--Schwarz in \texorpdfstring{$L^2(\mu)$}{L2(mu)}}{\pageref{lemma:cs_L2}}{}
\contentsline{subsection}{\numberline{B.12}Cyclic Isometry \texorpdfstring{$L^2(\mu_x)\simeq\mathcal{K}_x$}{L2(mu_x) ≃ K_x}}{\pageref{lemma:cyclic_isometry}}{}

\contentsline{subsection}{\numberline{B.13}Theorem Proofs and Corollaries}{\pageref{corollary:projective_linearity}}{}
\contentsline{subsection}{\numberline{B.14}Perturbative Case}{\pageref{section:perturbative_analysis}}{}

\contentsline{section}{\numberline{C}Localization vs Delocalization}{\pageref{appendix:localization_delocalization}}{}
\contentsline{subsection}{\numberline{C.1}HT-SR}{\pageref{appendix:martin_and_mahoney}}{}
\contentsline{subsection}{\numberline{C.2}Anderson Localization}{\pageref{sec:anderson_localization}}{}

\contentsline{section}{\numberline{D}Gradient Flow}{\pageref{appendix:gradient_flow}}{}
\contentsline{subsection}{\numberline{D.1}Explicit Form}{\pageref{lemma:gradient-kernel}}{}
\contentsline{subsection}{\numberline{D.2}Association Scheme Reduction}{\pageref{appendix:association_scheme}}{}

\contentsline{section}{\numberline{E}Non-uniform Sparsity}{\pageref{appendix:non_uniform_sparsity}}{}

\contentsline{section}{\numberline{F}Related Work}{\pageref{appendix:related_work}}{}

\vspace{2em}
\newpage

\section{Feature Geometry}
\label{appendix:feature_geometry}

\subsection{Orbit Algebra Derivation}
\label{appendix:orbit}
Let us fix a feature cluster $C$ of size $|C|=p$. We can index the set of vertices $V = \{1, \cdots, p\}$. Per the polytope geometry, we can assume the existence of an abstract symmetry group $\Gamma \subseteq S_p$ (Cayley's theorem) that acts transitively on $V$, i.e $\forall i, j\in V, \exists \gamma\in\Gamma, $ s.t. $\gamma\cdot i = j$. In simple terms, this means that every vertex can be reached from any other vertex as a starting point. This can be said quite succinctly using the notion of an "orbit" of a given vertex $i$. The latter is defined as $\Gamma \cdot i = \{\gamma \cdot i \ |\ \gamma\in \Gamma\}$, namely, the set of all vertices, accessible by the elements $\gamma$ with $i$ as a starting point. Then, transitivity is equivalent to the group having one orbit. To return back to the domain of linear algebra, we can define $P_\gamma\in R^{p\times p}$ to be the $\gamma$ permutation matrix representation: 
\begin{equation}
(P_\gamma)_{ij} = \delta_{i, \gamma(j)} = \left\{\begin{matrix}
1 & \text{if }i=\gamma(j)\\ 0 &\text{otherwise}
\end{matrix}\right.\end{equation}
We have the following neat fact about permutation matrices: 
\begin{lemmaproofbox}
\begin{lemma}[Orthogonality of Permutation Matrix] For every $\gamma\in \Gamma$, $P_\gamma^{-1} = P_\gamma^\top$
\end{lemma}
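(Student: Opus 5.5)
The plan is to verify $P_\gamma^\top P_\gamma = I$ directly from the entrywise definition $(P_\gamma)_{ij} = \delta_{i,\gamma(j)}$, using only that $\gamma$ is a bijection of $V=\{1,\dots,p\}$. Since $P_\gamma$ is square, a one-sided inverse is automatically two-sided. Still, I will check $P_\gamma P_\gamma^\top = I$ as well, so the argument does not rely on that finite-dimensional fact.

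First I will compute the $(j,k)$ entry of $P_\gamma^\top P_\gamma$:
\[
(P_\gamma^\top P_\gamma)_{jk} = \sum_{i=1}^p (P_\gamma)_{ij}(P_\gamma)_{ik} = \sum_{i=1}^p \delta_{i,\gamma(j)}\,\delta_{i,\gamma(k)} = \delta_{\gamma(j),\gamma(k)}.
\]
Injectivity of $\gamma$ gives $\delta_{\gamma(j),\gamma(k)} = \delta_{jk}$, so $P_\gamma^\top P_\gamma = I_p$.

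For the other product,
\[
(P_\gamma P_\gamma^\top)_{ik} = \sum_{j=1}^p \delta_{i,\gamma(j)}\,\delta_{k,\gamma(j)}.
\]
Surjectivity of $\gamma$ shows there is exactly one $j$ with $\gamma(j)=i$, namely $j=\gamma^{-1}(i)$. The sum therefore collapses to $\delta_{k,i}$, and $P_\gamma P_\gamma^\top = I_p$. Together the two products give $P_\gamma^{-1} = P_\gamma^\top$.

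As a consistency check tying the result to the group action, I will also note that $P_\gamma e_j = e_{\gamma(j)}$. From this, $P_\gamma P_{\gamma'} = P_{\gamma\gamma'}$, and hence $P_\gamma^{-1} = P_{\gamma^{-1}}$. Reading off entries, $(P_{\gamma^{-1}})_{ij} = \delta_{i,\gamma^{-1}(j)} = \delta_{\gamma(i),j} = (P_\gamma)_{ji}$, which again yields $P_\gamma^{-1} = P_\gamma^\top$.

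There is no real obstacle here. The only point needing care is the index convention $\delta_{i,\gamma(j)}$ versus $\delta_{\gamma(i),j}$, which decides whether $P_\gamma$ acts as $\gamma$ or as $\gamma^{-1}$. Orthogonality holds under either convention.
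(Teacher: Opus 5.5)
Your proof is correct and matches the paper's: both verify orthogonality by direct entrywise computation from $(P_\gamma)_{ij}=\delta_{i,\gamma(j)}$. The paper computes $P_\gamma P_\gamma^\top$ and dismisses $P_\gamma^\top P_\gamma$ as analogous, whereas you carry out both products explicitly; the homomorphism check $P_{\gamma^{-1}}=P_\gamma^\top$ is a harmless extra. One small wording fix: saying there is exactly one $j$ with $\gamma(j)=i$ uses injectivity as well as surjectivity, which is fine since $\gamma$ is a bijection.
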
\label{lemma:A.1}
\begin{proof} By definition $(5)$, we have $(P_\gamma^\top)_{ij} = (P_\gamma)_{ji} = \delta_{j, \gamma(i)}$, hence:
$$(P_\gamma P_\gamma^\top)_{ij} = \sum_{k=1}^p (P_\gamma)_{ik} (P_\gamma^\top)_{kj} = $$
$$\sum_{k=1}^p \delta_{i, \gamma(k)}\delta _{j, \gamma(k)} = \delta_{ij}  = (I_p)_{ij},$$
where $I_p\in \mathbb{R}^{p\times p}$ is the identity matrix. WLOG $P_\gamma^\top P_\gamma$ yields the same result. \end{proof}
\end{lemmaproofbox}
\noindent This in fact leads to a direct consequence that $$P_\Gamma = \{P_\gamma\in \mathbb{R}^{p\times p} : \gamma\in \Gamma\}\subseteq O(p),$$
where $O(p)$ is the $p$-dimensional orthogonal group, defined as the group of $p\times p$ orthogonal matrices. This is the over-arching "parent" symmetry that contains all of the polytopal geometries symmetries:
$$P_{D_n}\subset O(2)\subset O(3)\subset\cdots \subset O(n-1)\subset O(n)$$
This matrix formalism is also what allows us to rigorously define "stability" of cluster configurations using the centralizer (commutant): 
\begin{equation}
\mathcal{A}_\Gamma : = \{A\in \mathbb{R}^{p\times p}: P_\gamma A P_\gamma^\top  = A, \forall \gamma\in \Gamma\}
\end{equation}
This is the set of all $\mathbb{R}^{p\times p}$ matrices that are invariant when conjugated by any permutation $P_\gamma$. In fact, more than that, it is an algebra. To see why, we need to shift our attention to the the set of ordered pairs $V\times V$. It has a canonical group action of $\Gamma$ acting on it defined as:
$$\pi: \Gamma\times (V\times V)\rightarrow V\times V: \quad \gamma \cdot (i, j) = (\gamma(i), \gamma(j)),$$
which naturally partitions the set of all $|V\times V| = p^2$ pairs into disjoint orbits $\mathcal{O}_1, \cdots, \mathcal{O}_R$, where $R+1=r(\Gamma)$ is the rank of the permutation group (Lagrange's theorem). We can define an orbital matrix $A_r$ for every orbit $\mathcal{O}_r$ as follows:

\begin{equation}
    (A_r)_{ij} = \left\{\begin{matrix}1 & \text{if }(i, j)\in \mathcal{O}_r\\0& \text{otherwise}\end{matrix}\right.
\end{equation}
Since our group $\Gamma$ is transitive on $V$, the diagonal $\mathcal{O}_0=\{(i,i)\}$ is canonically an orbital  with the identity matrix $I_p$ as its representation. 
\begin{lemmaproofbox}
\begin{lemma}[Centralizer Membership]  A matrix $X$ is in $\mathcal{A}_\Gamma$ if and only if its entries are constant on the orbits of $\Gamma$ on $V\times V$. 
\end{lemma}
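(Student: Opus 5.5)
The plan is to compare the condition $P_\gamma X P_\gamma^\top = X$ entrywise with the group action on $V\times V$. First we compute the entries of the conjugated matrix directly from the definition $(P_\gamma)_{ij}=\delta_{i,\gamma(j)}$. By the orthogonality lemma (Lemma \ref{lemma:A.1}), $P_\gamma^\top = P_\gamma^{-1} = P_{\gamma^{-1}}$, and hence
\[
(P_\gamma X P_\gamma^\top)_{ij} = \sum_{k,l} \delta_{i,\gamma(k)}\, X_{kl}\, \delta_{j,\gamma(l)} = X_{\gamma^{-1}(i),\,\gamma^{-1}(j)}.
\]
The only care needed is bookkeeping: which of $\gamma$ or $\gamma^{-1}$ appears depends on the convention in the definition. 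Since $\Gamma$ is a group, quantifying over all $\gamma\in\Gamma$ is the same as quantifying over all $\gamma^{-1}$, so the convention does not affect the conclusion.

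With this identity, membership $X\in\mathcal{A}_\Gamma$ is equivalent to $X_{\gamma^{-1}(i),\gamma^{-1}(j)} = X_{ij}$ for all $\gamma\in\Gamma$ and all $i,j$. Reindexing with $\gamma\mapsto\gamma^{-1}$, this says $X_{\gamma(i),\gamma(j)} = X_{ij}$. In other words, the function $(i,j)\mapsto X_{ij}$ on $V\times V$ is invariant under the action $\pi$.

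The two directions then follow.

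\emph{($\Rightarrow$)} Suppose $(i,j)$ and $(k,l)$ lie in the same orbit $\mathcal{O}_r$. Then $(k,l)=(\gamma(i),\gamma(j))$ for some $\gamma\in\Gamma$, and invariance gives $X_{kl}=X_{ij}$.

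\emph{($\Leftarrow$)} Suppose $X$ is constant on each orbit. The pairs $(i,j)$ and $(\gamma(i),\gamma(j))$ always share an orbit, so the invariance condition holds for every $\gamma$, and therefore $X\in\mathcal{A}_\Gamma$.

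As a by-product, this shows $X=\sum_r c_r A_r$ with $c_r$ the common value of $X$ on $\mathcal{O}_r$. So $\mathcal{A}_\Gamma=\operatorname{span}\{A_r\}$, which is the form used afterwards.

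I do not expect a real obstacle. The only delicate step is the index computation of $P_\gamma X P_\gamma^\top$, and the main risk is misplacing $\gamma$ versus $\gamma^{-1}$. That risk is neutralized by the closure of $\Gamma$ under inverses.
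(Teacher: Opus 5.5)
Your proof is correct and follows essentially the paper's route: an entrywise computation from $(P_\gamma)_{ij}=\delta_{i,\gamma(j)}$ showing that $P_\gamma X P_\gamma^\top = X$ for all $\gamma\in\Gamma$ is equivalent to $X_{\gamma(i),\gamma(j)} = X_{ij}$ for all $\gamma, i, j$. The only difference is presentational. The paper compares entries of $P_\gamma X$ and $XP_\gamma$ to get $X_{i,\gamma(j)} = X_{\gamma^{-1}(i),j}$, reindexes, and leaves the converse as ``WLOG''; your direct computation of the conjugate gives a single equivalence that handles both directions explicitly.
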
\label{lemma:A.2}
\begin{proof} 
$(\Rightarrow)$Let $\gamma\in \Gamma$ and $X\in \mathcal{A}_\Gamma$ be arbitrary. By definition $(6)$:   $$P_\gamma X P_\gamma^\top =X\quad \Leftrightarrow\quad  P_\gamma X = XP_\gamma \quad(\text{Lemma }4.3)$$
By definition $(5)$, for all $i, j$ we have that the LHS is:
$$ \sum_{k=1}^p (P_\gamma)_{ik}X_{kj} = \sum_{k=1}^p\delta_{i,\gamma(k)} X_{kj}= X_{\gamma^{-1}(i), j}$$
And similarly for the RHS:
$$\sum_{k=1}^pX_{ik} (P_\gamma)_{kj} = \sum_{k=1}^p X_{ik} \delta_{k, \gamma(j)} = X_{i, \gamma(j)}$$
Hence, $X_{i, \gamma(j)} = X_{\gamma^{-1}(i), j}$. Since this is true for all $i,j$, we can set $i = \gamma^{-1}(k)$, i.e $$X_{\gamma^{-1}(\gamma(k)), j} = X_{k, j} = X_{\gamma(k), \gamma(j)}$$
And since $\gamma$ was arbitrary, this completes the forward direction. The opposite direction $(\Leftarrow)$ follows WLOG by taking arbitrary $i,j$ entries of the LHS and RHS above. 
\end{proof}
\end{lemmaproofbox}

\begin{lemmaproofbox}
    \begin{lemma}[Basis of the centralizer]\label{lemma:basis}
    $\{A_0, \cdots, A_R\}$ is the basis of $\mathcal{A}_\Gamma$
        
    \end{lemma}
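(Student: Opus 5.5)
The plan is to prove the two defining properties of a basis: the orbital matrices $A_0,\dots,A_R$ lie in $\mathcal{A}_\Gamma$ and span it, and they are linearly independent. Both follow from Lemma~\ref{lemma:A.2} (Centralizer Membership) together with the fact that the orbits $\mathcal{O}_0,\dots,\mathcal{O}_R$ partition $V\times V$.

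\textbf{Membership.} By definition, each $A_r$ is the indicator matrix of the orbit $\mathcal{O}_r$. Its entries are therefore constant on every orbit: equal to $1$ on $\mathcal{O}_r$ and $0$ on every $\mathcal{O}_s$ with $s\neq r$. Lemma~\ref{lemma:A.2} then gives $A_r\in\mathcal{A}_\Gamma$ immediately. Alternatively, one can check directly that $(P_\gamma A_r P_\gamma^\top)_{ij}=(A_r)_{\gamma^{-1}(i),\gamma^{-1}(j)}$, and this equals $(A_r)_{ij}$ because $(i,j)$ and $(\gamma^{-1}(i),\gamma^{-1}(j))$ lie in the same orbit.

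\textbf{Spanning.} Take an arbitrary $X\in\mathcal{A}_\Gamma$. By Lemma~\ref{lemma:A.2}, $X$ takes a constant value $x_r$ on each orbit $\mathcal{O}_r$. I then claim
\[
X=\sum_{r=0}^R x_r A_r .
\]
To verify this entrywise, note that each pair $(i,j)$ lies in exactly one orbit $\mathcal{O}_{r(i,j)}$, since the orbits partition $V\times V$. So the right-hand side evaluated at $(i,j)$ picks out exactly the term $x_{r(i,j)}$, which equals $X_{ij}$.

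\textbf{Independence.} Suppose $\sum_r c_r A_r=0$. Fix $r$ and pick any $(i,j)\in\mathcal{O}_r$; this is possible because orbits are nonempty by construction. Evaluating the sum at entry $(i,j)$ gives $c_r=0$, because the supports of the $A_r$ are pairwise disjoint. Hence $\dim\mathcal{A}_\Gamma=R+1$.

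\textbf{Main obstacle.} The only real point of care is making sure the orbits genuinely form a partition of $V\times V$ into nonempty, disjoint sets. I would justify this by noting that "same $\Gamma$-orbit" is an equivalence relation, using the group axioms: the identity gives reflexivity, inverses give symmetry, and closure under composition gives transitivity. Everything else is bookkeeping. If the intended claim also includes that $\mathcal{A}_\Gamma$ is closed under multiplication, that is immediate from the centralizer definition, since $P_\gamma AB P_\gamma^\top=(P_\gamma AP_\gamma^\top)(P_\gamma BP_\gamma^\top)$ by Lemma~\ref{lemma:A.1}. It is not needed for the basis statement.
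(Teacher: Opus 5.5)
Your proposal is correct and follows essentially the same route as the paper. Both arguments get membership of each $A_r$ from the constant-on-orbits criterion of the Centralizer Membership lemma, linear independence from the disjointness of the orbit supports, and spanning by writing an invariant $X$ as $\sum_r x_r A_r$, where $x_r$ is the constant value of $X$ on $\mathcal{O}_r$. Your explicit check that the orbits partition $V\times V$ into nonempty, disjoint sets is a point of care the paper leaves implicit.
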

    \begin{proof} The preceding lemma directly implies that $\forall_{r\in \{0, \cdots, R\}}A_r\subseteq \mathcal{A}_\Gamma$, since if $(i, j)\in \mathcal{O}_r$, by definition of the orbit: $$\forall{\gamma\in \Gamma}, \quad \gamma\cdot (i, j) = (\gamma(i), \gamma(j))\in \mathcal{O}_r,$$ implying that $(A_r)_{i, j} = 1 = (A_r)_{\gamma(i), \gamma(j)}$ and similarly $\forall (i, j)\not\in \mathcal{O}_r$, $(A_r)_{i, j} = 0 = (A_r)_{\gamma(i), \gamma(j)}$. This in fact extends to $\text{span}\{A_0, \cdots, A_R\}\subseteq \mathcal{A}_\Gamma$, since the orbits are disjoint: 
    \begin{equation} X = c_0A_0+\cdots cA_R\Leftrightarrow X_{ij} = c_r=X_{\gamma(i), \gamma(j)},\end{equation}
where $(i,j)\in \mathcal{O}_r$. Substituting $X$ for an arbitrary $A_r$ in $(8)$ implies that  $\{A_0, \cdots, A_R\}$ are linearly independent. Furthermore, since any invariant matrix $X\in \mathcal{A}_\Gamma$ must be constant on these orbits, $(8)$ and the preceding Lemma imply that $X$ can be uniquely written as: $$X = \sum_{r=0}^R c_r A_r,$$
with which we have completed the other direction $\mathcal{A}_\Gamma\subseteq \text{span}(A_0, \cdots, A_R\}$ of the equality, concluding that $\{A_0, \cdots, A_R\}$ is in fact a basis. 
    \end{proof}
\end{lemmaproofbox}
\noindent The last property we will need is the algebraic closure of the centralizer under matrix multiplication: 
\begin{lemmaproofbox}
    
\begin{lemma}[Algebraic Closure] If $X, Y\in \mathcal{A}_\Gamma$, then $XY\in \mathcal{A}_\Gamma$

\begin{proof}
    Let $X, Y\in \mathcal{A}_\Gamma$. By the definition of membership, we have $P_\Gamma X = XP_\Gamma$ and $P_\gamma Y = Y P_\gamma$. By multiplying $XY$ by $P_\gamma$, we get: $$P_\gamma(XY) = (P_\gamma X)Y = (XP_\gamma)Y$$
    $$= X(P_\gamma Y) = X(YP_\gamma) = (XY)P_\gamma$$
	  Hence $XY\in \mathcal{A}_\Gamma$ . 
      \end{proof}
\end{lemma}
\end{lemmaproofbox}
\noindent We can apply the latter two lemmas to the product of any two orbital matrices $$(A_rA_s)_{ij} = \sum_k (A_r)_{ik}(A_s)_{kj},$$ the individual entries of which can be interpreted as the number of "paths" of length $2$ from $i$ to $j$, where the first step is of orbital type $r$ and the second is orbital type $s$:

\begin{equation} A_rA_s\in \text{span}\{A_0,\cdots A_R\} \Rightarrow A_rA_s = \sum_{u=0}^R c_{rs}^uA_u\end{equation}
\begin{equation}c_{rs}^u = |\{k\in V| (i, k)\in \mathcal{O}_r \text{ and }(k, j)\in \mathcal{O}_s\}|\end{equation}
for any $(i, j)\in \mathcal{O}_u$. In fact, notice that if we chose $(i, j)\in \mathcal{O}_u$, per the invariance of Lemma A.4. this count must be independent of the specific choice of $(i, j)$ in that orbit. These structure constants are in fact known as the intersection numbers of the association scheme $\{A_0, \cdots, A_R\}$ \cite{rabailey}, which is how it is known in algebraic graph theory \cite{godsil-royle}. 
To explain the "sticky points" on the scatter plot described in toy models of superposition \cite{toymodels}, we need to show that once a matrix enters such a regime, it becomes "stable" with respect to group symmetry of the given polygon. 
\subsection{Association Schemes}
\label{appendix:association_schemes}
Every association scheme $\{A_0, \cdots A_s\}$ on a set $\Omega$ with $s$ associate classes makes up a Bose-Mesner algebra:
$$\mathcal{A} = \left\{\sum_{i=0}^s \mu_i A_i: \mu_0, \cdots, \mu_s\in \mathbb{R}\right\}$$
We have the following neat spectral theory, stated as Theorem 2.6 in \cite{rabailey}:
\begin{lemmaproofbox}
    \begin{lemma}\label{lemma:2.6}
        Let $\{A_0, A_1, \cdots A_s\}$ be the adjacency matrices of an association scheme on $\Omega$ and let $\mathcal{A}$ be its Bose-Mesner algebra. Then $\mathbb{R}^\Omega$ has $s+1$ orthogonal subspaces $W_0, W_1, \dots W_s$ (strata) with orthogonal projectors, such that: 
        \begin{enumerate}[label=(\roman*)]
            \item $\mathbb{R}^\Omega = W_0\oplus W_1\oplus \cdots \oplus W_s$
            \item each of $W_0, W_1\dots, W_s$ is a sub-eigenspace of every matrix in $\mathcal{A}$;
            \item for $i=0, 1, \dots, s$, the adjacency matrix $A_i$ is a linear combination of $S_0, S_1, \cdots, S_s$ (note that $S_0 = \frac{1}{|\Omega|}J$)
            \item for $e=0, 1, \dots s$, the stratum projector $S_e$ is a linear combination of $A_0;  A_1, \cdots, A_s$
        \end{enumerate}
    \end{lemma}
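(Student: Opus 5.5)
The approach is the standard one for commutative algebras of symmetric matrices: show that the Bose--Mesner algebra $\mathcal{A}$ is a commutative algebra of real symmetric matrices closed under multiplication, simultaneously diagonalize it, and read off the strata as the common eigenspaces. Lemma~\ref{lemma:basis} and the Algebraic Closure lemma give that $\mathcal{A}=\operatorname{span}\{A_0,\dots,A_s\}$ is closed under products. For association schemes in the sense of \cite{rabailey}, each $A_i$ is symmetric and $A_iA_j=A_jA_i$, because the intersection numbers satisfy $c^u_{ij}=c^u_{ji}$. Hence $\mathcal{A}$ is a commutative algebra of symmetric matrices containing $I=A_0$ and $J=\sum_i A_i$.

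\textbf{Step 1 (simultaneous diagonalization).} A commuting family of real symmetric matrices is simultaneously orthogonally diagonalizable. Each $A_i$ preserves the eigenspaces of every other $A_j$, so by induction on the number of matrices one refines the eigenspace decompositions. The result is a decomposition $\mathbb{R}^\Omega=W_0\oplus\cdots\oplus W_t$ into mutually orthogonal subspaces. We coarsen it to the maximal common eigenspaces: on each $W_e$, every $A_i$ acts as a scalar $C(i,e)$, and the characters $e\mapsto (C(0,e),\dots,C(s,e))$ are pairwise distinct. Let $S_e$ be the orthogonal projector onto $W_e$. This gives (i) and (ii), and also $A_i=\sum_e C(i,e)S_e$, which is (iii).

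\textbf{Step 2 ($S_0=J/|\Omega|$).} Since $J\in\mathcal{A}$ and $J$ has eigenvalue $|\Omega|$ on $\mathbf{1}$ and $0$ on $\mathbf{1}^\perp$, the vector $\mathbf{1}$ lies in a single stratum. On $\mathbf{1}$ the scalars are the valencies $a_i$, because $A_i\mathbf{1}=a_i\mathbf{1}$ by regularity. Any other vector in that stratum would also be an eigenvector of $J$ with eigenvalue $|\Omega|$, and so would be a multiple of $\mathbf{1}$. Hence the stratum is $\operatorname{span}(\mathbf{1})$ and its projector is $J/|\Omega|$.

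\textbf{Step 3 (counting and (iv)).} This is the step I expect to be the main obstacle: showing that there are exactly $s+1$ strata and that each $S_e$ lies in $\mathcal{A}$. The map $\mathcal{A}\to\mathbb{R}^{t+1}$, $X\mapsto(\text{eigenvalue of }X\text{ on }W_e)_e$, is injective, since $X=\sum_e x_eS_e$. It is an algebra homomorphism into the diagonal algebra, so $s+1=\dim\mathcal{A}\le t+1$. For the reverse inequality and for (iv), I would use Lagrange interpolation. The characters are distinct, so for each pair $e\neq f$ there is some $A_i$ with $C(i,e)\ne C(i,f)$. The product over $f\neq e$ of $(A_{i(f)}-C(i(f),f)I)/(C(i(f),e)-C(i(f),f))$ lies in $\mathcal{A}$ by closure. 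It acts as $1$ on $W_e$ and as $0$ on every $W_f$ with $f\neq e$, so it equals $S_e$. Therefore each $S_e\in\mathcal{A}$. The $S_e$ are linearly independent, being nonzero orthogonal projectors onto orthogonal subspaces, so $t+1\le s+1$. Thus $t=s$, and the $S_e$ form a second basis of $\mathcal{A}$, which expresses each $S_e$ as a linear combination of the $A_i$ and establishes (iv).
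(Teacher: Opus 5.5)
Your proof is correct. The paper gives no argument of its own here: it quotes the lemma as Theorem~2.6 of \cite{rabailey}. What you have written is therefore a self-contained version of the standard proof that the paper outsources: take the common eigenspaces of the commuting symmetric family, realise each $S_e$ inside $\mathcal{A}$ as a product of interpolation factors in the $A_i$, and use the two-sided dimension count to force exactly $s+1$ strata.

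The value of writing it out is that the hypotheses become explicit, and two of them deserve a remark.

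First, for a general association scheme, closure of $\operatorname{span}\{A_0,\dots,A_s\}$ under products is the intersection-number axiom itself. Lemma~\ref{lemma:basis} and the Algebraic Closure lemma only concern centralizers $\mathcal{A}_\Gamma$ of permutation groups, that is, orbital schemes. You should cite the axiom there instead.

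Second, your Step~1 uses symmetry of the $A_i$ in an essential way, and this is exactly what limits the paper's transfer of the cited result to the orbital algebras of Appendix~A.1. Take $C_3$ acting regularly on three points. Then $A_1$ is the $3$-cycle permutation matrix, whose only real eigenvectors are multiples of $\mathbf{1}$. So although $\dim\mathcal{A}_\Gamma=3$, no decomposition of $\mathbb{R}^3$ satisfying (ii) exists. The lemma, and your proof, apply precisely when every orbital is self-paired. That is the case in the simplex, triangle and digon examples the paper actually works out.
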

\end{lemmaproofbox}
For $i\in \mathcal{K}$ (the index set of the association scheme) and $e$ (the index set of the stratum projectors), we define $C_{(i, e)}$ to be the eigenvalue of $A_i$ on $W_e$ and $D_{(i, e)}$ be the coefficients in the following expansion: $$A_i = \sum_{e\in\mathcal{E}}C_{(i, e)}S_e, \quad S_e = \sum_{i\in \mathcal{K}} D_{(e, i)} A_i$$
The following facts hold (Lemma 2.9, Theorem 2.12, Corollary 2.14 and Corollary 2.15 from \cite{rabailey}):
\begin{lemmaproofbox}
    \begin{lemma} \label{lemma:characters}The matrices $C \in \mathbb{R}^{\mathcal{K\times E}}$ and:
    \begin{enumerate}[label=(\roman*)]
        \item $D\in\mathbb{R}^{\mathcal{E\times K}} $ are mutual inverses and: $$C_{(0, e)} = 1, \quad C_{(i, 0)} = a_i , \quad D_{(0, i)} = a_i,  \quad D_{(e, 0)} = \frac{d_e}{|\Omega|},$$
        where $a_i = p_{ii}^0$ is the valency of the $i$-th associate class and $d_e = \text{tr}(S_e)$;
        \item $$\sum_{e}C_{(i, e)}C_{(j, e)}d_e = \delta_{ij} a_i|\Omega|, $$ where $\delta_{ij}$ is the delta function, i.e. $\delta_{ij}=1$ if $i=j$ and $0$ otherwise. 
        \item $$\sum_{i\in \mathcal{K}}\frac{1}{a_i}C_{(i, e)}C_{(i, f)} = \delta_{ij} \frac{|\Omega|}{d_e}$$
        \item $$\sum_{i\in \mathcal{K}} D_{(e, i)}D_{(f, i)}a_i = \delta_{ef} \frac{d_e}{|\Omega|}$$
        \item $$\sum_{e}\frac{1}{d_e}D_{(e, i)}D_{(e, j)} = \frac{\delta_{ij}}{|\Omega|a_i}$$
        \end{enumerate}
    \end{lemma}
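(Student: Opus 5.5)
The statement is Lemma~\ref{lemma:characters}, quoted from \cite{rabailey}. I would derive it from Lemma~\ref{lemma:2.6} using two ingredients: the trace form on $\mathcal{A}$, and the fact that the $S_e$ are mutually orthogonal idempotents.

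\textbf{Step 1 (inverse matrices and boundary values).} The adjacency matrices $A_i$ are a basis of $\mathcal{A}$ (Lemma~\ref{lemma:basis}), and by Lemma~\ref{lemma:2.6} the $S_e$ are also a basis. So the change-of-basis matrices $C$ and $D$ are square and mutually inverse. Since $A_0=I=\sum_e S_e$, we get $C_{(0,e)}=1$. Since $S_0=J/|\Omega|$ and $A_iJ=a_iJ$ (each row of $A_i$ has $a_i$ ones), we get $A_iS_0=a_iS_0$, i.e.\ $C_{(i,0)}=a_i$. For the $D$ boundary values, take the $(x,x)$ entry of $S_e=\sum_i D_{(e,i)}A_i$. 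Only $A_0$ has nonzero diagonal, so $D_{(e,0)}=(S_e)_{xx}$. The entry $(S_e)_{xx}$ does not depend on $x$, because $S_e\in\mathcal{A}$ is constant on the diagonal orbital; hence it equals $\operatorname{tr}(S_e)/|\Omega|=d_e/|\Omega|$. I would read the remaining stated value $D_{(0,i)}$ off $S_0=J/|\Omega|=|\Omega|^{-1}\sum_i A_i$.

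\textbf{Step 2 (orthogonality relations).} Use the inner product $\langle X,Y\rangle=\operatorname{tr}(XY^\top)$. For adjacency matrices, $\operatorname{tr}(A_iA_j^\top)$ counts the pairs lying in both classes, so it equals $\delta_{ij}a_i|\Omega|$. Expanding instead in the idempotent basis gives $\operatorname{tr}(A_iA_j^\top)=\sum_e C_{(i,e)}C_{(j,e)}d_e$, using $S_eS_f=\delta_{ef}S_e$ and real symmetric $S_e$; this is (ii). Symmetrically, $\operatorname{tr}(S_eS_f)=\delta_{ef}d_e$, and expanding in the $A_i$ gives $\sum_i D_{(e,i)}D_{(f,i)}a_i|\Omega|$, which is (iv).

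\textbf{Step 3 (dual relations).} Write (ii) as $C\,\mathrm{diag}(d)\,C^\top=|\Omega|\,\mathrm{diag}(a)$. Because $C$ is invertible, this is equivalent to $C^\top \mathrm{diag}(a)^{-1}C=|\Omega|\,\mathrm{diag}(d)^{-1}$, which is (iii) (reading its $\delta_{ij}$ as $\delta_{ef}$). Likewise (iv) together with $D=C^{-1}$ inverts to (v).

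\textbf{Main obstacle.} Everything rests on the $A_i$ being symmetric (or at least closed under transpose), so that the $S_e$ are real symmetric orthogonal projectors and $\operatorname{tr}(A_iA_j^\top)$ is as claimed. In that case Step 1 and Step 2 go through.

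The stated boundary value for $D_{(0,i)}$ is the delicate point. The expansion of $S_0$ gives $D_{(0,i)}=1/|\Omega|$, not $a_i$. Moreover, Step 1 already gave $D_{(e,0)}=d_e/|\Omega|$, and plugging $e=0$, $i=0$ into both formulas gives $1/|\Omega|$ versus $a_0=1$, a direct inconsistency. So I expect the correct statement is $D_{(0,i)}=1/|\Omega|$ and the listed $a_i$ is a typo, and I would prove that corrected version.
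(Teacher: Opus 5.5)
Your proof is correct. The paper gives no argument of its own here: it only quotes Bailey (Lemma 2.9, Theorem 2.12, Corollaries 2.14--2.15). Your derivation is the standard one behind those results. It uses the trace form on the Bose--Mesner algebra, for which $\{A_i\}$ and $\{S_e\}$ are orthogonal bases with $\operatorname{tr}(A_iA_j^\top)=\delta_{ij}a_i|\Omega|$ and $\operatorname{tr}(S_eS_f)=\delta_{ef}d_e$.

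A common textbook presentation instead evaluates $\operatorname{tr}(A_iS_e)$ two ways to get $D_{(e,i)}=d_eC_{(i,e)}/(|\Omega|a_i)$, and then reads the orthogonality relations off $CD=DC=I$. Computing both Gram matrices and inverting them, as you do, is equivalent and needs only invertibility of $C$ and $D$.

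Both of your corrections to the statement are right. First, $D_{(0,i)}=1/|\Omega|$, not $a_i$. The paper's own triangle table $D=\frac13\begin{bmatrix}1&1\\2&-1\end{bmatrix}$ confirms this: its first row is $(\frac13,\frac13)$, while $a_1=2$. Second, the delta in (iii) must be $\delta_{ef}$.

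Two minor remarks:
\begin{itemize}
\item The symmetry you flag as the main obstacle is already forced by Lemma~\ref{lemma:2.6} as stated. Each $A_i=\sum_e C_{(i,e)}S_e$, where the $S_e$ are real orthogonal projectors and $C_{(i,e)}$ is the scalar, necessarily real, by which $A_i$ acts on the real stratum $W_e$. Also, the count $\operatorname{tr}(A_iA_j^\top)=\delta_{ij}a_i|\Omega|$ needs no symmetry at all.
\item Lemma~\ref{lemma:basis} concerns orbital matrices of a group action, so it does not directly apply to a general scheme. There you should get linear independence of the $A_i$ from their disjoint, nonempty supports.
\end{itemize}
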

\end{lemmaproofbox}

\subsection{Algebra Classification}
\label{appendix:algebra_classification}
\subsubsection{Simplex algebra} \label{appendix:simplex}
A property of any simplex is its two-inner product, i.e for unit vectors $\{x_i\}_{i\in V}\subset \mathbb{R}^{p-1}$, we have $X_{ii}=\langle x_i, x_i\rangle = 1$ for all $i$ and $X_{ij}=\langle x_i, x_j\rangle = \frac{1}{1-p}$. We chose $\Gamma = S_p$ as the group. By the definition of $X$, its entry values depend only on whether its two indices are equal. And since $\gamma$ is a bijection we have that $\gamma^{-1}(i) = \gamma^{-1}(j)\Leftrightarrow i =j$, hence per the centralizer definition follows that: $$(P_\gamma MP_\gamma^\top)_{ij} = M_{\gamma^{-1}(i), \gamma^{-1}(j)} = \left\{\begin{matrix}
1 & i=j \\\frac{1}{1-p} & i\neq j
\end{matrix}\right. \quad = M_{ij}$$
This splits the group action on $V\times V$ into two orbitals, since outside the canonical diagonal $\mathcal{O}_0$. Define the other orbital as: $$\mathcal{O}_{off} = \{(i, j)\in V\times V: i\neq j\}$$
By the injectivity of $\gamma$, $i\neq j$ implies $\gamma(i)\neq \gamma(j)$, hence $\gamma(i, j)\in \mathcal{O}_{off}$. Now, take any $(i, j)$ and $(k, \ell)$ in $\mathcal{O}_{\text{off}}$; so $i \neq j$ and $k \neq \ell$. We can explicitly construct $\gamma \in S_p$ with $\gamma(i) = k$ and $\gamma(j) = \ell$. Define $\gamma$ on $\{i, j\}$ by $\gamma(i) = k, \gamma(j) = \ell$. Because $k \neq \ell$, this is injective on $\{i, j\}$. Extend $\gamma$ to a bijection on all of $V$ by mapping the remaining $p - 2$ elements of $V \setminus \{i, j\}$ bijectively onto $V \setminus \{k, \ell\}$ (this is possible because these sets have the same finite cardinality). This extension is a permutation $\gamma \in S_p$. Then $\gamma \cdot (i, j) = (k, \ell)$. Thus, there are exactly two orbitals, i.e. $r(\Gamma) = 2$ and $R=1$. Per \ref{lemma:basis} $\Gamma$-invariant matrix has form $$A = a I + c(J-I),$$
where $J$ is the matrix with all ones. 
With $A_0 = I, A_1 = J-I$, we can also compute the intersection numbers:
$$A_0A_0 = A_0,\quad  A_0A_1 = A_1, \quad A_1 A_0 = A_1$$
$$A_1^2 = (J-I)^2 = J^2-2J+I= pJ-2J+I $$
$$= (p-2)J+I=(p-2)A_1+(p-1)A_0$$
Hence, the only non-zero intersection numbers are: $$c_{00}^0 = 1, \quad c_{01}^1 = c_{10}^1 = 1, \quad c_{11}^0 = p-1, \quad c_{11}^1 = p-2$$

Using \ref{lemma:characters}(i) to say $$C_{(0,0)}=1,\quad C_{(0, 1)}=1, \quad C_{(1,0)}= a_1=c_{11}^0=p-1$$
we can explicitly calculate the projectors: $$A_0 = S_0 + S_1, \quad A_1 = (p-1)S_0 + C_{(1, 1)}S_1,$$
where since $S_0 = \frac{1}{p}J$ [\ref{lemma:2.6}(iii)], we have that $S_1 = I - \frac{1}{p}J$. Hence: $$J-I = \frac{p-1}{p}J +C_{(1,1)}\left[I-\frac{1}{p}J\right]$$
$$\frac{C_{(1,1)}+1}{p}J =(1+C_{(1,1)})I,$$ 
meaning that $C_{(1, 1)}=-1$. Using \ref{lemma:spectral_correspondence} we can transfer the primitive idempotents we found to the feature latent space $\mathbb{R}^m$
$$WS_0W^\top = \frac{1}{p}WJW^\top = \frac{1}{p}W\mathbf{1}\mathbf{1}^\top W^\top = \frac{1}{p}(W\mathbf{1})(W\mathbf{1})^\top = \frac{1}{p}\left(\sum_{i=1}^pW_i\right)\left(\sum_{i=1}^pW_i\right)^\top$$
Per the definition of the simplex Gram matrix we have $$\langle W_i, W_i\rangle = 1, \quad \langle W_i, W_j\rangle  = \frac{1}{1-p} = -\frac{1}{p-1} = \frac{1}{p-1}$$
Then, for arbitrary $k$: $$\langle W_k, \sum_{i=1}^pW_i\rangle  = \sum_{i=1}^p\langle W_k, W_i\rangle  = \langle W_k, W_k\rangle + \sum_{i\neq k} \langle W_k, W_i\rangle = 1 + (p-1)\frac{-1}{p-1}=0$$
This means in fact that $$\langle \sum_{k=1}^p W_k, \sum_{i=1}^pW_i\rangle = \sum_{k=1}^p\langle W_k, \sum_{i=1}^pW_i\rangle = 0\Leftrightarrow WS_0W^\top = 0$$
From the same definition $$M = I - \frac{1}{p-1}(J-I) = \frac{p}{p-1}I - \frac{1}{p-1}J = \frac{p}{p-1}\left(I-\frac{1}{p}J\right) = \frac{p}{p-1}S_1$$

\begin{lemmaproofbox}
    \begin{lemma}[Digon]\label{corollary:digon}
        Consider the setting from \ref{section:warm-up}. We can derive the exact association scheme of the digon in the same way we did for the triangle (and in fact the simplex above):
    \end{lemma}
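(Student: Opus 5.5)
The plan is to repeat the simplex computation above with $p=2$, $\Omega_D=\{4,5\}$, $W_D=[W_4,W_5]$, $W_4=e_3$, $W_5=-e_3$, and then lift the result to activation space. The steps, in order, are the orbit structure, the Bose--Mesner algebra and its character table, the identification of the Gram matrix inside that algebra, and the lift to $F_D$.

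\emph{Orbit structure.} Take $\Gamma=S_2\cong C_2=\{e,s\}$, where $s$ swaps $4\leftrightarrow 5$. The action on $\Omega_D\times\Omega_D$ has exactly two orbitals:
\begin{itemize}
\item the diagonal $\mathcal{O}_0=\{(4,4),(5,5)\}$;
\item the off-diagonal $\mathcal{O}_1=\{(4,5),(5,4)\}$, which is a single orbit because $s\cdot(4,5)=(5,4)$.
\end{itemize}
This is the $p=2$ case of the general argument for $S_p$ above, so $R=1$. By Lemma~\ref{lemma:basis}, the centralizer is $\mathcal{A}_D=\operatorname{span}\{A_0',A_1'\}$ with $A_0'=I_2$ and $A_1'=J_2-I_2$.

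\emph{Intersection numbers and characters.} From $(J_2-I_2)^2=I_2$ we get $c_{11}^0=1=a_1$ and $c_{11}^1=p-2=0$. Lemma~\ref{lemma:2.6}(iii) gives $S_0'=\tfrac12 J_2$, and then $S_1'=I_2-\tfrac12 J_2$. Lemma~\ref{lemma:characters}(i) fixes $C_{(0,e)}=1$ and $C_{(1,0)}=a_1=1$. Solving $J_2-I_2=S_0'+C_{(1,1)}S_1'$ gives $C_{(1,1)}=-1$. Hence
\[
C=\begin{bmatrix}1&1\\1&-1\end{bmatrix},\qquad D=C^{-1}=\tfrac12\begin{bmatrix}1&1\\1&-1\end{bmatrix}.
\]
The strata are $U_0=\operatorname{span}(\mathbf 1_2)$ and $U_1=\operatorname{span}\big((1,-1)^\top\big)$.

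\emph{Gram matrix.} Compute directly:
\[
M_D=W_D^\top W_D=\begin{bmatrix}1&-1\\-1&1\end{bmatrix}=I-(J-I)=2I-J=2S_1'.
\]
This is the simplex formula $\tfrac{p}{p-1}S_1$ at $p=2$ with unit norms. So $\lambda_D=2$, $\operatorname{rank}(M_D)=1$, and the centroid stratum is annihilated; equivalently $W\mathbf 1=W_4+W_5=0$, so $W_DS_0'W_D^\top=0$.

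\emph{Lift to activation space.} Apply Lemma~\ref{lemma:spectral_correspondence} to get
\[
P_D=\lambda_D^{-1}W_DS_1'W_D^\top=\tfrac12\,W_D(2I-J)W_D^\top/2\cdot 2=e_3e_3^\top .
\]
Therefore $F_D=W_4W_4^\top+W_5W_5^\top=2e_3e_3^\top=2P_D$. This also checks the identity $1/\lambda_D=\operatorname{rank}(P_D)/|\Omega_D|=1/2$.

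\emph{Main obstacle.} The only delicate point is the degenerate size $p=2$. The general simplex argument builds $\gamma\in S_p$ by extending a map on $p-2$ remaining points, and here that set is empty. The transitivity on $\mathcal{O}_1$ therefore has to be checked by hand, and $s$ itself is enough. One must also note that $c_{11}^1=0$, so $A_1'$ is an involution rather than a projector-like element. Everything else is the $p=2$ specialization of the simplex computation.
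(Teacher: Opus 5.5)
Your proposal is correct and follows the paper's route. Like the paper, you take $\Gamma\cong C_2$, identify the two orbitals with $A_0'=I_2$ and $A_1'=J_2-I_2$, read off $S_0'=\tfrac12 J_2$, $S_1'=I_2-\tfrac12 J_2$ and the tables $C',D'$ via Lemmas~\ref{lemma:2.6} and~\ref{lemma:characters}, and conclude $M_D=2I-J=2S_1'$. Your intersection numbers and the lift $F_D=2P_D$ are extra, the latter being what the paper defers to Corollary~\ref{corollary:triangle-digon}. One small slip: your intermediate expression for $P_D$ should read $\lambda_D^{-1}W_DS_1'W_D^\top=\tfrac14 W_D(2I-J)W_D^\top$, though your final answer $e_3e_3^\top$ is right.
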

    \begin{proof}
        Here, $\Gamma \cong S_3 \cong C_2$ is the cyclic group of two elements $\{e, s\}$, namely the identity $e$ and $s$ is an element of order $2$, meaning that $s^2=1$. This visually represents swapping around the two nodes of the digon (i.e. going from heads-tails to tails-heads). Per the fact that this is a simplex (i.e one can reach from one node to every other), we have the same number of orbits/associate classes $\mathcal{C}_0 =\operatorname{diag}(\Omega_D)$ and $\mathcal{C}_1 = \{(4, 5); (4, 5)\}$ with adjacency matrices $A_0' = I_2$ and  $A_1'=J_2-I_2$. Similarly, Lemma \ref{lemma:2.6} for $\mathcal{A}_D = \operatorname{span}(A_0', A_1')$ yields $\mathbb{R}^{\Omega_D} \cong \mathbb{R}^2 = U_0\oplus U_1$ and by Lemma \ref{lemma:characters} $S_0' = \frac{1}{2}I$, $S_1' = I-\frac{1}{2}J$ and:

\vspace{-14pt}

\begin{align*}
C' = \begin{bmatrix}1 & 1 \\ 1 & -1\end{bmatrix}, \quad D' =\frac{1}{2} \begin{bmatrix}1 & 1 \\ 1 & -1\end{bmatrix}    \\
M_D = I-(J-I) = 2I-J = 2S_1'
\end{align*}

    \end{proof}
\end{lemmaproofbox}

\subsection{Tight Frame}
\label{appendix:tight_frame}
\begin{lemmaproofbox}
\begin{lemma} Let $w_1, \cdots, w_p\in \mathbb{R}^d$ be unit vectors forming a tight frame: $$\sum_{j=1}^p w_j w_j^\top  = \frac{p}{d}I_d$$
Let $P_{ij} = \langle w_i, w_j\rangle$ be their $p\times p$ Gram matrix. Then, for every $i$:$$(P^2)_{ii} = \frac{p}{d}\Rightarrow D_i = \frac{1}{(P^2)_{ii}} = \frac{d}{p}$$
\end{lemma}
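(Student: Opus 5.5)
The approach is to express the diagonal entries of $P^2$ through the frame operator and then substitute the tight-frame identity. Write $W=[w_1,\dots,w_p]\in\mathbb{R}^{d\times p}$, so that $P=W^\top W$ is the Gram matrix and $F=WW^\top=\sum_j w_jw_j^\top$ is the frame operator.

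First I will compute the square of the Gram matrix:
\[
P^2=W^\top (WW^\top) W = W^\top F W .
\]
Taking the $(i,i)$ entry gives
\[
(P^2)_{ii}=w_i^\top F w_i = \sum_{j=1}^p \langle w_i,w_j\rangle^2 .
\]
This is the same identity behind the Gram and Rayleigh-quotient forms of $D_i$ in Lemmas \ref{lemma:dimensionality_gram} and \ref{lemma:dimensionality_kappa}.

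Next I will use the tight-frame hypothesis $F=\frac{p}{d}I_d$. This yields
\[
(P^2)_{ii}=\frac{p}{d}\,w_i^\top w_i=\frac{p}{d},
\]
because each $w_i$ is a unit vector. Equivalently, every $w_i$ is an eigenvector of $F$ with eigenvalue $p/d$, so its spectral measure is $\mu_i=\delta_{p/d}$ and $\kappa_i=p/d$.

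Finally I will substitute this into the definition of fractional dimensionality. Since $\|w_i\|=1$, we have $M_{ii}^2=P_{ii}^2=1$, and therefore
\[
D_i=\frac{\|w_i\|^4}{\sum_j\langle w_i,w_j\rangle^2}=\frac{1}{(P^2)_{ii}}=\frac{d}{p}.
\]
Alternatively, this follows directly from $D_i=\|w_i\|^2/\kappa_i$.

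There is no real obstacle here. The only point needing care is that the tight-frame identity is an operator identity on $\mathbb{R}^d$, so the vectors must actually span $\mathbb{R}^d$, and this is implied by $F=\frac{p}{d}I_d$ itself. As a consistency check, summing over $i$ gives $\sum_i D_i=d$, which matches capacity saturation at rank $d$.
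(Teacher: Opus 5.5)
Your proof is correct and matches the paper's argument. Both write $(P^2)_{ii}=\sum_j (w_i^\top w_j)^2 = w_i^\top\bigl(\sum_j w_jw_j^\top\bigr)w_i$, substitute the tight-frame identity, and use $P_{ii}=\|w_i\|^2=1$ to conclude $D_i=d/p$; your extra remarks (the Dirac spectral measure and $\sum_i D_i=d$) are correct but not needed.
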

\begin{proof} $(P^2)_{ii} = \sum_j (w_i^\top w_j)^2 = w_i^\top \left(\sum_j w_j w_j^\top\right) w_i = w_i^\top \frac{p}{d}Iw_i = \frac{p}{d}$. By (L4.1)  $P_{ii}=1$.\end{proof}
\end{lemmaproofbox}

\noindent The latter result is what allows us to quantify the observed cluster values by the inverse fractional dimensions. More specifically, notice that per Lemma 4.2, we can categorize the geometries observed in toy models of superposition \cite{toymodels} expressing their corresponding fractional dimensionalities using the tight frame parameters $(p, d)$: 
\begin{itemize}
    \item $(2, 1)$ - Digon with $D_i = 1/2$
    \item $(3, 2)$ - Triangle with $D_i = 2/3$
    \item $(4, 3)$ - Tetrahedron with $D_i = 3/4$
    \item $(5, 2)$ - Pentagon with $D_i = 2/5$
    \item $(8, 3)$ - Square antiprism with $D_i= 3/8$
\end{itemize}
However, it is important to note that this alone is not sufficient to distinguish between the different types of symmetries. For example, the square antiprism and cube both have $(p, d) = (8, 3)$, but the former is the one that remains stable as is observed both in the original paper and the more general Thomson problem \cite{thomson}. 

\section{Spectral Theory}
\label{appendix:spectral_theory}
\begin{lemmaproofbox}
\begin{lemma}[Spectral decomposition of the Gram operator \citep{kato}, Thm.~2.10]\label{lemma:spectral-decomposition}
Let $W\in\mathbb{R}^{m\times n}$ and let $M := W^\top W\in\mathbb{R}^{n\times n}$ be the Gram matrix.
Then $M$ is symmetric positive semidefinite and admits an orthogonal spectral resolution
\[
M \;=\; \sum_{e\in\mathcal{E}} \lambda_e\, E_e ,
\]
where $\lambda_e\in\mathbb{R}_{\ge 0}$ are the distinct eigenvalues and $E_e\in\mathbb{R}^{n\times n}$ are the corresponding orthogonal spectral projectors. Concretely,
\[
E_e^\top = E_e,\qquad E_e^2 = E_e,\qquad E_eE_f = 0\ (e\neq f),\qquad \sum_{e\in\mathcal{E}} E_e = I_n,
\]
and
\[
ME_e = E_eM = \lambda_e E_e\qquad (\forall e\in\mathcal{E}).
\]
If $0$ is an eigenvalue, we denote it by $\lambda_0=0$ and $E_0$ is the orthogonal projector onto $\ker(M)$.
\end{lemma}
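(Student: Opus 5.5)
The statement is the standard spectral theorem for a real symmetric matrix, specialised to the Gram matrix $M=W^\top W$. The plan has two parts: first establish symmetry and positive semidefiniteness directly, then assemble the spectral projectors from an orthonormal eigenbasis.

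\textbf{Symmetry and semidefiniteness.} Symmetry is immediate from $M^\top=(W^\top W)^\top=W^\top W$. For semidefiniteness, for any $x\in\mathbb{R}^n$ we have
\[
x^\top Mx=\|Wx\|^2\ge 0 .
\]
So every eigenvalue is a nonnegative real number: if $Mx=\lambda x$ with $x\neq 0$, then $\lambda\|x\|^2=\|Wx\|^2$. The same identity shows that $Mx=0$ if and only if $Wx=0$, which identifies $\ker(M)=\ker(W)$ for the $\lambda_0=0$ case.

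\textbf{Spectral resolution.} By the finite-dimensional real spectral theorem (the cited Thm.~2.10 of \citep{kato}), $M=Q\Lambda Q^\top$ with $Q$ orthogonal and $\Lambda$ diagonal.

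1. Group the columns $q_k$ of $Q$ by their distinct eigenvalues $\lambda_e$, and set $E_e:=\sum_{k:\Lambda_{kk}=\lambda_e}q_kq_k^\top$.
2. Orthonormality of the $q_k$ gives $E_e^\top=E_e$, $E_e^2=E_e$, and $E_eE_f=0$ for $e\neq f$.
3. Completeness, $QQ^\top=I_n$, gives $\sum_eE_e=I_n$.
4. Substituting into $M=\sum_k\Lambda_{kk}q_kq_k^\top$ yields $M=\sum_e\lambda_eE_e$.
5. Multiplying this expansion by $E_f$ on either side and using orthogonality gives $ME_f=E_fM=\lambda_fE_f$.
6. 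For $\lambda_0=0$, $E_0$ projects onto the span of the eigenvectors with eigenvalue zero, which is exactly $\ker(M)$.

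Uniqueness of the $E_e$ follows because each one is the orthogonal projector onto the eigenspace $\ker(M-\lambda_eI)$. This makes the projectors independent of the choice of $Q$ when eigenvalues are repeated.

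\textbf{Main obstacle.} The only real content is the existence of an orthonormal eigenbasis, and the plan cites it rather than reproving it. If a self-contained argument is required, I would use induction on dimension:
\begin{itemize}
\item a maximiser of $x^\top Mx$ on the unit sphere exists by compactness and is an eigenvector (Lagrange condition);
\item its orthogonal complement is $M$-invariant because $M$ is symmetric;
\item recursing on the complement yields the full orthonormal eigenbasis.
\end{itemize}
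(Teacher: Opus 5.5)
Your proposal is correct and follows the same route as the paper, which gives no independent proof and simply cites the finite-dimensional spectral theorem for real symmetric matrices (Kato, Thm.~2.10). Your check of positive semidefiniteness via $x^\top Mx=\|Wx\|^2$ and your assembly of the $E_e$ from an orthonormal eigenbasis fill in routine details that the citation leaves implicit. Your remark that $\ker(M)=\ker(W)$ uses the same argument the paper gives separately in its Frame and Gram Kernels lemma.
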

\end{lemmaproofbox}

\begin{lemmaproofbox}
    \begin{lemma}[Frame and Gram Kernels]\label{lemma:kernels} The following are direct consequences of the orthogonal decomposition of the image and the kernel:
 \begin{enumerate}[label=(\roman*)]
           \item $ \ker (WW^\top)= \ker (W^\top)$
           \item $\ker (W^\top W) = \ker (W)$
       \end{enumerate}
    \end{lemma}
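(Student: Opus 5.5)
We prove both kernel identities by the quadratic-form argument, using only the fact that $\|v\|^2 = 0$ forces $v = 0$ in $\mathbb{R}^m$ or $\mathbb{R}^n$. Each part has two inclusions. One is immediate from factoring; the other comes from pairing with the vector.

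For (i), the inclusion $\ker(W^\top)\subseteq\ker(WW^\top)$ is trivial. If $W^\top y = 0$, then $WW^\top y = W(W^\top y) = 0$. For the reverse inclusion, take $y\in\ker(WW^\top)$ and take the inner product of $WW^\top y = 0$ with $y$:
\[
0 = y^\top W W^\top y = (W^\top y)^\top (W^\top y) = \|W^\top y\|^2 .
\]
This forces $W^\top y = 0$. Hence the two kernels coincide.

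Part (ii) follows by the identical argument with $W$ and $W^\top$ exchanged. If $x\in\ker(W^\top W)$, then $0 = x^\top W^\top W x = \|Wx\|^2$, so $Wx = 0$. The other inclusion again follows by factoring, since $W^\top W x = W^\top(Wx)$. Equivalently, (ii) is (i) applied to the matrix $W^\top$ in place of $W$, so we only need to write out one case.

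To connect this with the phrase ``orthogonal decomposition of the image and the kernel'', we will also note the consequences. Combining (i) with $\mathbb{R}^m = \operatorname{Im}(W)\oplus\ker(W^\top)$ gives $\operatorname{Im}(F) = \ker(F)^\perp = \ker(W^\top)^\perp = \operatorname{Im}(W)$, where $F$ is the frame operator. The analogous identity $\operatorname{Im}(M) = \operatorname{Im}(W^\top)$ holds for the Gram matrix $M$. This is the form in which the lemma is used later, for instance to place each $W_i$ in $\operatorname{Im}(F)$. There is no real obstacle here. The only step that needs any care is the positivity step, and that step relies on the ambient inner product being the standard Euclidean one and the scalars being real.
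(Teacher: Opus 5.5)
Your proof is correct and is the same as the paper's. You get one inclusion by factoring and the other by pairing $WW^\top y=0$ with $y$ to obtain $\|W^\top y\|^2=0$, and you handle (ii) by exchanging $W$ and $W^\top$. Your added remark that $\operatorname{Im}(F)=\ker(W^\top)^\perp=\operatorname{Im}(W)$ is also correct, and it is the form in which the lemma is used later.
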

    \begin{proof}
         $(\Rightarrow) \ker (W^\top) \subseteq \ker(WW^\top)$. 
Let $y\in \ker(W^\top)$ be arbitrary. Then, $$W^\top y = 0\Rightarrow WW^\top y = W0 = 0\Rightarrow y\in \ker (WW^\top)$$\\
$(\Leftarrow)$ Let $y\in \ker(WW^\top)$ be arbitrary. Then $WW^\top y = 0$. Since the RHS is $0$, we can multiply the left by $y^\top$, s..t, we get: $$y^\top WW^\top y = y^\top 0 = 0\Rightarrow (W^\top y)^\top W^\top x=||W^\top y\|= 0,$$
which by the definition of the norm yields $W^\top y = 0$, i.e. $y\in \ker(W^\top)$. With this we have completed the equality $\ker(W^\top) = \ker(WW^\top)$ from both sides. The proof for $(ii)$ is identical after switching $W^\top$ and $W$ around. 
    \end{proof}
\end{lemmaproofbox}

\begin{lemmaproofbox}
    \begin{lemma}[Spectral Correspondence]\label{lemma:spectral_correspondence} Let $M = W^TW = \sum_{e}\lambda_e E_e$ be the spectral decomposition of the Gram matrix. Define $P_e:={\lambda_e}^{+}WE_eW^\top \in \mathbb{R}^{m\times m}$ for each $e = \{e: \lambda_e>0\}$. Then:
        \begin{enumerate}[label=(\roman*)]
            \item $P_e$ symmetric $P_e^\top = P_e$ and idempotent $P_e^2 = P_e$ (i.e. orthogonal projector)
            \item $P_eP_f = 0$ for $e\neq f$
            \item The frame operator decomposes as $F=WW^\top = \sum_{e}\lambda_e P_e$. 
            \item $\text{Im}(P_e) = W(\text{Im}(E_e))$ 
         \end{enumerate}
    \end{lemma}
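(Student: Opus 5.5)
The plan is to prove all four items by direct algebraic manipulation, using only the relations for $E_e$ from Lemma~\ref{lemma:spectral-decomposition}: $E_e^\top=E_e$, $E_e^2=E_e$, $E_eE_f=0$, $\sum_e E_e=I_n$ and $ME_e=E_eM=\lambda_eE_e$. Throughout, $e,f$ range over the positive eigenvalues, so $\lambda_e^+=\lambda_e^{-1}$. The single computational engine is the sandwich identity
\[
E_eW^\top W E_f = E_e M E_f = \lambda_f E_eE_f = \delta_{ef}\lambda_e E_e .
\]

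\textbf{(i)} Symmetry is immediate, since $(WE_eW^\top)^\top=WE_e^\top W^\top=WE_eW^\top$. For idempotency, expand $P_e^2=\lambda_e^{-2}W(E_eW^\top WE_e)W^\top$. The sandwich identity turns the middle factor into $\lambda_eE_e$, which gives $P_e^2=P_e$.

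\textbf{(ii)} Expanding $P_eP_f$ the same way produces the middle factor $E_eME_f=\lambda_fE_eE_f=0$ for $e\neq f$, so $P_eP_f=0$.

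\textbf{(iii)} Insert the resolution of the identity: $F=WW^\top=W\big(\sum_e E_e\big)W^\top$. The summand with $e=0$ (if $0$ is an eigenvalue) vanishes. Indeed, $E_0$ projects onto $\ker M=\ker W$ by Lemma~\ref{lemma:kernels}(ii), so $WE_0=0$. Every remaining summand equals $WE_eW^\top=\lambda_eP_e$, hence $F=\sum_{\lambda_e>0}\lambda_eP_e$.

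\textbf{(iv)} For $\subseteq$: $\mathrm{Im}(P_e)\subseteq W(\mathrm{Im}\,E_e)$ because $P_e=W\cdot(\lambda_e^{-1}E_eW^\top)$ and the column space of $E_eW^\top$ lies in $\mathrm{Im}\,E_e$. For $\supseteq$: take $v=E_eu$. Then
\[
P_eWv=\lambda_e^{-1}WE_eMv=\lambda_e^{-1}WE_e\lambda_e v = WE_e v = Wv ,
\]
using $Mv=\lambda_ev$. So $Wv$ is fixed by $P_e$ and lies in its image.

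The only step requiring any care is (iii). The point there is to justify discarding the zero eigenspace via the kernel identity, so that the pseudoinverse $\lambda_e^+$ is never applied to $\lambda_e=0$. The rest is routine bookkeeping with the sandwich identity.
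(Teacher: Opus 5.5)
Your proof is correct and follows essentially the same route as the paper. The sandwich identity $E_eME_f=\delta_{ef}\lambda_eE_e$ drives (i) and (ii); (iii) inserts $\sum_e E_e=I_n$ and discards the zero eigenspace via $WE_0=0$ from Lemma~\ref{lemma:kernels}(ii); and (iv) uses the same two inclusions, with $P_e(Wv)=Wv$ for $v\in\mathrm{Im}(E_e)$ giving the reverse one.
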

    \begin{proof}
        \begin{enumerate}[label=(\roman*)]
            \item Symmetry is immediate from symmetry of $E_e$: $$P_e^\top  = {\lambda_e}^{+} (WE_eW^\top)^\top  = {\lambda_e}^{+}WE_e^\top W^\top = {\lambda_e}^{+}W E_e W^\top = P_e$$
Using idempotence of $E_e$, $W^\top W = M$ and $ME_e = \lambda_e E_e$ we get: $$P_e^2 = \left({\lambda_e}^{+}WE_eW^\top\right)\left({\lambda_e}^{+}WE_eW^\top\right)=$$
$$=\big({\lambda_e}^{+}\big)^2WE_eME_eW^\top = {\lambda_e}^{+}WE_eW^\top = P_e $$
As such, since $P_e$ is symmetric and idempotent, it is an orthogonal projector onto $\text{Im}(P_e)$ \cite{kato}. 
    \item Computing: $$P_eP_f = \left({\lambda_e}^{+}WE_eW^\top\right)\left(\lambda_f^+WE_fW\top\right) = $$$$\lambda_e^+\lambda_f^+ WE_eME_fW^\top  = \lambda_e^+\lambda_f^+WE_e\left(\sum_g \lambda_g E_g\right)E_f = {\lambda_e}^{+}WE_eE_f=0$$
Since $E_eE_f=0$ unless $g=f$ . Since also each $P_e$ is symmetric, this implies that $\text{Im}(P_e)\perp \text{Im}(P_f)$ for $e\neq f$ 
\item Rewrite for $\lambda_e>0$, $\lambda_e P_e = WE_e W^\top$, and observe that by \ref{lemma:kernels}(ii) $WE_0=0$:
$$\sum_{e}\lambda_eP_e = \sum_{e} WE_e W^\top = W\left(\sum_{e}E_e\right)W^\top = W\left(I_n-E_0\right)W^\top=  WW^\top - WE_0W^\top = WW^\top$$

\item $(\Rightarrow)$ Let $y\in \mathbb{R}^m$ be arbitrary. Then $$P_ey = {\lambda_e}^{+}WE_e W^\top y$$
Let $u:= E_eW^\top y$. Then $u\in \text{Im}(E_e)$, and $$P_ey = {\lambda_e}^{+} Wu\in W(\text{Im}(E_e)), $$
hence $\text{Im}(P_e)\subseteq W(\text{Im}(E_e))$ , since $y$ was arbitrary
\newline $(\Leftarrow)$ Let $u\in \text{Im}(E_e)$ be arbitrary. Then $E_e u = u$ and $Mu = \sum_{e}E_eu=  \lambda_e u$. Evaluating $P_e$ on $Wu$, we have: $$P_e(Wu) = {\lambda_e}^{+}WE_eW^\top Wu = {\lambda_e}^{+}WE_eMu = $$
$$= {\lambda_e}^{+} WE_e(\lambda_eu) = W(E_uu) = Wu$$
With this we complete $W(\text{Im}(E_e))\subseteq \text{Im}(P_e)$, and as such the equality $\text{im}(P_e) = W(\text{Im}(E_e))$ 

        \end{enumerate}
    \end{proof}
\end{lemmaproofbox}
There is an immediate corollary of $(A.3)(iv)$, namely $\dim \text{Im}(P_e) = \dim \text{Im}(E_e)$. The proof is simply the observation that $W$ is injective since if $u\in \text{Im}(E_e)$ and $Wu=0$, then $Mu = W^\top W u = 0 = Mu = \lambda_e u$ and $\lambda_e>0$. 

\begin{lemmaproofbox}
    \begin{lemma}[Spectral Intertwining]\label{lemma:spectral_intertwining} Define $P_e:={\lambda_e}^{+}WE_eW^\top \in \mathbb{R}^{m\times m}$ for each $e = \{e: \lambda_e>0\}$, where $M = W^TW = \sum_{e}\lambda_e E_e$ is the spectral decomposition of the Gram operator. Then:
        \begin{enumerate}[label=(\roman*)]
            \item $P_eW = WE_e$ and $W^\top P_e = E_e W^\top$. 
            \item $FP_e = P_eF = \lambda_eP_e$. 
            \item $\mathbb{R}^m = \left(\bigoplus_{e}\text{Im}(P_e)\right)\oplus \text{ker}(W^\top)$
         \end{enumerate}
    \end{lemma}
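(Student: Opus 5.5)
The plan is to derive all three items from the two defining identities $M = W^\top W = \sum_e \lambda_e E_e$ and $P_e = \lambda_e^{+} W E_e W^\top$, together with Lemma~\ref{lemma:spectral_correspondence} and Lemma~\ref{lemma:kernels}. Throughout, $e$ ranges over indices with $\lambda_e>0$, so $\lambda_e^+ = \lambda_e^{-1}$.

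\textbf{Item (i).} Compute directly:
\[
P_e W = \lambda_e^{-1} W E_e W^\top W = \lambda_e^{-1} W E_e M = \lambda_e^{-1} W(\lambda_e E_e) = W E_e ,
\]
using $E_e M = \lambda_e E_e$ from Lemma~\ref{lemma:spectral-decomposition}. The second identity $W^\top P_e = E_e W^\top$ follows either by transposing the first, using $P_e^\top = P_e$ and $E_e^\top = E_e$, or by the symmetric computation $W^\top W E_e = M E_e$.

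\textbf{Item (ii).} Write $F = WW^\top$ and apply (i) twice:
\[
F P_e = W (W^\top P_e) = W E_e W^\top = \lambda_e P_e ,
\]
and likewise
\[
P_e F = (P_e W) W^\top = W E_e W^\top = \lambda_e P_e .
\]
As an alternative route, one could use the decomposition $F = \sum_f \lambda_f P_f$ from Lemma~\ref{lemma:spectral_correspondence}(iii) together with the orthogonality relations $P_fP_e = \delta_{fe}P_e$ from parts (i)–(ii) of that lemma. Either approach is routine.

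\textbf{Item (iii).} This is the step that needs some care. First, the images $\operatorname{Im}(P_e)$ are mutually orthogonal by Lemma~\ref{lemma:spectral_correspondence}(ii), since the projectors are symmetric. Set $P := \sum_e P_e$; it is an orthogonal projector onto $\bigoplus_e \operatorname{Im}(P_e)$. The claim is that $\operatorname{Im}(P) = \operatorname{Im}(W)$.

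For one inclusion, $\operatorname{Im}(P_e) = W(\operatorname{Im} E_e) \subseteq \operatorname{Im}(W)$ by Lemma~\ref{lemma:spectral_correspondence}(iv). For the other, item (i) gives
\[
PW = \sum_e W E_e = W(I_n - E_0) = W ,
\]
because $WE_0 = 0$: indeed $\operatorname{Im}E_0 = \ker M = \ker W$ by Lemma~\ref{lemma:kernels}(ii). Hence every vector $Wx$ lies in $\operatorname{Im}(P)$, so $\operatorname{Im}(P) = \operatorname{Im}(W)$.

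Finally, the orthogonal decomposition $\mathbb{R}^m = \operatorname{Im}(W) \oplus \operatorname{Im}(W)^\perp$ holds with $\operatorname{Im}(W)^\perp = \ker(W^\top)$. This yields the stated direct sum, and the sum is in fact orthogonal. The only subtle point is making sure the zero eigenvalue $E_0$ is excluded correctly, both in forming $P$ and in the identity $PW = W$.
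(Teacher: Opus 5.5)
Your proof is correct. Items (i) and (ii) match the paper's computations. In (ii) you reuse (i) through $FP_e = W(W^\top P_e)$ and $P_eF = (P_eW)W^\top$ instead of re-expanding $WW^\top W E_e W^\top$, which is only a cosmetic difference.

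For (iii), both arguments rest on the same key fact: $\sum_e P_e$ is the orthogonal projector onto $\operatorname{Im}(W)$. They differ in how that fact is justified.

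The paper writes $\sum_e P_e = WM^+W^\top$ with $M^+=\sum_{e}\lambda_e^{+}E_e$. It then simply invokes the standard pseudoinverse fact that $WM^+W^\top$ projects orthogonally onto $\operatorname{Im}(W)$.

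You prove this directly. The projector property comes from the mutual orthogonality $P_eP_f=\delta_{ef}P_e$ in Lemma~\ref{lemma:spectral_correspondence}. The image equality comes from two inclusions, and the nontrivial one is $PW = W(I_n-E_0)=W$.

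The paper's version is a one-liner, but it leaves unverified exactly the step you carry out. Checking $WM^+W^\top W = WM^+M = W$ is the same computation as your $PW=W$.

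Your version is self-contained within the paper's own lemmas. It also makes explicit the pairwise orthogonality of the subspaces $\operatorname{Im}(P_e)$. That orthogonality is what makes the decomposition in (iii) a genuine (orthogonal) direct sum, and the paper's proof leaves it implicit.
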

    \begin{proof}
        \begin{enumerate}[label=(\roman*)]
            \item $$P_eW = {\lambda_e}^{+}WE_e W^\top W = {\lambda_e}^{+}WE_e M = {\lambda_e}^{+}W(\lambda_e E_e) = WE_e$$
            $$W^\top P_e =  {\lambda_e}^{+}W^\top WE_eW^\top = {\lambda_e}^{+}ME_eW^\top = E_eW^\top $$
            \item $$FP_e = WW^\top {\lambda_e}^{+}WE_eW^\top = {\lambda_e}^{+} W(W^\top W)E_e W^\top =$$
            $$= {\lambda_e}^{+}WME_eW^\top = {\lambda_e}^{+}W(\lambda_eE_e)W^\top = \lambda_eP_e$$
	   $$P_eF = {\lambda_e}^{+}WE_eW^\top WW^\top  = {\lambda_e}^{+}WE_eMW^\top  = \lambda_eP_e$$
            \item Let $M^+ = \sum_{e}{\lambda_e}^{+}E_e$ be the Moore-Penrose pseudoinverse of $M$. Then: $$\sum_{e}P_e = \sum_{e}{\lambda_e}^{+} WE_e W^\top = WM^+ W^\top$$
   and $WM^+W^\top$ is the orthogonal projector into $\text{Im}(W) = \ker(W^\top) ^\perp = \ker(WW^\top)^\perp$ by \ref{lemma:kernels}. 
         \end{enumerate}
    \end{proof}
\end{lemmaproofbox}

\begin{lemmaproofbox}
    \begin{corollary}[Triange-Digon Case]\label{corollary:triangle-digon}
        Let the setting be as in Section \ref{section:warm-up}. Then, $F = 2P_D+\frac{3}{2}P_\triangle$, where $P_D$ and $P_\triangle$ are projectors onto the subspaces along which the digon and triangle live, respectively. 
    \end{corollary}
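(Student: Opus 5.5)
The plan is to work directly with the block structure of $W$ from Section~\ref{section:warm-up} and apply the Spectral Correspondence (Lemma~\ref{lemma:spectral_correspondence}) to the full Gram matrix $M=W^\top W$.

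\textbf{Step 1: block structure of $M$.} With the column ordering of Figure~\ref{fig:triangle-digon}, the triangle columns span the $xy$-plane and the digon columns span the $z$-axis. Hence $W_\triangle^\top W_D=0$, and
\[
M=M_\triangle\oplus M_D .
\]

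\textbf{Step 2: spectral decomposition of $M$.} From the computations already done, $M_\triangle=\tfrac32 S_1$ and $M_D=2S_1'$ (Lemma~\ref{corollary:digon}). Embedding these idempotents into $\mathbb{R}^5$ gives orthogonal projectors $E_\triangle:=S_1\oplus 0$ and $E_D:=0\oplus S_1'$. Their complement is $E_0:=S_0\oplus S_0'$, which projects onto $\ker M$. Thus
\[
M=\tfrac32E_\triangle+2E_D
\]
is the spectral resolution of Lemma~\ref{lemma:spectral-decomposition}, with distinct positive eigenvalues $\tfrac32\neq 2$. The distinctness matters: it guarantees $E_\triangle$ and $E_D$ are genuine eigenprojectors rather than pieces of a single eigenspace.

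\textbf{Step 3: lift to the frame operator.} Set $P_\triangle:=\tfrac23WE_\triangle W^\top$ and $P_D:=\tfrac12WE_DW^\top$. Lemma~\ref{lemma:spectral_correspondence}(i)--(iii) gives that these are mutually orthogonal projectors and that
\[
F=\tfrac32P_\triangle+2P_D .
\]

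\textbf{Step 4: identify the images.} By (iv), $\mathrm{Im}(P_\triangle)=W(\mathrm{Im}E_\triangle)=W_\triangle(\mathbf 1_3^\perp)$. Since $W_\triangle\mathbf 1_3=0$, this equals $\mathrm{Im}(W_\triangle)$, the $xy$-plane. Likewise $\mathrm{Im}(P_D)=W_D(\mathbf 1_2^\perp)=\mathrm{span}(e_3)$. These are exactly the subspaces on which the triangle and digon live.

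\textbf{Permutation invariance.} For an arbitrary labeling $W\mapsto WP_\gamma$, $F$ is unchanged and $M$ is conjugated, so the same projectors result. As a sanity check, one can compute directly $F_\triangle=W_\triangle W_\triangle^\top=\mathrm{diag}(\tfrac32,\tfrac32,0)$ and $F_D=\mathrm{diag}(0,0,2)$, which agrees with $F=F_\triangle+F_D$ from equation (2.2).

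\textbf{Main obstacle.} The only delicate point is Step 2. One must confirm that the embedded strata projectors really are the eigenprojectors of the full $M$. This rests on the vanishing cross-block $W_\triangle^\top W_D=0$, which is read off the explicit matrix, together with the distinct eigenvalues noted above. Everything after that follows directly from Lemma~\ref{lemma:spectral_correspondence}.
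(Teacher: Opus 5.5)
Your argument is correct, but it is organized globally where the paper's is local. You use the cross-block orthogonality $W_\triangle^\top W_D=0$ to write the full Gram matrix as $M=\tfrac32E_\triangle+2E_D$. You then apply Lemma~\ref{lemma:spectral_correspondence} once to all of $W$ and dispose of relabelings by conjugation.

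The paper never forms the full $M$. For each cluster $C$ it takes the column block $W_C$, whose local Gram matrix is $M_C=\lambda_CS_1^{(C)}$. It uses centroid cancellation $W_C\mathbf 1=0$ to get $F_C=W_C(S_0^{(C)}+S_1^{(C)})W_C^\top=W_CS_1^{(C)}W_C^\top=\lambda_CP_C$. Along the way it checks idempotence of $P_C=\lambda_C^{-1}W_CS_1^{(C)}W_C^\top$ directly, which amounts to the correspondence lemma applied to $W_C$. Finally it glues with $F=F_\triangle+F_D$, which holds for any partition of the columns.

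The local route does not depend on indexing, and it needs neither $W_\triangle^\top W_D=0$ nor $\tfrac32\neq 2$ to obtain $F=\tfrac32P_\triangle+2P_D$. This is exactly the point of the Spectral Bridge discussion: gluing is easy in activation space but not in concept space.

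Your route gives you $P_\triangle P_D=0$ from part (ii) of the lemma. So you get for free that the identity is the actual spectral resolution of $F$, with $P_\triangle,P_D$ as its eigenprojectors. The paper's proof instead relies on the geometric orthogonality of the two clusters for that. The paper also derives the trace relation $\lambda_C^{-1}=\operatorname{rank}(P_C)/|\Omega_C|$, which goes beyond the statement.
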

    \begin{proof}
        Now, let us define $U_\triangle: \operatorname{span}\{W_i: \Omega_\triangle\}\subset \mathbb{R}^3$ and $U_D: \operatorname{span}\{W_i: i\in \Omega_D\}\subset\mathbb{R}^3$ To tie this back to spectral description $M_\triangle$ and $M_D$ we found (which are equivalent and as such we can generalize to an arbitrary $C$), let us define the injection operator from a cluster $C\in \{\triangle, D\} $ to the total concept space $E_C: \mathbb{R}^{\Omega_C}\rightarrow \mathbb{R}^\Omega$, where $ (E_C x)_i =\sum_{j\in \Omega_C} x_i \delta_{ij}$ (Fig \ref{fig:intertwining_operator}). 
\begin{center}
    \includegraphics[width=0.7\linewidth]{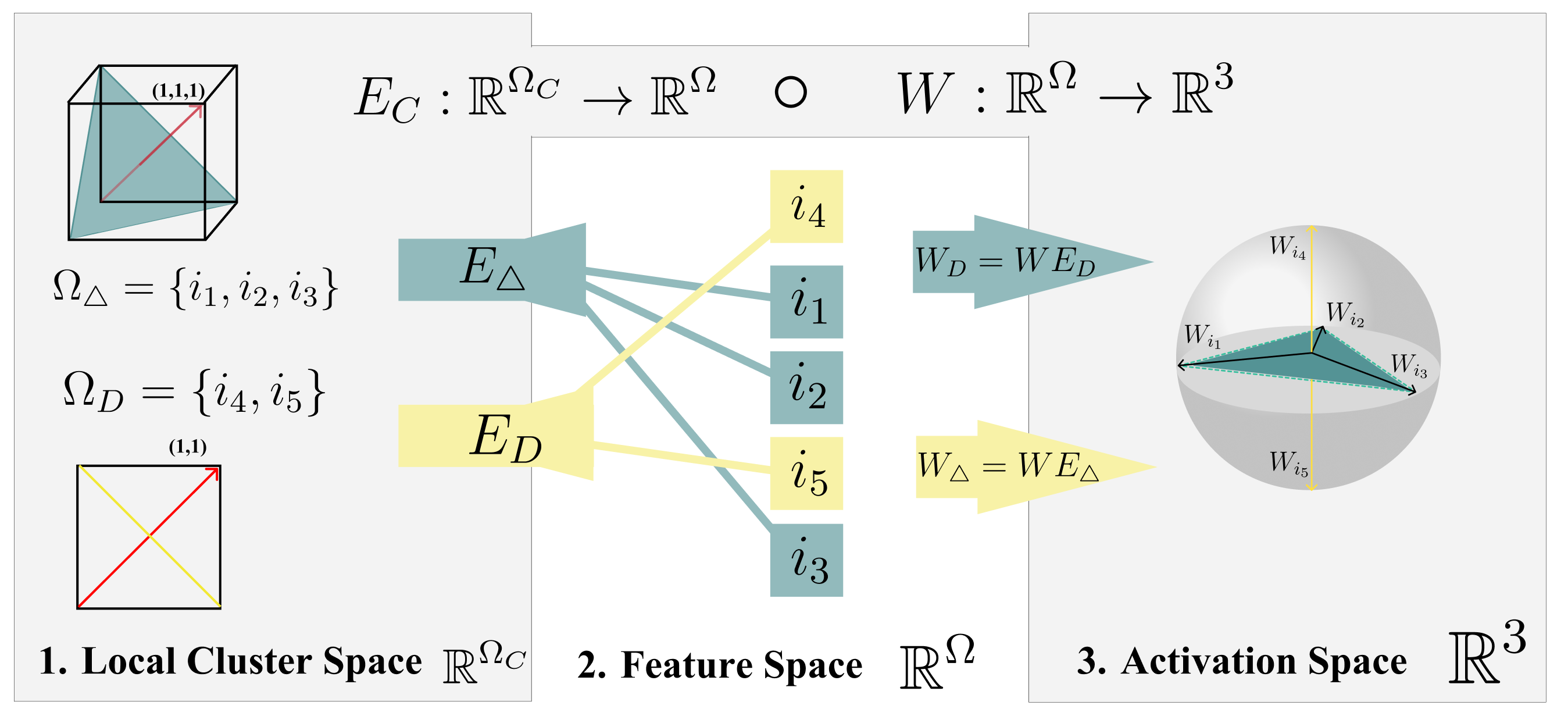}
    \captionof{figure}{$W$ is an Intertwining Operator of Spectral Modes}
    \label{fig:intertwining_operator}
\end{center}
Then, if $W_C :=WE_C\in \mathbb{R}^{3\times |\Omega_C|}$, we have that $M_C = W_C^\top W_C$ and $F_C = W_CW_C^\top = \sum_{i\in \Omega_C}W_iW_i^\top$ since summation over a set is permutation-stable. Furthermore, our spectral results get directly transferred: $M_C = \lambda_C S_1^{(C)}$, $S_0^{(C)} = \frac{1}{|\Omega_C|}\mathbf{11}^\top$, $S_1^{(C)} = I- S_0^{(C)}$. Since the centroid is being sent to the kernel means $S_0^{(C)}$ $M_C \mathbf{1} = \lambda_CS_1^{(C)}\mathbf{1} = 0$. There is a direct consequence for the injected matrices: multiplying by $\mathbf{1}^\top$ on the left yields:
\begin{equation*}0 = \mathbf{1}^\top M_C\mathbf{1} = \mathbf{1}^\top  W_C^\top W_C \mathbf{1} = \|W_c\mathbf{1}\|^2 = \left\|\sum_{i\in \Omega_C}W_C\right\|^2\end{equation*}
If \(M_C x = \lambda_C x\) with \(\lambda_C>0\), then
\(F_CW_C x = W_CM_C x = \lambda_C (W_C x).\) So \(W_C\) maps the \(\lambda_C\)-eigenspace of \(M_C\) into the \(\lambda_C\)-eigenspace of \(F_C\). Now since \(M_C = \lambda_C S^{(C)}_1\), the \(\lambda_C\)-eigenspace in concept space is \(\operatorname{Im}(S^{(C)}_1)\) (the “difference” subspace), and the (0)-eigenspace is \(\operatorname{Im}(S^{(C)}_0)=\operatorname{span}(\mathbf{1})\), which we just showed is killed by \(W_C\). Therefore the only nontrivial activation geometry produced by the cluster is \(U_C := W_C\big(\operatorname{Im}(S^{(C)}_1)\big)
= \operatorname{Im}(W_C)
\subseteq; \mathbb{R}^3,\) and \(F_C\) acts as \(\lambda_C\) on \(U_C\) and as (0) on \(U_C^\perp\). Equivalently,\(F_C = \lambda_C, P_C,\) where \(P_C\) is the orthogonal projector onto \(U_C\). Lastly, define \(P_C := \lambda_C^{-1} W_C S^{(C)}_1 W_C^\top.\)
Then \(P_C\) is a projector (idempotent and symmetric) by a direct computation using \(M_C S^{(C)}_1 = \lambda_C S^{(C)}_1\):
\begin{align*}
P_C^2
= \lambda_C^{-2} W_C S_1 W_C^\top W_C S_1 W_C^\top
= \lambda_C^{-2} W_C S_1 M_C S_1 W_C^\top\\
= \lambda_C^{-1} W_C S_1 W_C^\top
= P_C.
\end{align*}
Then clearly \(F_C = W_CW_C^\top = W_C(S_0+S_1)W_C^\top = W_CS_1W_C^\top = \lambda_C P_C\) because \(W_CS_0=0\) from centroid cancellation. With this, we have recovered $(2.2)$. Now, we can use all of the spectral information to give both the global and local geometric classification that would otherwise have been inaccessible from an arbitrarily scrambled Gram matrix. Since $F_C = \lambda_C = P_C$, we have:
$$\operatorname{tr}(F_C) = \lambda_C \operatorname{tr}(P_C) = \lambda_C \operatorname{rank}(P_C) = \lambda_C \dim(U_C)$$ 
But also, $\operatorname{tr}(F_C) = \sum_{i\in \Omega_C}\|W_i\|^2$, so for our unit-norm features, this is $|\Omega_C|$, i.e. $\frac{1}{\lambda_C} = \frac{\dim (U_C)}{\sum_{I\in \Omega_C}\|W_i\|^2}= \frac{\operatorname{rank}(P_C)}{|\Omega_C|}$ and as such, we recover our main diagram \ref{fig:triangle-digon}.
\end{proof}
\end{lemmaproofbox}

\subsection{Perturbation Theory}
\label{appendix:perturbation_theory}

\begin{lemmaproofbox}
\begin{lemma}[Analytic expansion of the Gram operator]\label{lemma:kato-analytic-expansion}
Let $x\mapsto M(x)\in\mathbb{R}^{n\times n}$ be analytic in a neighborhood of $x=0$, with $M(0)=M$.
Then there exist matrices $M^{(k)}\in\mathbb{R}^{n\times n}$ such that
\[
M(x) \;=\; M + xM^{(1)} + x^2 M^{(2)} + \cdots,
\qquad
M^{(k)} \;=\; \frac{1}{k!}\left.\frac{d^k}{dx^k}M(x)\right|_{x=0}.
\]
In our setting one may take $M(x)=W(x)^\top W(x)$ and $F(x)=W(x)W(x)^\top$.
When $M(\cdot)$ follows the Gram flow $\dot M = -\{M,\Phi\}$ and the derivative exists at the snapshot,
one may identify $M^{(1)}$ with $\dot M(0)=-\{M,\Phi\}$.
\end{lemma}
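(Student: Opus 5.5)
This expansion is just the Taylor series of an analytic matrix-valued function, taken one entry at a time. The plan is first to reduce it to the scalar case. Analyticity of $x\mapsto M(x)$ means each entry $x\mapsto M(x)_{ij}$ is a real-analytic scalar function near $0$. Each entry therefore has a convergent power series $M(x)_{ij}=\sum_{k\ge 0} c^{(k)}_{ij}x^k$ on some interval $|x|<r_{ij}$, with coefficients $c^{(k)}_{ij}=\frac{1}{k!}\frac{d^k}{dx^k}M(x)_{ij}\big|_{x=0}$ by the standard identification of power series coefficients with Taylor coefficients. Setting $M^{(k)}:=(c^{(k)}_{ij})$ and $r:=\min_{i,j} r_{ij}>0$ (a finite minimum of positive radii) gives $M(x)=\sum_k x^kM^{(k)}$ entrywise, hence in any matrix norm, on $|x|<r$. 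Since differentiation of a matrix function is entrywise, the formula $M^{(k)}=\frac1{k!}M^{(k\text{-th derivative})}(0)$ holds, and $M^{(0)}=M(0)=M$.

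Next I will justify the specialization to $M(x)=W(x)^\top W(x)$ and $F(x)=W(x)W(x)^\top$. If $W(\cdot)$ is analytic, then every entry of $M(x)$ or $F(x)$ is a finite sum of products of analytic scalars. Such sums and products are analytic (Cauchy product of convergent series), so the first step applies. The coefficients can even be written explicitly as $M^{(k)}=\sum_{a+b=k}W^{(a)\top}W^{(b)}$, though this is not needed.

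Finally comes the identification with the Gram flow. Take $k=1$: $M^{(1)}=\frac{d}{dx}M(x)|_{x=0}$. If the parameter $x$ is the flow time and $M$ satisfies $\dot M=-\{M,\Phi\}$ at the snapshot, then $M^{(1)}=\dot M(0)=-\{M,\Phi\}$ by direct substitution. The only real subtlety, and the expected obstacle, is that the flow must actually be analytic in time, not merely differentiable. If it is only differentiable, I will state the weaker first-order conclusion $M(x)=M+x\dot M(0)+o(x)$, which still suffices for first-order Kato perturbation. For the analytic case, I would argue that if $\Phi$ is analytic in $M$ (as for polynomial or expectation-of-analytic gradient kernels), then the ODE has an analytic vector field. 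By Cauchy–Kovalevskaya (the analytic ODE existence theorem), its solution is then analytic in time, which closes the argument.
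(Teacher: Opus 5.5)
Your argument is correct, and it supplies more than the paper does. The paper gives no proof here, only a pointer to Kato (Ch.~II, \S1, Eq.~(1.2)), where the expansion is simply the defining property of a holomorphic family in a complex parameter.

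Your entrywise Taylor argument is exactly the elementary justification that this citation leaves implicit: a common radius $r=\min_{i,j}r_{ij}$, coefficients identified with derivatives, and closure of analytic functions under sums and products to cover $W^\top W$ and $WW^\top$. What Kato's framing adds is the complex parameter. You get this for free, because a real-analytic family extends holomorphically to a complex neighborhood of $0$, and that is the setting in which Kato's later results (reduced resolvent, eigenprojection drift) are stated.

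Your discussion of whether the Gram flow is analytic in time goes beyond the statement, which takes analyticity in $x$ as a hypothesis. Your hedge there is warranted. In the toy model $\Phi$ contains the ReLU indicator $\mathbb{I}(M\mathbf{x}+b>0)$, so the induced vector field is generally not analytic in $M$ (at best piecewise so), and Cauchy--Kovalevskaya is not available.

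The first-order fallback is therefore the case that matters, and it needs no analyticity in time. For a small circle $\Gamma_\lambda$ enclosing only the eigenvalue $\lambda$, the Riesz projection $E(A)=-\frac{1}{2\pi i}\oint_{\Gamma_\lambda}(A-z)^{-1}\,dz$ is analytic in the matrix $A$ near $M$. You can therefore apply Kato to the linear (hence analytic) family $M+x\dot M(0)$. The chain rule then transfers the first-order drift formulas to any $C^1$ flow, which makes your claim that $M+x\dot M(0)+o(x)$ ``suffices'' rigorous.
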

\end{lemmaproofbox}
\noindent\textit{Reference:} \citep{kato}, Ch.~II, \S 1, Eq.~(1.2).

\begin{lemmaproofbox}
\begin{lemma}[Reduced resolvent of an eigenvalue of $M$]\label{lemma:kato-reduced-resolvent}
Fix an eigenvalue $\lambda$ of $M$ and let $E$ be the corresponding spectral projector.
The reduced resolvent of $M$ at $\lambda$ is the operator $S_\lambda$ that inverts $M-\lambda I_n$
on the invariant complement $(I_n-E)\mathbb{R}^n$.
Equivalently, $S_\lambda$ is characterized by
\[
(M-\lambda I_n)S_\lambda = S_\lambda(M-\lambda I_n)=I_n-E,
\qquad
S_\lambda E = ES_\lambda = 0.
\]
(For symmetric $M=\sum_f \lambda_f E_f$, one may write explicitly
$S_\lambda = \sum_{f:\lambda_f\neq \lambda}(\lambda_f-\lambda)^{-1}E_f$.)
\end{lemma}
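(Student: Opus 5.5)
We verify the explicit formula against the defining identities, and then show that these identities determine $S_\lambda$ uniquely. Both steps rest only on the spectral resolution of Lemma~\ref{lemma:spectral-decomposition}. Write $M=\sum_f \lambda_f E_f$ with $E_f$ symmetric, idempotent, mutually orthogonal, summing to $I_n$, and let $E=E_{f_0}$ be the projector for $\lambda=\lambda_{f_0}$. Define the candidate
\[
S_\lambda:=\sum_{f\neq f_0}(\lambda_f-\lambda)^{-1}E_f .
\]
This is well defined because the $\lambda_f$ are distinct, so $\lambda_f-\lambda\neq 0$ for every $f\neq f_0$.

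\textbf{Existence.} First, $S_\lambda E=\sum_{f\neq f_0}(\lambda_f-\lambda)^{-1}E_fE_{f_0}=0$ by mutual orthogonality, and symmetrically $ES_\lambda=0$. Next, write $M-\lambda I_n=\sum_g(\lambda_g-\lambda)E_g$ using $\sum_g E_g=I_n$. Then
\[
(M-\lambda I_n)S_\lambda=\sum_{g}\sum_{f\neq f_0}(\lambda_g-\lambda)(\lambda_f-\lambda)^{-1}E_gE_f=\sum_{f\neq f_0}E_f=I_n-E .
\]
The same computation applies to $S_\lambda(M-\lambda I_n)$, since all the $E_f$ commute. So the candidate satisfies every stated identity. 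Moreover $S_\lambda$ maps $(I_n-E)\mathbb{R}^n=\bigoplus_{f\neq f_0}\operatorname{Im}E_f$ into itself and inverts $M-\lambda I_n$ there. Indeed, $M-\lambda I_n$ acts on each summand $\operatorname{Im}E_f$ by the nonzero scalar $\lambda_f-\lambda$, so its restriction to this invariant complement is a bijection.

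\textbf{Uniqueness.} This justifies the word ``characterized''. Suppose $S'$ also satisfies $(M-\lambda I_n)S'=S'(M-\lambda I_n)=I_n-E$ and $S'E=ES'=0$. From $ES'=S'E=0$ we get $S'=(I_n-E)S'(I_n-E)$, so $S'$ lives entirely on the complement. Now compute, using $S_\lambda(M-\lambda I_n)=I_n-E$ and $(M-\lambda I_n)S'=I_n-E$:
\[
S_\lambda(I_n-E)=S_\lambda(M-\lambda I_n)S'=(I_n-E)S'.
\]
The left side equals $S_\lambda$ and the right side equals $S'$, so $S'=S_\lambda$.

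\textbf{Main obstacle.} The step needing the most care is conceptual rather than computational. The phrase ``inverts $M-\lambda I_n$ on $(I_n-E)\mathbb{R}^n$'' must be shown equivalent to the pair of algebraic identities. The identities force $S_\lambda$ to vanish on $\operatorname{Im}E$ and to be a two-sided inverse on the complement. Conversely, any such operator, extended by zero on $\operatorname{Im}E$, satisfies the identities. I would make this equivalence explicit using the orthogonal decomposition $\mathbb{R}^n=\operatorname{Im}E\oplus(I_n-E)\mathbb{R}^n$, which is invariant under $M$ because $ME=EM$.
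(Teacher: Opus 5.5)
Your proof is correct, but it does not follow the paper. The paper gives no argument and simply cites Kato (Ch.~I, \S5.3). There $S_\lambda$ is \emph{defined} as the value at $\zeta=\lambda$ of the holomorphic part of the Laurent expansion of the resolvent $R(\zeta)=(M-\zeta I_n)^{-1}$, equivalently $S_\lambda=\lim_{\zeta\to\lambda}R(\zeta)(I_n-E)$. The identities $(M-\lambda I_n)S_\lambda=S_\lambda(M-\lambda I_n)=I_n-E$ and $S_\lambda E=ES_\lambda=0$ are then read off from that expansion.

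You bypass the resolvent entirely. Existence is a direct check of the explicit formula using only the orthogonal resolution of Lemma~\ref{lemma:spectral-decomposition}. Uniqueness is the sandwich $S_\lambda=S_\lambda(I_n-E)=S_\lambda(M-\lambda I_n)S'=(I_n-E)S'=S'$, which justifies the word ``characterized'' in the statement; the citation leaves this implicit. Your uniqueness step is purely algebraic and holds for any square $M$ and any idempotent $E$. Your existence step uses diagonalizability, which suffices here because $M=W^\top W$ is symmetric throughout the paper.

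Kato's route buys generality: it covers non-normal, even non-semisimple $M$, where $E$ is an oblique Riesz projection and $S_\lambda$ picks up nilpotent corrections. More to the point, it ties $S_\lambda$ to the resolvent, which is the structure from which the eigenprojection drift formula of Lemma~\ref{lemma:kato-eigenprojection-coeff} is derived. Your route buys a short, self-contained argument that delivers exactly the explicit formula the Projector Dynamics proof substitutes.
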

\end{lemmaproofbox}
\noindent\textit{Reference:} \citep{kato}, Ch.~I, \S 5.3, Eqs.~(5.26)--(5.29) (definition via holomorphic part of the resolvent and inverse-on-complement identities).

\begin{lemmaproofbox}
\begin{lemma}[First-order drift of Gram eigenprojections]\label{lemma:kato-eigenprojection-coeff}
Let $M(x)=M + xM^{(1)} + O(x^2)$ be an analytic perturbation, and let $E(x)$ be the spectral projector
corresponding to an isolated eigenvalue $\lambda$ of $M$ (equivalently, an isolated $\lambda$-group).
Let $E:=E(0)$ and let $S_\lambda$ be the reduced resolvent of $M$ at $\lambda$.
Then the first Taylor coefficient $E^{(1)} := \left.\frac{d}{dx}E(x)\right|_{x=0}$ satisfies
\[
E^{(1)} \;=\; -\,E\,M^{(1)}\,S_\lambda \;-\; S_\lambda\,M^{(1)}\,E.
\]
In particular, along the Gram flow $\dot M=-\{M,\Phi\}$, the instantaneous eigenprojection drift is obtained by substituting
$M^{(1)}=\dot M$.
\end{lemma}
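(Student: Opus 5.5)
The plan is to prove the statement as the standard first-order perturbation formula for a spectral projector. I would derive it by differentiating the defining identities of $E(x)$, rather than through the Riesz contour integral. The analytic expansion of Lemma~\ref{lemma:kato-analytic-expansion} together with isolation of $\lambda$ gives the input we need. For small $x$ the $\lambda$-group stays separated from the rest of the spectrum, so $E(x)$ is a differentiable family of orthogonal projectors; concretely one can write $E(x)=\frac{1}{2\pi i}\oint_\Gamma (z-M(x))^{-1}\,dz$ with $\Gamma$ enclosing only the $\lambda$-group. This gives the two identities
\[
E(x)^2=E(x),\qquad M(x)E(x)=E(x)M(x).
\]

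The first step is to differentiate $E^2=E$ at $x=0$. This gives $E^{(1)}=EE^{(1)}+E^{(1)}E$, and hence $EE^{(1)}E=0$ and $(I-E)E^{(1)}(I-E)=0$. So $E^{(1)}$ has only off-diagonal blocks:
\[
E^{(1)}=EE^{(1)}(I-E)+(I-E)E^{(1)}E.
\]

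The second step is to differentiate the commutation relation, which gives $M^{(1)}E+ME^{(1)}=E^{(1)}M+EM^{(1)}$. I would multiply this on the left by $(I-E)$ and on the right by $E$. Using $ME=\lambda E$ at $x=0$ and $(I-E)E=0$, the terms $(I-E)EM^{(1)}E$ and $(I-E)E^{(1)}ME$ simplify, and the relation becomes
\[
(M-\lambda I)(I-E)E^{(1)}E=-(I-E)M^{(1)}E .
\]
Now apply $S_\lambda$ from Lemma~\ref{lemma:kato-reduced-resolvent}. Since $S_\lambda(M-\lambda I)=I-E$, and $(I-E)$ fixes $(I-E)E^{(1)}E$, we obtain $(I-E)E^{(1)}E=-S_\lambda M^{(1)}E$. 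Here I also use $S_\lambda(I-E)=S_\lambda$, which follows from $S_\lambda E=0$.

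The symmetric manipulation handles the other block: multiply on the left by $E$ and on the right by $(I-E)$, and apply $S_\lambda$ on the right using $(M-\lambda I)S_\lambda=I-E$. This gives $EE^{(1)}(I-E)=-EM^{(1)}S_\lambda$. Adding the two blocks yields $E^{(1)}=-EM^{(1)}S_\lambda-S_\lambda M^{(1)}E$.

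The final clause then follows from Lemma~\ref{lemma:kato-analytic-expansion}, since along the Gram flow we may identify $M^{(1)}=\dot M$.

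The main obstacle is justifying that $E(x)$ is differentiable, and that the derivative identities hold when $\lambda$ is a degenerate, possibly splitting $\lambda$-group. This is exactly why the statement uses the total projector $E(x)$ of the group rather than projectors for individual eigenvalues. I would settle it through the resolvent representation above, which is analytic in $x$ whenever $\Gamma$ avoids the spectrum. As a cross-check, expanding $(z-M-xM^{(1)})^{-1}$ to first order and evaluating the residues with $(z-M)^{-1}=E/(z-\lambda)+\sum_{f}E_f/(z-\lambda_f)$ reproduces the same formula.
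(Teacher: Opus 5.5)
Your proof is correct, but it takes a different route from the one the paper relies on. The paper gives no derivation of its own. It cites Kato (Ch.~II, \S2, Eq.~(2.14)), where the formula comes from inserting the first-order resolvent expansion $(z-M(x))^{-1}=(z-M)^{-1}+x(z-M)^{-1}M^{(1)}(z-M)^{-1}+O(x^2)$ into the Riesz integral and evaluating residues at $\lambda$. That is essentially your closing cross-check.

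Your main argument works algebraically instead. You differentiate $E(x)^2=E(x)$ and $[M(x),E(x)]=0$, use idempotence to show $E^{(1)}$ has only off-diagonal blocks, and solve each block by inverting $M-\lambda I$ on $\operatorname{Im}(I-E)$ with $S_\lambda$. Both block computations and the signs check out against $S_\lambda=\sum_{f:\lambda_f\neq\lambda}(\lambda_f-\lambda)^{-1}E_f$.

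Your route is more elementary and explains structurally why only off-diagonal blocks and only the reduced resolvent can appear. It also needs only first-order differentiability at $x=0$, which is all the Gram-flow substitution $M^{(1)}=\dot M$ requires. Kato's contour method, in exchange, produces the entire power series of $E(x)$ systematically. It also covers the non-semisimple case, where $ME=\lambda E+N$ with $N$ nilpotent and extra $N$-dependent terms appear.

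State explicitly in your write-up that the step $ME=\lambda E$ is exactly where semisimplicity is used. It is automatic here because the Gram matrix is symmetric, which matches the paper's parenthetical remark, but the lemma itself is phrased for a general analytic perturbation.
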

\end{lemmaproofbox}
\noindent\textit{Reference:} \citep{kato}, Ch.~II, \S 2, Eq.~(2.14) (semisimple case; automatically satisfied for symmetric $M$).

\begin{lemmaproofbox}
\begin{lemma}[First-order drift of Gram eigenvalues]\label{lemma:kato-eigenvalue-coeff}
Let $M(x)=M + xM^{(1)} + O(x^2)$ be analytic and let $\lambda$ be an eigenvalue of $M$
with spectral projector $E$ of rank $d=\mathrm{tr}(E)$.
Let $\bar\lambda(x)$ denote the \emph{weighted mean} of the eigenvalues of $M(x)$ bifurcating from $\lambda$
(equivalently $\bar\lambda(x)=\frac{1}{d}\mathrm{tr}(M(x)E(x))$ where $E(x)$ is the associated spectral projector).
Then
\[
\bar\lambda^{(1)} := \left.\frac{d}{dx}\bar\lambda(x)\right|_{x=0}
\;=\;
\frac{1}{d}\,\mathrm{tr}\!\big(M^{(1)}E\big).
\]
Along the Gram flow, $\dot{\bar\lambda}(t)=\frac{1}{d}\mathrm{tr}(\dot M(t)\,E(t))$ with $\dot M(t)=-\{M(t),\Phi(t)\}$.
If $\lambda$ is simple with unit eigenvector $u$, this reduces to $\lambda^{(1)} = u^\top M^{(1)}u$.
\end{lemma}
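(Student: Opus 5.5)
The plan is the classical first-order perturbation argument for the weighted-mean eigenvalue, written entirely with the spectral resolution of Lemma~\ref{lemma:spectral-decomposition}, and with the derivative of the projector supplied by Lemma~\ref{lemma:kato-eigenprojection-coeff}. I start from the representation
\[
d\,\bar\lambda(x) = \mathrm{tr}\big(M(x)E(x)\big).
\]
This holds because $E(x)$ is the total projector for the $\lambda$-group, it commutes with $M(x)$, and $M(x)E(x)$ restricted to $\mathrm{Im}(E(x))$ has trace equal to the sum of the bifurcating eigenvalues. Its rank stays $d$ for small $x$ by continuity of $E(x)$, which follows since the $\lambda$-group is separated from the rest of the spectrum.

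Next I differentiate at $x=0$ using the product rule:
\[
d\,\bar\lambda^{(1)} = \mathrm{tr}\big(M^{(1)}E\big) + \mathrm{tr}\big(M E^{(1)}\big).
\]
The first term is already the target. The remaining work is to show that the second term vanishes.

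For that I substitute $E^{(1)} = -EM^{(1)}S_\lambda - S_\lambda M^{(1)}E$ from Lemma~\ref{lemma:kato-eigenprojection-coeff}, and use cyclicity of the trace:
\[
\mathrm{tr}\big(ME^{(1)}\big) = -\mathrm{tr}\big(M^{(1)}S_\lambda M E\big) - \mathrm{tr}\big(M^{(1)}E M S_\lambda\big).
\]
Since $ME = EM = \lambda E$ and $S_\lambda E = ES_\lambda = 0$ (Lemma~\ref{lemma:kato-reduced-resolvent}), both $S_\lambda M E = \lambda S_\lambda E$ and $EMS_\lambda = \lambda E S_\lambda$ are zero, so the second term vanishes.

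As an alternative that avoids the projector formula, I could differentiate the identity $E(x)^2 = E(x)$. This gives $EE^{(1)}E = 0$. Combined with $M = \lambda E + (I-E)M(I-E)$, it yields
\[
\mathrm{tr}\big(ME^{(1)}\big) = \lambda\,\mathrm{tr}\big(EE^{(1)}E\big) + \mathrm{tr}\big((I-E)M(I-E)E^{(1)}\big).
\]
Here the first term vanishes directly. The second needs the block identity $(I-E)E^{(1)}(I-E) = 0$, also obtained from differentiating idempotence. I would keep this as a backup.

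Dividing by $d$ gives the first claim. For the Gram flow I apply the result at each time $t$ with $x = s - t$ and $M^{(1)} = \dot M(t) = -\{M(t),\Phi(t)\}$, as in Lemma~\ref{lemma:kato-analytic-expansion}. For the simple case, $d = 1$ and $E = uu^\top$, so $\mathrm{tr}(M^{(1)}uu^\top) = u^\top M^{(1)}u$.

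The main obstacle is justifying that $x \mapsto \mathrm{tr}(M(x)E(x))$ is differentiable with the product rule. This needs $E(x)$ to be analytic, hence differentiable, near $0$. I would cite the Riesz projection $E(x) = -\frac{1}{2\pi i}\oint_\Gamma (M(x)-z)^{-1}\,dz$ around an isolating contour $\Gamma$, which is analytic in $x$ because $M(x)$ is. This is exactly the hypothesis already used in Lemma~\ref{lemma:kato-eigenprojection-coeff}, so I would invoke it rather than reprove it.
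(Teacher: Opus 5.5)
Your proof is correct, but it takes a different route from the paper. The paper gives no argument of its own here and simply cites Kato (Ch.~II, \S2), where the weighted mean is handled through the resolvent.

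In Kato's approach one writes
$\mathrm{tr}\big((M(x)-\lambda)E(x)\big)=\mathrm{tr}\big(-\frac{1}{2\pi i}\oint_\Gamma(\zeta-\lambda)R(\zeta,x)\,d\zeta\big)$ with $R(\zeta,x)=(M(x)-\zeta)^{-1}$. One then expands $R(\zeta,x)=R(\zeta)-xR(\zeta)M^{(1)}R(\zeta)+O(x^2)$. Cyclicity, $R(\zeta)^2=\frac{d}{d\zeta}R(\zeta)$, and an integration by parts in $\zeta$ then give $\mathrm{tr}(M^{(1)}E)$.

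You instead differentiate $\mathrm{tr}(M(x)E(x))$ by the product rule and kill the cross term $\mathrm{tr}(ME^{(1)})$ using the paper's own projector-drift lemma. This is more elementary and stays inside the paper's lemma chain. However, it yields only the first-order coefficient. Kato's resolvent computation produces every $\bar\lambda^{(n)}$ in one stroke and never needs $E^{(1)}$ explicitly.

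On robustness: your main route uses $ME=\lambda E$ and Lemma~\ref{lemma:kato-eigenprojection-coeff}. Both rely on $\lambda$ being semisimple, which is automatic for the symmetric Gram matrix. Your backup route via idempotence is actually stronger. Since $EE^{(1)}E=0$ and $(I-E)E^{(1)}(I-E)=0$, it gives $\mathrm{tr}(XE^{(1)})=0$ for every $X$ commuting with $E$. So it recovers Kato's formula even when the eigennilpotent is nonzero.

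For the Gram-flow clause, your argument only needs $M(\cdot)$ to be differentiable at $t$, since the Riesz projection inherits this. That is weaker than the analyticity assumed in the statement, and it is what the flow realistically provides.
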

\end{lemmaproofbox}
\noindent\textit{Reference:} \citep{kato}, Ch.~II, \S 2, Eq.~(2.32) (in this PDF’s numbering).

\subsection{Spectral Measure}
Overview: Lemma \ref{lemma:dimensionality_gram} recasts the dimensionality of a feature i denoted by $D_i$ extracted from the weights as a function of the gram matrix. \ref{lemma:dimensionality_kappa} has a corresponding expression of $D_i$ in terms the feature norm scaled by the inverse of the Rayleigh coefficient of the frame operator (which determines the range of eigenvalues that the frame operator could take) 
\begin{lemmaproofbox} 
\begin{lemma} \label{lemma:dimensionality_gram}
Let $W_i\in \mathbb{R}^m\backslash\{0\}$ be column $i$ of the weight matrix $W$ and $M = W^\top W$. 
Then: \begin{equation*}D_i = \frac{M_{ii}^2}{(M^2)_{ii}}\tag{A1}\end{equation*}
\end{lemma}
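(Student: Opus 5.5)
The plan is to unpack both sides of (A1) entry by entry, using only the definition $M = W^\top W$, so that $M_{ij} = \langle W_i, W_j\rangle = W_i^\top W_j$.

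First, the numerator. The diagonal entry is $M_{ii} = W_i^\top W_i = \|W_i\|^2$, so $M_{ii}^2 = \|W_i\|^4$. This is exactly the numerator in the definition of $D_i$ from Section~\ref{section:generalizable}.

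Second, the denominator. I expand the $(i,i)$ entry of the matrix square:
\[
(M^2)_{ii} = \sum_{j=1}^n M_{ij}M_{ji} = \sum_{j=1}^n M_{ij}^2 = \sum_{j=1}^n \big(W_i^\top W_j\big)^2 .
\]
The middle equality uses the symmetry of $M$, which holds because $M^\top = (W^\top W)^\top = W^\top W$. The last expression is exactly the denominator in the definition of $D_i$.

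It remains to check that the quotient is well defined. The hypothesis $W_i \neq 0$ gives $(M^2)_{ii} \ge M_{ii}^2 = \|W_i\|^4 > 0$, since the $j=i$ term alone already contributes $\|W_i\|^4$. So the denominator is nonzero and (A1) follows.

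There is no real obstacle here. The only points needing care are the symmetry of $M$, used to write $M_{ij}M_{ji} = M_{ij}^2$, and the non-vanishing of the denominator, which is handled by the hypothesis $W_i \neq 0$.

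As a remark for the next lemma, the same denominator can be rewritten as
\[
\sum_j (W_i^\top W_j)^2 = W_i^\top \Big(\sum_j W_j W_j^\top\Big) W_i = W_i^\top F W_i ,
\]
which directly yields the Rayleigh-quotient form in Lemma~\ref{lemma:dimensionality_kappa}.
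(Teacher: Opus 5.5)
Your proof is correct and matches the paper's: both compute $M_{ii}=\|W_i\|^2$ and $(M^2)_{ii}=\sum_j M_{ij}M_{ji}=\sum_j (W_i^\top W_j)^2$, then compare with the definition of $D_i$. The differences are cosmetic: the paper starts from the normalized form $\|W_i\|^2/\sum_j(\hat W_i^\top W_j)^2$ and clears the hat, whereas you start directly from the $\|W_i\|^4$ form and also check that the denominator is positive.
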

\begin{proof} By definition of the Gram matrix, $M_{ij} = \langle W_i, W_j\rangle$, i.e $M_{ii} = \|W_i\|^2$ and $(M^2)_{ii} = \sum_j M_{ij}M_{ji} = \sum_j (W_i^\top W_j)^2$. Hence: 
$$D_i = \frac{\|W_i\|^2}{\sum_j (\hat{W}_i^\top W_j)^2} =\frac{\|W_i\|^2}{\frac{1}{\|W_i\|^2}\sum_j (W_i^\top W_j)^2 }=$$
$$\frac{\|W_i\|^4}{\sum_j (W_i^\top W_j)^2 } = \frac{M_{ii}^2}{(M^2)_{ii}} $$
\end{proof}
\end{lemmaproofbox}

\begin{lemmaproofbox} 
    \begin{lemma} \label{lemma:dimensionality_kappa} Let $W_i\in \mathbb{R}^m\backslash \{0\}$ be column $i$ of $W\in \mathbb{R}^{m\times n}$ and $D_i$ be defined as in $(A1)$. Then:
        $$D_i = \frac{1}{\kappa_i} \|W_i\|^2, \quad \kappa_i = \frac{W_i^\top FW_i}{\|W_i\|^2}$$
    \end{lemma}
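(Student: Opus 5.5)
The plan is to start from the Gram form of Lemma~\ref{lemma:dimensionality_gram}, $D_i = M_{ii}^2/(M^2)_{ii}$, and rewrite the denominator as a quadratic form of the frame operator. By definition $M_{ii} = W_i^\top W_i = \|W_i\|^2$, so the numerator is $\|W_i\|^4$. For the denominator, I would expand $(M^2)_{ii} = \sum_j M_{ij}M_{ji} = \sum_j (W_i^\top W_j)^2$ using symmetry of $M$.

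The key step is then to recognize the sum as a Rayleigh-type quadratic form in $F$:
\[
\sum_{j=1}^n (W_i^\top W_j)^2 = \sum_{j=1}^n W_i^\top W_j W_j^\top W_i = W_i^\top \Big(\sum_{j=1}^n W_jW_j^\top\Big) W_i = W_i^\top F W_i ,
\]
where I use the outer-product expansion $F = WW^\top = \sum_j W_jW_j^\top$ already noted in Section~\ref{section:warm-up}. Equivalently, $(M^2)_{ii} = e_i^\top W^\top W W^\top W e_i = W_i^\top F W_i$, since $We_i = W_i$.

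Substituting gives
\[
D_i = \frac{\|W_i\|^4}{W_i^\top F W_i} = \|W_i\|^2 \cdot \frac{\|W_i\|^2}{W_i^\top F W_i} = \frac{\|W_i\|^2}{\kappa_i}.
\]
Division is legitimate because $W_i \neq 0$ by hypothesis. Moreover, $W_i^\top F W_i \ge (W_i^\top W_i)^2 > 0$, since the $j=i$ term of the sum is $\|W_i\|^4$. So $\kappa_i > 0$ is well defined.

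There is no real obstacle here. The only point needing care is justifying that the denominator is nonzero, which the diagonal term settles, and keeping the index bookkeeping in $(M^2)_{ii}$ straight.
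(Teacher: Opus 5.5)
Your proof is correct and follows the paper's argument: both start from the Gram form $D_i = M_{ii}^2/(M^2)_{ii}$ of Lemma~\ref{lemma:dimensionality_gram}, identify $(M^2)_{ii} = e_i^\top W^\top W W^\top W e_i = W_i^\top F W_i$, and then factor out $\|W_i\|^2$. You also observe that $W_i^\top F W_i \ge \|W_i\|^4 > 0$, from the $j=i$ term of the sum; this is a small improvement, since the paper divides by this quantity without checking that it is nonzero.
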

    \begin{proof} Rewriting:
        $$\forall_i \ (M^2)_{ii}= e_i^\top M^2 e_i = e_i^\top(W^\top W)(W^\top W)e_i =$$
$$=(We_i)^\top F(We_i) = W_i^\top F W_i$$
Hence by using Lemma A1 $$D_i = \frac{M_{ii}^2}{(M^2)_{ii}}= \frac{\|W_i\|^4}{W_i^\top FW_i} = \frac{\|W_i\|^2}{W_i^\top F W_i} \|W_i\|^2 = \frac{1}{\kappa_i}\|W_i\|^2,$$
    \end{proof}
\end{lemmaproofbox}
By having expressed the fractional dimensionality as a functional of both $M = W^\top W$ and $F = WW^\top$ (\ref{lemma:dimensionality_gram}, \ref{lemma:dimensionality_kappa})), we can use \ref{lemma:spectral_correspondence} to restate the fractional dimensionality in purely spectral terms. Since all subsequent analysis is going to be dynamic, we can WLOG use previous the results for any specific temporal snapshot $W(t)$ of the weight matrix and its corresponding Gram and Frame operators $M(t) = W(t)^\top W(t)$ and $F(t) = W(t)W(t)^\top$. Using the spectral resolution provided to us from \ref{lemma:spectral_correspondence}(iii) $F(t) = \sum_{e\in\mathcal{E}_+} \lambda_e(t) P_e(t)$, we can define the per-feature spectral measure $\mu_i(t)$ using the spectral weights:

\begin{lemmaproofbox}
    \begin{lemma}[Spectral measure]\label{lemma:spectral_measure_probability_distribution}
Fix a snapshot of the weight matrix $W\in\mathbb{R}^{m\times n}$ and let
$M := W^\top W$ and $F := WW^\top$.
Let $M=\sum_{e\in E}\lambda_e E_e$ be the orthogonal spectral resolution from Lemma~B.1, and for each $e\in E_+:=\{e:\lambda_e>0\}$
let $P_e := \lambda_e^{+}\, W E_e W^\top$ be the induced activation-space projector from Lemma~B.3.
For any feature (column) $W_i\neq 0$, define
\[
p_{i,e} \;:=\; \frac{\|P_e W_i\|^2}{\|W_i\|^2}\qquad (e\in E_+),
\qquad
\mu_i \;:=\; \sum_{e\in E_+} p_{i,e}\,\delta_{\lambda_e}.
\]
Then:
\begin{enumerate}
\item $p_{i,e}\ge 0$ for all $e\in E_+$ and $\sum_{e\in E_+} p_{i,e}=1$.
In particular, $\mu_i$ is a well-defined probability measure (finite atomic) supported on $\{\lambda_e:e\in E_+\}\subset(0,\infty)$.
Equivalently, for any Borel set $B\subset\mathbb{R}$,
\[
\mu_i(B) \;=\; \sum_{e\in E_+:\,\lambda_e\in B} p_{i,e}.
\]
\item The weights admit the equivalent closed form
\[
p_{i,e} \;=\; \frac{\lambda_e\,(E_e)_{ii}}{\|W_i\|^2}\qquad (e\in E_+).
\]
\end{enumerate}
\end{lemma}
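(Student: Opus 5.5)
I would first establish the closed form in part 2 and then deduce part 1 from it together with the spectral intertwining identities. The starting point is Lemma \ref{lemma:spectral_intertwining}(i), $P_eW = WE_e$. Applying it to $e_i$ gives $P_eW_i = WE_e e_i$. Then
\[
\|P_eW_i\|^2 = e_i^\top E_e W^\top W E_e e_i = e_i^\top E_e M E_e e_i = \lambda_e\, e_i^\top E_e e_i = \lambda_e (E_e)_{ii}.
\]
This uses $E_e^\top=E_e$, $ME_e=\lambda_eE_e$ and $E_e^2=E_e$ from Lemma \ref{lemma:spectral-decomposition}. Dividing by $\|W_i\|^2>0$ yields $p_{i,e}=\lambda_e(E_e)_{ii}/\|W_i\|^2$.

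Nonnegativity is immediate, either from $p_{i,e}$ being a ratio of squared norms or from $(E_e)_{ii}=\|E_ee_i\|^2\ge 0$ and $\lambda_e>0$. For the normalization, I would sum the closed form over $e\in E_+$:
\[
\sum_{e\in E_+}\lambda_e (E_e)_{ii} = \Big(\sum_{e\in E}\lambda_eE_e\Big)_{ii} = M_{ii} = \|W_i\|^2.
\]
This holds because the possible $\lambda_0=0$ term contributes nothing. So $\sum_e p_{i,e}=1$.

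An alternative route would use Lemma \ref{lemma:spectral_intertwining}(iii): $\sum_e P_e$ is the orthogonal projector onto $\mathrm{Im}(W)$, which contains $W_i$. The $P_e$ are mutually orthogonal by Lemma \ref{lemma:spectral_correspondence}(ii), so Pythagoras gives $\sum_e\|P_eW_i\|^2=\|W_i\|^2$. I would mention this as a consistency check.

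The measure-theoretic statement is then routine. Because the $\lambda_e$ for $e\in E_+$ are distinct positive reals, $\mu_i$ is a finite convex combination of Dirac masses on $(0,\infty)$, hence a Borel probability measure, and evaluating it on a Borel set $B$ gives the displayed formula. The only delicate point is remembering that the $\lambda_0=0$ stratum is excluded from $E_+$ but contributes zero, via $WE_0=0$ from Lemma \ref{lemma:kernels}(ii). Beyond that bookkeeping I expect no real obstacle.
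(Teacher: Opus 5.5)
Your proof is correct, and it differs from the paper's mainly in how you obtain the normalization.

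The paper proves part 1 first, in activation space. It uses Lemma~\ref{lemma:spectral_intertwining}(iii) to say that $\sum_{e\in E_+}P_e$ is the orthogonal projector onto $\mathrm{Im}(W)\ni W_i$. It then applies Pythagoras over the mutually orthogonal $P_e$ (Lemma~\ref{lemma:spectral_correspondence}(ii)). Only afterwards does it compute the closed form, by expanding $W_i^\top P_eW_i=\lambda_e^{+}e_i^\top ME_eMe_i$.

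You reverse the order. Your computation of $\|P_eW_i\|^2=\|WE_ee_i\|^2$ via $P_eW=WE_e$ is essentially the paper's part 2. You then get $\sum_e p_{i,e}=1$ by reading off the diagonal of $M=\sum_e\lambda_eE_e$, i.e.\ $\sum_{e\in E_+}\lambda_e(E_e)_{ii}=M_{ii}=\|W_i\|^2$.

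Your route is more economical. It uses only Lemma~\ref{lemma:spectral-decomposition} and the one-line intertwining identity. It does not need the projector property of the $P_e$, their mutual orthogonality, or the identification of $WM^{+}W^\top$ with the projector onto $\mathrm{Im}(W)$. The paper's route, in exchange, shows the weights $p_{i,e}$ directly as a Pythagorean splitting of $\|W_i\|^2$ across orthogonal eigenspaces of $F$. That is the picture reused later in the moment lemma and the cyclic-isometry lemma.

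One small correction to your closing remark. In your own argument the $\lambda_0=0$ term vanishes simply because it carries the factor $\lambda_0=0$. It is the paper's Pythagoras route that needs $WE_0=0$, to guarantee that the $P_e$ with $e\in E_+$ exhaust $\mathrm{Im}(W)$.
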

\begin{proof}
We use only Lemmas~B.1--B.4.

(1) Since each $P_e$ is an orthogonal projector (Lemma~B.3(i)), $\|P_eW_i\|^2\ge 0$, hence $p_{i,e}\ge 0$.

To show the weights sum to $1$, first note that $W_i = W e_i\in\mathrm{Im}(W)$.
By Lemma~B.4(iii), $\mathbb{R}^m = \big(\bigoplus_{e\in E_+}\mathrm{Im}(P_e)\big)\oplus \ker(W^\top)$ and $\sum_{e\in E_+}P_e$ is the orthogonal projector onto $\mathrm{Im}(W)=\ker(W^\top)^\perp$.
Therefore $\sum_{e\in E_+}P_e W_i = W_i$.
Because the projectors are mutually orthogonal (Lemma~B.3(ii)), the vectors $\{P_eW_i\}_{e\in E_+}$ are pairwise orthogonal, so
\[
\|W_i\|^2 \;=\; \Big\|\sum_{e\in E_+}P_eW_i\Big\|^2
\;=\; \sum_{e\in E_+}\|P_eW_i\|^2.
\]
Dividing by $\|W_i\|^2>0$ yields $\sum_{e\in E_+}p_{i,e}=1$.
Hence $\mu_i=\sum_{e\in E_+}p_{i,e}\delta_{\lambda_e}$ has total mass $1$, so it is a probability measure; its action on Borel sets is the standard one for finite atomic measures.

(2) Since $P_e$ is an orthogonal projector (Lemma~B.3(i)), $\|P_eW_i\|^2 = W_i^\top P_e W_i$.
Using $P_e=\lambda_e^{+}WE_eW^\top$ (Lemma~B.3) and $W_i=We_i$, we compute
\[
W_i^\top P_e W_i
= e_i^\top W^\top(\lambda_e^{+}WE_eW^\top)We_i
= \lambda_e^{+}\, e_i^\top (W^\top W)\,E_e\,(W^\top W)\, e_i
= \lambda_e^{+}\, e_i^\top M E_e M e_i.
\]
By Lemma~B.1, $ME_e=\lambda_eE_e$ and $E_eM=\lambda_eE_e$, so $ME_eM=\lambda_e^2E_e$.
Thus $W_i^\top P_e W_i = \lambda_e^{+}\lambda_e^2\, e_i^\top E_e e_i = \lambda_e (E_e)_{ii}$.
Dividing by $\|W_i\|^2$ gives the claimed formula for $p_{i,e}$.
\qedhere
\end{proof}
\end{lemmaproofbox}

\begin{lemmaproofbox}
\label{lemma:moments}
\begin{lemma}[Moments of the spectral measure]
Fix $i$ with $W_i\neq 0$ and let $\mu_i=\sum_{e\in E_+}p_{i,e}\delta_{\lambda_e}$ be as in Lemma~B.11.
Then for every integer $r\ge 1$,
\[
\mathbb{E}_{\mu_i}[\lambda^r]
\;:=\; \int \lambda^r\, d\mu_i(\lambda)
\;=\; \frac{W_i^\top F^r W_i}{\|W_i\|^2}.
\]
Moreover, writing the Moore--Penrose pseudoinverse as $F^{+}=\sum_{e\in E_+}\lambda_e^{+}P_e$,
\[
\frac{W_i^\top F^{+} W_i}{\|W_i\|^2}
\;=\; \int \lambda^{+}\, d\mu_i(\lambda)
\;=\; \mathbb{E}_{\mu_i}[\lambda^{+}],
\qquad\text{where }\lambda^{+}:=\frac{1}{\lambda}\text{ on }(0,\infty).
\]
\end{lemma}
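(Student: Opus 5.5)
The plan is to reduce both identities to quadratic forms in the activation-space projectors from the Spectral Correspondence Lemma~\ref{lemma:spectral_correspondence}. The first step is to write the frame operator as $F=\sum_{e\in E_+}\lambda_e P_e$, using Lemma~\ref{lemma:spectral_correspondence}(iii). Since the $P_e$ are mutually orthogonal projectors (parts (i) and (ii) of that lemma), induction on $r$ gives $F^r=\sum_{e\in E_+}\lambda_e^r P_e$ for every integer $r\ge 1$. The cross terms $P_eP_f$ vanish for $e\neq f$, and $P_e^2=P_e$. No zero-eigenvalue term appears, because $F$ acts as zero on $\ker(W^\top)$, and Lemma~\ref{lemma:spectral_intertwining}(iii) shows this kernel is the complement of $\bigoplus_e \mathrm{Im}(P_e)$.

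Next, sandwich between $W_i$. Because each $P_e$ is a symmetric idempotent, $W_i^\top P_e W_i=\|P_eW_i\|^2=p_{i,e}\|W_i\|^2$, by the definition of $p_{i,e}$ in Lemma~\ref{lemma:spectral_measure_probability_distribution}. Hence
\[
\frac{W_i^\top F^r W_i}{\|W_i\|^2}=\sum_{e\in E_+}\lambda_e^r\,p_{i,e}=\int\lambda^r\,d\mu_i(\lambda).
\]
The last equality is just integration of a function against the finite atomic measure $\mu_i=\sum_e p_{i,e}\delta_{\lambda_e}$.

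For the pseudoinverse, I would first check that $G:=\sum_{e\in E_+}\lambda_e^{+}P_e$ really is the Moore--Penrose pseudoinverse of $F$. Using orthogonality again:
\begin{itemize}
\item $FG=GF=\sum_e P_e$, which is the orthogonal projector onto $\mathrm{Im}(W)$ by Lemma~\ref{lemma:spectral_intertwining}(iii); this operator is symmetric.
\item $FGF=\sum_e\lambda_eP_e=F$.
\item $GFG=G$.
\end{itemize}
These are the four Penrose conditions, so $G=F^+$ by uniqueness of the pseudoinverse. The statement simply asserts this formula; I would include the verification for completeness.

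The same sandwich computation then gives $W_i^\top F^+W_i/\|W_i\|^2=\sum_e\lambda_e^{-1}p_{i,e}=\mathbb{E}_{\mu_i}[\lambda^+]$. This is well defined because every atom of $\mu_i$ sits at some $\lambda_e>0$, so $\lambda^+=1/\lambda$ is finite on the support of $\mu_i$.

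The only point needing care, and the main potential obstacle, is that $W_i$ may have a component outside $\bigoplus_e\mathrm{Im}(P_e)$, which would make the moment formula miss mass. This does not happen: $W_i=We_i\in\mathrm{Im}(W)$, and $\mathrm{Im}(W)$ is exactly $\bigoplus_e\mathrm{Im}(P_e)$. For the identities themselves this is not even needed, since $F^r$ and $F^+$ both annihilate $\ker(W^\top)$.
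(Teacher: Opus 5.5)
Your proposal is correct and follows the paper's proof essentially step for step. You expand $F^r=\sum_{e\in E_+}\lambda_e^r P_e$ using the orthogonal idempotents $P_e$ from the Spectral Correspondence Lemma, sandwich with $W_i$ to get $\sum_e\lambda_e^r\|P_eW_i\|^2$, divide by $\|W_i\|^2$, and repeat with $\lambda_e^{+}$ for the pseudoinverse. The only extra ingredient is your check of the four Penrose conditions for $\sum_e\lambda_e^{+}P_e$, which the paper simply takes as the definition of $F^{+}$; this is a harmless, slightly more complete addition.
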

\begin{proof}
By Lemma~B.3(iii), $F=\sum_{e\in E_+}\lambda_e P_e$, and by Lemma~B.3(ii) the projectors satisfy
$P_eP_f=\delta_{ef}P_e$ (orthogonal idempotents).
Therefore, for any integer $r\ge 1$,
\[
F^r
=\Big(\sum_{e\in E_+}\lambda_e P_e\Big)^r
=\sum_{e\in E_+}\lambda_e^r P_e,
\]
since all mixed products $P_{e_1}\cdots P_{e_r}$ vanish unless $e_1=\cdots=e_r$.

Hence
\[
W_i^\top F^r W_i
= \sum_{e\in E_+}\lambda_e^r\, W_i^\top P_e W_i
= \sum_{e\in E_+}\lambda_e^r\, \|P_eW_i\|^2,
\]
where we used that $P_e$ is an orthogonal projector (Lemma~B.3(i)), so $W_i^\top P_e W_i=\|P_eW_i\|^2$.
Dividing by $\|W_i\|^2$ and recalling $p_{i,e}=\|P_eW_i\|^2/\|W_i\|^2$ gives
\[
\frac{W_i^\top F^r W_i}{\|W_i\|^2}
= \sum_{e\in E_+} p_{i,e}\lambda_e^r
= \int \lambda^r\, d\mu_i(\lambda)
= \mathbb{E}_{\mu_i}[\lambda^r].
\]

The pseudoinverse identity is identical: by definition $F^{+}=\sum_{e\in E_+}\lambda_e^{+}P_e$,
so
\[
\frac{W_i^\top F^{+} W_i}{\|W_i\|^2}
= \sum_{e\in E_+} p_{i,e}\lambda_e^{+}
= \int \lambda^{+}\, d\mu_i(\lambda)
= \mathbb{E}_{\mu_i}[\lambda^{+}].
\]
\end{proof}
\end{lemmaproofbox}

This allows us to readily reformulate the Rayleigh quotient as the first moment of the spectral measure $\mu_i(t)$: 
\begin{corollary}
$\kappa_i = \mathbb{E}_{\mu_i(t)}[\lambda]$
\end{corollary}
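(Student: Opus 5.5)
The corollary $\kappa_i = \mathbb{E}_{\mu_i(t)}[\lambda]$ should follow directly from Lemma \ref{lemma:moments} with $r=1$. No new argument is needed beyond matching definitions. I will fix a snapshot $t$ and a feature $i$ with $W_i(t)\neq 0$; this is the standing hypothesis of Lemma \ref{lemma:dimensionality_kappa}, where $\kappa_i$ is defined. I then recall that definition,
\[
\kappa_i = \frac{W_i^\top F W_i}{\|W_i\|^2}, \qquad F = F(t) = W(t)W(t)^\top .
\]

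Next I invoke the moment identity of Lemma \ref{lemma:moments} at the same snapshot with $r=1$:
\[
\mathbb{E}_{\mu_i(t)}[\lambda] = \frac{W_i^\top F W_i}{\|W_i\|^2}.
\]
The right-hand side is literally $\kappa_i$, which gives the claim. For completeness I would also write out the one-line mechanism behind that lemma. By Lemma \ref{lemma:spectral_correspondence}(iii), $F=\sum_{e\in E_+}\lambda_e P_e$. Because each $P_e$ is an orthogonal projector (part (i)), $W_i^\top P_e W_i=\|P_eW_i\|^2$. Hence
\[
W_i^\top F W_i/\|W_i\|^2=\sum_e \lambda_e\, p_{i,e}=\int\lambda\,d\mu_i .
\]

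The only point requiring any care is bookkeeping rather than mathematics. The measure $\mu_i(t)$ must be built from the same spectral resolution $F(t)=\sum_{e\in E_+}\lambda_e(t)P_e(t)$ that appears in $\kappa_i$, with all quantities taken at the same time $t$. It must also be supported only on the positive eigenvalues $E_+$. The kernel of $F$ contributes nothing, since $W_i\in\mathrm{Im}(W)=\ker(W^\top)^\perp$ by Lemma \ref{lemma:spectral_intertwining}(iii), so no mass is lost at $\lambda=0$. Lemma \ref{lemma:spectral_measure_probability_distribution} guarantees that $\mu_i$ is a genuine probability measure, so the expectation is well defined and the identification is exact.
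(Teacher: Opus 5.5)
Your proposal is correct and follows the paper: there the corollary is stated as an immediate consequence of the moments lemma at $r=1$, which uses $F=\sum_{e\in E_+}\lambda_e P_e$ and $W_i^\top P_eW_i=\|P_eW_i\|^2$ to get $W_i^\top FW_i/\|W_i\|^2=\sum_e\lambda_e p_{i,e}$, exactly as you do. Your remark about the kernel is not needed for this identity, since $\lambda=0$ contributes nothing to the first moment; it matters only for $\mu_i$ having total mass one.
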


This means we have the following dynamic law for the Rayleigh quotient throughout training: $$\dot \kappa_i(t) = \sum_k \dot \lambda_k (t) p_{ik}(t) + \sum_k \lambda_k(t)\dot p_{ik}(t),$$
where the first term represents the \textbf{eigenvalue drift} and the second term is the \textbf{mass transport} of features across eigenspaces (e.g. eigenhopping). 

\begin{lemmaproofbox}
\begin{lemma}[Cyclic space, fractional powers, and pseudoinverse Cauchy--Schwarz]
\label{lemma:cyclic_cs}
Let $F\in\mathbb{R}^{m\times m}$ be symmetric PSD with spectral decomposition \(F=\sum_{e=0} \lambda_e P_e,\qquad \lambda_k\ge 0\), where the $P_k$ are orthogonal projectors, $P_kP_\ell=\delta_{k\ell}P_k$, and $\sum_k P_k=I$.
Let
\[
P:=P_{\mathrm{Im}(F)}=\sum_{\lambda_k>0}P_k=I-P_0,
\qquad
F^{1/2}:=\sum_k \lambda_k^{1/2}P_k,
\qquad
F^{+/2}:=\sum_{k}\lambda_k^{+1/2}P_k.
\]
For $x\in\mathbb{R}^m$, define the cyclic subspace
\[
\mathcal{K}_x:=\mathrm{span}\{x,Fx,F^2x,\dots\}.
\]
Then:
\begin{enumerate}
\item $\mathcal{K}_x=\mathrm{span}\{P_kx:\,P_kx\neq 0\}$. In particular, $\dim(\mathcal{K}_x)\le m$ and there exists
$r\le m$ such that $\mathcal{K}_x=\mathrm{span}\{x,Fx,\dots,F^{r-1}x\}$.
\item $F^{1/2}x\in\mathcal{K}_x$ and $F^{+/2}x\in\mathcal{K}_x$.
\item For all $x\in\mathbb{R}^m$,
\[
(x^\top Px)^2\ \le\ (x^\top Fx)\,(x^\top F^+x),
\]
and if $x\in\mathrm{Im}(F)$ (equivalently $Px=x$), then
\[
(x^\top x)^2\ \le\ (x^\top Fx)\,(x^\top F^+x).
\]
Moreover, equality holds iff $F^{1/2}x$ and $F^{+/2}x$ are linearly dependent, equivalently iff $Fx=c\,Px$ for some
$c\in\mathbb{R}$ (and iff $Fx=cx$ when $x\in\mathrm{Im}(F)$).
\end{enumerate}
\end{lemma}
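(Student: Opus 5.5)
The whole argument runs in the eigenbasis of $F$, using only the orthogonal resolution $F=\sum_k \lambda_k P_k$ with $P_kP_\ell=\delta_{k\ell}P_k$ and $\sum_k P_k=I$. For a fixed $x$, set $x_k:=P_kx$ and $w_k:=\|x_k\|^2\ge 0$. Every function of $F$ we need is diagonal in this decomposition: $F^jx=\sum_k\lambda_k^j x_k$, $F^{1/2}x=\sum_k\lambda_k^{1/2}x_k$, $F^{+/2}x=\sum_{\lambda_k>0}\lambda_k^{-1/2}x_k$, and $Px=\sum_{\lambda_k>0}x_k$. All three items reduce to statements about the finitely many vectors $x_k$. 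Since these are mutually orthogonal, the nonzero ones are linearly independent.

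For item 1, the inclusion $\mathcal{K}_x\subseteq\mathrm{span}\{x_k:x_k\neq 0\}$ is immediate from the formula for $F^jx$. For the reverse inclusion, let $k_1,\dots,k_r$ index the nonzero $x_k$; the corresponding eigenvalues are distinct because the $P_k$ are spectral projectors of distinct eigenvalues. Each $x_{k_s}$ is recovered as $p_s(F)x$, where $p_s$ is the Lagrange interpolation polynomial with $p_s(\lambda_{k_t})=\delta_{st}$. Hence both spans agree and $\dim\mathcal{K}_x=r\le m$. We also get $\mathcal{K}_x=\mathrm{span}\{x,\dots,F^{r-1}x\}$, because the Vandermonde matrix $(\lambda_{k_s}^j)$ is invertible. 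Item 2 follows at once, since $F^{1/2}x$ and $F^{+/2}x$ are linear combinations of the $x_k$. Equivalently, they equal $q(F)x$ for interpolating polynomials $q$ matching $\sqrt{\lambda}$, respectively $\lambda^{+1/2}$, on the nodes.

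For item 3, set $a:=F^{1/2}x$ and $b:=F^{+/2}x$. Orthogonality gives
\[
\langle a,b\rangle=\sum_{\lambda_k>0}w_k=x^\top Px,\qquad \|a\|^2=\sum_k\lambda_k w_k=x^\top Fx,\qquad \|b\|^2=\sum_{\lambda_k>0}\lambda_k^{-1}w_k=x^\top F^+x .
\]
Euclidean Cauchy--Schwarz $\langle a,b\rangle^2\le\|a\|^2\|b\|^2$ is then exactly the claimed inequality. When $x\in\mathrm{Im}(F)$ we have $Px=x$, which gives the second form.

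The main obstacle is the equality characterization, where degenerate cases have to be handled. Cauchy--Schwarz is tight iff $a,b$ are linearly dependent, which is the first stated equivalence. To convert this into $Fx=cPx$, I would write dependence in components: the $\lambda_0=0$ component of both $a$ and $b$ vanishes, so only $k$ with $\lambda_k>0$ and $x_k\ne0$ matter. If $b\ne 0$, dependence means $a=cb$, i.e. $\lambda_k^{1/2}=c\lambda_k^{-1/2}$ on every such $k$, so $\lambda_k=c$ there. This holds iff $Fx=\sum_{\lambda_k>0}\lambda_kx_k=c\sum_{\lambda_k>0}x_k=cPx$. The converse runs the same computation backwards. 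If $b=0$, then $Px=0$ and also $a=0$, so both sides vanish and $Fx=0=c\,Px$ trivially. Finally, for $x\in\mathrm{Im}(F)$, substitute $Px=x$. I would make the support restriction (only $k$ with $x_k\neq0$) explicit, since that is the point where ``$\lambda$ constant $\mu$-a.s.'' enters, as used in Theorem~\ref{theorem:spectral_localization}.
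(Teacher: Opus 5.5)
Your proposal is correct and follows essentially the paper's route: spectral expansion plus Lagrange interpolation for items 1--2, and Euclidean Cauchy--Schwarz on $a=F^{1/2}x$, $b=F^{+/2}x$ for item 3. Your componentwise analysis of the equality case is more complete than the paper's: the paper only derives $Fx=c\,Px$ from $a=c\,b$ by left-multiplying with $F^{1/2}$, and leaves both the converse and the degenerate case $b=0$ implicit.
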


\begin{proof}
(1) For every integer $t\ge 0$,
\[
F^t x=\sum_{k=0}^K \lambda_k^t\,P_kx,
\]
so $\mathcal{K}_x\subseteq \mathrm{span}\{P_kx:\,P_kx\neq 0\}$.

For the reverse inclusion, let $S:=\{k:\,P_kx\neq 0\}$ and let $\{\lambda_k\}_{k\in S}$ be the distinct eigenvalues
present in $x$. For each fixed $j\in S$, choose a polynomial $q_j$ of degree $\le |S|-1$ such that
$q_j(\lambda_j)=1$ and $q_j(\lambda_k)=0$ for all $k\in S\setminus\{j\}$ (Lagrange interpolation on the finite set
$\{\lambda_k\}_{k\in S}$). Then
\[
q_j(F)x=\sum_{k\in S} q_j(\lambda_k)\,P_kx = P_jx,
\]
and since $q_j(F)x$ is a linear combination of $\{F^t x\}_{t=0}^{|S|-1}$, we have $P_jx\in\mathcal{K}_x$.
Thus $\mathrm{span}\{P_kx:\,P_kx\neq 0\}\subseteq \mathcal{K}_x$, proving equality. The stabilization
$\mathcal{K}_x=\mathrm{span}\{x,Fx,\dots,F^{r-1}x\}$ follows with $r:=\dim(\mathcal{K}_x)\le |S|\le m$.

(2) Using the spectral expansions,
\[
F^{1/2}x=\sum_k \lambda_k^{1/2}P_kx,\qquad
F^{+/2}x=\sum_{\lambda_k>0}\lambda_k^{-1/2}P_kx,
\]
and each $P_kx\in\mathcal{K}_x$ by (1), so both vectors lie in $\mathcal{K}_x$.

(3) Note $F^{1/2}F^{+/2}=F^{+/2}F^{1/2}=P$. Let $a:=F^{1/2}x$ and $b:=F^{+/2}x$. Then
\[
\langle a,b\rangle = x^\top F^{1/2}F^{+/2}x = x^\top Px,\qquad
\|a\|^2=x^\top Fx,\qquad
\|b\|^2=x^\top F^+x.
\]
Cauchy--Schwarz in $\mathbb{R}^m$ yields $(x^\top Px)^2=\langle a,b\rangle^2\le \|a\|^2\|b\|^2=(x^\top Fx)(x^\top F^+x)$.
If $x\in\mathrm{Im}(F)$ then $Px=x$ and the stated simplification follows.

Equality in Cauchy--Schwarz holds iff $a$ and $b$ are linearly dependent, i.e. $F^{1/2}x=c\,F^{+/2}x$. Left-multiplying
by $F^{1/2}$ gives $Fx=c\,F^{1/2}F^{+/2}x=c\,Px$. If $Px=x$, this reduces to $Fx=cx$.
\end{proof}
\end{lemmaproofbox}
\begin{lemmaproofbox}
\begin{lemma}[$L^2(\mu)$ for finite atomic measures]
\label{lemma:l2_atomic}
Let $\mu=\sum_{k=1}^s p_k\,\delta_{\lambda_k}$ be a probability measure on $\mathbb{R}$ with $p_k>0$ and $\sum_k p_k=1$.
Then $L^2(\mu)$ can be identified with $\mathbb{R}^s$ via $f\mapsto (f(\lambda_1),\dots,f(\lambda_s))$, with inner product
\[
\langle f,g\rangle_{L^2(\mu)}=\int f(\lambda)g(\lambda)\,d\mu(\lambda)=\sum_{k=1}^s p_k\,f(\lambda_k)g(\lambda_k),
\]
and norm $\|f\|_{L^2(\mu)}^2=\sum_{k=1}^s p_k\,|f(\lambda_k)|^2$. In particular, every function on
$\{\lambda_1,\dots,\lambda_s\}$ is in $L^2(\mu)$, and $L^2(\mu)$ is a finite-dimensional Hilbert space.
\end{lemma}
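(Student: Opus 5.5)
The plan is to verify the identification directly from the definition of $L^2(\mu)$ for a finitely supported measure. First, I would note that two measurable functions $f,g$ are identified in $L^2(\mu)$ exactly when they agree $\mu$-a.e. Since $\mu(\mathbb{R}\setminus\{\lambda_1,\dots,\lambda_s\})=0$ and each atom carries strictly positive mass $p_k>0$, this holds if and only if $f(\lambda_k)=g(\lambda_k)$ for all $k$. Consequently the evaluation map
\[
\Phi: L^2(\mu)\to\mathbb{R}^s,\qquad f\mapsto (f(\lambda_1),\dots,f(\lambda_s)),
\]
is well defined on equivalence classes and injective. Here the $\lambda_k$ are taken to be distinct, which is harmless since repeated atoms can be merged by adding their weights.

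Second, I would compute the integral of a nonnegative or integrable function against $\mu$ by linearity of the integral over the Dirac masses:
\[
\int h\,d\mu=\sum_k p_k h(\lambda_k).
\]
Applying this with $h=|f|^2$ shows that every function on the support is square-integrable, because the sum is finite. Applying it with $h=fg$ gives the stated inner product and norm. The same formulas also show that $\|f\|_{L^2(\mu)}=0$ forces $f(\lambda_k)=0$ for all $k$, again using $p_k>0$.

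Third, I would establish surjectivity. Any vector $(v_1,\dots,v_s)\in\mathbb{R}^s$ is attained by the function equal to $v_k$ at $\lambda_k$ and to $0$ elsewhere, and this function is Borel since it has finite support. Hence $\Phi$ is a linear bijection, and it is an isometry onto $\mathbb{R}^s$ equipped with the weighted inner product $\langle v,w\rangle_p=\sum_k p_k v_k w_k$. This weighted product is positive definite precisely because all $p_k>0$. Finite-dimensionality, and therefore completeness, then transfers from $\mathbb{R}^s$, so $L^2(\mu)$ is a finite-dimensional Hilbert space.

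The only delicate point is the role of the hypothesis $p_k>0$. Both injectivity modulo null sets and definiteness of the weighted product rely on it. If some atom had zero mass, the identification would instead be with $\mathbb{R}^{|\{k:p_k>0\}|}$. I expect no genuine obstacle beyond stating this carefully.
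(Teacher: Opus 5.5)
Your proposal is correct and follows the paper's proof: equivalence classes are determined by the values at the atoms because each $p_k>0$, integration against $\mu$ reduces to the finite weighted sum (which gives square-integrability, the inner product, and the norm), and completeness follows from finite-dimensionality. You add two refinements that the paper leaves implicit: an explicit surjectivity step, and the remark that the $\lambda_k$ must be distinct (or merged) for the evaluation map onto $\mathbb{R}^s$ to be a bijection.
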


\begin{proof}
Since $\mu$ is supported on the finite set $\{\lambda_1,\dots,\lambda_s\}$ with strictly positive masses $p_k$, two functions
$f,g$ are equal $\mu$-a.s. iff $f(\lambda_k)=g(\lambda_k)$ for all $k$. Thus an $L^2(\mu)$ equivalence class is uniquely
specified by the vector of its values $(f(\lambda_1),\dots,f(\lambda_s))\in\mathbb{R}^s$.

Moreover,
\[
\int |f(\lambda)|^2\,d\mu(\lambda)=\sum_{k=1}^s p_k\,|f(\lambda_k)|^2 < \infty
\]
for every such vector (finite sum), so every function on the support belongs to $L^2(\mu)$. The formula for the inner product
is immediate from the definition of integration against an atomic measure. Completeness follows because $L^2(\mu)$ is
finite-dimensional.
\end{proof}
\end{lemmaproofbox}
\begin{lemmaproofbox}
\begin{lemma}[Cauchy--Schwarz in $L^2(\mu)$]
\label{lemma:cs_L2}
Let $(X,\mathcal{B},\mu)$ be a probability space and let $u,v\in L^2(\mu)$. Then
\[
|\langle u,v\rangle_{L^2(\mu)}|^2 \le \|u\|_{L^2(\mu)}^2\,\|v\|_{L^2(\mu)}^2.
\]
If $\|v\|_{L^2(\mu)}>0$, equality holds iff there exists $c\in\mathbb{R}$ such that $u=c\,v$ $\mu$-a.s.
(If $\|v\|_{L^2(\mu)}=0$, then $v=0$ $\mu$-a.s. and the inequality is trivial.)
\end{lemma}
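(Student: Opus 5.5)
The plan is to reduce the statement to Cauchy--Schwarz for two vectors in a real inner product space. The space $L^2(\mu)$ here is a genuine inner product space once functions are identified $\mu$-a.s., and for finite atomic $\mu$ Lemma~\ref{lemma:l2_atomic} lets us work in $\mathbb{R}^s$ with weights $p_k>0$.

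First I dispose of the degenerate case. If $\|v\|_{L^2(\mu)}=0$, then $\int v^2\,d\mu=0$, so $v=0$ $\mu$-a.s. This forces $\langle u,v\rangle=\int uv\,d\mu=0$; the integral is finite because $|uv|\le \tfrac12(u^2+v^2)$ is integrable. Both sides of the inequality are then $0$.

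Next, assume $\|v\|>0$ and run the standard quadratic argument. For $t\in\mathbb{R}$ set
\[
q(t):=\|u-tv\|^2_{L^2(\mu)}=\|u\|^2-2t\langle u,v\rangle+t^2\|v\|^2\ \ge 0 .
\]
A nonnegative quadratic in $t$ with positive leading coefficient has nonpositive discriminant, which gives $\langle u,v\rangle^2\le\|u\|^2\|v\|^2$. Equivalently, evaluating at $t^\ast=\langle u,v\rangle/\|v\|^2$ yields
\[
0\le q(t^\ast)=\|u\|^2-\frac{\langle u,v\rangle^2}{\|v\|^2}.
\]

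For the equality case, suppose first that $\langle u,v\rangle^2=\|u\|^2\|v\|^2$. Then $q(t^\ast)=0$, so $\int (u-t^\ast v)^2\,d\mu=0$. A nonnegative function with zero integral vanishes $\mu$-a.s., so $u=c\,v$ $\mu$-a.s. with $c=t^\ast$. In the atomic setting this simply says $u(\lambda_k)=c\,v(\lambda_k)$ for every $k$, since each $p_k>0$. Conversely, if $u=cv$ $\mu$-a.s., substituting gives $\langle u,v\rangle=c\|v\|^2$ and $\|u\|^2=c^2\|v\|^2$, so equality holds.

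The only subtle step is the passage from zero integral to vanishing $\mu$-a.s. This is standard measure theory (Markov's inequality applied to $\{(u-t^\ast v)^2>1/n\}$, then a countable union). In the finite atomic case needed for Theorem~\ref{theorem:spectral_localization} it is immediate from $p_k>0$.
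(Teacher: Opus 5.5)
Your proposal is correct and takes the same route as the paper: expand $\|u-tv\|^2_{L^2(\mu)}\ge 0$ at $t^\ast=\langle u,v\rangle/\|v\|^2$, and read both the inequality and the equality case off $\|u-t^\ast v\|^2=0$. You are a little more thorough than the paper, since you check the converse of the equality case and justify that a zero integral forces $u=t^\ast v$ $\mu$-a.s.; the paper leaves both points implicit.
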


\begin{proof}
If $\|v\|_2=0$ the claim is immediate. Otherwise define
\[
c:=\frac{\langle u,v\rangle}{\|v\|_2^2}.
\]
Then
\[
0\le \|u-cv\|_2^2
= \|u\|_2^2 - 2c\langle u,v\rangle + c^2\|v\|_2^2
= \|u\|_2^2 - \frac{\langle u,v\rangle^2}{\|v\|_2^2},
\]
which rearranges to $\langle u,v\rangle^2\le \|u\|_2^2\|v\|_2^2$. Equality holds iff $\|u-cv\|_2^2=0$, i.e. $u=cv$
$\mu$-a.s.
\end{proof}
\end{lemmaproofbox}
\begin{lemmaproofbox}
\begin{lemma}[Cyclic isometry $L^2(\mu_x)\simeq \mathcal{K}_x$]
\label{lemma:cyclic_isometry}
Let $F\in\mathbb{R}^{m\times m}$ be symmetric PSD with spectral decomposition $F=\sum_{k=0}^K \lambda_k P_k$.
Fix $x\in\mathrm{Im}(F)\setminus\{0\}$, and define the (positive-spectrum) spectral weights and measure
\[
p_k:=\frac{\|P_kx\|^2}{\|x\|^2}\quad(\lambda_k>0),
\qquad
\mu_x:=\sum_{\lambda_k>0} p_k\,\delta_{\lambda_k}.
\]
For $f$ defined on $\mathrm{supp}(\mu_x)$, define functional calculus on $\mathrm{Im}(F)$ by
\[
f(F):=\sum_{\lambda_k>0} f(\lambda_k)\,P_k.
\]
Define $U_x:L^2(\mu_x)\to \mathcal{K}_x$ by
\[
U_x(f):=\frac{1}{\|x\|}\,f(F)x.
\]
Then $U_x$ is a well-defined unitary isomorphism (linear, bijective, and inner-product preserving). In particular, for
all $f,g\in L^2(\mu_x)$,
\[
\langle U_x(f),U_x(g)\rangle_{\mathbb{R}^m}=\int f(\lambda)g(\lambda)\,d\mu_x(\lambda),
\]
and for any real-valued $h$ on $\mathrm{supp}(\mu_x)$,
\[
\frac{x^\top h(F)x}{\|x\|^2}=\int h(\lambda)\,d\mu_x(\lambda).
\]
\end{lemma}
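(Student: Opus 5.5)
The plan is to verify, in order, that $U_x$ is well defined, that it preserves inner products, and that it is onto $\mathcal{K}_x$. The last identity in the statement is then a special case of the inner-product identity.

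\textbf{Well-definedness.} By Lemma~\ref{lemma:l2_atomic}, an element $f\in L^2(\mu_x)$ is exactly the vector of its values $(f(\lambda_k))_{k:\,p_k>0}$. We must check that $U_x(f)$ depends only on these values. Since $f(F)x=\sum_{\lambda_k>0} f(\lambda_k)P_kx$, any index with $p_k=0$ has $P_kx=0$ and contributes nothing. So $U_x(f)$ is determined by $f$ on $\mathrm{supp}(\mu_x)$.

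Membership in $\mathcal{K}_x$ follows from part (1) of Lemma~\ref{lemma:cyclic_cs}, which says each nonzero $P_kx$ lies in $\mathcal{K}_x$. Linearity of $U_x$ is immediate from the linearity of $f\mapsto f(F)$.

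\textbf{Isometry.} Using $P_kP_\ell=\delta_{k\ell}P_k$ and the symmetry of the $P_k$, we get
\[
\langle f(F)x,g(F)x\rangle=\sum_{\lambda_k>0} f(\lambda_k)g(\lambda_k)\|P_kx\|^2 .
\]
Dividing by $\|x\|^2$ turns this into $\sum_k p_k f(\lambda_k)g(\lambda_k)=\langle f,g\rangle_{L^2(\mu_x)}$. An isometry is injective: if $U_x f=0$ then $\|f\|_{L^2(\mu_x)}=0$, so $f=0$ $\mu_x$-a.s., which is the zero class.

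\textbf{Surjectivity.} By Lemma~\ref{lemma:cyclic_cs}(1), $\mathcal{K}_x$ is spanned by the nonzero $P_kx$. Since $x\in\mathrm{Im}(F)$, we have $P_0x=0$, so only positive eigenvalues appear. Each such $P_jx$ equals $U_x(\|x\|\,\mathbf{1}_{\{\lambda_j\}})$, where $\mathbf{1}_{\{\lambda_j\}}$ is the indicator function of $\lambda_j$. Hence $U_x$ is onto.

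A dimension count gives the same conclusion. The space $L^2(\mu_x)$ has dimension $|\mathrm{supp}(\mu_x)|$, and the vectors $P_kx$ are nonzero and mutually orthogonal, so the dimensions match.

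\textbf{Final identity.} Take $f=h$ and $g\equiv 1$. Since $p_k=0$ for $\lambda_k=0$, we have $1(F)x=Px=x$. The isometry identity then gives $x^\top h(F)x/\|x\|^2=\int h\,d\mu_x$.

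\textbf{Main obstacle.} The only delicate point is the bookkeeping around the zero eigenvalue and the indices with $p_k=0$. This is exactly where the hypothesis $x\in\mathrm{Im}(F)$ is used: it ensures $\sum_{\lambda_k>0}P_kx=x$, so the constant function maps to $x$ itself. It is also what makes surjectivity hold without the $P_0x$ component.
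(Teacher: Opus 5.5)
Your proposal is correct and follows the paper's proof essentially step for step: well-definedness because $P_kx=0$ on atoms with $p_k=0$, the isometry from $P_kP_\ell=\delta_{k\ell}P_k$, surjectivity from Lemma~\ref{lemma:cyclic_cs}(1), and injectivity from the isometry. The only cosmetic difference is in the quadratic-form identity: you obtain it by pairing $h$ with $g\equiv 1$, which uses $x\in\mathrm{Im}(F)$ to get $1(F)x=x$, whereas the paper expands $x^\top h(F)x=\sum_{\lambda_k>0}h(\lambda_k)\|P_kx\|^2$ directly.
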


\begin{proof}
\emph{Well-definedness.} If $f=g$ $\mu_x$-a.s., then $f(\lambda_k)=g(\lambda_k)$ for every atom with $p_k>0$.
If $p_k=0$ then $P_kx=0$, so changing $f(\lambda_k)$ does not change $f(F)x$. Hence $f(F)x=g(F)x$ and $U_x$ is
well-defined on $L^2(\mu_x)$ equivalence classes.

\emph{Isometry.} Using orthogonality $P_kP_\ell=\delta_{k\ell}P_k$,
\begin{align*}
\langle U_x(f),U_x(g)\rangle
&=\frac{1}{\|x\|^2}\,\langle f(F)x,\,g(F)x\rangle \\
&=\frac{1}{\|x\|^2}\left\langle \sum_{\lambda_k>0} f(\lambda_k)P_kx,\ \sum_{\lambda_\ell>0} g(\lambda_\ell)P_\ell x\right\rangle \\
&=\frac{1}{\|x\|^2}\sum_{\lambda_k>0} f(\lambda_k)g(\lambda_k)\,\|P_kx\|^2
=\sum_{\lambda_k>0} p_k f(\lambda_k)g(\lambda_k)
=\int f g\,d\mu_x.
\end{align*}

\emph{Surjectivity.} By Lemma~\ref{lemma:cyclic_cs}(1), $\mathcal{K}_x=\mathrm{span}\{P_kx:\,P_kx\neq 0\}
=\mathrm{span}\{P_kx:\,p_k>0\}$. Given $y=\sum_{p_k>0} a_k P_kx\in\mathcal{K}_x$, define $f$ on $\mathrm{supp}(\mu_x)$ by
$f(\lambda_k):=a_k\|x\|$. Then $f\in L^2(\mu_x)$ (finite support) and
\[
U_x(f)=\frac{1}{\|x\|}\sum_{p_k>0} f(\lambda_k)P_kx=\sum_{p_k>0} a_k P_kx=y.
\]
Injectivity follows from the isometry: if $U_x(f)=0$ then $\|f\|_{L^2(\mu_x)}^2=\|U_x(f)\|^2=0$, hence $f=0$ $\mu_x$-a.s.
Thus $U_x$ is unitary.

\emph{Quadratic form identity.} For any $h$ on $\mathrm{supp}(\mu_x)$,
\[
x^\top h(F)x=\sum_{\lambda_k>0} h(\lambda_k)\,x^\top P_kx
=\sum_{\lambda_k>0} h(\lambda_k)\,\|P_kx\|^2,
\]
and dividing by $\|x\|^2$ gives $\int h\,d\mu_x$.
\end{proof}
\end{lemmaproofbox}

\section{Theorem Proofs and Corollaries}
\label{sec:theorems_proofs_corollaries}

\begin{lemmaproofbox}
\textbf{Corollary 2} (Projective Linearity). \textit{Assume Spectral Localization holds. Then, the fractional dimensionality $D_i$ of any feature is linearly determined by its feature norm $\propto \|W_i\|^2$ with slope $k$ equal to the reciprocal of the eigenvalue $\lambda_e$ of the subspace it occupies $D_i = \lambda_e^+ \|W_i\|^2$}.
\begin{proof}[Proof of Corollary~\ref{corollary:projective_linearity_section4} (Projective Linearity)]\label{corollary:projective_linearity}
By Lemma \ref{lemma:dimensionality_kappa},
\[
D_i \;=\; \frac{1}{\kappa_i}\,\lvert W_i\rvert^2,
\qquad
\kappa_i \;=\; \frac{W_i^\top F W_i}{\lvert W_i\rvert^2}.
\]
Under Spectral Localization (Theorem \ref{theorem:spectral_localization}), $F W_i=\lambda_e\,W_i$ for some $F$ eigenvalue $\lambda_e>0$. Hence
\[
W_i^\top F W_i \;=\; \lambda_e\,W_i^\top W_i \;=\; \lambda_e\,\lvert W_i\rvert^2,
\]
so $\kappa_i=\lambda_e$ and therefore
\[
D_i \;=\; \frac{\lvert W_i\rvert^2}{\lambda_e} \;=\; \lambda_e^{+}\,\lvert W_i\rvert^2,
\]
where $\lambda_e^+=1/\lambda_e$ since $\lambda_e>0$ in Theorem \ref{theorem:spectral_localization}. 
\end{proof}
\end{lemmaproofbox}

\begin{lemmaproofbox}
\textbf{Theorem 3} (Decomposition into Tight Frames)\textit{ Assume Spectral Localization (Theorem \ref{theorem:spectral_localization}) holds for all features $i\in\{1,\dots,n\}$. Let
$\Lambda=\{\lambda_1,\dots,\lambda_r\}$ be the distinct positive eigenvalues of $F=WW^\top$,
and define the index clusters and their spans
\[
C_k := \{\,i : \lambda(i)=\lambda_k\,\},
\qquad
V_k := \mathrm{span}\{\,W_i : i\in C_k\,\}.
\]
Then $\{C_k\}_{k=1}^r$ partitions $\{1,\dots,n\}$, the subspaces $V_k$ are pairwise orthogonal, and
\[
F \;=\; \sum_{k=1}^r \lambda_k\,P_{V_k}
\quad \text{(on $\mathrm{Im}(W)$; plus $0$ on $\ker(F)$).}
\]
Moreover, within each cluster $C_k$,
\[
\sum_{i\in C_k} W_i W_i^\top \;=\; \lambda_k\,I
\quad \text{on }V_k,
\]
i.e.\ $(W_{C_k})$ forms a tight frame for $V_k$ with frame constant $\lambda_k$.}
\begin{proof}\label{theorem:decomp_tight_frame}
By Theorem \ref{theorem:spectral_localization}, each $W_i$ is an eigenvector of $F$ with eigenvalue $\lambda(i)>0$, i.e.\ $F W_i=\lambda(i)W_i$.
Fix distinct eigenvalues $\lambda_k\neq\lambda_\ell$. Since $F$ is symmetric PSD, eigenspaces for distinct eigenvalues are orthogonal, hence
$W_i\perp W_j$ whenever $\lambda(i)\neq\lambda(j)$. Therefore the sets $C_k$ are disjoint and partition $\{1,\dots,n\}$, and the spans $V_k$ are pairwise orthogonal.

For any $v\in V_k$, write $v=\sum_{i\in C_k}\alpha_i W_i$. Then
\[
Fv=\sum_{i\in C_k}\alpha_i F W_i
=\sum_{i\in C_k}\alpha_i \lambda_k W_i
=\lambda_k v,
\]
so $V_k\subseteq \operatorname{Im}{P_k}$. Conversely, since $F=WW^\top$, we have $\mathrm{Im}(F)=\mathrm{Im}(W)=\mathrm{span}\{W_i\}_i$. Any eigenvector $v$ of $F$ with eigenvalue $\lambda_k>0$ lies in $\mathrm{Im}(F)$, hence decomposes uniquely as
$v=\sum_{\ell=1}^r v_\ell$ with $v_\ell\in V_\ell$ (orthogonal direct sum). Applying $F$ gives
\[
Fv=\sum_{\ell=1}^r \lambda_\ell v_\ell,
\qquad\text{but also}\qquad
Fv=\lambda_k v=\sum_{\ell=1}^r \lambda_k v_\ell,
\]
so $(\lambda_\ell-\lambda_k)v_\ell=0$ for all $\ell$, forcing $v_\ell=0$ for $\ell\neq k$. Thus
\[
\operatorname{Im}({P_k})\cap \mathrm{Im}(W)=V_k,
\]
and $F$ acts as $\lambda_k I$ on $V_k$. This yields
\[
F=\sum_{k=1}^r \lambda_k P_{V_k}\quad\text{on }\mathrm{Im}(W),
\]
and $F=0$ on $\ker(F)=\mathrm{Im}(W)^\perp$.

For the tight-frame claim, define
\[
F_k := \sum_{i\in C_k} W_i W_i^\top.
\]
Take $v\in V_k$. For any $j\notin C_k$, orthogonality of distinct eigenspaces gives $W_j^\top v=0$, hence
\[
Fv=\sum_{i=1}^n W_i W_i^\top v=\sum_{i\in C_k} W_i W_i^\top v=F_k v.
\]
But also $Fv=\lambda_k v$ for $v\in V_k$, so $F_k v=\lambda_k v$ for all $v\in V_k$, i.e.\ $F_k=\lambda_k I$ on $V_k$.
This is the tight-frame condition with frame constant $\lambda_k$.
\end{proof}
\end{lemmaproofbox}

\begin{lemmaproofbox}
\textbf{Theorem 4 }(Spectral Identification of Geometry)
\textit{Assume Spectral Localization (Theorem \ref{theorem:spectral_localization}) holds so that each cluster $C_k$ forms a tight frame as in Theorem~3.
Let $M_k := W_{C_k}^\top W_{C_k}$ be the local Gram matrix. Then the geometry of cluster $C_k$
is an instance of an association scheme algebra $\mathcal A$ if and only if the eigenspace
decomposition of $M_k$ aligns with the canonical strata $(W_0,\dots,W_s)$ of $\mathcal A$.}

\begin{proof}\label{theorem:spectral-identification}
We use standard association-scheme facts (Appendix A.2). If $(A_0,\dots,A_s)$ are adjacency matrices of an
association scheme on $\Omega$, its Bose--Mesner algebra $\mathcal A$ admits a canonical orthogonal decomposition
\[
\mathbb R^\Omega = W_0\oplus\cdots\oplus W_s
\]
with orthogonal projectors $(S_0,\dots,S_s)$, and every $X\in\mathcal A$ preserves each stratum $W_e$.
Moreover, $(S_e)$ form a basis of mutually orthogonal idempotents for $\mathcal A$, so each $X\in\mathcal A$
has a unique expansion
\[
X=\sum_{e=0}^s \theta_e S_e,
\]
and acts as scalar $\theta_e$ on $W_e=\mathrm{Im}(S_e)$.

\emph{($\Rightarrow$)} If the cluster geometry is an instance of $\mathcal A$, then the defining invariant matrix (here $M_k$) lies in $\mathcal A$.
Thus $M_k=\sum_e \theta_e S_e$, so each stratum $W_e$ is an invariant subspace on which $M_k$ acts as a scalar.
Equivalently, the spectral decomposition of $M_k$ is governed by the stratum decomposition.

\emph{($\Leftarrow$)} Conversely, suppose the eigenspace decomposition of $M_k$ aligns with the strata, i.e.\ $M_k$ is block-scalar on
$\mathbb R^\Omega=\bigoplus_e W_e$. Then there exist scalars $\theta_e$ such that $M_k|_{W_e}=\theta_e I$ for all $e$.
Therefore $M_k=\sum_e \theta_e S_e\in \mathcal A$ since $S_e\in\mathcal A$ and $\mathcal A$ is a linear space.
Hence the cluster geometry is an instance of the association scheme algebra.
\end{proof}
\end{lemmaproofbox}

\begin{lemmaproofbox}
\textbf{Corollary 5} (Simplex Identification)
\textit{A cluster $C_k$ is simplex geometry if and only if its Gram matrix $M_k$ has exactly two invariant subspaces
\[
W_0=\ker(M_k)=\mathrm{span}(\mathbf 1),
\qquad
W_1=\mathrm{Im}(M_k)=\mathbf 1^\perp,
\]
with eigenvalues $\Lambda(W_0)=0$ and $\Lambda(W_1)=\lambda_k=\frac{|C_k|}{\dim(V_k)}$.}
\begin{proof}\label{corollary:simplex}
In the simplex association scheme (Appendix A.3.1), the Bose--Mesner algebra has exactly two strata:
the constant-vector space $W_0=\mathrm{span}(\mathbf 1)$ and its orthogonal complement $W_1=\mathbf 1^\perp$,
with projectors $S_0=\frac{1}{p}J$ and $S_1=I-\frac{1}{p}J$ where $p=|C_k|$.
By Theorem~4, being an instance of the simplex scheme is equivalent to having spectral/invariant decomposition exactly along these two strata.

For eigenvalues: in the simplex model the vectors are unit and centered (sum to zero), and Appendix A.3.1 yields
\[
M_k = \frac{p}{p-1}\,S_1.
\]
Thus $M_k\mathbf 1=0$ and $M_k$ acts as scalar $\frac{p}{p-1}$ on $\mathbf 1^\perp$.

Under the tight-frame parameterization of a unit tight frame of $p$ vectors spanning a $d$-dimensional subspace,
the frame constant is $p/d$ (trace identity; cf.\ Lemma A.7). For a simplex, $p=d+1$, so
\[
\frac{p}{p-1}=\frac{d+1}{d}=\frac{p}{d}=\frac{|C_k|}{\dim(V_k)},
\]
matching the stated $\lambda_k$.
\end{proof}
\end{lemmaproofbox}

\section{Perturbative case} \label{section:perturbative_analysis}
\begin{lemmaproofbox}
\begin{theorem}[Defect decomposition: near-saturation $\Leftrightarrow$ small aggregate slack]\label{thm:defect-decomposition}
Let $W\in\mathbb{R}^{m\times n}$ with columns $w_1,\dots,w_n$, and let the frame operator be $F:=WW^\top$.
Let $r:=\mathrm{rank}(W)=\mathrm{rank}(F)$. Define leverage scores $\ell_i := w_i^\top F^{+} w_i$ and fractional
dimensionalities $D_i := \|w_i\|^4/(w_i^\top F w_i)$ (for $w_i\neq 0$). Define relative slack
$\sigma_i := 1 - D_i/\ell_i\in[0,1]$. Then:
\begin{enumerate}
\item (\emph{Capacity budget}) $\sum_{i=1}^n \ell_i = r$.
\item (\emph{Pointwise bound}) $D_i \le \ell_i$ for all $i$, with equality iff $\mu_i$ is a Dirac mass
(equivalently: $w_i$ is an eigenvector of $F$).
\item (\emph{Exact defect identity})
\[
r - \sum_{i=1}^n D_i
\;=\;
\sum_{i=1}^n (\ell_i - D_i)
\;=\;
\sum_{i=1}^n \ell_i\,\sigma_i.
\]
In particular, if $r=m$ then $m-\sum_i D_i=\sum_i \ell_i\sigma_i$.
\end{enumerate}
\end{theorem}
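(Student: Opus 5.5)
The three claims follow from trace identities and from the Cauchy--Schwarz machinery already built in Lemma~\ref{lemma:cyclic_cs} and Lemma~\ref{lemma:moments}. Throughout, only features with $w_i\neq 0$ enter $D_i$ and $\sigma_i$. A zero column contributes $\ell_i=0$, so the convention $D_i=\ell_i\sigma_i=0$ keeps the sums consistent.

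\emph{Capacity budget.} Use cyclicity of the trace:
\[
\sum_i \ell_i=\mathrm{tr}(W^\top F^+W)=\mathrm{tr}(F^+WW^\top)=\mathrm{tr}(F^+F).
\]
Then observe that $F^+F=\sum_{\lambda_k>0}P_k=P_{\mathrm{Im}(F)}$ by the spectral form of $F^+$ in Lemma~\ref{lemma:moments}. The trace of an orthogonal projector is its rank, and $\mathrm{rank}(F)=\mathrm{rank}(W)=r$ by Lemma~\ref{lemma:kernels}.

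\emph{Pointwise bound.} Each column $w_i=We_i$ lies in $\mathrm{Im}(W)=\mathrm{Im}(F)$. Apply Lemma~\ref{lemma:cyclic_cs}(3) with $x=w_i$ to get
\[
\|w_i\|^4\le (w_i^\top Fw_i)(w_i^\top F^+w_i).
\]
Since $w_i\in\mathrm{Im}(F)$ and $w_i\ne 0$, we have $w_i^\top Fw_i>0$, so dividing gives $D_i\le \ell_i$. The equality case in the same lemma is $Fw_i=cw_i$, so $w_i$ is an eigenvector of $F$. The eigenvalue $c$ is positive because $w_i\in\mathrm{Im}(F)$.

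To connect this to the spectral measure, take $p_{i,k}=\|P_kw_i\|^2/\|w_i\|^2$ as in Lemma~\ref{lemma:spectral_measure_probability_distribution}. The vector $w_i$ is an eigenvector iff exactly one $P_kw_i$ with $\lambda_k>0$ is nonzero, i.e.\ iff $\mu_i$ is a single Dirac mass. Alternatively, via Lemma~\ref{lemma:cyclic_isometry}, equality reads $\mathbb{E}_{\mu_i}[\lambda]\,\mathbb{E}_{\mu_i}[\lambda^{+}]=1$. By Lemma~\ref{lemma:cs_L2} this forces $\lambda$ to be $\mu_i$-a.s.\ constant.

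\emph{Defect identity.} The bound $D_i\le\ell_i$, together with $D_i\ge 0$ and $\ell_i>0$ for $w_i\neq0$, shows $\sigma_i\in[0,1]$. The defect identity is then purely algebraic: $\ell_i\sigma_i=\ell_i-D_i$ by definition of $\sigma_i$. Summing and using the capacity budget gives
\[
r-\sum_i D_i=\sum_i(\ell_i-D_i)=\sum_i \ell_i\sigma_i.
\]
The case $r=m$ is a direct substitution.

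The only delicate step is the pointwise bound. There one has to make sure $w_i\in\mathrm{Im}(F)$, so that the $Px=x$ version of the inequality applies rather than the weaker $(x^\top Px)^2$ form. One also has to translate the linear-dependence equality condition into the Dirac-mass condition. Both are immediate from $w_i=We_i$ and the orthogonal decomposition in Lemma~\ref{lemma:spectral_intertwining}(iii).
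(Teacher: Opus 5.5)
Your proposal is correct and follows essentially the same route as the paper: trace cyclicity to get $\sum_i \ell_i=\mathrm{tr}(F^+F)=r$, then Cauchy--Schwarz applied to $F^{1/2}w_i$ and $F^{+/2}w_i$ with $\langle F^{1/2}w_i,F^{+/2}w_i\rangle=\|w_i\|^2$ because $w_i\in\mathrm{Im}(F)$, with equality exactly when $Fw_i=c\,w_i$, and finally pure algebra for the defect identity. Your extra remarks (the zero-column convention, the check $\sigma_i\in[0,1]$, and spelling out the eigenvector $\Leftrightarrow$ Dirac-mass equivalence via the weights $p_{i,k}$) only fill in details the paper leaves implicit.
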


\begin{proof}
(1) Using $\ell_i = w_i^\top F^+ w_i$ and cyclicity of trace,
\[
\sum_{i=1}^n \ell_i
= \sum_{i=1}^n w_i^\top F^+ w_i
= \mathrm{tr}\!\left(\sum_{i=1}^n F^+ w_i w_i^\top\right)
= \mathrm{tr}\!\left(F^+ \sum_{i=1}^n w_i w_i^\top\right)
= \mathrm{tr}(F^+ F).
\]
For symmetric PSD $F$, the matrix $F^+F$ is the orthogonal projector onto $\mathrm{Im}(F)$, hence
$\mathrm{tr}(F^+F)=\mathrm{rank}(F)=r$.

(2) Fix $i$ and set $x:=w_i$. Since $x$ is a column of $W$, we have $x\in \mathrm{Im}(W)=\mathrm{Im}(F)$.
Let $a:=F^{1/2}x$ and $b:=F^{+1/2}x$. By Cauchy--Schwarz,
\[
\langle a,b\rangle^2 \le \|a\|^2\|b\|^2.
\]
Compute each term:
\[
\langle a,b\rangle
= x^\top F^{1/2}F^{+1/2}x
= x^\top (F^+F)^{1/2} x
= x^\top \Pi_{\mathrm{Im}(F)} x
= \|x\|^2,
\]
because $x\in\mathrm{Im}(F)$. Also $\|a\|^2=x^\top F x$ and $\|b\|^2=x^\top F^+ x=\ell_i$.
Thus
\[
\|x\|^4 \le (x^\top F x)(x^\top F^+ x)
\quad\Longrightarrow\quad
\frac{\|x\|^4}{x^\top F x} \le x^\top F^+ x
\quad\Longrightarrow\quad
D_i \le \ell_i.
\]
Equality holds in Cauchy--Schwarz iff $a$ and $b$ are linearly dependent, i.e.
$F^{1/2}x = c\,F^{+1/2}x$ for some $c\in\mathbb{R}$.
Multiplying by $F^{1/2}$ gives $Fx = c\,\Pi_{\mathrm{Im}(F)}x = c x$, so $x$ is an eigenvector of $F$
(with eigenvalue $c>0$ since $x\in\mathrm{Im}(F)$ and $x\neq 0$).
Equivalently, $x$ lies entirely in a single positive-eigenvalue eigenspace, which is the statement that the feature spectral measure $\mu_i$ is a Dirac mass.

(3) By definition $\ell_i\sigma_i = \ell_i - D_i$, hence
\[
\sum_{i=1}^n \ell_i\sigma_i = \sum_{i=1}^n (\ell_i - D_i) = \Big(\sum_{i=1}^n \ell_i\Big) - \sum_{i=1}^n D_i.
\]
Using (1), $\sum_i \ell_i = r$, yielding $r-\sum_i D_i = \sum_i \ell_i\sigma_i$.
The full-rank specialization is the same identity with $r=m$.
\end{proof}
\end{lemmaproofbox}

\begin{lemmaproofbox}
\begin{theorem}[Delocalized residue bound under $\varepsilon$-saturation]\label{cor:tail-mass}
Assume $\sum_{i=1}^n D_i \ge (1-\varepsilon)\,r$. Then for any $\tau\in(0,1)$,
\[
\frac{1}{r}\sum_{i:\,\sigma_i\ge\tau}\ell_i
\;\le\;
\frac{\varepsilon}{\tau}.
\]
\end{theorem}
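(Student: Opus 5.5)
This is a Markov-type inequality applied to the exact defect identity of Theorem~\ref{thm:defect-decomposition}. By part (3) of that theorem,
\[
r-\sum_{i=1}^n D_i=\sum_{i=1}^n \ell_i\sigma_i .
\]
The hypothesis $\sum_i D_i\ge(1-\varepsilon)r$ therefore gives
\[
\sum_{i=1}^n \ell_i\sigma_i\le \varepsilon r .
\]
So the weighted aggregate slack is controlled by $\varepsilon r$, and the task is to extract from this the leverage mass of the features whose slack is large.

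For that I will use nonnegativity. Each $\ell_i=w_i^\top F^+w_i\ge 0$, because $F^+$ is positive semidefinite. Each $\sigma_i\in[0,1]$, by part (2) of the theorem: $0\le D_i\le\ell_i$. Hence every term $\ell_i\sigma_i$ is nonnegative, and restricting the sum to the index set $\{i:\sigma_i\ge\tau\}$ can only decrease it. On that set $\sigma_i\ge\tau$, which gives
\[
\tau\sum_{i:\,\sigma_i\ge\tau}\ell_i\;\le\;\sum_{i:\,\sigma_i\ge\tau}\ell_i\sigma_i\;\le\;\sum_{i=1}^n\ell_i\sigma_i\;\le\;\varepsilon r .
\]
Dividing by $\tau r$ yields the claimed bound. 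We may assume $r>0$; otherwise $W=0$, every $\ell_i=0$, and the statement is vacuous.

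The only real point of care is the well-definedness of $\sigma_i=1-D_i/\ell_i$. It requires $\ell_i>0$, which holds whenever $w_i\neq 0$, since then $w_i\in\mathrm{Im}(F)$ and $F^+$ is positive definite on $\mathrm{Im}(F)$. For zero columns I will adopt the convention that they contribute $\ell_i=D_i=0$. Such columns add nothing to either side, so they do not affect the inequality. Apart from this bookkeeping the argument is a direct Markov step, with $\ell_i/r$ playing the role of a probability distribution over features (it sums to one by part (1)) and $\sigma_i$ the nonnegative random variable whose mean is at most $\varepsilon$.
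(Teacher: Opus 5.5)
Your proof is correct and is essentially the paper's argument: the defect identity gives $\sum_i \ell_i\sigma_i = r-\sum_i D_i\le\varepsilon r$, and the Markov-type step $\tau\sum_{i:\,\sigma_i\ge\tau}\ell_i\le\sum_{i:\,\sigma_i\ge\tau}\ell_i\sigma_i$ followed by division by $\tau r$ finishes it. Your extra remarks make explicit what the paper uses silently: that each $\ell_i\sigma_i\ge 0$, so terms can be dropped, and how to handle zero columns and the case $r=0$.
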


\begin{proof}
From Theorem~\ref{thm:defect-decomposition}(3),
\[
\sum_{i=1}^n \ell_i\sigma_i = r - \sum_{i=1}^n D_i \le \varepsilon r.
\]
On the index set $S_\tau := \{i:\sigma_i\ge\tau\}$ we have $\ell_i\sigma_i \ge \tau \ell_i$, so
\[
\varepsilon r \;\ge\; \sum_{i=1}^n \ell_i\sigma_i \;\ge\; \sum_{i\in S_\tau}\ell_i\sigma_i \;\ge\; \tau\sum_{i\in S_\tau}\ell_i.
\]
Divide by $\tau r$ to obtain $(1/r)\sum_{i:\sigma_i\ge\tau}\ell_i \le \varepsilon/\tau$.
\end{proof}
\end{lemmaproofbox}

\begin{lemmaproofbox}
\begin{theorem}[Eigenvector residual equals eigenvalue variance]\label{lem:residual-variance}
Let $F=\sum_{k:\lambda_k>0}\lambda_k P_k$ be the spectral decomposition of $F$ on $\mathrm{Im}(F)$.
For $w_i\neq 0$, define weights $p_{ik}:=\|P_k w_i\|^2/\|w_i\|^2$ and the associated spectral measure
$\mu_i := \sum_{k:\lambda_k>0} p_{ik}\,\delta_{\lambda_k}$.
Let $\kappa_i := \mathbb{E}_{\mu_i}[\lambda]$. Then
\[
\frac{\|F w_i - \kappa_i w_i\|^2}{\|w_i\|^2}
\;=\;
\mathrm{Var}_{\mu_i}(\lambda)
\;=\;
\mathbb{E}_{\mu_i}\big[(\lambda-\kappa_i)^2\big].
\]
In particular,
\[
\mathrm{CV}_i
:=
\frac{\sqrt{\mathrm{Var}_{\mu_i}(\lambda)}}{\mathbb{E}_{\mu_i}[\lambda]}
=
\frac{\|F w_i - \kappa_i w_i\|}{\kappa_i\|w_i\|}
\]
vanishes iff $w_i$ is an eigenvector of $F$ (equivalently $\mu_i$ is Dirac).
\end{theorem}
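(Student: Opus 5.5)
The plan is to expand everything in the spectral decomposition $F=\sum_{k:\lambda_k>0}\lambda_k P_k$ on $\mathrm{Im}(F)$ and exploit orthogonality of the projectors, exactly as in the proof of Lemma~\ref{lemma:moments}. Since $w_i$ is a column of $W$, it lies in $\mathrm{Im}(W)=\mathrm{Im}(F)$. By Lemma~\ref{lemma:spectral_intertwining}(iii) we therefore have $w_i=\sum_{k:\lambda_k>0}P_kw_i$, and hence
\[
Fw_i-\kappa_i w_i=\sum_{k:\lambda_k>0}(\lambda_k-\kappa_i)\,P_kw_i .
\]

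The first step is to take the squared norm of this expression. Because $P_kP_\ell=\delta_{k\ell}P_k$ and each $P_k$ is symmetric, the vectors $P_kw_i$ are pairwise orthogonal. The cross terms therefore vanish and we get
\[
\|Fw_i-\kappa_iw_i\|^2=\sum_k(\lambda_k-\kappa_i)^2\|P_kw_i\|^2 .
\]
Dividing by $\|w_i\|^2$ and using $p_{ik}=\|P_kw_i\|^2/\|w_i\|^2$ turns this into $\sum_k p_{ik}(\lambda_k-\kappa_i)^2=\mathbb{E}_{\mu_i}[(\lambda-\kappa_i)^2]$. By the definition $\kappa_i=\mathbb{E}_{\mu_i}[\lambda]$ (equivalently Lemma~\ref{lemma:moments} with $r=1$), this quantity is exactly $\mathrm{Var}_{\mu_i}(\lambda)$. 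Alternatively, one could apply the quadratic-form identity of Lemma~\ref{lemma:cyclic_isometry} with $h(\lambda)=(\lambda-\kappa_i)^2$, since $\|(F-\kappa_i)w_i\|^2=w_i^\top h(F)w_i$ on $\mathrm{Im}(F)$.

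The second step is the coefficient-of-variation statement. Here $\kappa_i>0$ because $\mu_i$ is a probability measure (Lemma~\ref{lemma:spectral_measure_probability_distribution}) supported on positive eigenvalues, so $\mathrm{CV}_i$ is well defined. Dividing the square root of the identity by $\kappa_i$ gives the displayed formula for $\mathrm{CV}_i$. It remains to show that $\mathrm{CV}_i=0$ if and only if $w_i$ is an eigenvector.

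\begin{itemize}
\item[$(\Rightarrow)$] If $\mathrm{CV}_i=0$, then $Fw_i=\kappa_iw_i$, so $w_i$ is an eigenvector of $F$.
\item[$(\Leftarrow)$] If $Fw_i=cw_i$ with $w_i\neq0$, then every $P_kw_i$ with $\lambda_k\neq c$ vanishes. This forces $\mu_i=\delta_c$, hence $\kappa_i=c$ and the variance is zero.
\end{itemize}

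Finally, a finite atomic probability measure has zero variance if and only if all of its mass sits on one atom, which gives the equivalence with $\mu_i$ being a Dirac mass.

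There is no real obstacle. The only point that needs care is justifying $w_i=\sum_kP_kw_i$, i.e.\ that the zero eigenspace contributes nothing. This holds because $w_i\in\mathrm{Im}(F)=\ker(F)^\perp$, which is precisely what allows the sum to run only over $\lambda_k>0$ without losing any component of $w_i$.
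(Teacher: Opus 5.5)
Your proposal is correct and follows essentially the same route as the paper: expand $w_i=\sum_k P_kw_i$, write $Fw_i-\kappa_iw_i=\sum_k(\lambda_k-\kappa_i)P_kw_i$, use orthogonality of the $P_kw_i$ to get the weighted variance, and then read off the $\mathrm{CV}_i$ and Dirac characterization. You also make explicit why no zero-eigenspace component is lost, namely that $w_i\in\mathrm{Im}(W)=\mathrm{Im}(F)$; the paper's proof uses this without stating it.
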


\begin{proof}
Write $w_i=\sum_k P_k w_i$. Then
\[
F w_i = \sum_k \lambda_k P_k w_i,
\qquad
\kappa_i w_i = \sum_k \kappa_i P_k w_i,
\]
so
\[
F w_i - \kappa_i w_i = \sum_k (\lambda_k-\kappa_i)\,P_k w_i.
\]
Using orthogonality of distinct spectral subspaces,
\[
\|F w_i - \kappa_i w_i\|^2
= \sum_k (\lambda_k-\kappa_i)^2 \|P_k w_i\|^2.
\]
Divide by $\|w_i\|^2$ to get
\[
\frac{\|F w_i - \kappa_i w_i\|^2}{\|w_i\|^2}
= \sum_k p_{ik}(\lambda_k-\kappa_i)^2
= \mathbb{E}_{\mu_i}\big[(\lambda-\kappa_i)^2\big]
= \mathrm{Var}_{\mu_i}(\lambda).
\]
Variance is zero iff $\lambda$ is $\mu_i$-a.s.\ constant, i.e.\ $\mu_i$ is Dirac on some $\lambda(i)>0$,
equivalently $w_i$ lies entirely in a single eigenspace, i.e.\ $F w_i=\lambda(i) w_i$.
The formula for $\mathrm{CV}_i$ is immediate by dividing by $\kappa_i^2$ and taking square-roots.
\end{proof}
\end{lemmaproofbox}

\begin{lemmaproofbox}
\begin{theorem}[Band-localization $\Rightarrow$ quasi-tightness (Kantorovich bound)]\label{thm:kantorovich-band}
Fix $i$ and suppose $\mu_i$ is supported on $[\lambda_i^{-},\lambda_i^{+}]\subset(0,\infty)$.
Let $\kappa_i^\star:=\lambda_i^{+}/\lambda_i^{-}\ge 1$.
Then
\[
\kappa_i\,h_i
=
\mathbb{E}_{\mu_i}[\lambda]\;\mathbb{E}_{\mu_i}[\lambda^{-1}]
\;\le\;
\frac{(\kappa_i^{\star}+1)^2}{4\kappa_i^{\star}},
\]
and consequently
\[
\sigma_i
=
1-\frac{D_i}{\ell_i}
\le
1-\frac{4\kappa_i^{\star}}{(\kappa_i^{\star}+1)^2}.
\]
Equivalently, with $\omega_i := (\lambda_i^{+}-\lambda_i^{-})/(\lambda_i^{+}+\lambda_i^{-})\in[0,1)$,
\[
\sigma_i \le \omega_i^2.
\]
\end{theorem}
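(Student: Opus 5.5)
This is the classical Kantorovich inequality applied to the finite atomic measure $\mu_i$. Note that $h_i$ must mean $\mathbb{E}_{\mu_i}[\lambda^{-1}] = \ell_i/\|w_i\|^2$, by Lemma~\ref{lemma:moments}. The plan is to first bound the product $\mathbb{E}_{\mu_i}[\lambda]\,\mathbb{E}_{\mu_i}[\lambda^{-1}]$ pointwise using convexity on the band, and then translate the result into the slack $\sigma_i$.

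\emph{Step 1 (pointwise quadratic bound).} Write $a:=\lambda_i^{-}$ and $b:=\lambda_i^{+}$. For every $\lambda\in[a,b]$ we have $(\lambda-a)(\lambda-b)\le 0$. Dividing by $\lambda>0$ gives
\[
\lambda + ab\,\lambda^{-1} \le a+b .
\]
Since $\mu_i$ is supported in $[a,b]$, integrating against it yields
\[
\mathbb{E}_{\mu_i}[\lambda] + ab\,\mathbb{E}_{\mu_i}[\lambda^{-1}] \le a+b .
\]

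\emph{Step 2 (AM--GM).} Apply $2\sqrt{ab\,XY}\le X+abY$ with $X=\mathbb{E}_{\mu_i}[\lambda]$ and $Y=\mathbb{E}_{\mu_i}[\lambda^{-1}]$. This gives $4ab\,XY\le (a+b)^2$, hence
\[
\kappa_i h_i = XY \le \frac{(a+b)^2}{4ab} = \frac{(\kappa_i^\star+1)^2}{4\kappa_i^\star}
\]
after dividing numerator and denominator by $a^2$. This step is routine.

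\emph{Step 3 (translation to slack).} From Lemma~\ref{lemma:dimensionality_kappa} we have $D_i=\|w_i\|^2/\kappa_i$. From Lemma~\ref{lemma:moments} we have $\ell_i=\|w_i\|^2 h_i$. Therefore $D_i/\ell_i = 1/(\kappa_i h_i)$, and Step 2 gives
\[
\sigma_i = 1-\frac{1}{\kappa_i h_i} \le 1-\frac{4\kappa_i^\star}{(\kappa_i^\star+1)^2}.
\]
Finally, compute
\[
1-\frac{4ab}{(a+b)^2}=\frac{(b-a)^2}{(a+b)^2}=\omega_i^2 ,
\]
which gives $\sigma_i\le\omega_i^2$. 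The equivalence with the $\kappa_i^\star$ form follows by dividing through by $a$. Since $a>0$, we get $\omega_i\in[0,1)$.

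\emph{Main obstacle.} The mathematics is standard, so the only delicate point is bookkeeping. First, one must identify $h_i$ with $\ell_i/\|w_i\|^2$. Second, $\lambda_i^{+}$ here denotes the upper band edge, not the pseudoinverse $\lambda^{+}$ used elsewhere in the paper, and the two must not be confused. Third, the bound needs $a>0$ so that dividing by $\lambda$ is legitimate. This holds because $\mu_i$ only charges positive eigenvalues, by Lemma~\ref{lemma:spectral_measure_probability_distribution}.
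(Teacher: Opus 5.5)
Your proposal is correct and follows the paper's proof essentially step for step. Both arguments integrate the pointwise bound $(\lambda-a)(b-\lambda)\ge 0$ against $\mu_i$, use the same AM--GM step to reach the scalar Kantorovich inequality, convert via $D_i/\ell_i = 1/(\kappa_i h_i)$ from the moment identities, and finish with the algebra $1-4ab/(a+b)^2=\omega_i^2$. Your bookkeeping remarks are apt: you identify $h_i=\mathbb{E}_{\mu_i}[\lambda^{-1}]$ and keep the band edge $\lambda_i^{+}$ distinct from the pseudoinverse notation. The only nitpick is that positivity of $a$ is already part of the hypothesis $[\lambda_i^-,\lambda_i^+]\subset(0,\infty)$, so it need not be derived from $\mu_i$ charging only positive eigenvalues.
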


\begin{proof}
Step 1 (Kantorovich inequality in the scalar form used here).
Let $X$ be any random variable supported on $[a,b]\subset(0,\infty)$. Pointwise on $[a,b]$,
\[
(x-a)(b-x)\ge 0
\;\Longleftrightarrow\;
x^2-(a+b)x+ab\le 0
\;\Longleftrightarrow\;
x+\frac{ab}{x}\le a+b.
\]
Taking expectations gives
\[
\mathbb{E}[X] + ab\,\mathbb{E}[X^{-1}] \le a+b.
\]
By AM--GM, $\mathbb{E}[X] + ab\,\mathbb{E}[X^{-1}] \ge 2\sqrt{ab\,\mathbb{E}[X]\mathbb{E}[X^{-1}]}$, hence
\[
2\sqrt{ab\,\mathbb{E}[X]\mathbb{E}[X^{-1}]}\le a+b
\;\Longrightarrow\;
\mathbb{E}[X]\mathbb{E}[X^{-1}] \le \frac{(a+b)^2}{4ab}.
\]

Step 2 (apply to $\lambda\sim\mu_i$).
Take $X=\lambda$, $a=\lambda_i^-$, $b=\lambda_i^+$. Then
\[
\mathbb{E}_{\mu_i}[\lambda]\mathbb{E}_{\mu_i}[\lambda^{-1}]
\le
\frac{(\lambda_i^-+\lambda_i^+)^2}{4\lambda_i^-\lambda_i^+}
=
\frac{(\kappa_i^\star+1)^2}{4\kappa_i^\star}.
\]
This is the first displayed inequality.

Step 3 (convert to a slack bound).
Using the spectral weights $p_{ik}$, one has
\[
w_i^\top F w_i = \sum_k \lambda_k \|P_k w_i\|^2 = \|w_i\|^2\,\mathbb{E}_{\mu_i}[\lambda],
\qquad
w_i^\top F^+ w_i = \sum_k \lambda_k^{-1} \|P_k w_i\|^2 = \|w_i\|^2\,\mathbb{E}_{\mu_i}[\lambda^{-1}].
\]
Hence $\kappa_i=\mathbb{E}_{\mu_i}[\lambda]$ and $\ell_i=\|w_i\|^2 h_i$ with $h_i:=\mathbb{E}_{\mu_i}[\lambda^{-1}]$.
Also $D_i = \|w_i\|^4/(w_i^\top F w_i) = \|w_i\|^2/\kappa_i$. Therefore
\[
\frac{D_i}{\ell_i}
=
\frac{\|w_i\|^2/\kappa_i}{\|w_i\|^2 h_i}
=
\frac{1}{\kappa_i h_i}
=
\frac{1}{\mathbb{E}_{\mu_i}[\lambda]\mathbb{E}_{\mu_i}[\lambda^{-1}]}.
\]
Combining with the upper bound on $\kappa_i h_i$ yields
\[
\frac{D_i}{\ell_i}
\ge
\frac{4\kappa_i^\star}{(\kappa_i^\star+1)^2}
\quad\Longrightarrow\quad
\sigma_i = 1-\frac{D_i}{\ell_i}
\le
1-\frac{4\kappa_i^\star}{(\kappa_i^\star+1)^2}.
\]
Finally, with $\omega_i=(\lambda_i^+-\lambda_i^-)/(\lambda_i^++\lambda_i^-)$ and $\kappa_i^\star=\lambda_i^+/\lambda_i^-$,
\[
1-\frac{4\kappa_i^\star}{(\kappa_i^\star+1)^2}
=
\frac{(\kappa_i^\star-1)^2}{(\kappa_i^\star+1)^2}
=
\left(\frac{\lambda_i^+-\lambda_i^-}{\lambda_i^++\lambda_i^-}\right)^2
=
\omega_i^2.
\]
\end{proof}
\end{lemmaproofbox}

\begin{lemmaproofbox}
\begin{theorem}[A three-class taxonomy consistent with near-saturation]\label{cor:taxonomy}
Theorem~\ref{thm:kantorovich-band} and Corollary~\ref{cor:tail-mass} justify:
\begin{enumerate}
\item \textbf{Localized (Dirac) features:} $\sigma_i=0$ (equivalently $\mathrm{CV}_i=0$), i.e.\ $\mu_i$ is Dirac and $w_i$ is an eigenvector.
\item \textbf{Band-localized features:} $\omega_i\ll 1$, hence $\sigma_i\le\omega_i^2\ll 1$ though $\mu_i$ need not be Dirac.
\item \textbf{Broadband (delocalized-residue) features:} $\omega_i=\Theta(1)$ and $\mathrm{CV}_i$ nontrivial; these incur $\sigma_i>0$,
but under $\varepsilon$-saturation their total leverage mass is small.
\end{enumerate}
\end{theorem}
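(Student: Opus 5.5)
The claim is a taxonomy, so I will justify it by showing that the three classes are exhaustive under $\varepsilon$-saturation and that each is controlled by an earlier result.

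\emph{Exhaustiveness.} For each feature $i$ with $w_i\neq0$, let $\mu_i$ be supported on $[\lambda_i^-,\lambda_i^+]$. Take $\lambda_i^-$ and $\lambda_i^+$ to be the extreme atoms, so $\omega_i\in[0,1)$ is well defined. Every feature then has either $\omega_i=0$, $\omega_i$ small, or $\omega_i$ of order one, so it falls in at least one class. The notions ``$\ll1$'' and ``$\Theta(1)$'' are interpreted relative to a fixed threshold $\tau$.

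\emph{Class 1 (localized).} Here I invoke Theorem~\ref{thm:defect-decomposition}(2) together with Theorem~\ref{lem:residual-variance}:
\[
\sigma_i=0 \iff D_i=\ell_i \iff \mu_i \text{ is Dirac} \iff \mathrm{Var}_{\mu_i}(\lambda)=0 \iff \mathrm{CV}_i=0 \iff Fw_i=\lambda(i)w_i .
\]
Note also that $\omega_i=0$ exactly when the support is a single atom, which ties this class to the $\omega$ parametrization.

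\emph{Class 2 (band-localized).} This is Theorem~\ref{thm:kantorovich-band} applied verbatim, which gives $\sigma_i\le\omega_i^2$. So $\omega_i\ll1$ forces $\sigma_i\ll1$. A two-atom measure with unequal positive weights is an example showing that $\mu_i$ need not be Dirac in this class.

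\emph{Class 3 (broadband).} For these features I need both that they can incur $\sigma_i>0$ and that their leverage is small in aggregate.

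For the first part: if $\omega_i>0$, then $\mu_i$ has at least two atoms. By Theorem~\ref{lem:residual-variance} this gives $\mathrm{CV}_i>0$. By strict Cauchy--Schwarz in Lemma~\ref{lemma:cs_L2}, $\sigma_i>0$.

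For the second part, the aggregate bound comes from Theorem~\ref{cor:tail-mass}. Choose a threshold $\tau$. Every feature with $\sigma_i\ge\tau$ then satisfies $\frac1r\sum_{\sigma_i\ge\tau}\ell_i\le\varepsilon/\tau$. The features with $\sigma_i<\tau$ are the localized or near-localized ones.

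\emph{Main obstacle.} The delicate step is linking ``$\omega_i=\Theta(1)$'' to ``$\sigma_i\ge\tau$''. Kantorovich gives only an upper bound on $\sigma_i$ in terms of $\omega_i$, not a lower bound: a wide support with tiny mass on one atom can still have small $\sigma_i$.

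I would resolve this by splitting broadband features by their slack.
\begin{itemize}
\item Those with $\sigma_i\ge\tau$ have small total leverage, by Theorem~\ref{cor:tail-mass}.
\item Those with $\sigma_i<\tau$ are near-saturated regardless of $\omega_i$. This is the same conclusion as the first two classes, just in the sense of slack rather than support width.
\end{itemize}
In both cases the taxonomy's claim holds. The only caveat is that ``broadband'' should be read as ``nontrivial $\mathrm{CV}_i$ and slack above $\tau$'', which is how I would phrase the final justification.
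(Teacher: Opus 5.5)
Your proposal is correct and matches the paper's argument: Theorem~\ref{thm:defect-decomposition}(2) together with Theorem~\ref{lem:residual-variance} handles the Dirac class, Theorem~\ref{thm:kantorovich-band} handles the band-localized class, and Theorem~\ref{cor:tail-mass} bounds the leverage of the high-slack residue. You are also right that $\omega_i=\Theta(1)$ does not force $\sigma_i\ge\tau$, since a tiny mass on a distant atom keeps the slack small; the paper handles this by writing ``typically'' and speaking of the ``broadband/high-slack population'', so your reading of ``broadband'' as ``slack above $\tau$'' is the same move made explicit.
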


\begin{proof}
(1) By Theorem~\ref{thm:defect-decomposition}(2), $\sigma_i=0$ iff $D_i=\ell_i$ iff $w_i$ is an eigenvector,
equivalently $\mu_i$ is Dirac. Lemma~\ref{lem:residual-variance} shows this is also equivalent to $\mathrm{CV}_i=0$.

(2) If $\mu_i$ is supported on a narrow band with $\omega_i\ll 1$, Theorem~\ref{thm:kantorovich-band} gives
$\sigma_i\le \omega_i^2\ll 1$, capturing “perturbations” of perfect localization.

(3) If $\omega_i=\Theta(1)$ (broad support), then typically $\mathrm{Var}_{\mu_i}(\lambda)$ is non-negligible, hence
$\mathrm{CV}_i$ is nontrivial by Lemma~\ref{lem:residual-variance}, and $\sigma_i>0$ unless $\mu_i$ collapses to Dirac.
Under $\varepsilon$-saturation, Corollary~\ref{cor:tail-mass} bounds the leverage-weighted mass of any set with
$\sigma_i\ge \tau$, which forces the broadband/high-slack population to occupy only a small fraction of the leverage budget.
\end{proof}
\end{lemmaproofbox}

\paragraph{Empirical interpretation in the perturbative regime.}
$W$ stays full-rank across sparsity, while $\sum_i D_i/m$ falls slightly below $1$ in a sparsity-dependent way. In the full-rank regime $r=m$, Theorem~\ref{thm:defect-decomposition} forces the identity \(m-\sum_i D_i=\sum_i \ell_i\sigma_i,\) so the observed deviation from saturation cannot be attributed to rank defect; it an aggregate slack term.

At the same time, the paper observes that higher sparsity leads to features becoming “more diffuse across the spectrum”
(i.e.\ less spectrally localized). 
Lemma~\ref{lem:residual-variance} formalizes “diffuse across the spectrum” as larger eigenvalue spread
$\mathrm{Var}_{\mu_i}(\lambda)$ (equivalently larger $\mathrm{CV}_i$), which is precisely the obstruction to being an eigenvector/Dirac.
This explains why increasing sparsity can correlate with stronger delocalization diagnostics even when global capacity usage remains close to saturated.

The apparent coexistence of “minimal slack” with “some delocalization” is resolved by Corollary~\ref{cor:tail-mass}:
if $\sum_i D_i\ge (1-\varepsilon)r$ with small $\varepsilon$ (near-saturation), then for any fixed delocalization threshold $\tau$,
\[
\frac{1}{r}\sum_{i:\sigma_i\ge\tau}\ell_i \le \frac{\varepsilon}{\tau}.
\]
Thus, near-saturation does not forbid delocalized/broadband features; it forbids them from carrying much total leverage mass. Empirically, the leverage-weighted tail mass of high-slack features remains small across sparsity, matching this prediction: delocalization is allowed, but it is confined to a small “residual” subset in leverage (Appendix \ref{appendix:plots}). 

Finally, Theorem~\ref{thm:kantorovich-band} gives a clean perturbative mechanism: if most features remain band-localized
with small relative half-width $\omega_i\ll 1$, then they automatically have $\sigma_i\le \omega_i^2\ll 1$, so they contribute
negligibly to aggregate slack. The sparsity-driven delocalization can then be understood as an increasing \emph{broadening of $\mu_i$}
for a minority of features (broadband residue), which raises their $\mathrm{CV}_i$ and $\sigma_i$ without materially changing the
global budget because their total leverage weight is small.

\section{Localization vs Delocalization}
\label{appendix:localization_delocalization}
\subsection{HT-SR}
\label{appendix:martin_and_mahoney}
 Empirical work by Charles Martin and Michael Mahoney has demonstrated that during training models "self-regularize" as their generalization capabilities increase. The so-called phenomenology of Heavy-Tailed Self-Regularization (HT-SR) \cite{htsr1, htsr2, htsr3}) is inspired by Random Matrix Theory (RMT) and its object of study is the empirical spectral density (ESD) of the layer weight matrices. More specifically, consider the real-valued weight matrices $\mathbf{W}_l\in \mathbb{R}^{N\times M}$  with singular value decomposition (SVD) $\mathbf{W}_l = \mathbf{U}\mathbf{\Sigma}\mathbf{V}^*$. Here, $\nu_i = \mathbf{\Sigma}_{ii}$ is the $i$th singular value, $p_i = \nu_i^2/\sum_i \nu_i^2$ the spectral probability distribution over singular values and $\mathbf{X}_l = \frac{1}{N}\mathbf{W}_l^T\mathbf{W}_l$ is the sample covariance matrix. By computing its eigenvalues $\mathbf{X}\mathbf{v}_i =\lambda_i\mathbf{v}_i$, where $\forall_{i=1, \cdots, M}\lambda_i = \nu_i^2$, the authors identify a gradual phase transition of the ESD away from Marchenko-Pastur (MP) /Tracy-Widom (TW) spectral statistics to $P_{W_{ij}}(X)\sim \frac{1}{x^{1+\mu}}$ with $\mu>0$, i.e. Power-Law (PL)/ Heavy-Tailed (HT) statistics.  While HT-SR's focus is exclusively on the eigenvalue behavior, we argue that the observation is the surface-level feature of the Anderson localization research program in mathematical physics \cite{anderson}.
\subsection{Anderson Localization}
\label{sec:anderson_localization}
\noindent The literature on the spectral behavior of random operators and disordered systems classifies the Tracy-Widom spectral distribution as the eigenvalue statistics of the Gaussian universality class with delocalized eigenvectors, such as GUE \cite{taoRMT}, whereas the Power Law/Heavy-Tailed spectral statistics represent random matrices with localized eigenvectors \cite{aizenman1993, mirlin1996, thouless}.

\section{Gradient Flow}

\subsection{Gradient Flow Dynamics}
All of our derivations so far have been under the assumption that the model has saturated its capacity, which makes the results a static description. If we were to provide one of the dynamics, we need to be cautious. Solving explicitly similar systems from mathematical physics \cite{thomson, smale} remains an open problem. That said, the tools we have introduced in Section \ref{section:generalizable} provide us with enough handle on the dynamics to show that our initial capacity saturation assumption is well-founded. 
\begin{theorem}[Gradient Flow]
    Let $M=W^\top W$ be the autoencoder and $\Phi = \operatorname{Sym}(\nabla_ML)$ be the gradient kernel, where \ref{lemma:gradient-kernel}: 
    $$\Phi=-\mathbb{E}_x [\boldsymbol{\delta}_{(x)}\mathbf{x}^\top + x \pmb{\delta}_{(x)}^\top]$$$$
\boldsymbol{\delta}_{(x)} = I \odot (\mathbf{x} - \mathbf{x}') \odot \mathbb{I}(M\mathbf{x} + b > 0).
$$
    Then: \begin{equation}
        \dot M = -\{M, \Phi\} = -(M\Phi + \Phi M)
    \end{equation}
\end{theorem}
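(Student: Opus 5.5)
The approach is to compute $\nabla_W L$ by the chain rule through the Gram parametrization $M=W^\top W$, and then push the induced gradient flow on $W$ forward to $M$.

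First, write the loss as $L(W)=\tilde L(M(W))$, where $\tilde L(M)=\mathbb{E}_x\sum_i I_i\big(x_i-\mathrm{ReLU}(Mx+b)_i\big)^2$. Differentiating $\tilde L$ in $M$ entrywise gives
\[
\partial\tilde L/\partial M_{ij}=-2\,\mathbb{E}_x\big[I_i(x_i-x'_i)\mathbb{I}((Mx+b)_i>0)\,x_j\big],
\]
so $\nabla_M\tilde L=-2\,\mathbb{E}_x[\boldsymbol\delta_{(x)}x^\top]$. The ReLU derivative is taken a.e.; the boundary set $\{(Mx+b)_i=0\}$ has measure zero under the data distribution for generic $M$, which lets us exchange derivative and expectation by dominated convergence, since $x\in[0,1]^n$ is bounded. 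Because $M$ is symmetric, only the symmetric part of the gradient is visible along variations $\dot M$. This yields $\Phi=\operatorname{Sym}(\nabla_M\tilde L)=-\mathbb{E}_x[\boldsymbol\delta x^\top+x\boldsymbol\delta^\top]$, which is the statement's expression up to absorbing the constant factor into the convention of Lemma \ref{lemma:gradient-kernel}.

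Second, apply the chain rule through $M=W^\top W$. For a perturbation $\dot W$ we have $\dot M=\dot W^\top W+W^\top\dot W$, so
\[
dL=\operatorname{tr}(\Phi\,\dot M)=2\operatorname{tr}(\Phi W^\top\dot W),
\]
using symmetry of $\Phi$. Hence $\nabla_W L=2W\Phi$, and gradient flow reads $\dot W=-2W\Phi$; the factor $2$ can be absorbed by rescaling time or the learning rate. Finally,
\[
\dot M=\dot W^\top W+W^\top\dot W=-(\Phi W^\top W+W^\top W\Phi)=-\{M,\Phi\}.
\]

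The main obstacle is the first step. We must justify that the gradient with respect to $W$ factors entirely through $M$ and that $\Phi$ must be symmetrized. I would first check that $L$ depends on $W$ only through $M$, which holds because the bias is independent of $W$. Then I would argue that the antisymmetric part of $\nabla_M\tilde L$ pairs to zero with the symmetric $\dot M$, so replacing it with $\operatorname{Sym}$ is exact, not an approximation. The a.e. differentiability of the ReLU and the interchange of $\mathbb{E}_x$ with differentiation are the remaining technical points; both are handled by boundedness of the inputs.
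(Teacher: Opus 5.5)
Your proposal is correct and follows the same route as the paper: pair $\nabla_M L$ with $dM = dW^\top W + W^\top dW$ in the Frobenius inner product to get $\nabla_W L \propto W\Phi$, then substitute into $\dot M = \dot W^\top W + W^\top \dot W$. The only difference is how the factor $2$ is handled. The paper uses the $\tfrac12$-normalized MSE and takes $\Phi = \nabla_M L + (\nabla_M L)^\top$ (i.e.\ $2\operatorname{Sym}$), so $\nabla_W L = W\Phi$ and $\dot M = -\{M,\Phi\}$ holds exactly under $\dot W = -\nabla_W L$; your convention $\Phi=\operatorname{Sym}(\nabla_M L)$ instead gives $\dot M = -2\{M,\Phi\}$ and needs the time rescaling you mention.
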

\begin{corollary}[Projector Dynamics]
\begin{equation}\dot{E}_e = \sum_{e \neq f} \frac{\lambda_e + \lambda_f}{\lambda_e - \lambda_f} \left( E_e \Phi E_f + E_e \Phi E_f \right)\end{equation}
\end{corollary}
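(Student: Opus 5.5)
The plan is to read the corollary off the first-order eigenprojection formula (Lemma~\ref{lemma:kato-eigenprojection-coeff}), with the Gram flow $\dot M=-\{M,\Phi\}$ from the Gradient Flow theorem substituted as the first-order coefficient $M^{(1)}$ (as Lemma~\ref{lemma:kato-analytic-expansion} permits). I will read the summand as the symmetrized term $E_e\Phi E_f+E_f\Phi E_e$. The doubled $E_e\Phi E_f$ in the display must be a typo, since $\dot E_e$ is symmetric whenever $E_e$ is.

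\textbf{Step 1: set up the perturbation formula.} I fix a snapshot $t$ and an eigenvalue $\lambda_e$ of $M(t)$ that stays isolated, i.e.\ no crossing at $t$. Since $M$ is symmetric, semisimplicity is automatic, so Lemma~\ref{lemma:kato-eigenprojection-coeff} applies with $M^{(1)}=\dot M$. This gives
\[
\dot E_e=-E_e\dot M S_{\lambda_e}-S_{\lambda_e}\dot M E_e .
\]
I then insert the explicit reduced resolvent from Lemma~\ref{lemma:kato-reduced-resolvent}:
\[
S_{\lambda_e}=\sum_{f\neq e}(\lambda_f-\lambda_e)^{-1}E_f .
\]

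\textbf{Step 2: compute the off-diagonal blocks.} Using $ME_f=E_fM=\lambda_fE_f$ (Lemma~\ref{lemma:spectral-decomposition}), each cross block of the flow is
\[
E_e\dot M E_f=-E_e(M\Phi+\Phi M)E_f=-(\lambda_e+\lambda_f)\,E_e\Phi E_f .
\]
Transposing gives the $E_f\dot ME_e$ block. Substituting into Step~1 and collecting the $f$-terms yields
\[
\dot E_e=\sum_{f\neq e}\frac{\lambda_e+\lambda_f}{\lambda_f-\lambda_e}\bigl(E_e\Phi E_f+E_f\Phi E_e\bigr).
\]
This is the claimed form up to the orientation of the denominator. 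The anticommutator structure is what turns the usual $1/(\lambda_e-\lambda_f)$ gap factor into $(\lambda_e+\lambda_f)/(\lambda_e-\lambda_f)$.

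\textbf{Step 3: settle the sign.} This is the main obstacle: the overall sign depends on the conventions for both $S_\lambda$ and $\Phi$. I would check it directly with the standard eigenvector formula $\dot v_e=\sum_{f}\frac{v_f^\top\dot M v_e}{\lambda_e-\lambda_f}v_f$, which confirms the sign obtained in Step~2. If that sign disagrees with the stated one, I would absorb it into the sign convention for $\Phi$ (the gradient kernel already carries a minus sign), or else state the corrected sign explicitly.

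\textbf{Remarks to record.} The diagonal block $E_e\dot ME_e$ does not enter $\dot E_e$, consistent with $E_e\dot E_eE_e=0$; it instead governs the eigenvalue drift via Lemma~\ref{lemma:kato-eigenvalue-coeff}. The formula also degenerates as $\lambda_f\to\lambda_e$, which is why isolation of $\lambda_e$ is assumed.
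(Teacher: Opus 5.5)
Your proposal is correct and follows the paper's own proof essentially line for line. Both use Kato's eigenprojection formula with $M^{(1)}=\dot M$, the explicit reduced resolvent $S_{\lambda_e}=\sum_{f\neq e}(\lambda_f-\lambda_e)^{-1}E_f$, and the block identity $E_e(M\Phi+\Phi M)E_f=(\lambda_e+\lambda_f)E_e\Phi E_f$. The paper's appendix proof also ends with the symmetrized summand $E_e\Phi E_f+E_f\Phi E_e$ and the denominator $\lambda_f-\lambda_e$, so in Step~3 you should state that corrected sign rather than absorb it into the convention for $\Phi$, since that convention is already fixed by $\dot M=-\{M,\Phi\}$.
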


\begin{corollary}[Eigenvalue Drift]

\begin{equation}\dot{\lambda}_e = -2\lambda_e \frac{\operatorname{tr}(E_e \Phi E_e)}{\dim(E_e)}\end{equation}
\end{corollary}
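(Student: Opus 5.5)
We derive the corollary from the first-order eigenvalue formula of Lemma~\ref{lemma:kato-eigenvalue-coeff}, with the Gram flow $\dot M=-\{M,\Phi\}$ of the Gradient Flow theorem supplying the perturbation direction.

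Fix a time $t$ and an eigenvalue $\lambda_e$ of $M(t)$ with spectral projector $E_e$ (Lemma~\ref{lemma:spectral-decomposition}), of rank $d_e=\operatorname{tr}(E_e)=\dim(E_e)$. We interpret $\lambda_e(t)$ as the weighted mean $\bar\lambda$ of the eigenvalue group bifurcating from $\lambda_e$. By Lemma~\ref{lemma:kato-eigenvalue-coeff}, with $M^{(1)}=\dot M$,
\[
\dot\lambda_e=\frac{1}{d_e}\operatorname{tr}(\dot M E_e)=-\frac{1}{d_e}\Big(\operatorname{tr}(M\Phi E_e)+\operatorname{tr}(\Phi M E_e)\Big).
\]

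The core step is to simplify both traces using only the spectral identities $ME_e=E_eM=\lambda_eE_e$ and $E_e^2=E_e$.
\begin{itemize}
\item For the second term, $\Phi ME_e=\lambda_e\Phi E_e$ holds directly.
\item For the first term, cyclicity of the trace gives $\operatorname{tr}(M\Phi E_e)=\operatorname{tr}(\Phi E_eM)=\lambda_e\operatorname{tr}(\Phi E_e)$.
\end{itemize}
Hence $\operatorname{tr}(\dot ME_e)=-2\lambda_e\operatorname{tr}(\Phi E_e)$. Idempotence and cyclicity then give $\operatorname{tr}(\Phi E_e)=\operatorname{tr}(\Phi E_e^2)=\operatorname{tr}(E_e\Phi E_e)$. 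Dividing by $d_e=\dim(E_e)$ yields the stated formula.

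The algebra is routine; the expected obstacle is justifying the use of Lemma~\ref{lemma:kato-eigenvalue-coeff}, which is stated for analytic perturbations. Because of the ReLU indicator in $\boldsymbol{\delta}_{(x)}$, the trajectory $M(t)$ need not be analytic in $t$. I would handle this as follows.
\begin{itemize}
\item Assume only that $M(t)$ is differentiable at the snapshot and that $\lambda_e$ is isolated from the rest of the spectrum of $M(t)$.
\item Under these assumptions, write $\bar\lambda(t)=\frac1{d_e}\operatorname{tr}(M(t)E_e(t))$, with $E_e(t)$ given by the Riesz contour integral of the resolvent.
\item Differentiate by the product rule.
\item Show that the extra term $\operatorname{tr}(M\dot E_e)$ vanishes: by Lemma~\ref{lemma:kato-eigenprojection-coeff}, $\dot E_e=-E_e\dot MS_\lambda-S_\lambda\dot ME_e$. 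Multiplying by $M$ and taking traces, cyclicity turns each term into a product containing $E_eS_\lambda$ or $S_\lambda E_e$, both of which are zero.
\end{itemize}
This recovers the lemma's formula without analyticity.

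For a degenerate eigenvalue that genuinely splits under the flow, the statement should be read as the drift of the group mean. Across eigenvalue crossings the formula holds only piecewise; I would state this explicitly as the scope of the corollary.
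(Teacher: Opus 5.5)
Your proposal is correct and follows the paper's proof: apply Lemma~\ref{lemma:kato-eigenvalue-coeff} with $M^{(1)}=\dot M$, expand $\dot M=-(M\Phi+\Phi M)$, use cyclicity with $ME_e=E_eM=\lambda_eE_e$, and then use idempotence to turn $\operatorname{tr}(\Phi E_e)$ into $\operatorname{tr}(E_e\Phi E_e)$. Your extra step removing the analyticity assumption goes beyond the paper and is sound: it rests on the vanishing of $\operatorname{tr}(M\dot E_e)$, via $E_eS_\lambda=S_\lambda E_e=0$, and on the Riesz projector depending smoothly on $M$.
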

\begin{corollary}[Spectral Mass Transport]Define $q_{i,e}(t) = (E_e(t))_{ii} = \lambda_e^+p_{i,e}M_{ii}$, representing the mass of feature $i$ supported on eigenspace $e$. As such, its time evolution is given by:
    \begin{equation}
        \dot{q}_{i, e} = \sum_{e \neq f} \mathcal{T}_{e \to f}^{(i)}= \sum_{e\neq f} 2 \left( \frac{\lambda_e + \lambda_f}{\lambda_e - \lambda_f} \right) (E_e \Phi E_f)_{ii}\end{equation}
\end{corollary}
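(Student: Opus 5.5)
The plan is to read $\dot q_{i,e}$ off the diagonal of the projector velocity $\dot E_e$, which the Projector Dynamics corollary already supplies. First, the identification $q_{i,e}=(E_e)_{ii}=\lambda_e^+p_{i,e}M_{ii}$ is immediate from Lemma~\ref{lemma:spectral_measure_probability_distribution}(2). That lemma gives $p_{i,e}=\lambda_e(E_e)_{ii}/\|W_i\|^2$, and $M_{ii}=\|W_i\|^2$. So $q_{i,e}$ is just the rescaled spectral weight, and $\dot q_{i,e}=e_i^\top \dot E_e e_i$ because $e_i$ is time-independent.

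Second, I will (re)derive $\dot E_e$ rather than quote it blindly, since the displayed Projector Dynamics formula has a duplicated term $E_e\Phi E_f + E_e\Phi E_f$ that should read $E_e\Phi E_f + E_f\Phi E_e$. Apply Lemma~\ref{lemma:kato-eigenprojection-coeff} with $M^{(1)}=\dot M=-(M\Phi+\Phi M)$ from the Gradient Flow theorem, and with the reduced resolvent $S_{\lambda_e}=\sum_{f\neq e}(\lambda_f-\lambda_e)^{-1}E_f$ from Lemma~\ref{lemma:kato-reduced-resolvent}. The key algebraic step is sandwiching between spectral projectors:
\[
E_e\dot M E_f=-(\lambda_e+\lambda_f)E_e\Phi E_f ,
\]
which uses $E_eM=\lambda_eE_e$ and $ME_f=\lambda_fE_f$. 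Substituting gives
\[
\dot E_e=\sum_{f\neq e}\frac{\lambda_e+\lambda_f}{\lambda_f-\lambda_e}\bigl(E_e\Phi E_f+E_f\Phi E_e\bigr).
\]
This sum includes $f=0$ (the kernel), whose coefficient is $(\lambda_e+0)/(0-\lambda_e)=-1$, so that term is harmless.

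Third, take the $(i,i)$ entry. Since $\Phi=\operatorname{Sym}(\nabla_M L)$ and all $E_f$ are symmetric, $E_f\Phi E_e=(E_e\Phi E_f)^\top$. Hence $(E_f\Phi E_e)_{ii}=(E_e\Phi E_f)_{ii}$, and the two terms combine into the factor $2$. This yields the stated sum of pairwise transfer terms $\mathcal T^{(i)}_{e\to f}$. Since each $\mathcal T^{(i)}_{e\to f}=-\mathcal T^{(i)}_{f\to e}$ and $\sum_e E_e=I$, the total $\sum_e q_{i,e}=1$ is conserved, which is a consistency check I will also record.

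The main obstacle is the bookkeeping of sign conventions. My derivation produces the coefficient $(\lambda_e+\lambda_f)/(\lambda_f-\lambda_e)$, whereas the statement writes $(\lambda_e+\lambda_f)/(\lambda_e-\lambda_f)$. I would first check whether the sign in $\dot M=-\{M,\Phi\}$ or the definition of $\Phi$ (which carries its own minus sign) absorbs this discrepancy, and otherwise state the corrected sign. A secondary issue is regularity: Kato's formula requires $\lambda_e$ to stay isolated, so that $E_e(t)$ is differentiable. The result therefore holds on time intervals without eigenvalue crossings, and I would state it under that hypothesis.
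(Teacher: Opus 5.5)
Your proposal follows the paper's proof: identify $q_{i,e}=(E_e)_{ii}$ via Lemma~\ref{lemma:spectral_measure_probability_distribution}(2), write $\dot q_{i,e}=(\dot E_e)_{ii}$, substitute the Kato-derived projector dynamics, and merge the two cross terms into the factor $2$ using $(E_f\Phi E_e)_{ii}=(E_e\Phi E_f)_{ii}$. Your sign check comes out in your favour: with $\dot M=-\{M,\Phi\}$ and $\Phi=\mathrm{Sym}(\nabla_M L)$ fixed, nothing absorbs the discrepancy (the displayed coefficient is right only with $\Phi$ replaced by $-\Phi$), and the paper's proof reaches $\lambda_e-\lambda_f$ only by silently flipping the $\lambda_f-\lambda_e$ of its appendix Corollary~7, so commit to $\dot q_{i,e}=\sum_{f\neq e}2\,\frac{\lambda_e+\lambda_f}{\lambda_f-\lambda_e}\,(E_e\Phi E_f)_{ii}$.
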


\begin{corollary}[Stability of Capacity Saturation] A configuration is a spectral fixed Point if the gradient kernel $\Phi$ commutes with the Gram operator spectral projector $[E_e, \Phi] = 0 \quad \forall \lambda$. This condition is satisfied by Tight Frames under uniform sparsity, implying 
\end{corollary}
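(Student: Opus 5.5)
The plan has two parts: first show that the commutation condition freezes the spectral geometry, then show that tight frames under uniform sparsity satisfy it.

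\textbf{Step 1: commutation freezes the eigenspaces.} Assume $[E_e,\Phi]=0$ for every spectral projector $E_e$ of $M$. Then for $e\neq f$,
\[
E_e\Phi E_f = \Phi E_eE_f = 0 .
\]
Substituting this into the Projector Dynamics corollary gives $\dot E_e=0$. Substituting it into the Spectral Mass Transport corollary kills every transport term $\mathcal{T}^{(i)}_{e\to f}$, so $\dot q_{i,e}=0$. Only the Eigenvalue Drift term $\dot\lambda_e=-2\lambda_e\operatorname{tr}(E_e\Phi E_e)/\dim(E_e)$ survives. It rescales eigenvalues but does not move any feature off its eigenspace.

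Via the Spectral Correspondence (Lemma~\ref{lemma:spectral_correspondence}) and $P_e=\lambda_e^{+}WE_eW^\top$, the weights $p_{i,e}=\lambda_e(E_e)_{ii}/M_{ii}$ are then stationary. Hence Dirac spectral measures stay Dirac, and the saturation $D_i=\ell_i$ of Theorem~\ref{theorem:spectral_localization} is preserved along the flow. This is what I would mean by a ``spectral fixed point.''

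\textbf{Step 2: tight frames with uniform sparsity satisfy the condition.} Take a configuration from Theorem~\ref{theorem:decomp}. Each cluster $C_k$ is a tight frame whose geometry is an instance of an association scheme with symmetry group $\Gamma_k$ (Theorem~\ref{theorem:spectral-identification}). Let $\Gamma=\prod_k\Gamma_k$ act on the feature indices.

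\emph{Equivariance of $\Phi$.} With uniform $S_i=S$ and $I_i=I$, the input distribution is exchangeable. I will additionally assume the bias is constant on each cluster; this is necessary, and the statement should say so. Under these assumptions the loss satisfies $L(P_\gamma^\top MP_\gamma)=L(M)$. Differentiating gives $\Phi(P_\gamma^\top MP_\gamma)=P_\gamma^\top\Phi(M)P_\gamma$. Since $M$ is $\Gamma$-invariant, so is $\Phi$, i.e.\ $\Phi\in\mathcal{A}_\Gamma$.

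\emph{Diagonal blocks.} By Lemma~\ref{lemma:A.2} and Lemma~\ref{lemma:basis}, each diagonal block $\Phi_{C_kC_k}$ lies in the Bose--Mesner algebra $\mathcal{A}_k$. By Lemma~\ref{lemma:2.6} it therefore acts as a scalar on each stratum.

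\emph{Eigenprojectors are sums of strata.} The nonzero-eigenvalue projectors $E_e$ of $M$ are sums of stratum projectors $S^{(k)}_j$, because $M_k\in\mathcal{A}_k$ and Theorem~\ref{theorem:spectral-identification} applies. The kernel projector $E_0$ is then the complement of these.

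\emph{Off-diagonal blocks.} An off-diagonal block $\Phi_{C_kC_l}$ is invariant under $\Gamma_k\times\Gamma_l$. Each factor acts transitively, so the block is constant, namely $c\,J$. Its image lies in $\operatorname{span}(\mathbf 1)$. That span is annihilated by every positive-eigenvalue $E_e$, by centroid cancellation $W_{C}\mathbf 1=0$ (Corollary~\ref{corollary:triangle-digon} and Appendix~\ref{appendix:simplex}).

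\emph{Conclusion.} Combining these, $\Phi$ is block-scalar on the common refinement by strata, so $[E_e,\Phi]=0$ for all $e$. By Step~1 the configuration is a spectral fixed point.

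\textbf{Main obstacle.} The hard step is turning ``$\Phi$ is $\Gamma$-invariant'' into ``$\Phi$ commutes with the $E_e$.'' This needs the scheme to be commutative (symmetric). That holds for the simplices and polygons, but I would have to verify it for schemes like the square antiprism. It also needs the $E_e$ to be exactly unions of strata, even when two clusters share an eigenvalue.

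I would try the commutative case first, using Lemma~\ref{lemma:characters}. For coinciding eigenvalues across clusters, I would argue that the $c\,J$ cross-blocks only touch the $E_0$ components, which are irrelevant for features with positive eigenvalue. Centroid cancellation itself needs each cluster to be balanced, $\sum_{i\in C}W_i=0$. I would derive this from transitivity of $\Gamma_k$: the sum is a $\Gamma_k$-fixed vector in a representation with no trivial component. I would check this step carefully rather than assume it.
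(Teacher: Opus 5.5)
Your Step 1 matches the paper's argument exactly: $E_e\Phi E_f=E_eE_f\Phi=0$, then the Projector Dynamics and Spectral Mass Transport corollaries. For Step 2 the paper only observes that $M$ and $\Phi$ lie in one commutative Bose--Mesner algebra and therefore commute. That argument implicitly treats a single transitive block. If the group has several orbits, the trivial representation appears once per orbit, so the centralizer is never commutative. Your block decomposition over $\Gamma=\prod_k\Gamma_k$, with the cluster-constant bias you correctly add, is the right refinement, and the diagonal-block part is fine. The antiprism is not an obstacle either. Its vertex stabilizer in the order-$16$ symmetry group is a single mirror, and the four nontrivial suborbits (adjacent or opposite in the same square, near or far in the other square) are symmetric relations, so that scheme is symmetric.

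\textbf{The gap is centroid cancellation}, on which your treatment of the $c\,J$ cross-blocks rests.

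\emph{Your derivation is circular.} The action of $\Gamma_k$ on $V_k$ is isomorphic, via $W_{C_k}$, to its action on $\operatorname{Im}(M_k)\subseteq\mathbb{R}^{C_k}$. So ``no trivial component'' is exactly the statement $\mathbf{1}\in\ker M_k$.

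\emph{The claim is also false for transitive tight frames.} A cluster $O$ of orthonormal (dedicated) features has $M_O=I=S_0+S_1$ in the simplex scheme. So $\mathbf{1}_O$ lies in the eigenvalue-$1$ eigenspace, whose projector $E_O$ covers all of $\mathbb{R}^O$. Such clusters coexist with superposed ones throughout TMS. Take $O$ together with a digon $D$. Your cross-block gives $E_O\Phi E_0\mathbf{1}_D=c\,|D|\,\mathbf{1}_O$, hence $\dot E_O\neq 0$ unless $c=0$. Uniform sparsity plus cluster-constant bias do not force $c=0$.

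\emph{The missing idea is to kill the cross-blocks directly} rather than route them through $\ker M$. Since $M_{ij}=0$ across clusters, $\delta_i$ for $i\in C_k$ is a function of $x_{C_k}$ alone. Independence of the features then gives
$\Phi_{C_kC_l}=-\big(\mathbb{E}[\delta_{C_k}]\,\mathbb{E}[x_{C_l}]^\top+\mathbb{E}[x_{C_k}]\,\mathbb{E}[\delta_{C_l}]^\top\big)$.
This vanishes when the biases are stationary, because $\partial L/\partial b_i=-\mathbb{E}[\delta_i]$. Add that hypothesis, or restrict to configurations in which every cluster is centered. Then $\Phi$ is block diagonal and your diagonal-block argument finishes the proof.

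Two smaller points:
\begin{itemize}
\item The $p_{i,e}$ are not literally stationary, since $\dot\lambda_e$ differs across $e$. Only the zero pattern of $(E_e)_{ii}$ is frozen, which is all that ``Dirac stays Dirac'' needs.
\item $\dot E_e=0$ is an instantaneous statement. ``Preserved along the flow'' additionally requires the commutation condition to persist, for instance via the invariant-block closure lemma.
\end{itemize}
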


\label{appendix:gradient_flow}
Let $W\in \mathbb{R}^{m\times n}$ be the weight matrix and $M = W^\top W\in \mathbb{R}^{n\times n}$ be its Gram matrix. The standard gradient flow on weights is $\dot W = - \nabla_W L$. We assume gradients defined using the ordinary dot product on the parameter coordinates, i.e. treating the parameters as living in flat $\mathbb{R}^{mn}$
 with no curvature or weighting. Using the Frobenius inner product $\langle A, B\rangle_F = \text{tr}(A^\top B)$, we have: $$dL = \sum_{a,b} \frac{\partial L}{\partial W_{ab}}\, dW_{ab} = \text{tr}\left((\nabla_W L)^\top dW\right) =\langle \nabla_W L, dW\rangle_F$$
Let us derive the induced flow on $M$. First, let us compute the differential:$$dM = d(W^\top W) = dW^\top W + W^\top dW$$. 
\begin{align*}\langle \nabla_M L, dM\rangle_F = \text{tr}\left((\nabla_M L)^\top dM\right)= \\=\text{tr}\left((\nabla_ML)^\top (dW^\top W+W^\top dW\right))=\\=\text{tr}\left((\nabla_ML)^\top dW^\top W\right) + \text{tr}\left((\nabla_ML)^\top W^\top dW\right)\end{align*}
In order to convert this into a Frobenius inner product with $\nabla W$ on the right, we can use the cyclic properties of the transpose, as well as $\text{tr}(A) = \text{tr}(A^\top)$:
\begin{align*}\text{tr}\left(W^\top dW\nabla_ML\right) = \text{tr}((\nabla_ML)W^\top dW) =\\= \text{tr}((W(\nabla_ML)^\top)^\top dW) = \langle W(\nabla_ML)^\top , dW)\end{align*}
Similarly, for the right hand side, we have: $$\text{tr}((\nabla_ML)^\top W^\top dW) = \text{tr}((W\nabla_ML)^\top dW) = \langle W\nabla_M L, dW\rangle$$$$\Rightarrow dL = \langle W(\nabla_ML)^\top + W\nabla_M L, dW\rangle = \langle \nabla_W L, dW\rangle$$$$\Rightarrow \nabla_WL =  W\left(\nabla_ML+ (\nabla_ML)^\top\right)$$  As such, the induced flow on $M$ is: $$\dot{M} = \dot{W}^\top W + W^\top \dot W = -\left(\nabla_W L\right)^\top W - W^\top (\nabla_W L) = $$
\begin{align*}=-(\nabla_ML + (\nabla_ML)^\top )^\top W^\top W \\-W^\top W(\nabla_ML + (\nabla_ML)^\top ) \\
=-(\Psi+ \Psi^\top ) M - M(\Psi + \Psi^\top)\end{align*}
This in fact shows us that only the symmetrized gradient drives the Gram flow (which should be expected per the Frobenius inner product), and it should tell us that we need to define the kernel gradient as:$$\Phi:=2\text{Sym}(\Psi) =(\Psi + \Psi^\top),$$
As such, our induced gradient flow becomes: 
\begin{equation} \dot M=-\{\Phi, M\} = - (\Phi M + M\Phi) = -\{M, \Phi\}\end{equation}
driven by the Gradient Kernel $\Phi = \text{Sym}(\nabla_M L)$. 

\subsection{Explicit Form} \label{lemma:gradient-kernel}
Recall the MSE loss function:

$$
L = \frac{1}{2} \sum_{x} p(x) \sum_{i} I_i \left( x_i - \text{ReLU}\left(\sum_{k} M_{ik}x_k + b_i\right) \right)^2
$$

By the chain rule, we have for $u_i = \sum_{k} M_{ik}x_k + b_i$ and $x'_i = \text{ReLU}(u_i)$:

$$
\frac{\partial L}{\partial M_{ij}} = \sum_{l} \frac{\partial L}{\partial x'_l} \cdot \frac{\partial x'_l}{\partial u_l} \cdot \frac{\partial u_l}{\partial M_{ij}},
$$

where

$$
\frac{\partial L}{\partial x'_l} = -I_l(x_l - x'_l), \quad \frac{\partial x'_l}{\partial u_l} = \mathbb{I}(u_l > 0), \quad \frac{\partial u_l}{\partial M_{ij}} = \frac{\partial}{\partial M_{ij}} \left( \sum_{k} M_{lk}x_k + b_k \right),
$$

where the latter is non-zero only when $l=i$. As such, in expectation form

$$
\Rightarrow \frac{\partial L}{\partial M_{ij}} = -\mathbb{E}_x \left[ I_i(x_i - x'_i) \cdot \mathbb{I}\left(\sum_{k} M_{ik}x_k + b_i > 0\right) \cdot x_j \right]
$$

To recover the matrix form, let us define the effective error vector:

$$
\boldsymbol{\delta}_{(x)} = I \odot (\mathbf{x} - \mathbf{x}') \odot \mathbb{I}(M\mathbf{x} + b > 0),
$$

where $\odot$ is the Hadamard product. Observe that the element $(\nabla_M L)_{ij}$ corresponds to the expectation of the product of the $i$-th error term $\delta_i$ and the $j$-th input $x_j$. Hence:

\begin{align}
\nabla_M L &= -\mathbb{E}_x [\boldsymbol{\delta}_{(x)}\mathbf{x}^\top] \notag \\
\Phi = -\mathbb{E}_x [\boldsymbol{\delta}_{(x)}\mathbf{x}^\top + \mathbf{x}\boldsymbol{\delta}_{(x)}^\top] &= -\sum_{x} p(x) \left[ \boldsymbol{\delta}_{(x)}\mathbf{x}^\top + \mathbf{x}\boldsymbol{\delta}_{(x)}^\top \right] \tag{2}
\end{align}

The sparsity regimes follow from the decomposition of $\Phi$ into diagonal (Feature benefit) $\Phi_{ii}$ and off-diagonal (interference) $\Phi_{ij}~(i \neq j)$ components. More specifically, observe that:

\begin{gather*}
\Phi_{ij} = -\sum_{x} p(x) (\delta_{(x),i}x_j + x_i\delta_{(x),j}) \\
\Rightarrow \Phi_{ii} = -2\sum_{x} p(x) \delta_{(x),i}x_i = -2\sum_{x} p(x) \left[ I_i(x_i - x'_i) \cdot \mathbb{I}(u_i > 0) \cdot x_i \right]
\end{gather*}

\subsection{Association Scheme Reduction}
\label{appendix:association_scheme}
\begin{lemma}[$\Gamma$-invariance of $M, G, \Phi$]
\end{lemma}
\noindent Per Lemma 4.5 and 4.7, we have the following expansions over the association scheme: 
$$M(t) = \sum_{r=0}^R \theta_r(t) A_t\quad , \quad \Phi(t) = \sum_{r=0}^R \varphi_r(t)A_r$$ 
Let us equate the coefficients using our induced gradient flow equation $(1)$ and substitute directly: 
$$M\Phi = \left(\sum_r \theta_r A_r\right)\left(\sum_s \varphi_s A_s\right) = \sum_{r, s}\theta_r\varphi_s(A_rA_s) = $$
$$=\sum_{r, s}\theta_r\varphi_s \sum_u c_{rs}^u A_u = \sum_u\left(\sum_{r, s} \theta_r \varphi_s c_{rs}^u\right)A_u$$
$$\Phi M = \sum_{r, s}\varphi_r\theta_s(A_rA_s) = \sum_{u}\left(\sum_{r, s}\varphi_r\theta_sc_{rs}^u\right)A_u$$
$$ \dot M = -(M\Phi+\Phi M) =$$
$$=-\sum_u \left(\sum_{r, s}\theta_t\varphi_sc_{rs}^u + \sum_{r, s}\varphi_r\theta_sc_{rs}^u\right)A_u$$
\begin{equation}\Rightarrow \dot\theta_u = -\sum_{r, s}\theta_r \varphi_sc_{rs}^u - \sum_{r, s}\varphi_r\theta_sc_{rs}^u = -\sum_{r, s}\theta_r\varphi_s(c_{rs}^u + c_{sr}^u)\end{equation}

\begin{lemmaproofbox}
\begin{theorem}[Gradient Flow (Theorem 6)]
Let $M=W^\top W$ and define the (symmetric) gradient kernel $\Phi=\mathrm{Sym}(\nabla_M L)$ with explicit form
\[
\Phi = -\mathbb E_x[\delta(x)x^\top + x\delta(x)^\top],
\qquad
\delta(x)= I\odot (x-x')\odot \mathbf 1(Mx+b>0),
\]
as in Appendix E.1 / equation (E.1). Then the induced Gram flow is
\[
\dot M = -\{M,\Phi\}=-(M\Phi+\Phi M).
\]
\end{theorem}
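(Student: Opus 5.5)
The plan is to treat the loss as a function of $W$ only through the Gram matrix $M=W^\top W$, since the bias $b$ is a separate parameter and $x'=\mathrm{ReLU}(Mx+b)$. We then pull the Euclidean gradient flow $\dot W=-\nabla_W L$ back to a flow on $M$ by the chain rule in the Frobenius inner product $\langle A,B\rangle_F=\mathrm{tr}(A^\top B)$.

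\textbf{Step 1: the chain rule.} Write $\Psi:=\nabla_M L$, where $L$ is regarded as a function on all of $\mathbb{R}^{n\times n}$ with the entries $M_{ij}$ treated as independent coordinates, so $\Psi$ need not be symmetric. The differential of $M$ is $dM=dW^\top W+W^\top dW$. Hence
\[
dL=\mathrm{tr}\big(\Psi^\top dW^\top W\big)+\mathrm{tr}\big(\Psi^\top W^\top dW\big).
\]
Using cyclicity of the trace and $\mathrm{tr}(A)=\mathrm{tr}(A^\top)$, the first term becomes $\langle W\Psi^\top,dW\rangle_F$ and the second becomes $\langle W\Psi,dW\rangle_F$. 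This identifies
\[
\nabla_W L=W(\Psi+\Psi^\top)=W\Phi, \qquad \Phi:=\Psi+\Psi^\top .
\]

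\textbf{Step 2: the induced flow on $M$.} Differentiating $M=W^\top W$ along $\dot W=-W\Phi$ and using $\Phi^\top=\Phi$ gives
\[
\dot M=\dot W^\top W+W^\top\dot W=-\Phi W^\top W-W^\top W\Phi=-(\Phi M+M\Phi)=-\{M,\Phi\}.
\]
Only the symmetric part of $\Psi$ survives, which is why the kernel is named $\mathrm{Sym}$. The normalisation $\Phi=2\,\mathrm{Sym}(\Psi)$ is the one under which the anticommutator has coefficient exactly $1$.

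\textbf{Step 3: the explicit form of $\Phi$.} Set $u=Mx+b$ and $x'=\mathrm{ReLU}(u)$. The entry $M_{ij}$ enters only through $u_i$, with $\partial u_i/\partial M_{ij}=x_j$. The chain rule then gives
\[
\partial L/\partial M_{ij}=-\mathbb{E}_x\big[I_i(x_i-x_i')\,\mathbb{1}(u_i>0)\,x_j\big],
\]
that is, $\Psi=-\mathbb{E}_x[\delta(x)x^\top]$ with $\delta(x)$ as in the statement. Symmetrising yields $\Phi=-\mathbb{E}_x[\delta(x)x^\top+x\delta(x)^\top]$, as claimed.

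\textbf{Main obstacle: rigour of the differentiation.} The derivations above are routine linear algebra; the delicate point is justifying the differentiation itself. First, ReLU is not differentiable at $0$. I would handle this by assuming the event $\{u_i=0\}$ has probability zero for the input distribution (true when the continuous part $x_i\sim\mathrm{Unif}(0,1)$ is nondegenerate for the relevant $M,b$), or otherwise by interpreting $\mathbb{1}(u_i>0)$ as a fixed choice of subgradient. Second, the exchange of $\nabla$ and $\mathbb{E}_x$ must be justified. Since the inputs are bounded, dominated convergence applies, and this justifies the expectation formula.

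A remaining subtlety is that treating $M$'s entries as independent is legitimate precisely because we only use $\Psi$ through the composite chain rule in Step 1. We never impose symmetry of $M$ when computing $\partial L/\partial M_{ij}$, and Step 1 shows that the symmetrisation is automatically produced by the factorisation $M=W^\top W$.
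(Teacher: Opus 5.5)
Your proof is correct and follows the paper's own argument. The Frobenius chain rule gives $\nabla_W L = W(\Psi+\Psi^\top)$, substituting into $\dot M=\dot W^\top W+W^\top\dot W$ yields the anticommutator, and the explicit kernel comes from the Appendix E.1 computation; like the paper's proof, you correctly take $\Phi=2\,\mathrm{Sym}(\nabla_M L)$ rather than the literally stated $\mathrm{Sym}(\nabla_M L)$. Your rigour remarks go beyond the paper, which omits them, and the only adjustment is that you do not need $\Pr(u_i=0)=0$: this fails at $x=0$ when $b_i=0$ and $S>0$, but on $\{x_i=0\}$ the summand $\mathrm{ReLU}(u_i)^2$ is $C^1$, and on $\{x_i>0\}$ the variable $u_i$ has a density because $M_{ii}=\|W_i\|^2>0$.
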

\begin{proof}
Let $W(t)$ follow weight-space gradient flow $\dot W=-\nabla_W L$. Differentiate $M=W^\top W$:
\[
\dot M = \dot W^\top W + W^\top \dot W
= -(\nabla_W L)^\top W - W^\top (\nabla_W L).
\]
To express $\nabla_W L$ in terms of $\nabla_M L$, use $dM=dW^\top W + W^\top dW$ and Frobenius inner products:
\[
dL=\langle \nabla_M L, dM\rangle
=\langle \nabla_M L, dW^\top W + W^\top dW\rangle
=\langle W(\nabla_M L + (\nabla_M L)^\top), dW\rangle.
\]
Hence
\[
\nabla_W L = W(\nabla_M L + (\nabla_M L)^\top)=W\Phi,
\]
where $\Phi=(\nabla_M L+(\nabla_M L)^\top)=2\,\mathrm{Sym}(\nabla_M L)$.
Substituting into $\dot M$ gives
\[
\dot M = -(W\Phi)^\top W - W^\top(W\Phi)
=-(\Phi W^\top W + W^\top W\Phi)
=-(\Phi M + M\Phi)
=-\{M,\Phi\}.
\]
The explicit form of $\Phi$ is computed in Appendix E.1, yielding the stated expression.
\end{proof}
\end{lemmaproofbox}

\begin{lemmaproofbox}
\begin{theorem}[Projector Dynamics (Corollary 7)]
Let $M=\sum_e \lambda_e E_e$ be the spectral decomposition of $M$, with $E_e$ the orthogonal spectral projectors.
Along the Gram flow $\dot M=-\{M,\Phi\}$,
\[
\dot E_e
= \sum_{f\neq e}\frac{\lambda_e+\lambda_f}{\lambda_f-\lambda_e}\,\big(E_e\Phi E_f + E_f\Phi E_e\big).
\]
\end{theorem}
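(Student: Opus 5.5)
The plan is to obtain the formula by substituting the Gram flow of Theorem 6 into the first-order eigenprojection drift of Lemma~\ref{lemma:kato-eigenprojection-coeff}, and then simplifying blockwise using the spectral resolution of Lemma~\ref{lemma:spectral-decomposition}.

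\textbf{Setup.} Fix a time $t$ and an eigenvalue $\lambda_e$ of $M(t)$. I assume $\lambda_e$ stays isolated from the other eigenvalues on a neighbourhood of $t$, so no crossing occurs there. Then $E_e(t)$ is the Riesz projector of the $\lambda_e$-group, and Lemma~\ref{lemma:kato-eigenprojection-coeff} applies with $M^{(1)}=\dot M$:
\[
\dot E_e=-E_e\dot M S_e-S_e\dot M E_e .
\]
By Lemma~\ref{lemma:kato-reduced-resolvent}, the reduced resolvent is explicit for symmetric $M$:
\[
S_e=\sum_{f\neq e}(\lambda_f-\lambda_e)^{-1}E_f .
\]
Substituting gives
\[
\dot E_e=-\sum_{f\neq e}(\lambda_f-\lambda_e)^{-1}\big(E_e\dot M E_f+E_f\dot M E_e\big).
\]

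\textbf{Key computation.} Next I evaluate the off-diagonal blocks $E_e\dot M E_f$. Theorem 6 gives $\dot M=-(M\Phi+\Phi M)$. Using $E_eM=\lambda_eE_e$ and $ME_f=\lambda_fE_f$ from Lemma~\ref{lemma:spectral-decomposition},
\[
E_e\dot M E_f=-(\lambda_e+\lambda_f)\,E_e\Phi E_f ,
\]
and symmetrically $E_f\dot M E_e=-(\lambda_e+\lambda_f)E_f\Phi E_e$. Inserting these into the sum, the two minus signs cancel and each term acquires the coefficient $(\lambda_e+\lambda_f)/(\lambda_f-\lambda_e)$. This is exactly the claimed expression.

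\textbf{Sanity checks.} I would verify three properties of the result. First, $\dot E_e$ is symmetric: $\Phi$ is symmetric, so $(E_e\Phi E_f)^\top=E_f\Phi E_e$. Second, $\sum_e\dot E_e=0$: the coefficient of $E_e\Phi E_f$ coming from index $e$ and from index $f$ have opposite signs and cancel. Third, $E_e\dot E_eE_e=0$: every term is off-diagonal. All three are consistent with differentiating $E_e^2=E_e$ and $\sum_eE_e=I$.

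\textbf{Main obstacle: regularity.} Lemma~\ref{lemma:kato-eigenprojection-coeff} is stated for analytic families, whereas the gradient flow is only guaranteed to be $C^1$ (the ReLU indicator in $\Phi$ may be non-smooth). To avoid relying on analyticity, I would derive the drift formula directly and only assume that $\lambda_e$ is isolated.
\begin{enumerate}
\item Write $E_e$ via the Riesz integral $E_e=-\frac{1}{2\pi i}\oint_\Gamma (M-z)^{-1}\,dz$ over a fixed contour $\Gamma$ enclosing only $\lambda_e$; this makes $E_e$ as smooth as $M$.
\item Differentiate under the integral using $\frac{d}{dt}(M-z)^{-1}=-(M-z)^{-1}\dot M(M-z)^{-1}$.
\item Expand both resolvents spectrally; the residue calculus then reproduces the off-diagonal coefficients $(\lambda_f-\lambda_e)^{-1}$, and the diagonal term $E_e\dot M E_e$ drops out because its residue vanishes.
\end{enumerate}
At eigenvalue crossings the coefficient $1/(\lambda_f-\lambda_e)$ blows up. I would state explicitly that the formula holds only away from such times.
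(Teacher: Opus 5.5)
Your proposal is correct and follows essentially the same route as the paper. Both substitute $\dot M=-(M\Phi+\Phi M)$ into Kato's drift formula $\dot E_e=-E_e\dot M S_{\lambda_e}-S_{\lambda_e}\dot M E_e$, use the explicit reduced resolvent $\sum_{f\neq e}(\lambda_f-\lambda_e)^{-1}E_f$, and collapse the off-diagonal blocks via $E_e(M\Phi+\Phi M)E_f=(\lambda_e+\lambda_f)E_e\Phi E_f$. Your Riesz-integral justification, which needs only a $C^1$ flow with $\lambda_e$ isolated, is a useful addition: the paper invokes the analytic-perturbation lemma directly and never addresses the ReLU-induced non-smoothness or eigenvalue crossings.
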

\begin{proof}
Apply Kato's first-order eigenprojection drift formula (Lemma B.7) with $M^{(1)}=\dot M$.
For an isolated eigenvalue $\lambda_e$ with projector $E_e$, Lemma B.7 gives
\[
\dot E_e = -E_e \dot M\, S_{\lambda_e} - S_{\lambda_e}\dot M\, E_e,
\]
where for symmetric $M$ the reduced resolvent is (Lemma B.6)
\[
S_{\lambda_e}=\sum_{f\neq e}\frac{1}{\lambda_f-\lambda_e}\,E_f.
\]
Substitute $\dot M=-\{M,\Phi\}=-(M\Phi+\Phi M)$ to get
\[
\dot E_e = E_e(M\Phi+\Phi M)S_{\lambda_e} + S_{\lambda_e}(M\Phi+\Phi M)E_e.
\]
Using $ME_f=\lambda_f E_f$ and $E_eM=\lambda_e E_e$, for $f\neq e$ we have
\[
E_e(M\Phi+\Phi M)E_f
=E_eM\Phi E_f + E_e\Phi M E_f
=(\lambda_e+\lambda_f)\,E_e\Phi E_f.
\]
Therefore
\[
E_e(M\Phi+\Phi M)S_{\lambda_e}
=\sum_{f\neq e}\frac{\lambda_e+\lambda_f}{\lambda_f-\lambda_e}\,E_e\Phi E_f,
\]
and similarly
\[
S_{\lambda_e}(M\Phi+\Phi M)E_e
=\sum_{f\neq e}\frac{\lambda_e+\lambda_f}{\lambda_f-\lambda_e}\,E_f\Phi E_e.
\]
Adding yields the stated formula.
\end{proof}
\end{lemmaproofbox}

\begin{lemmaproofbox}
\begin{theorem}[Eigenvalue Drift (Corollary 8)]
Along the Gram flow $\dot M=-\{M,\Phi\}$, the eigenvalues satisfy
\[
\dot\lambda_e = -2\lambda_e\,\frac{\mathrm{tr}(E_e\Phi E_e)}{\dim(E_e)}.
\]
\end{theorem}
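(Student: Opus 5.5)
The plan is to apply Kato's first-order eigenvalue formula (Lemma~\ref{lemma:kato-eigenvalue-coeff}) with $M^{(1)}=\dot M$, exactly as the Projector Dynamics corollary applied Lemma~\ref{lemma:kato-eigenprojection-coeff}. For an eigenvalue $\lambda_e$ of $M$ with spectral projector $E_e$ of rank $d_e=\dim(E_e)=\mathrm{tr}(E_e)$, that lemma gives the weighted-mean drift
\[
\dot{\bar\lambda}_e=\frac{1}{d_e}\,\mathrm{tr}\big(\dot M\,E_e\big).
\]
If $\lambda_e$ is simple this is just $u^\top\dot M u$. In the degenerate case the statement should be read as the drift of the $\lambda_e$-group mean, which is what the normalisation by $\dim(E_e)$ already signals.

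Next I substitute the Gram flow $\dot M=-(M\Phi+\Phi M)$ from the Gradient Flow theorem. This gives
\[
\mathrm{tr}(\dot M E_e)=-\mathrm{tr}(M\Phi E_e)-\mathrm{tr}(\Phi M E_e).
\]
Using idempotence $E_e=E_e^2$ and cyclicity of trace, each term can be sandwiched between copies of $E_e$. In the first, $\mathrm{tr}(M\Phi E_e)=\mathrm{tr}(E_eM\Phi E_e)$. In the second, $\mathrm{tr}(\Phi ME_e)=\mathrm{tr}(E_e\Phi ME_e)$. Then Lemma~\ref{lemma:spectral-decomposition} ($ME_e=E_eM=\lambda_eE_e$) turns each into $\lambda_e\,\mathrm{tr}(E_e\Phi E_e)$. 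Adding the two and dividing by $d_e$ yields
\[
\dot\lambda_e=-2\lambda_e\,\mathrm{tr}(E_e\Phi E_e)/\dim(E_e).
\]

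As a sanity check, I can verify the result directly without Kato. Differentiate $\bar\lambda_e=d_e^{-1}\mathrm{tr}(ME_e)$, noting that $d_e$ is locally constant while the group stays isolated. The product rule gives the extra term $\mathrm{tr}(M\dot E_e)$. Plugging in the Projector Dynamics expression, this term is a sum of terms $\mathrm{tr}(ME_e\Phi E_f)$ and $\mathrm{tr}(ME_f\Phi E_e)$ with $f\neq e$. Each vanishes by cyclicity, because it reduces to traces of products containing $E_fE_e=0$. This confirms that only the diagonal block $E_e\Phi E_e$ contributes.

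The main obstacle is the degenerate-eigenvalue case. A repeated eigenvalue can split under the flow, so $\lambda_e(t)$ is not differentiable as an individual eigenvalue, and the identity holds only for the group mean over the isolated cluster. I would state this explicitly: either assume the eigenvalue stays isolated with constant multiplicity on the interval, or interpret $\lambda_e$ as $\bar\lambda_e$ as in Lemma~\ref{lemma:kato-eigenvalue-coeff}. Under the $\Gamma$-invariance of the association-scheme reduction, the strata fix the multiplicities, so the formula holds exactly for every eigenvalue.
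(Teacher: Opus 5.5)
Your proposal is correct and follows the paper's argument. You apply Kato's eigenvalue-drift lemma with $M^{(1)}=\dot M$, substitute $\dot M=-(M\Phi+\Phi M)$, and use cyclicity of the trace with $ME_e=E_eM=\lambda_eE_e$ to reduce each term to $\lambda_e\,\mathrm{tr}(E_e\Phi E_e)$; you sandwich with $E_e$ first, while the paper reduces to $\lambda_e\,\mathrm{tr}(\Phi E_e)$ and then drops the cross terms, which is the same computation. Your product-rule check (where $\mathrm{tr}(M\dot E_e)$ vanishes) and your caveat that in the degenerate case $\lambda_e$ must be read as the group mean $\bar\lambda_e$ of the Kato lemma are sound additions that the paper leaves implicit.
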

\begin{proof}
Lemma B.8 gives the first-order eigenvalue drift for eigenvalue $\lambda_e$ with spectral projector $E_e$ of rank
$d_e=\mathrm{tr}(E_e)$:
\[
\dot\lambda_e = \frac{1}{d_e}\mathrm{tr}(\dot M\,E_e).
\]
Now $\dot M=-(M\Phi+\Phi M)$, so
\[
\mathrm{tr}(\dot M E_e) = -\mathrm{tr}(M\Phi E_e)-\mathrm{tr}(\Phi M E_e).
\]
Using cyclicity and $ME_e=\lambda_e E_e$,
\[
\mathrm{tr}(M\Phi E_e)=\mathrm{tr}(\Phi E_e M)=\lambda_e\,\mathrm{tr}(\Phi E_e),
\qquad
\mathrm{tr}(\Phi M E_e)=\mathrm{tr}(\Phi \lambda_e E_e)=\lambda_e\,\mathrm{tr}(\Phi E_e).
\]
Also $\mathrm{tr}(\Phi E_e)=\mathrm{tr}(E_e\Phi E_e)$ since cross-terms vanish under the projector decomposition.
Hence
\[
\mathrm{tr}(\dot M E_e) = -2\lambda_e\,\mathrm{tr}(E_e\Phi E_e),
\]
and dividing by $d_e=\dim(E_e)$ gives the claim.
\end{proof}
\end{lemmaproofbox}

\begin{lemmaproofbox}
\begin{theorem}[Spectral Mass Transport (Corollary 9)]
Define
\[
q_{i,e}(t):=(E_e(t))_{ii}=\lambda_e^+\,p_{i,e}(t)\,M_{ii}(t),
\]
interpreted as the “mass” of coordinate/feature $i$ supported in eigenspace $e$. Then
\[
\dot q_{i,e}=\sum_{f\neq e} T^{(i)}_{e\to f},
\qquad
T^{(i)}_{e\to f}
:= 2\Big(\frac{\lambda_e+\lambda_f}{\lambda_e-\lambda_f}\Big)\,(E_e\Phi E_f)_{ii}.
\]
\end{theorem}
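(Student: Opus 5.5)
The plan is to obtain the transport law by taking a diagonal entry of the projector dynamics. Since $q_{i,e}(t)=(E_e(t))_{ii}=e_i^\top E_e(t)e_i$ and $e_i$ does not depend on time, we have $\dot q_{i,e}=(\dot E_e)_{ii}$. So everything reduces to having a closed form for $\dot E_e$ along the Gram flow $\dot M=-\{M,\Phi\}$ and reading off its $(i,i)$ entry.

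\textbf{Step 1: the projector derivative.} I will invoke the Projector Dynamics corollary in the form displayed in the Gradient Flow section, namely
\[
\dot E_e=\sum_{f\neq e}\frac{\lambda_e+\lambda_f}{\lambda_e-\lambda_f}\big(E_e\Phi E_f+E_f\Phi E_e\big).
\]
This form already carries the orientation $\lambda_e-\lambda_f$ that appears in the definition of $T^{(i)}_{e\to f}$. Its derivation follows Kato's eigenprojection formula (Lemma~\ref{lemma:kato-eigenprojection-coeff}) with the reduced resolvent of Lemma~\ref{lemma:kato-reduced-resolvent}, together with $E_eME_f=0$ for $e\neq f$ and $E_e(M\Phi+\Phi M)E_f=(\lambda_e+\lambda_f)E_e\Phi E_f$.

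\textbf{Step 2: the diagonal entry.} I will take the $(i,i)$ entry of each summand. Each $E_e$ is symmetric (Lemma~\ref{lemma:spectral-decomposition}) and $\Phi$ is symmetric by construction, since it is the symmetrised gradient. Hence $(E_f\Phi E_e)^\top=E_e\Phi E_f$, and a matrix and its transpose have the same diagonal, so $(E_f\Phi E_e)_{ii}=(E_e\Phi E_f)_{ii}$. The two terms therefore merge into $2(E_e\Phi E_f)_{ii}$, which gives
\[
\dot q_{i,e}=\sum_{f\neq e}2\,\frac{\lambda_e+\lambda_f}{\lambda_e-\lambda_f}(E_e\Phi E_f)_{ii}=\sum_{f\neq e}T^{(i)}_{e\to f}.
\]
I will also check the identification $(E_e)_{ii}=\lambda_e^+p_{i,e}M_{ii}$ for $\lambda_e>0$. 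It follows from Lemma~\ref{lemma:spectral_measure_probability_distribution}(2), which gives $p_{i,e}=\lambda_e(E_e)_{ii}/\|W_i\|^2$, together with $M_{ii}=\|W_i\|^2$.

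\textbf{Main obstacle: the sign orientation.} The appendix restatement of Projector Dynamics is written with $\lambda_f-\lambda_e$, which is the opposite orientation from the Section~D corollary used in Step~1. The result is only as valid as the orientation of Step~1. I would therefore rederive Step~1 explicitly from Kato's formula, keeping careful track of two things: the sign of $S_{\lambda_e}$ and the sign with which $\Phi$ enters $\dot M$ (recall $\Phi$ carries the minus sign from $\nabla_M L=-\mathbb{E}[\delta x^\top]$).

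The statement is linear in $\Phi$, and the merging in Step~2 is sign-agnostic. So once the orientation of $\dot E_e$ is confirmed as in the Section~D corollary, the displayed law follows exactly as stated. I would also note that well-definedness needs the eigenvalues to be isolated (distinct $\lambda_e\neq\lambda_f$) along the trajectory, which is the standing assumption of Kato's first-order theory.
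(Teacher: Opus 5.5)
Your route is the paper's route: $\dot q_{i,e}=(\dot E_e)_{ii}$, substitute the projector dynamics, and merge the cross terms via $(E_f\Phi E_e)_{ii}=(E_e\Phi E_f)_{ii}$. Step~2 and the identification $(E_e)_{ii}=\lambda_e^+p_{i,e}M_{ii}$ for $\lambda_e>0$ are fine.

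The gap is Step~1, and your ``main obstacle'' paragraph resolves it in the wrong direction. Carry out the rederivation you propose. Kato gives $\dot E_e=-E_e\dot M S_{\lambda_e}-S_{\lambda_e}\dot M E_e$ with $S_{\lambda_e}=\sum_{f\neq e}(\lambda_f-\lambda_e)^{-1}E_f$. The flow $\dot M=-(M\Phi+\Phi M)$ gives $E_e\dot M E_f=-(\lambda_e+\lambda_f)E_e\Phi E_f$ for $f\neq e$. Hence
\[
\dot E_e=\sum_{f\neq e}\frac{\lambda_e+\lambda_f}{\lambda_f-\lambda_e}\bigl(E_e\Phi E_f+E_f\Phi E_e\bigr).
\]
This is the orientation of the boxed appendix Corollary~7, not of the first display in the Gradient Flow section. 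It is just the textbook formula $\dot E_e=\sum_{f\neq e}(\lambda_e-\lambda_f)^{-1}(E_e\dot ME_f+E_f\dot ME_e)$ with the minus sign from the flow inserted. So the check refutes Step~1 rather than confirming it. Your Step~2 then yields $\dot q_{i,e}=-\sum_{f\neq e}T^{(i)}_{e\to f}$, with $T^{(i)}_{e\to f}$ as defined in the statement. The minus sign inside the explicit formula $\Phi=-\mathbb{E}_x[\cdots]$ plays no role. The claim is phrased in terms of $\Phi$ itself, and its sign is fixed entirely by $\dot M=-\{M,\Phi\}$.

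A two-dimensional example confirms this. Take $M=\tfrac12\begin{pmatrix}3&1\\1&3\end{pmatrix}$, with eigenvalues $\lambda_1=2$, $\lambda_2=1$ and $E_1=\tfrac12\begin{pmatrix}1&1\\1&1\end{pmatrix}$, and take $\Phi=\begin{pmatrix}-1&0\\0&1\end{pmatrix}$. Then $\dot M=\begin{pmatrix}3&0\\0&-3\end{pmatrix}$, so the top eigenvector tilts toward $e_1$ and $(E_1(t))_{11}=\tfrac12+3t+O(t^2)$, giving $\dot q_{1,1}=3$. But $(E_1\Phi E_2)_{11}=-\tfrac12$, so the displayed law gives $2\cdot\frac{3}{1}\cdot(-\tfrac12)=-3$.

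The paper's own proof makes the same slip. It says ``substitute Corollary~7'', whose appendix form carries $\lambda_f-\lambda_e$, and then writes the result with $\lambda_e-\lambda_f$. What your argument (and the paper's) actually proves is the transport law with $T^{(i)}_{e\to f}=2\frac{\lambda_e+\lambda_f}{\lambda_f-\lambda_e}(E_e\Phi E_f)_{ii}$. The orientation in the statement holds only with $\Phi$ replaced by $-\Phi$, equivalently for the flow $\dot M=+\{M,\Phi\}$.
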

\begin{proof}
The identity $q_{i,e}=(E_e)_{ii}=\lambda_e^+\,p_{i,e}\,M_{ii}$ is
given by Lemma B.11(2), together with $M_{ii}=\|W_i\|^2$.

Differentiate $q_{i,e}=(E_e)_{ii}$:
\[
\dot q_{i,e} = (\dot E_e)_{ii}.
\]
Substitute Corollary~7. Using symmetry of $\Phi$ and $E_e$, note that for $f\neq e$,
\[
(E_f\Phi E_e)_{ii}
= e_i^\top E_f\Phi E_e e_i
= (E_f e_i)^\top \Phi (E_e e_i)
= (E_e e_i)^\top \Phi (E_f e_i)
= (E_e\Phi E_f)_{ii}.
\]
Thus the paired terms add, giving
\[
\dot q_{i,e}
=\sum_{f\neq e}2\Big(\frac{\lambda_e+\lambda_f}{\lambda_e-\lambda_f}\Big)\,(E_e\Phi E_f)_{ii}.
\]
Defining $T^{(i)}_{e\to f}$ as in the statement yields the transport form.
\end{proof}
\end{lemmaproofbox}

Let $\Gamma \le S_{n_i}$ act on indices of the block $\Omega_i$ and let
$\{P_\gamma\}_{\gamma\in\Gamma}$ be the associated permutation matrices.
Define the commutant (centralizer)
\[
\mathcal{A}_\Gamma := \{A\in\mathbb{R}^{n_i\times n_i}:\; P_\gamma A P_\gamma^\top = A \ \forall \gamma\in\Gamma\}.
\]
Equivalently, $\mathcal{A}_\Gamma=\{A:\;P_\gamma A = A P_\gamma\ \forall\gamma\}$.

\begin{lemmaproofbox}[Gradient equivariance under $\Gamma$-conjugation]\label{lem:grad-equivariance}
\begin{lemma}
Assume $L:\mathbb{R}^{n_i\times n_i}\to\mathbb{R}$ is Fr\'echet differentiable
(with respect to the Frobenius inner product) and satisfies
\[
L(P_\gamma M P_\gamma^\top)=L(M)\qquad \forall \gamma\in\Gamma,\ \forall M.
\]
Then the gradient is $\Gamma$-equivariant:
\[
\nabla L(P_\gamma M P_\gamma^\top)=P_\gamma\big(\nabla L(M)\big)P_\gamma^\top
\qquad \forall \gamma\in\Gamma,\ \forall M.
\]
\end{lemma}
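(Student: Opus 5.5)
The plan is to compare the first-order expansions of both sides and use the orthogonality of $P_\gamma$ (Lemma~\ref{lemma:A.1}). Fix $\gamma\in\Gamma$ and write $Q:=P_\gamma$, so $Q^\top=Q^{-1}$. Define the conjugation map $\mathcal{C}_Q(X):=QXQ^\top$ on $\mathbb{R}^{n_i\times n_i}$. It is linear, and it is an isometry for the Frobenius inner product:
\[
\langle QXQ^\top,QYQ^\top\rangle_F=\mathrm{tr}(QX^\top Q^\top QYQ^\top)=\mathrm{tr}(X^\top Y)=\langle X,Y\rangle_F .
\]
Its adjoint is therefore its inverse, $\mathcal{C}_Q^{*}=\mathcal{C}_{Q^\top}$. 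This is the only structural input the proof needs.

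Next, differentiate the invariance identity $L\circ\mathcal{C}_Q=L$ at an arbitrary $M$ in an arbitrary direction $H$. By Fr\'echet differentiability and linearity of $\mathcal{C}_Q$, the chain rule gives
\[
\langle \nabla L(QMQ^\top),\,QHQ^\top\rangle_F=\langle \nabla L(M),H\rangle_F .
\]
Move the conjugation to the other slot using the adjoint computed above. The left side becomes $\langle Q^\top\nabla L(QMQ^\top)Q,\,H\rangle_F$.

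This holds for every $H$, so by nondegeneracy of the Frobenius inner product we get $Q^\top\nabla L(QMQ^\top)Q=\nabla L(M)$. Conjugating both sides by $Q$ yields $\nabla L(P_\gamma MP_\gamma^\top)=P_\gamma\nabla L(M)P_\gamma^\top$. This holds for every $\gamma\in\Gamma$ and every $M$, which is the claim.

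The argument is routine; the only step needing care is applying the chain rule at the transformed point $QMQ^\top$ rather than at $M$. If one prefers to avoid the Fr\'echet formalism, the fallback is an entrywise argument: $\partial L/\partial M_{ij}$ evaluated at $QMQ^\top$ equals $\partial L/\partial M_{\gamma^{-1}(i)\gamma^{-1}(j)}$ evaluated at $M$, using the index action computed in Lemma~\ref{lemma:A.2}. This gives the same conclusion.

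A useful remark for later use: the symmetrized kernel $\Phi$ inherits the equivariance. Taking $M\in\mathcal{A}_\Gamma$, so that $QMQ^\top=M$, shows $\nabla L(M)\in\mathcal{A}_\Gamma$, and hence $\Phi\in\mathcal{A}_\Gamma$.
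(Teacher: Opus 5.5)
Your proposal is correct and follows essentially the same route as the paper: you differentiate the invariance $L(P_\gamma M P_\gamma^\top)=L(M)$ in an arbitrary direction $H$, move the conjugation across the Frobenius inner product using orthogonality of $P_\gamma$, and conclude by nondegeneracy. Your explicit adjoint computation for $\mathcal{C}_Q$ and the closing remark that $\Phi\in\mathcal{A}_\Gamma$ match the paper's argument and its subsequent lemma.
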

\begin{proof}
Fix $\gamma\in\Gamma$ and define $\Psi(M):=P_\gamma M P_\gamma^\top$.
The invariance hypothesis is $L(\Psi(M))=L(M)$ for all $M$.
Differentiate at $M$ in an arbitrary direction $H$:
\[
\left.\frac{d}{d\varepsilon}\right|_{\varepsilon=0} L(\Psi(M+\varepsilon H))
=
\left.\frac{d}{d\varepsilon}\right|_{\varepsilon=0} L(M+\varepsilon H).
\]
Since $\Psi(M+\varepsilon H)=\Psi(M)+\varepsilon(P_\gamma H P_\gamma^\top)$, the definition of the gradient gives
\[
\big\langle \nabla L(\Psi(M)),\, P_\gamma H P_\gamma^\top\big\rangle_F
=
\big\langle \nabla L(M),\, H\big\rangle_F.
\]
Using Frobenius invariance under orthogonal conjugation,
$\langle A,\,P_\gamma H P_\gamma^\top\rangle_F=\langle P_\gamma^\top A P_\gamma,\,H\rangle_F$,
we obtain
\[
\big\langle P_\gamma^\top \nabla L(\Psi(M)) P_\gamma,\,H\big\rangle_F
=
\big\langle \nabla L(M),\,H\big\rangle_F
\qquad\forall H.
\]
Hence $P_\gamma^\top \nabla L(P_\gamma M P_\gamma^\top) P_\gamma=\nabla L(M)$, which is equivalent to the claim.
\end{proof}
\end{lemmaproofbox}

\begin{lemmaproofbox}[$\mathcal{A}_\Gamma$-membership of the (symmetrized) gradient]
\begin{lemma}
Assume the hypotheses of Lemma~\ref{lem:grad-equivariance}.
If $M\in\mathcal{A}_\Gamma$, then $\nabla L(M)\in\mathcal{A}_\Gamma$.
Consequently,
\[
\Phi(M):=\mathrm{Sym}(\nabla L(M)):=\tfrac12\big(\nabla L(M)+\nabla L(M)^\top\big)
\in\mathcal{A}_\Gamma.
\]
\end{lemma}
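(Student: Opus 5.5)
The claim follows directly from Lemma~\ref{lem:grad-equivariance}, specialized to a fixed point of the conjugation action. Fix $M\in\mathcal{A}_\Gamma$ and an arbitrary $\gamma\in\Gamma$. By the definition of the commutant, $P_\gamma M P_\gamma^\top = M$. Substituting this into the equivariance identity gives
\[
\nabla L(M)=\nabla L(P_\gamma M P_\gamma^\top)=P_\gamma\,\nabla L(M)\,P_\gamma^\top .
\]
Since $\gamma$ was arbitrary, $\nabla L(M)$ satisfies the defining condition of $\mathcal{A}_\Gamma$, so $\nabla L(M)\in\mathcal{A}_\Gamma$. There is no case analysis here; the only point to check is that the equivariance lemma holds at every $M$, and in particular at the invariant ones. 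That is exactly what it asserts, since the hypothesis $L(P_\gamma M P_\gamma^\top)=L(M)$ is global.

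For the symmetrized kernel, I will show that $\mathcal{A}_\Gamma$ is closed under transposition and under linear combinations. Linearity is immediate, because $A\mapsto P_\gamma A P_\gamma^\top$ is linear. For transposition, take $A\in\mathcal{A}_\Gamma$ and transpose the identity $P_\gamma A P_\gamma^\top = A$. This yields $P_\gamma A^\top P_\gamma^\top = A^\top$, so $A^\top\in\mathcal{A}_\Gamma$. The same conclusion also follows from the orbit description: by Lemma~\ref{lemma:basis}, $\mathcal{A}_\Gamma$ is spanned by the orbital matrices $A_r$, and transposing an orbital matrix gives the orbital matrix of the paired orbit $\{(j,i)\}$, which is again an orbit of $\Gamma$. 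Taking $A=\nabla L(M)$, we get $\Phi(M)=\tfrac12(\nabla L(M)+\nabla L(M)^\top)\in\mathcal{A}_\Gamma$.

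The only place I expect any friction is matching this abstract statement to the concrete kernel $\Phi$ of Theorem~6. That kernel omits the factor $\tfrac12$ and involves the bias $b$ and the importances $I$. Neither issue affects membership, since $\mathcal{A}_\Gamma$ is a linear space. The invariance hypothesis $L(P_\gamma M P_\gamma^\top)=L(M)$ does, however, silently require the data distribution, the importances, and the bias to be $\Gamma$-invariant, as they are under uniform sparsity and importance. I would state this as the standing assumption under which the lemma is applied, rather than prove it here.
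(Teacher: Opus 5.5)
Your proof is correct and follows the paper's argument: specialize the equivariance identity to a $\Gamma$-fixed $M$ to get $\nabla L(M)=P_\gamma\nabla L(M)P_\gamma^\top$, then use closure of $\mathcal{A}_\Gamma$ under transposition and linear combinations to conclude $\mathrm{Sym}(\nabla L(M))\in\mathcal{A}_\Gamma$. Your extra remarks are accurate but go beyond what the lemma requires: the paired-orbit view of transposition, the factor-of-two mismatch with the kernel of Theorem~6, and the implicit $\Gamma$-invariance of the data, importances and bias needed for the hypothesis.
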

\begin{proof}
If $M\in\mathcal{A}_\Gamma$, then $P_\gamma M P_\gamma^\top=M$ for all $\gamma$.
Lemma~\ref{lem:grad-equivariance} yields
\[
\nabla L(M)=\nabla L(P_\gamma M P_\gamma^\top)=P_\gamma(\nabla L(M))P_\gamma^\top,
\]
so $\nabla L(M)\in\mathcal{A}_\Gamma$.
Since $P_\gamma A P_\gamma^\top=A$ implies $P_\gamma A^\top P_\gamma^\top=A^\top$,
we also have $\nabla L(M)^\top\in\mathcal{A}_\Gamma$.
Because $\mathcal{A}_\Gamma$ is a linear subspace, their average $\mathrm{Sym}(\nabla L(M))$ belongs to $\mathcal{A}_\Gamma$.
\end{proof}
\end{lemmaproofbox}

\begin{lemmaproofbox}{Algebraic closure of $\mathcal{A}_\Gamma$}\label{lem:commutant-algebra}
$\mathcal{A}_\Gamma$ is a matrix algebra: if $X,Y\in\mathcal{A}_\Gamma$ then
$X+Y\in\mathcal{A}_\Gamma$, $\alpha X\in\mathcal{A}_\Gamma$ for all $\alpha\in\mathbb{R}$,
and $XY\in\mathcal{A}_\Gamma$.
Moreover, if $X\in\mathcal{A}_\Gamma$ then $X^\top\in\mathcal{A}_\Gamma$, hence $\mathrm{Sym}(X)\in\mathcal{A}_\Gamma$.
\tcblower
\begin{proof}
Linearity is immediate from the defining relation $P_\gamma(\cdot)P_\gamma^\top=(\cdot)$.
For products, if $X,Y\in\mathcal{A}_\Gamma$, then for every $\gamma\in\Gamma$,
\[
P_\gamma(XY)P_\gamma^\top=(P_\gamma X P_\gamma^\top)(P_\gamma Y P_\gamma^\top)=XY,
\]
so $XY\in\mathcal{A}_\Gamma$.
For transpose, if $P_\gamma X P_\gamma^\top=X$, then taking transpose gives
$P_\gamma X^\top P_\gamma^\top=X^\top$ (since $P_\gamma^\top=P_\gamma^{-1}$), so $X^\top\in\mathcal{A}_\Gamma$.
Finally, $\mathrm{Sym}(X)=\tfrac12(X+X^\top)$ belongs to $\mathcal{A}_\Gamma$ by linearity.
\end{proof}
\end{lemmaproofbox}

\begin{lemmaproofbox}
\begin{lemma}[Invariant-block closure of Gram flow]\label{cor:invariant-block-closure}
Assume $\Phi(t)\in\mathcal{A}_\Gamma$ for all $t$ and consider the Gram flow
\[
\dot M(t)=-(M(t)\Phi(t)+\Phi(t)M(t)).
\]
If $M(t_0)\in\mathcal{A}_\Gamma$ at some time $t_0$, then $M(t)\in\mathcal{A}_\Gamma$ for all $t\ge t_0$.
\end{lemma}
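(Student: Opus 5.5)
The plan is to prove invariance of $\mathcal{A}_\Gamma$ under the flow by a uniqueness argument for linear ODEs. Since $\Phi(t)$ is treated as a given time-dependent matrix in $\mathcal{A}_\Gamma$, the Gram flow $\dot M = -(M\Phi+\Phi M)$ is a \emph{linear} non-autonomous ODE in $M$. Throughout I assume $\Phi$ is continuous, or at least locally integrable, in $t$, so that Carathéodory/Picard uniqueness applies.

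Fix $\gamma\in\Gamma$ and define the conjugated trajectory $\widetilde M(t):=P_\gamma M(t)P_\gamma^\top$. Differentiate, and insert $P_\gamma^\top P_\gamma=I$ (Lemma~\ref{lemma:A.1}) between the factors:
\[
\dot{\widetilde M}=-P_\gamma(M\Phi+\Phi M)P_\gamma^\top=-\big(\widetilde M\,(P_\gamma\Phi P_\gamma^\top)+(P_\gamma\Phi P_\gamma^\top)\,\widetilde M\big).
\]
By the hypothesis $\Phi(t)\in\mathcal{A}_\Gamma$ we have $P_\gamma\Phi P_\gamma^\top=\Phi$. Hence $\widetilde M$ solves the same linear ODE $\dot X=-(X\Phi+\Phi X)$ as $M$. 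At $t_0$ the two trajectories agree, $\widetilde M(t_0)=M(t_0)$, because $M(t_0)\in\mathcal{A}_\Gamma$. Uniqueness of solutions to linear ODEs with locally integrable coefficients then gives $\widetilde M(t)=M(t)$ for all $t\ge t_0$. Since $\gamma$ was arbitrary, $M(t)\in\mathcal{A}_\Gamma$.

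As an alternative, one could use the projector $\Pi:=\frac{1}{|\Gamma|}\sum_\gamma P_\gamma(\cdot)P_\gamma^\top$ onto $\mathcal{A}_\Gamma$ and observe that $\Pi$ commutes with the linear vector field $X\mapsto -(X\Phi+\Phi X)$. This commutation holds because $\mathcal{A}_\Gamma$ is an algebra (Lemma~\ref{lem:commutant-algebra}), or directly by the conjugation identity above. The error $Y:=M-\Pi M$ then satisfies the same linear ODE with $Y(t_0)=0$, and a Grönwall estimate on $\|Y\|_F$ gives $Y\equiv 0$.

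The only real obstacle is the regularity of $\Phi$. Through the ReLU indicator, $\Phi$ depends on $M$ itself, so the flow is genuinely nonlinear and only piecewise smooth. I would handle this by stating the lemma, as written, for a prescribed $\Phi(t)$ that is integrable in $t$. For the coupled system one can instead invoke the gradient-equivariance lemma (Lemma~\ref{lem:grad-equivariance}). It shows that the vector field $M\mapsto -\{M,\Phi(M)\}$ is itself $\Gamma$-equivariant, so the same conjugation argument applies provided the nonlinear flow has unique solutions, for instance if it is locally Lipschitz away from the ReLU switching set.
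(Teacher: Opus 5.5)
Your proof is correct, but for this lemma the paper argues by tangency rather than by conjugation. Since $\mathcal{A}_\Gamma$ is closed under products, $M(t)\in\mathcal{A}_\Gamma$ and $\Phi(t)\in\mathcal{A}_\Gamma$ give $\dot M(t)\in\mathcal{A}_\Gamma$. So the vector field is tangent to the linear subspace $\mathcal{A}_\Gamma$, and the trajectory stays in it. Your conjugation-plus-uniqueness argument is exactly the one the paper gives for its restated copy of the same lemma (Lemma~\ref{corollary:stability}).

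The two routes cost and buy different things:
\begin{itemize}
\item The tangency route uses multiplicative closure. It also relies tacitly on uniqueness, which the paper never states: the solution of the ODE restricted to $\mathcal{A}_\Gamma$ also solves the full ODE, hence equals $M$. In exchange, it works for any linear subspace $\mathcal{B}$ with $\mathcal{B}\Phi(t)+\Phi(t)\mathcal{B}\subseteq\mathcal{B}$, e.g.\ the Bose--Mesner algebra of an association scheme that is not the centralizer of any permutation group.
\item Your route needs only that $\Phi(t)$ is fixed by conjugation, and it makes uniqueness explicit. Because it is an equivariance statement on all matrices, it extends to the coupled flow through gradient equivariance, as you note.
\end{itemize}

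Two small corrections.

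First, in your Reynolds-operator alternative, write $L_t(X):=-(X\Phi(t)+\Phi(t)X)$. The algebra property only yields $L_t(\mathcal{A}_\Gamma)\subseteq\mathcal{A}_\Gamma$, not $\Pi L_t=L_t\Pi$. The commutation comes from the identity $P_\gamma L_t(X)P_\gamma^\top=L_t(P_\gamma XP_\gamma^\top)$, i.e.\ your second justification.

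Second, your regularity concern is weaker than you suggest. The hypothesis is that $\Phi(t)\in\mathcal{A}_\Gamma$ along the trajectory. So you may freeze $\Phi(t)=\Phi(M(t))$ as a prescribed locally integrable coefficient, and uniqueness for the linear ODE already suffices. A Lipschitz property of the coupled flow is needed only if you want to \emph{derive} $\Phi(t)\in\mathcal{A}_\Gamma$ from $M(t_0)\in\mathcal{A}_\Gamma$.
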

\begin{proof}
Since $M(t_0),\Phi(t_0)\in\mathcal{A}_\Gamma$ and $\mathcal{A}_\Gamma$ is an algebra
(Lemma~\ref{lem:commutant-algebra}), we have $M(t_0)\Phi(t_0)\in\mathcal{A}_\Gamma$ and
$\Phi(t_0)M(t_0)\in\mathcal{A}_\Gamma$, hence $\dot M(t_0)\in\mathcal{A}_\Gamma$.
More generally, whenever $M(t)\in\mathcal{A}_\Gamma$ we get $\dot M(t)\in\mathcal{A}_\Gamma$ by the same closure,
so the vector field is tangent to the linear subspace $\mathcal{A}_\Gamma$ everywhere on $\mathcal{A}_\Gamma$.
Therefore the solution starting at $M(t_0)\in\mathcal{A}_\Gamma$ remains in $\mathcal{A}_\Gamma$ for all $t\ge t_0$.
\end{proof}
\end{lemmaproofbox}

\begin{lemmaproofbox}
\begin{lemma}[Invariant-block closure of Gram flow]\label{corollary:stability}
Assume $\Phi(t)\in\mathcal{A}_\Gamma$ for all $t$ and consider the Gram flow
\[
\dot M(t)=-(M(t)\Phi(t)+\Phi(t)M(t)).
\]
If $M(t_0)\in\mathcal{A}_\Gamma$ at some time $t_0$, then $M(t)\in\mathcal{A}_\Gamma$ for all $t\ge t_0$.
\end{lemma}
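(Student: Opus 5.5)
The plan is to avoid a bare ``tangency'' argument and instead reduce the claim to uniqueness of solutions of a linear ODE, using the group-averaging (Reynolds) operator
\[
\mathcal{R}(X):=\frac{1}{|\Gamma|}\sum_{\gamma\in\Gamma}P_\gamma X P_\gamma^\top .
\]
The key point is that $\Phi(t)$ is treated as a \emph{given} function of $t$ along the actual trajectory. With that reading, the Gram flow $\dot M=-(M\Phi(t)+\Phi(t)M)$ is a linear, time-dependent ODE in $M$.

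First I would record the elementary properties of $\mathcal{R}$.
\begin{enumerate}
\item $\mathcal{R}$ is linear and idempotent, and its image is exactly $\mathcal{A}_\Gamma$. It fixes every element of $\mathcal{A}_\Gamma$, and $P_\delta\mathcal{R}(X)P_\delta^\top=\mathcal{R}(X)$ by reindexing the group sum.
\item $\mathcal{R}$ commutes with multiplication by elements of the commutant. If $\Phi\in\mathcal{A}_\Gamma$, then
\[
P_\gamma M\Phi P_\gamma^\top=(P_\gamma MP_\gamma^\top)(P_\gamma\Phi P_\gamma^\top)=(P_\gamma MP_\gamma^\top)\Phi ,
\]
using orthogonality of $P_\gamma$ (Lemma~\ref{lemma:A.1}). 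Averaging over $\Gamma$ gives $\mathcal{R}(M\Phi)=\mathcal{R}(M)\Phi$, and symmetrically $\mathcal{R}(\Phi M)=\Phi\mathcal{R}(M)$.
\item $\mathcal{R}$ commutes with $d/dt$, since it is a fixed linear map on a finite-dimensional space.
\end{enumerate}

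Next, set $N(t):=\mathcal{R}(M(t))$. Applying $\mathcal{R}$ to the flow and using the hypothesis $\Phi(t)\in\mathcal{A}_\Gamma$ gives
\[
\dot N=-\big(N\Phi(t)+\Phi(t)N\big).
\]
So $N$ solves the same linear equation as $M$. Because $M(t_0)\in\mathcal{A}_\Gamma$, we also have $N(t_0)=\mathcal{R}(M(t_0))=M(t_0)$, so the two share the same initial value. Both $M$ and $N$ solve the linear system $\dot X=\mathcal{L}_t(X)$ with $\mathcal{L}_t(X)=-(X\Phi(t)+\Phi(t)X)$. Since $\|\mathcal{L}_t\|\le 2\|\Phi(t)\|$, Gr\"onwall applied to $\|M-N\|_F$ gives $M(t)=N(t)\in\mathcal{A}_\Gamma$ for all $t\ge t_0$.

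The main obstacle is regularity of $\Phi(t)$. The explicit kernel from Appendix~\ref{lemma:gradient-kernel} contains ReLU indicators $\mathbb{I}(Mx+b>0)$, so $\Phi$ need not be continuous in $t$. I would handle this by requiring only that $t\mapsto\Phi(t)$ be locally integrable (bounded suffices), which is enough for Carath\'eodory uniqueness of the linear system and for the Gr\"onwall step. After the expectation over $x$, the indicators are averaged and $\Phi$ is bounded on bounded sets of $M$, so this assumption is mild.

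I would also note why the naive argument that ``$\dot M\in\mathcal{A}_\Gamma$ whenever $M\in\mathcal{A}_\Gamma$'' is not sufficient on its own. It gives invariance only once uniqueness is available, and the averaging construction above supplies exactly that uniqueness for free.
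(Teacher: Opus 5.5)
Your proposal is correct and is essentially the paper's argument. Both build a second solution of the same linear time-dependent ODE with the same value at $t_0$, then invoke uniqueness.

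The only differences are minor. The paper compares $M(t)$ with $P_\gamma M(t)P_\gamma^\top$ for each fixed $\gamma$, rather than with the group average $\mathcal{R}(M(t))$, so it needs neither the finiteness of $\Gamma$ nor the homomorphism property $P_\delta P_\gamma=P_{\delta\gamma}$. Your version, in turn, states the regularity assumption on $\Phi$ (local integrability) that the paper leaves implicit in ``uniqueness of solutions to linear matrix ODEs''.
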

\begin{proof}
Fix any $\gamma\in\Gamma$ and define $\widetilde M(t):=P_\gamma M(t)P_\gamma^\top$.
Then
\[
\dot{\widetilde M}(t)
= P_\gamma \dot M(t)P_\gamma^\top
= -\big(\widetilde M(t)\,\widetilde\Phi(t)+\widetilde\Phi(t)\,\widetilde M(t)\big),
\]
where $\widetilde\Phi(t):=P_\gamma \Phi(t)P_\gamma^\top$.
Since $\Phi(t)\in\mathcal{A}_\Gamma$, we have $\widetilde\Phi(t)=\Phi(t)$ for all $t$, hence
\[
\dot{\widetilde M}(t)=-(\widetilde M(t)\Phi(t)+\Phi(t)\widetilde M(t)).
\]
Moreover, because $M(t_0)\in\mathcal{A}_\Gamma$, we have
$\widetilde M(t_0)=P_\gamma M(t_0)P_\gamma^\top=M(t_0)$.
Thus $M$ and $\widetilde M$ solve the same ODE with the same initial condition at $t_0$.
By uniqueness of solutions to linear matrix ODEs, $\widetilde M(t)=M(t)$ for all $t\ge t_0$.
Therefore $P_\gamma M(t)P_\gamma^\top=M(t)$ for all $\gamma\in\Gamma$ and all $t\ge t_0$, i.e.
$M(t)\in\mathcal{A}_\Gamma$ for all $t\ge t_0$.
\end{proof}
\end{lemmaproofbox}
\begin{lemmaproofbox}
\begin{theorem}[Stability of Capacity Saturation (Corollary 10)]
A configuration is a spectral fixed point if the gradient kernel $\Phi$ commutes with every Gram spectral projector:
\[
[E_e,\Phi]=0\quad \forall e.
\]
In particular, tight-frame geometries under uniform sparsity satisfy this condition (hence are spectral fixed points).
\end{theorem}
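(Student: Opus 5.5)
The plan is to read ``spectral fixed point'' as: the spectral data of $M$ (the projectors $E_e$) does not move to first order along the Gram flow, and no spectral mass is transported between eigenspaces. I will feed the commutation hypothesis into the drift formulas already derived. The Projector Dynamics result (Corollary 7) expresses $\dot E_e$ as a sum over $f\neq e$ of terms proportional to $E_e\Phi E_f + E_f\Phi E_e$. If $[E_e,\Phi]=0$ for all $e$, then
\[
E_e\Phi E_f = \Phi E_e E_f = 0 \qquad (f\neq e),
\]
by orthogonality of the spectral projectors (Lemma \ref{lemma:spectral-decomposition}). Hence $\dot E_e=0$ for every $e$. The transport coefficients $T^{(i)}_{e\to f}$ of Corollary 9 then vanish identically, so $\dot q_{i,e}=0$ for all $i,e$. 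The only remaining motion is the eigenvalue drift of Corollary 8, $\dot\lambda_e=-2\lambda_e\,\mathrm{tr}(E_e\Phi E_e)/\dim(E_e)$. This is a pure rescaling within each fixed eigenspace. By Lemma \ref{lemma:spectral_correspondence}(iv) the activation-space eigenspaces $\mathrm{Im}(P_e)=W(\mathrm{Im}(E_e))$ also keep their cluster membership. The Dirac-mass structure of Theorem \ref{theorem:spectral_localization} (each $W_i$ in one eigenspace) is therefore preserved to first order; this is what ``spectral fixed point'' should mean.

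For the ``in particular'' claim, I will use the invariant-block machinery. Under uniform sparsity and importance, the loss is invariant under $M\mapsto P_\gamma MP_\gamma^\top$ for every permutation $\gamma$ that also permutes the bias consistently. I will take the bias uniform within a cluster, which is the symmetric configuration. Restrict to a tight-frame cluster whose Gram $M_k$ lies in the Bose--Mesner algebra $\mathcal{A}_\Gamma$ of its symmetry group; by Theorem 4 this is exactly the condition that its eigenspaces are the strata. Then the gradient-equivariance lemma and the $\mathcal{A}_\Gamma$-membership lemma give $\Phi\in\mathcal{A}_\Gamma$. Lemma \ref{lemma:2.6}(iv) shows every stratum projector $S_e$ lies in $\mathcal{A}_\Gamma$. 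Moreover $\mathcal{A}_\Gamma$ is commutative for an association scheme (it is spanned by the mutually orthogonal idempotents $S_e$). Since the $E_e$ of $M_k$ are sums of $S_e$'s, $\Phi$ commutes with each $E_e$. For several clusters I would argue block-diagonally: off-cluster blocks of $M$ vanish by Theorem 3's orthogonality, and I would need to show $\Phi$'s cross-cluster blocks are compatible. The simplest route is to take the product group $\prod_k\Gamma_k$ together with permutations exchanging isomorphic clusters, and show the cross blocks are constant multiples of $J$. These are then killed by the $E_e$ that annihilate $\mathbf 1$, or commute with them.

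The main obstacle is this cross-cluster step, and more basically commutativity. If $\Gamma$ is only transitive and not generously transitive, $\mathcal{A}_\Gamma$ need not be commutative. I would handle that by assuming the cluster's orbital scheme is symmetric, as holds for simplices, polygons and antiprisms. On cross blocks, I would first try showing that the rows of $\Phi_{C_kC_l}$ are $\Gamma_k$-invariant and its columns $\Gamma_l$-invariant, forcing the block to be $cJ$. This block then maps into $\mathrm{span}(\mathbf 1)$, which is $E_0$-invariant for centered frames, and that gives the commutation.
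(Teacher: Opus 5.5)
Your argument is correct and matches the paper's proof. Commutation forces $E_e\Phi E_f=0$ for $f\neq e$, so the projector-drift formula gives $\dot E_e=0$ and hence $\dot q_{i,e}=0$; in the symmetric case $M$ and $\Phi$ both lie in a commutative Bose--Mesner algebra, so $\Phi$ commutes with the spectral projectors of $M$. Your additions are the $\Gamma$-invariant bias that makes the gradient-equivariance lemma applicable, the explicit symmetric-scheme assumption that guarantees commutativity, and the $c_{kl}J$ cross-block argument for several clusters (valid when every cluster is centered, so that $\mathbf{1}$ lies in each local kernel). These fill in steps the paper's proof passes over, and they check out.
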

\begin{proof}
If $[E_e,\Phi]=0$ for all $e$, then for $f\neq e$,
\[
E_e\Phi E_f = E_eE_f\Phi = 0,
\]
since commuting gives $E_e\Phi=\Phi E_e$ and orthogonality gives $E_eE_f=0$.
Plugging into Corollary~7 yields $\dot E_e=0$ for all $e$. Consequently, Corollary~9 gives $\dot q_{i,e}=0$ for all $i,e$:
there is no spectral mass transport between eigenspaces, so the spectral decomposition is dynamically stable.

For the tight-frame/uniform-sparsity claim: Appendix E.2 (Lemma E.1) expands both $M(t)$ and $\Phi(t)$ in the same
association-scheme (orbital) basis $\{A_r\}$:
\[
M(t)=\sum_{r=0}^R \theta_r(t)A_r,\qquad \Phi(t)=\sum_{r=0}^R \phi_r(t)A_r.
\]
When the centralizer is a Bose--Mesner algebra (commutative association scheme), all $A_r$ commute, hence $M(t)$ and $\Phi(t)$ commute.
In a symmetric commuting family, $\Phi$ commutes with the spectral projectors $\{E_e\}$ of $M$ (simultaneous diagonalization),
so $[E_e,\Phi]=0$ for all $e$, i.e.\ a spectral fixed point.
\end{proof}
\end{lemmaproofbox}

\clearpage
\section{Non-uniform Sparsity}
\label{appendix:non_uniform_sparsity}
We observe the same type of projective linearity for uniform sparsity:

\noindent\begin{minipage}{\linewidth}
    \centering
    \begin{minipage}{0.48\textwidth}
        \centering
        \includegraphics[width=\textwidth]{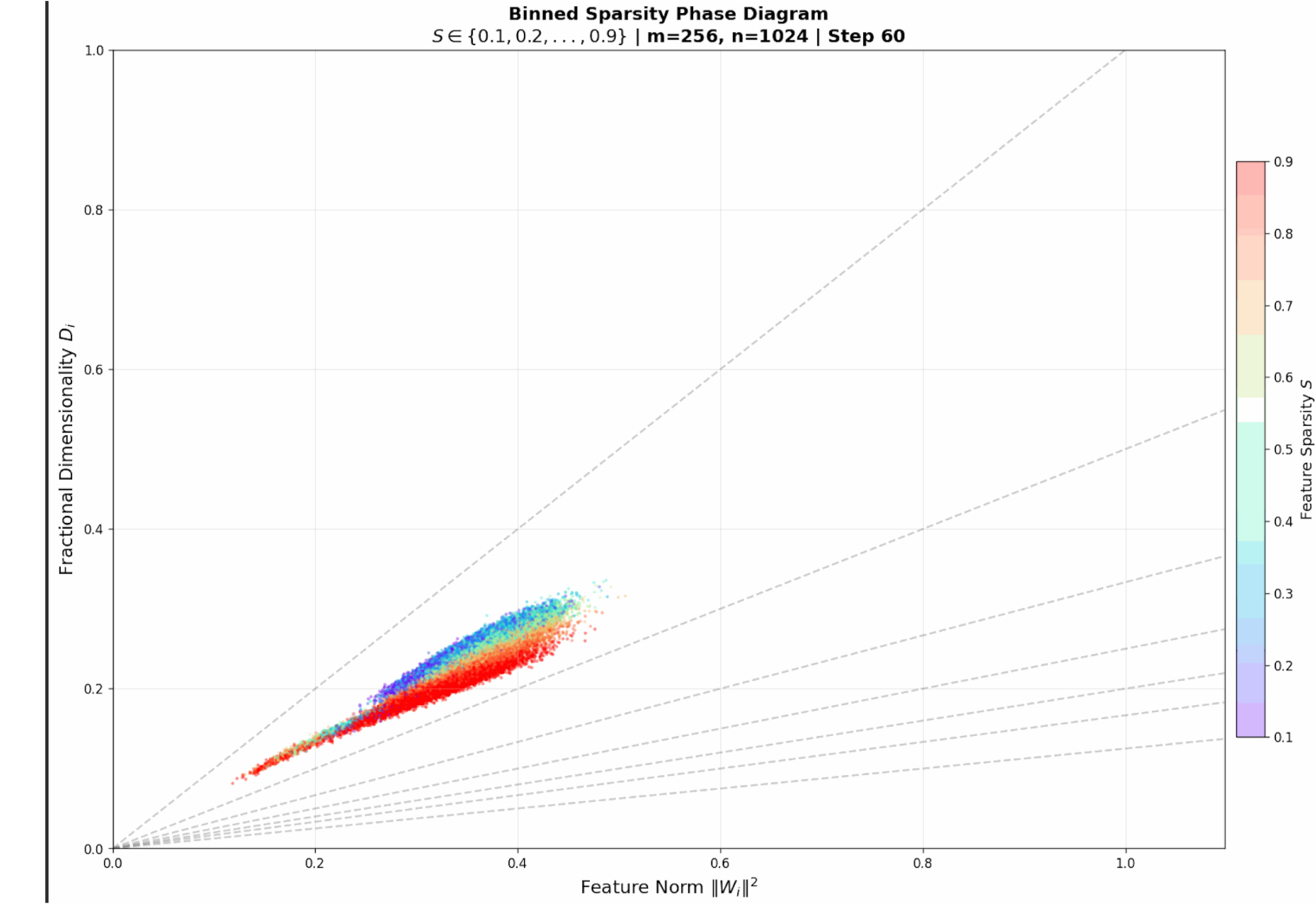}
        \captionof{figure}{Description of image 1}
        \label{fig:nu_1}
    \end{minipage}
    \hfill
    \begin{minipage}{0.48\textwidth}
        \centering
        \includegraphics[width=\textwidth]{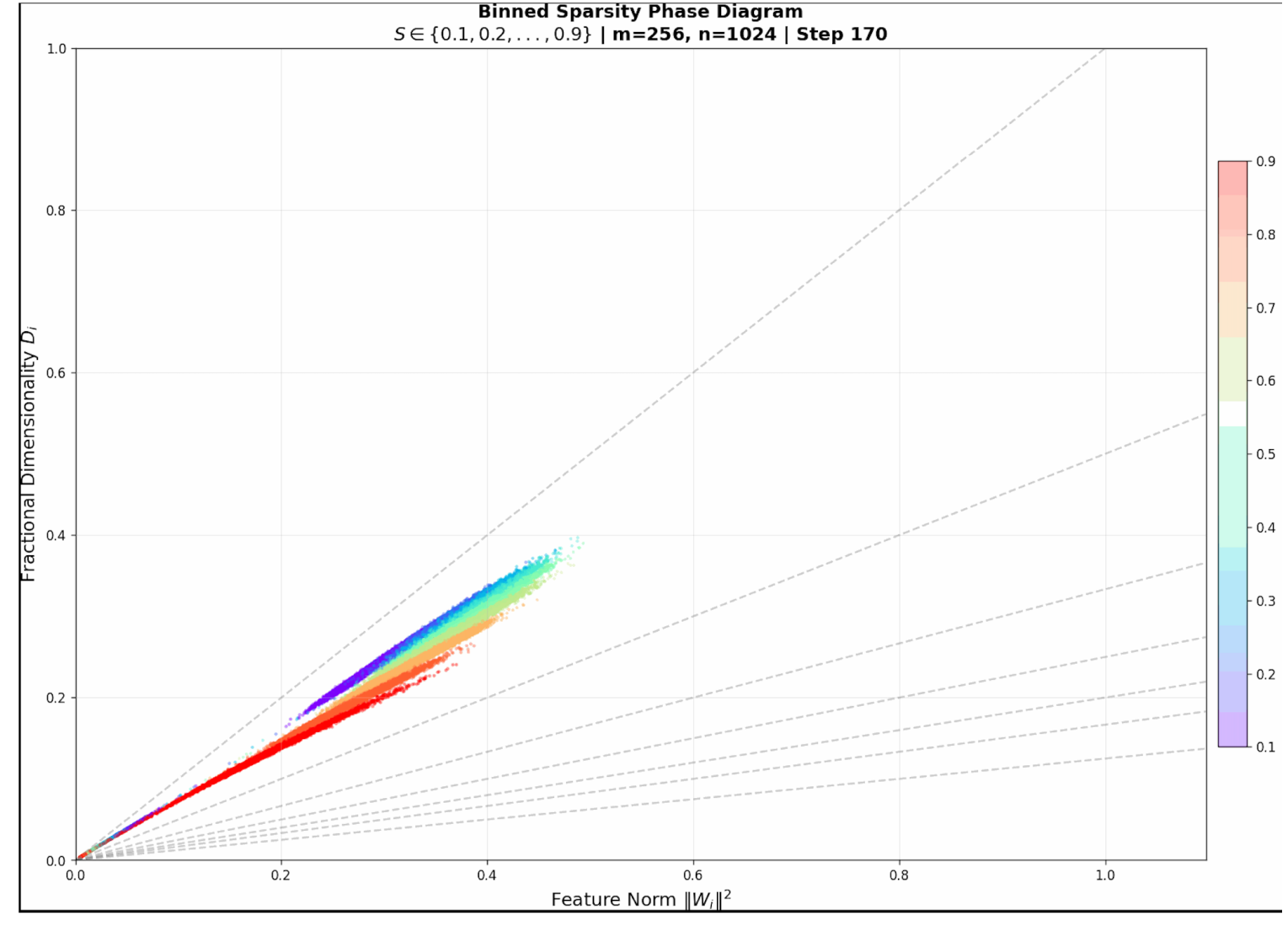}
        \captionof{figure}{Description of image 2}
        \label{fig:nu_2}
    \end{minipage}
\end{minipage}
\vspace{2em}

\noindent\begin{minipage}{\linewidth}
    \centering
    \begin{minipage}{0.48\textwidth}
        \centering
        \includegraphics[width=\textwidth]{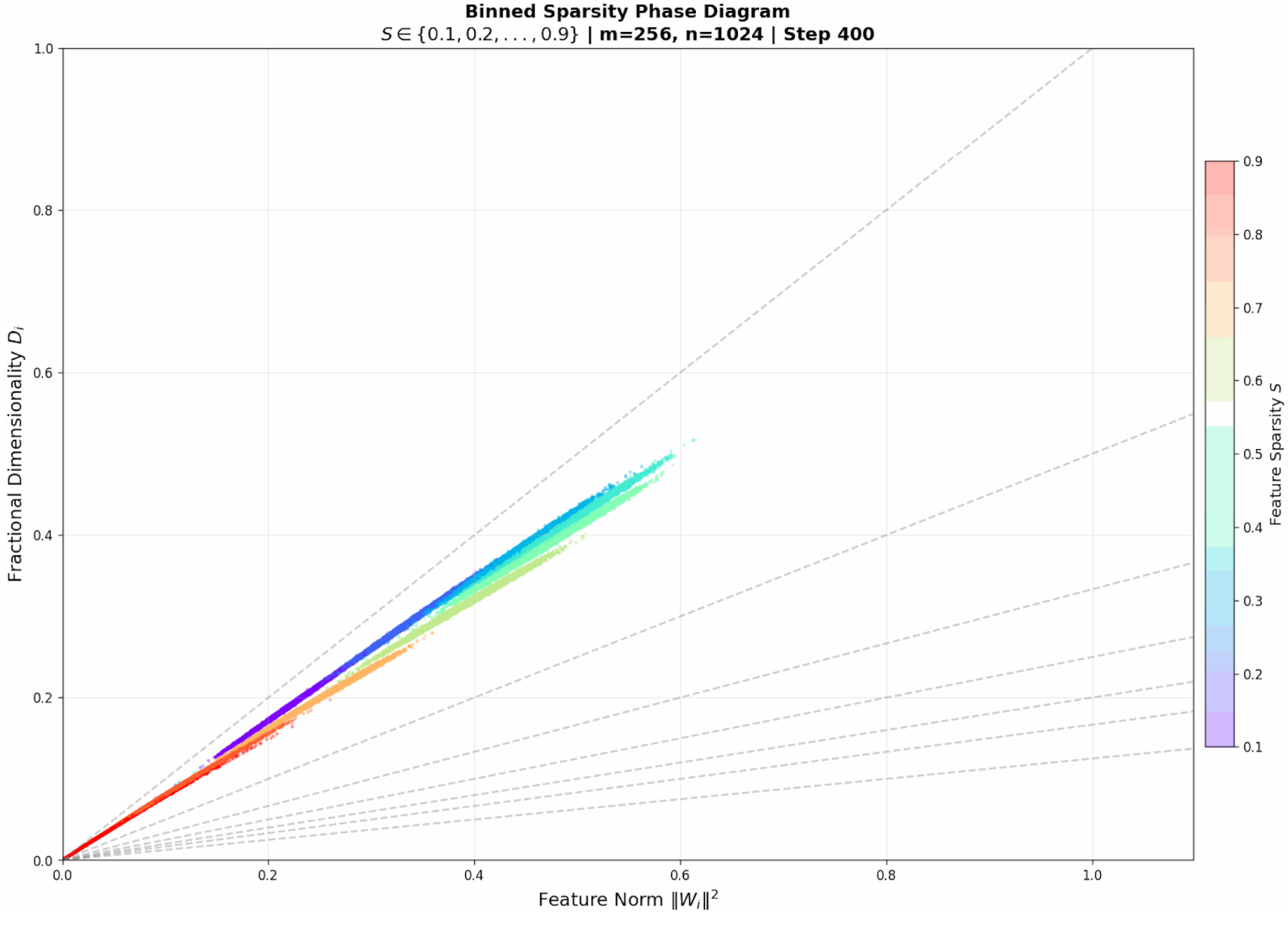}
        \captionof{figure}{Description of image 3}
        \label{fig:nu_3}
    \end{minipage}
    \hfill
    \begin{minipage}{0.48\textwidth}
        \centering
        \includegraphics[width=\textwidth]{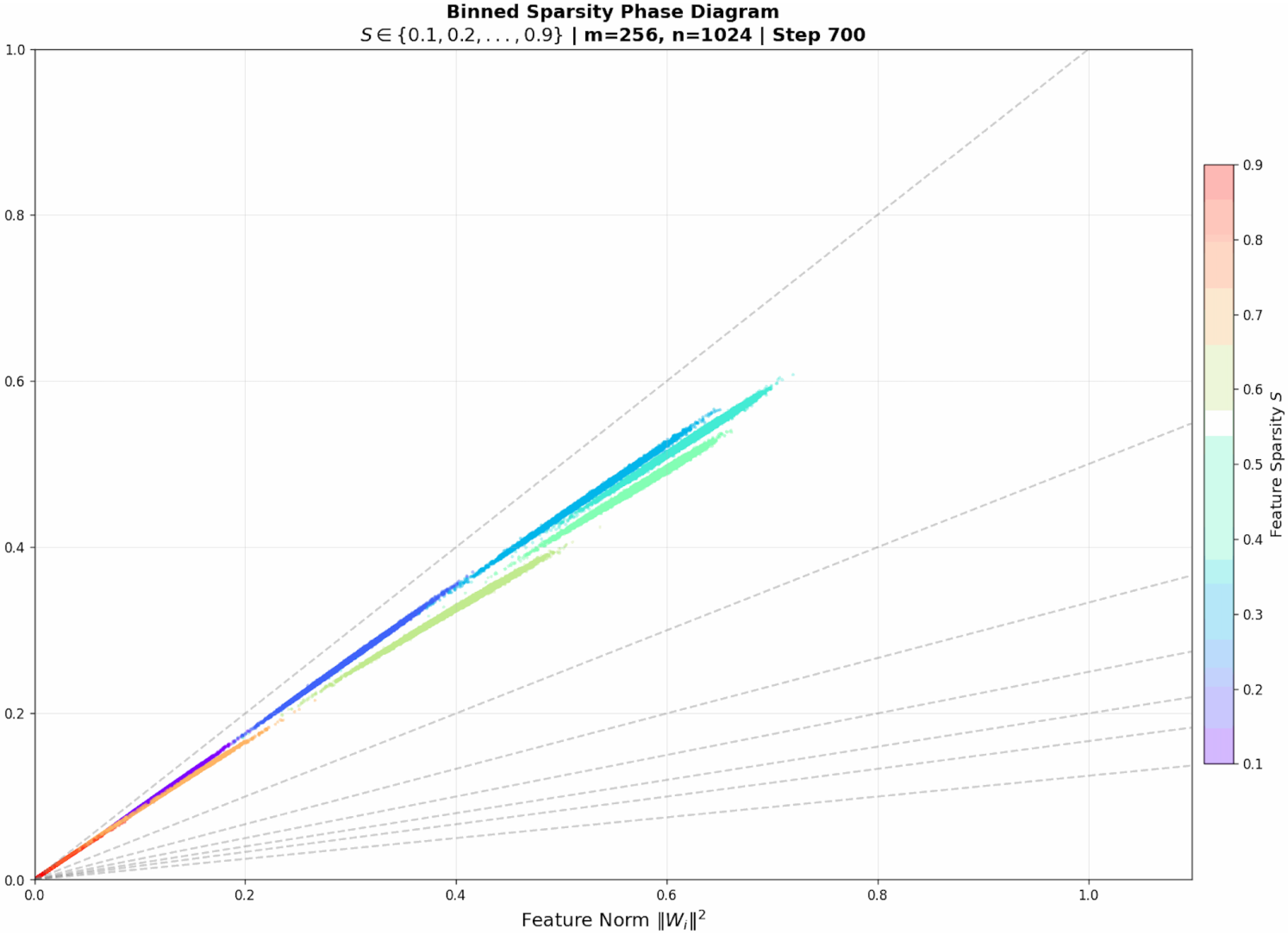}
        \captionof{figure}{Description of image 4}
        \label{fig:nu_4}
    \end{minipage}
\end{minipage}
\vspace{2em}

\noindent\begin{minipage}{\linewidth}
    \centering
    \begin{minipage}{0.48\textwidth}
        \centering
        \includegraphics[width=\textwidth]{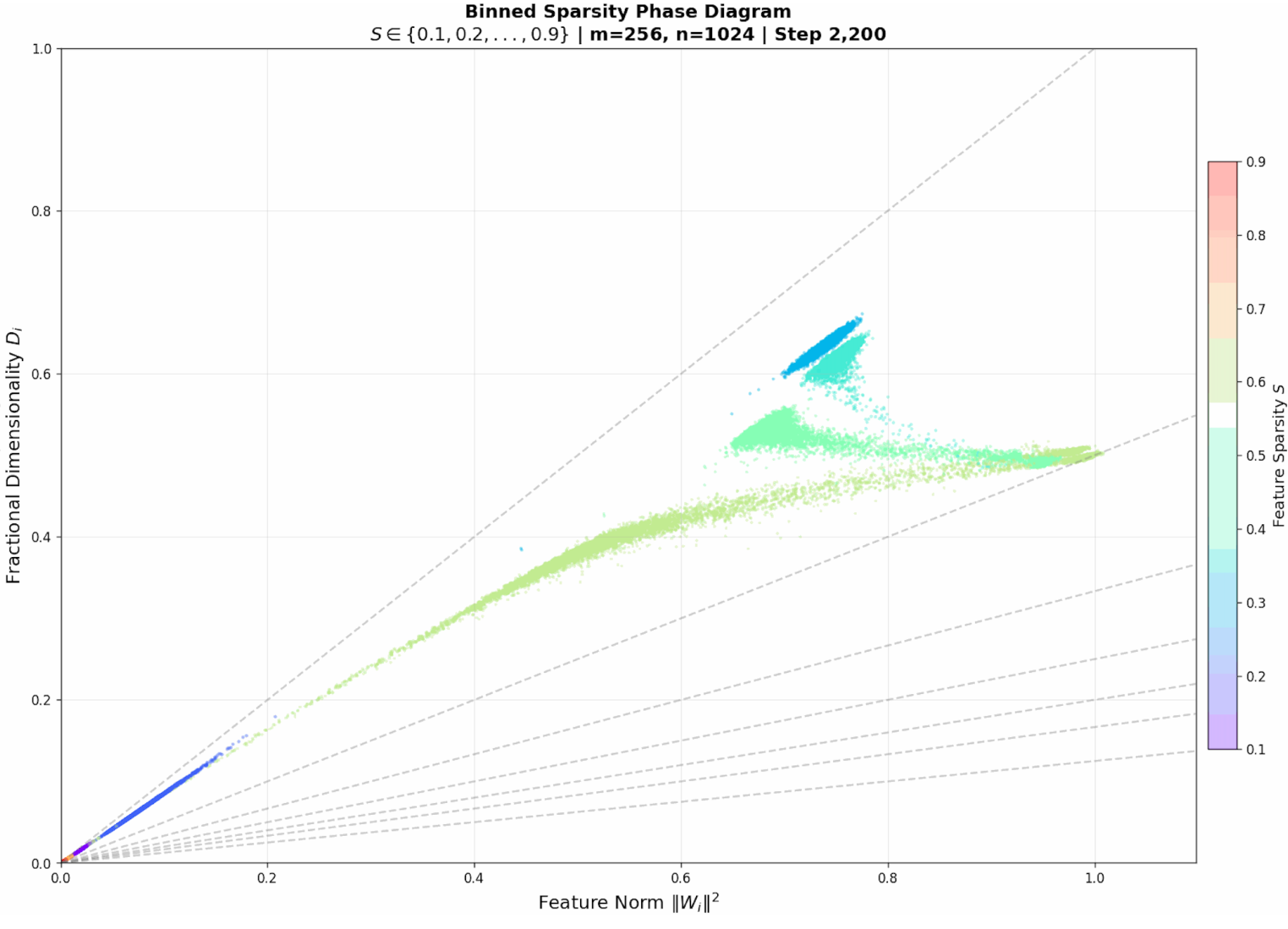}
        \captionof{figure}{Description of image 5}
        \label{fig:nu_5}
    \end{minipage}
    \hfill
    \begin{minipage}{0.48\textwidth}
        \centering
        \includegraphics[width=\textwidth]{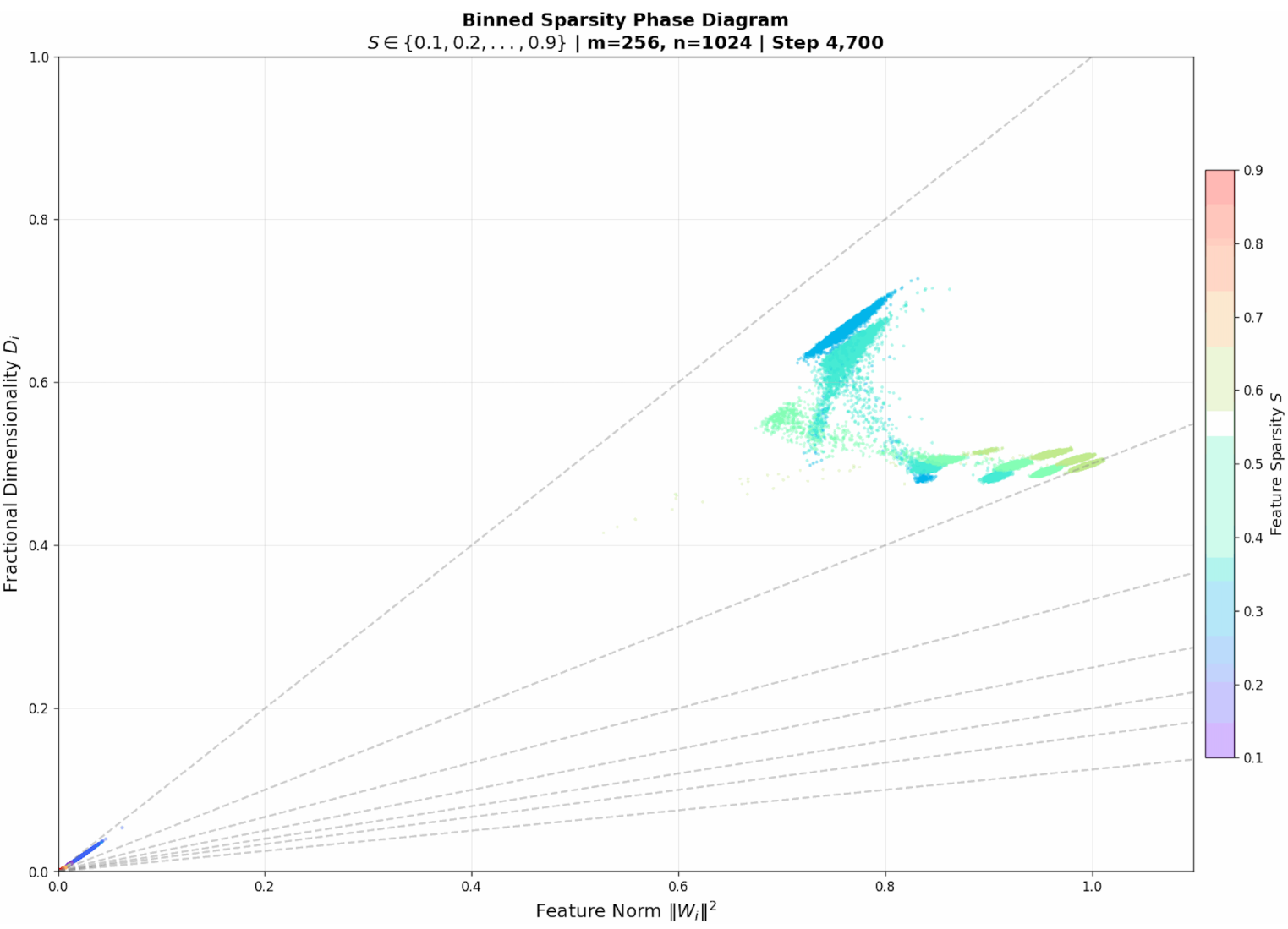}
        \captionof{figure}{Description of image 6}
        \label{fig:nu_6}
    \end{minipage}
\end{minipage}
\vspace{2em}

\noindent\begin{minipage}{\linewidth}
    \centering
    \begin{minipage}{0.48\textwidth}
        \centering
        \includegraphics[width=\textwidth]{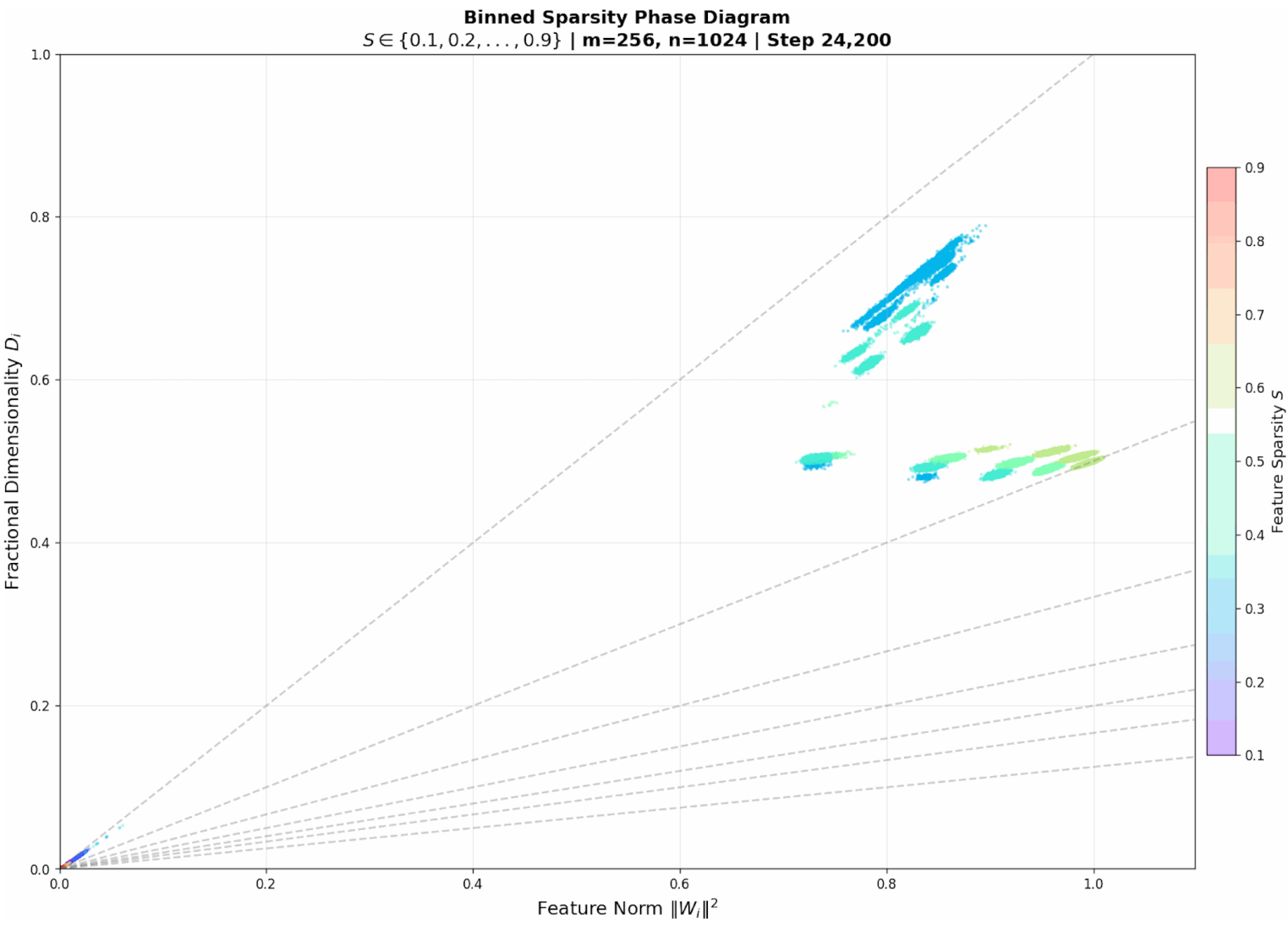}
        \captionof{figure}{Description of image 7}
        \label{fig:nu_7}
    \end{minipage}
\end{minipage}
\vspace{2em}

\clearpage
\section{Related Work}
\label{appendix:related_work}

Before we begin our rigorous examination of the latter model, let us provide context for other relevant work on polysemanticity:
\begin{itemize}

\item \textbf{Lecomte et al. (2023/2024) \cite{lecomte2023incidental} Incidental Polysemanticity}
Lecomte et al.\ argue polysemanticity can arise \emph{incidentally}, even when capacity suffices for monosemantic codes. Using theory and experiments, they show random initialization, regularization, and neural noise can fortuitously align multiple features to one neuron early in training, after which gradient dynamics reinforce the overlap. This complements capacity-driven superposition: polysemanticity can be a stable attractor without being strictly necessary for performance.

\item \textbf{Adler \& Shavit (2024/2025) \cite{adler2024complexity} — Complexity of Computing in Superposition}
Adler and Shavit develop complexity-theoretic bounds for \emph{computing} with superposed features, proving that a broad class of tasks (e.g., permutations, pairwise logic) needs at least $\Omega(\sqrt{m'\log m'})$ neurons and $\Omega(m'\log m')$ parameters to compute $m'$ outputs in superposition. They also provide near-matching constructive upper bounds (e.g., pairwise AND with $O(\sqrt{m'}\log m')$ neurons), revealing large but not unbounded efficiency gains. The results distinguish “representing” features in superposition (much cheaper) from “computing” with them (provably costlier), setting principled limits that any interpretability or compression method must respect.

\item \textbf{Klindt et al. (2025) \cite{klindt2025sparsecodes} — From Superposition to Sparse Codes}
Klindt et al.\ propose a principled route from superposed activations to interpretable factors by leveraging three components: identifiability (classification representations recover latent features up to an invertible linear transform), sparse coding/dictionary learning to find a concept-aligned basis, and quantitative interpretability metrics. In this view, deep nets often linearly overlay concepts; post-hoc sparse coding can invert the mixing and yield monosemantic directions without retraining the model.

\item \textbf{Hollard et al. (2025) \cite{hollard2025lowparam} Superposition in Low-Parameter Vision Models}
Hollard et al.\ study modern $<!1.5$M-parameter CNNs and show that bottleneck designs and superlinear activations exacerbate interference (feature overlap) in feature maps, limiting accuracy scaling. By systematically varying bottleneck structures, they identify design choices that reduce interference and introduce a “NoDepth Bottleneck” that improves ImageNet scaling within tight parameter budgets.

\item \textbf{Pertl et al. (2025)\cite{pertl2025gnn} — Superposition in GNNs}
Pertl et al.\ analyze superposition in graph neural networks by extracting feature directions at node and graph levels and studying their basis-invariant overlaps. They observe a width-driven phase pattern in overlap, topology-induced mixing at the node level, and mitigation via sharper pooling that increases axis alignment; shallow models can fall into metastable low-rank embeddings.

\item \textbf{Hesse et al. (2025) \cite{hesse2025disentangle} Disentangling Polysemantic Channels in CNNs}
Hesse et al.\ present an algorithmic surgery to split a polysemantic CNN channel into multiple channels, each responding to a single concept, by exploiting distinct upstream activation patterns that feed the mixed unit. The method rewires a pretrained network (without retraining) to produce explicit, monosemantic channels, improving clarity of feature visualizations and enabling standard mechanistic tools. Unlike “virtual” decompositions, this yields concrete network components aligned to concepts, demonstrating a practical path to reduce superposition post hoc.

\item \textbf{Dreyer et al. (2024) \cite{dreyer2024pure} PURE: Circuits-Based Decomposition}
Dreyer et al.\ introduce PURE, a post-hoc circuits method that decomposes a polysemantic neuron into multiple \emph{virtual} monosemantic units by identifying distinct upstream subgraphs (circuits) responsible for each concept. The approach improves concept-level visualizations (e.g., via CLIP-based evaluation) and does not modify weights, instead reattributing behavior across disjoint computation paths. PURE shows that much polysemanticity reflects circuit superposition; separating circuits yields purer conceptual units without architectural changes.
\end{itemize}

\clearpage
\section{Plots}
\label{appendix:plots}

\noindent\begin{minipage}{\linewidth}
    \centering
    \begin{minipage}{0.48\textwidth}
        \centering
        \includegraphics[width=\textwidth]{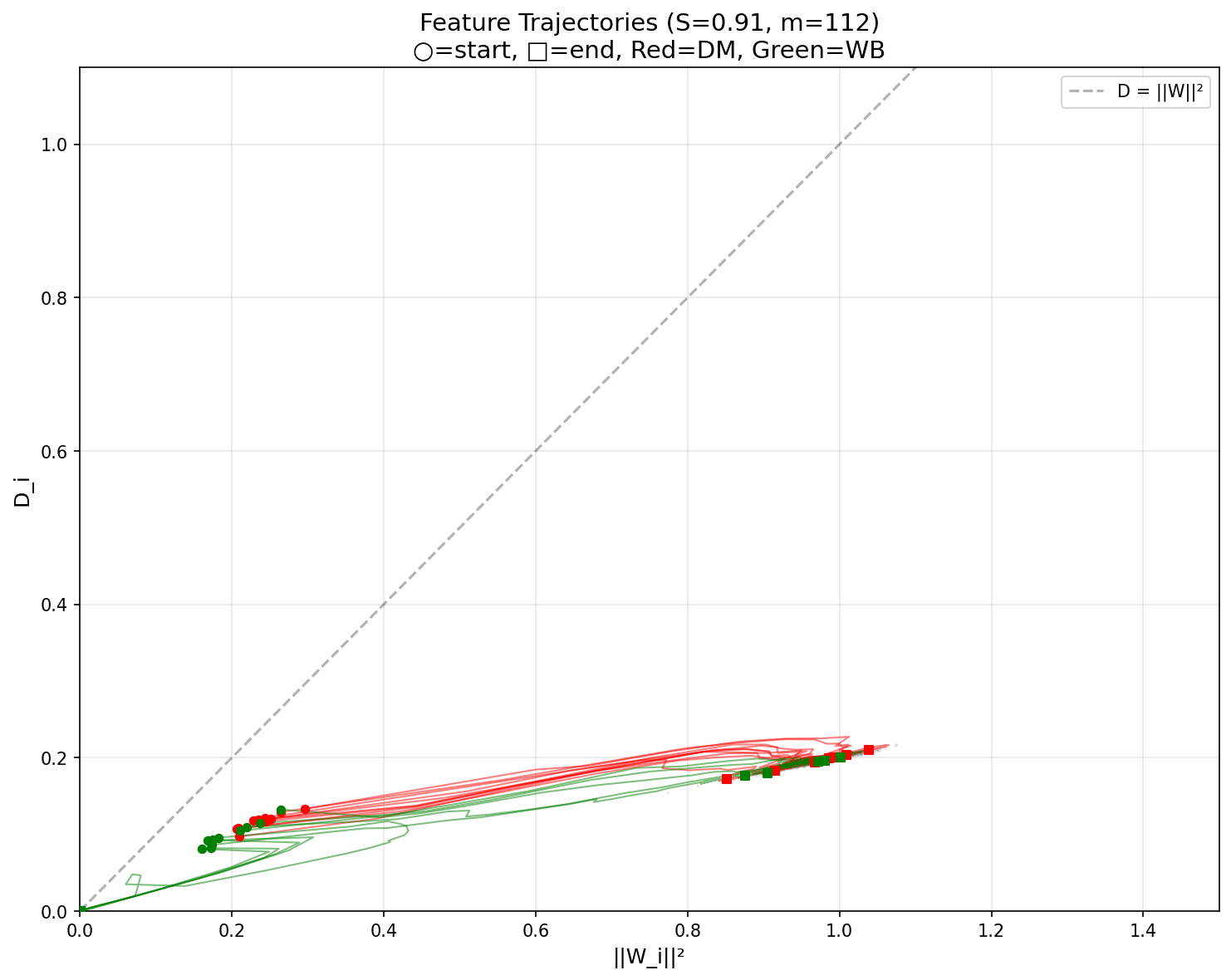}
        \captionof{figure}{Description of image 5}
        \label{fig:dark_matter_tracers}
    \end{minipage}
    \hfill
    \begin{minipage}{0.48\textwidth}
        \centering
        \includegraphics[width=\textwidth]{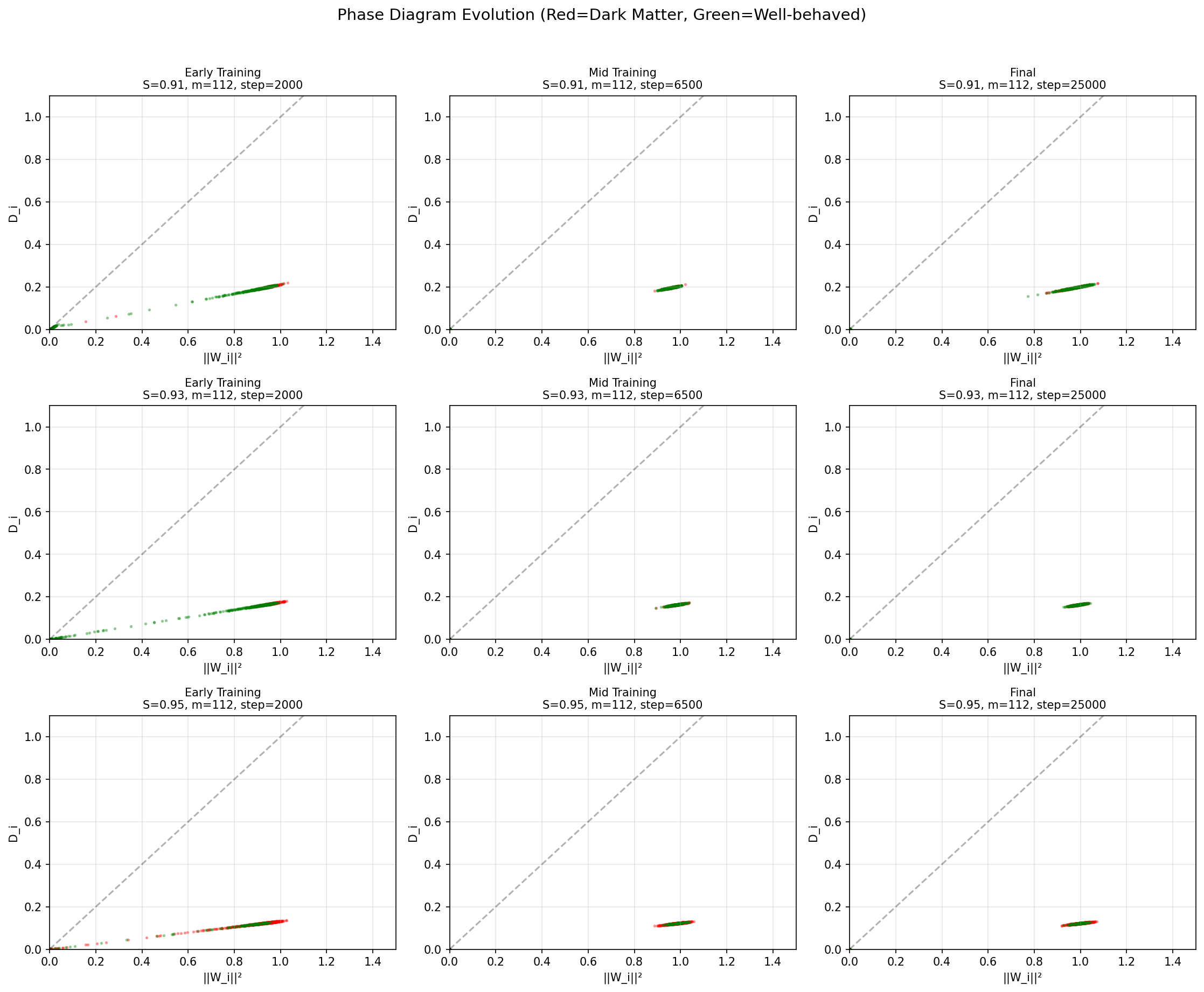}
        \captionof{figure}{Description of image 6}
        \label{fig:dark_matter_comparison}
    \end{minipage}
\end{minipage}
\vspace{2em}

\noindent\begin{minipage}{\linewidth}
    \centering
    \begin{minipage}{0.48\textwidth}
        \centering
        \includegraphics[width=\textwidth]{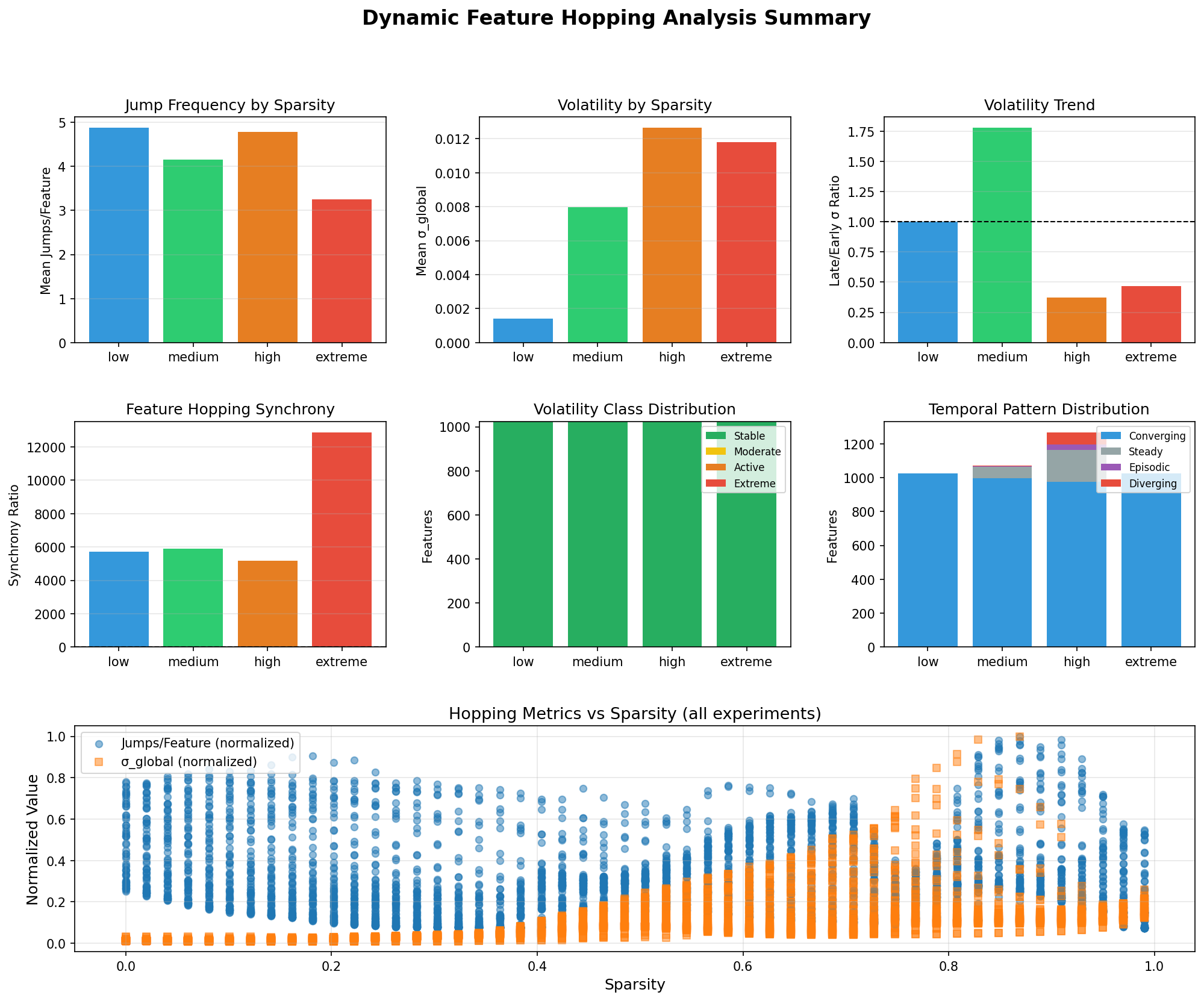}
        \captionof{figure}{Description of image 5}
        \label{fig:hopping_summary}
    \end{minipage}
    \hfill
    \begin{minipage}{0.48\textwidth}
        \centering
        \includegraphics[width=\textwidth]{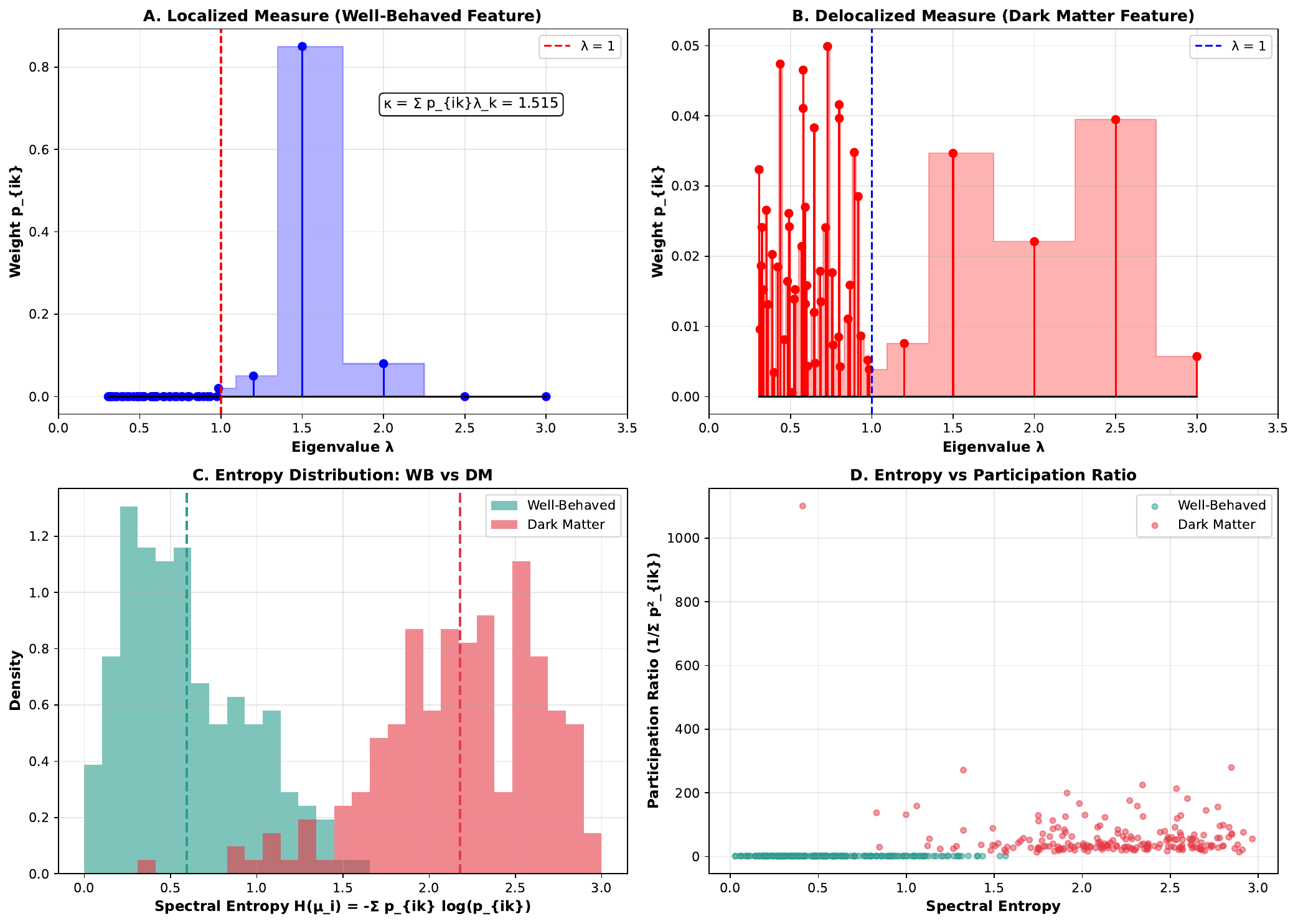}
        \captionof{figure}{Description of image 6}
        \label{fig:spectral_measure_ill}
    \end{minipage}
\end{minipage}
\vspace{2em}

\clearpage

\noindent\begin{minipage}{\linewidth}
    \centering
    \begin{minipage}{0.31\textwidth}
        \centering
        \includegraphics[width=\textwidth]{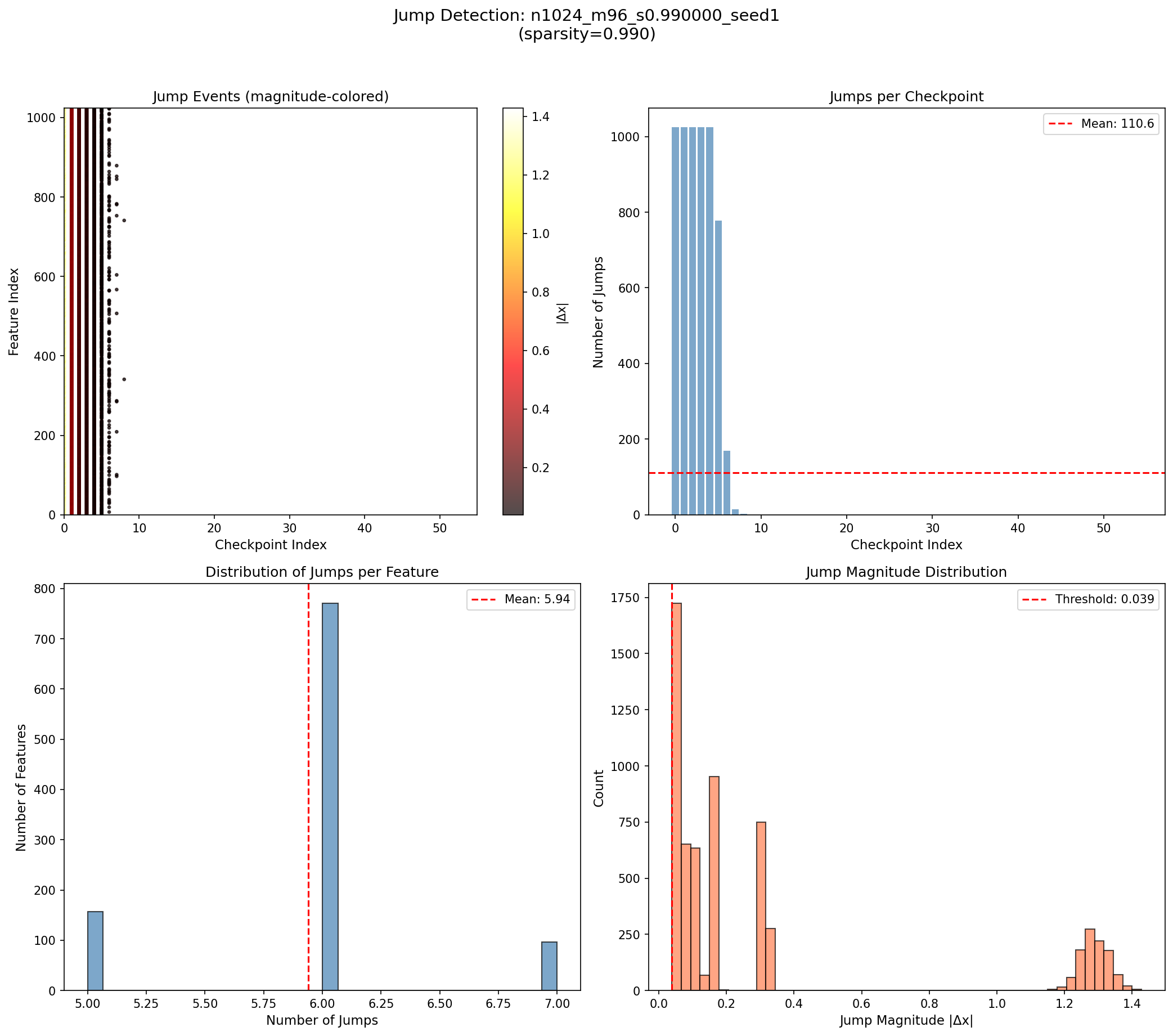}
        \captionof{figure}{Jump m96}
        \label{fig:m96_jump}
    \end{minipage}\hfill
    \begin{minipage}{0.31\textwidth}
        \centering
        \includegraphics[width=\textwidth]{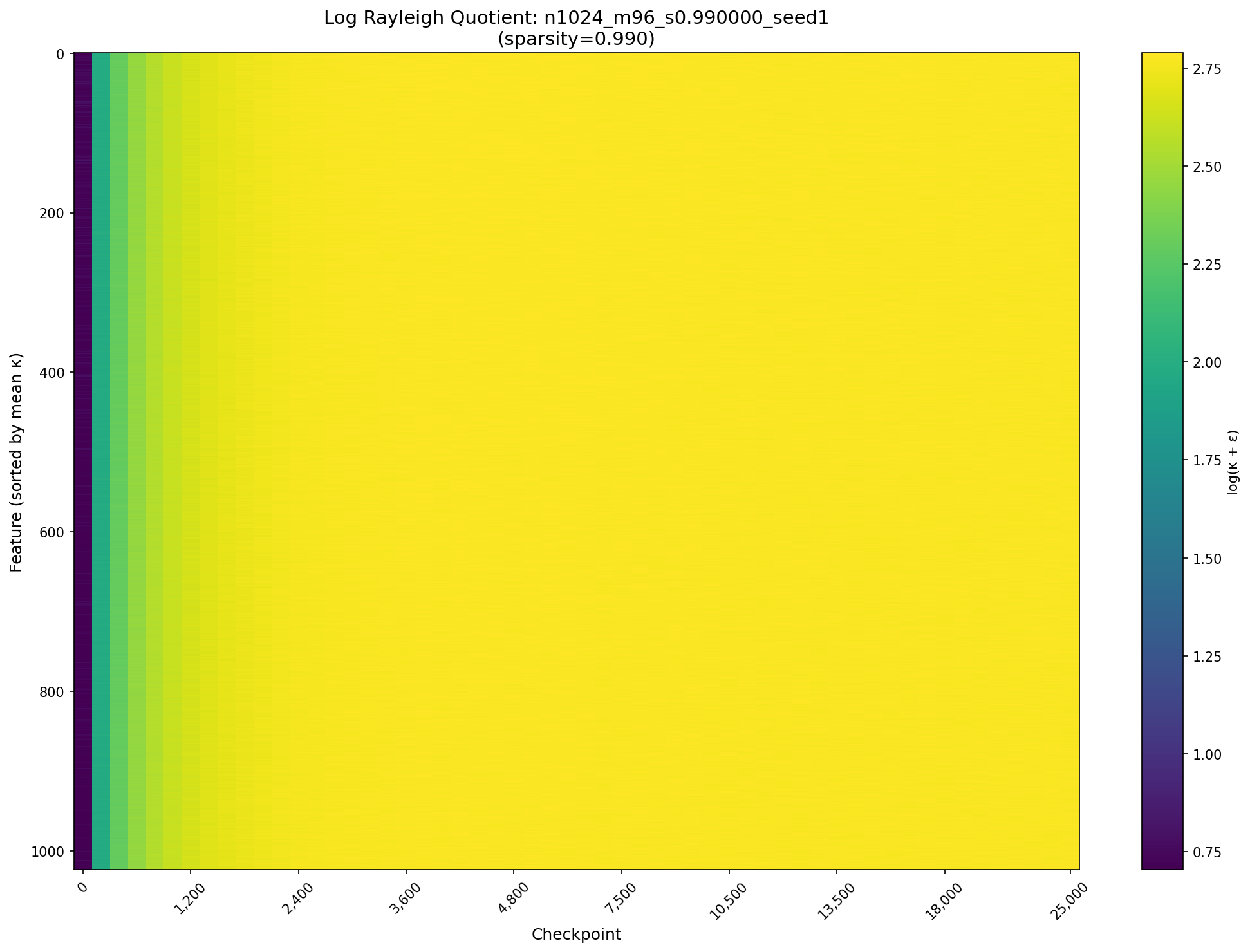}
        \captionof{figure}{Rayleigh m96}
    \end{minipage}\hfill
    \begin{minipage}{0.31\textwidth}
        \centering
        \includegraphics[width=\textwidth]{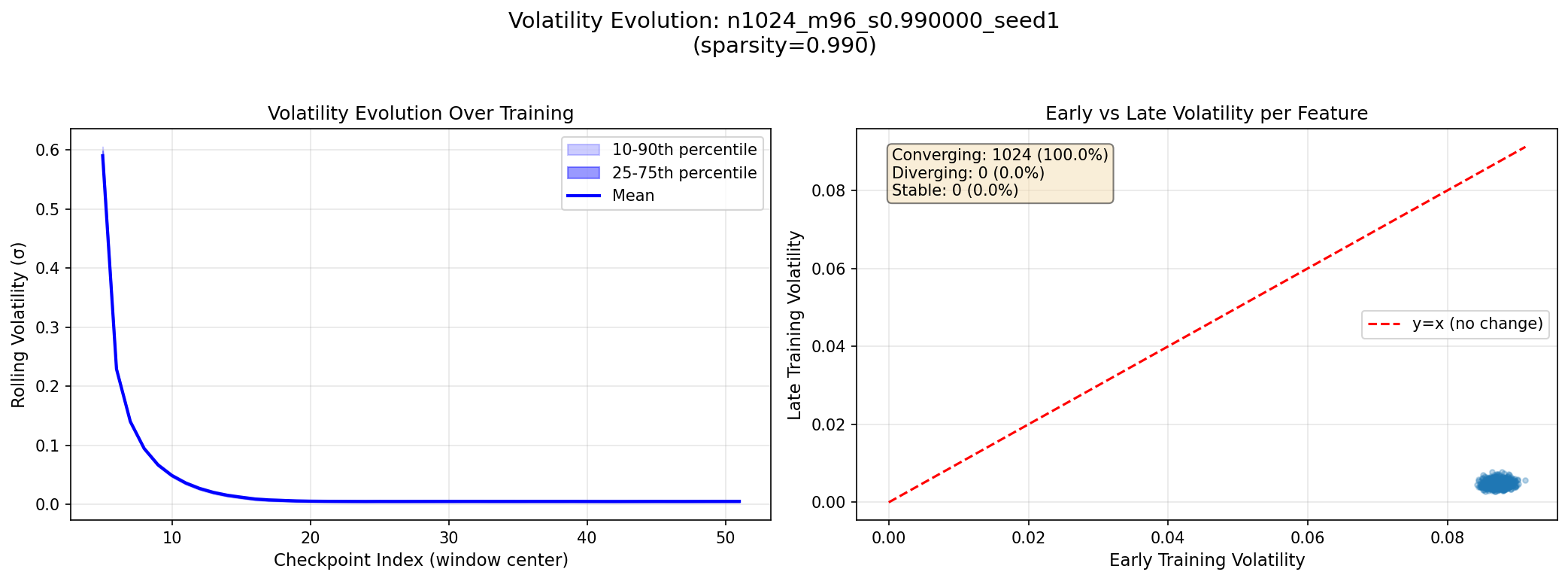}
        \captionof{figure}{Volatility m96}
    \end{minipage}
\end{minipage}
\vspace{2em}

\noindent\begin{minipage}{\linewidth}
    \centering
    \begin{minipage}{0.31\textwidth}
        \centering
        \includegraphics[width=\textwidth]{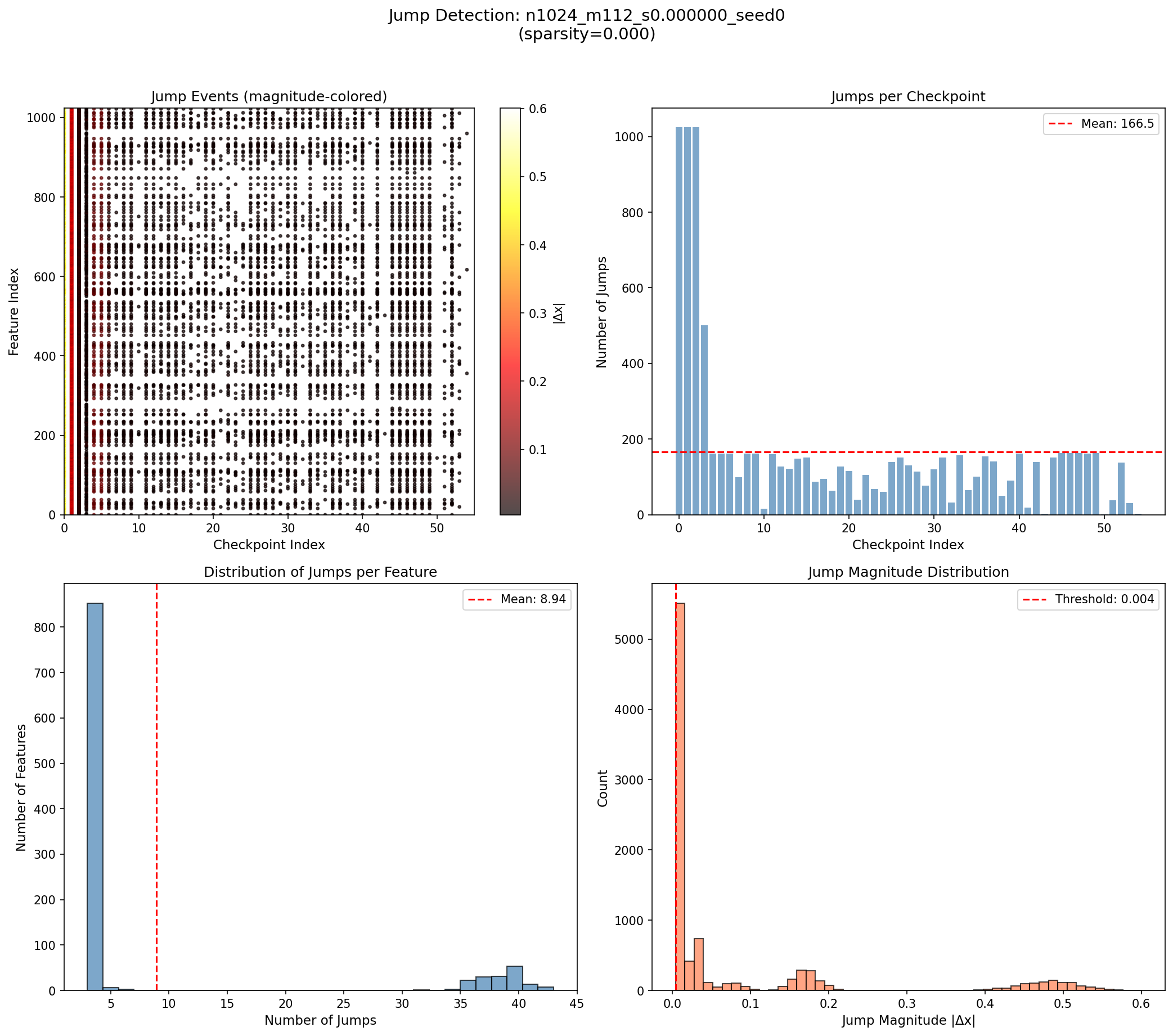}
        \captionof{figure}{Jump m112-0}
        \label{fig:m112_0_jump}
    \end{minipage}\hfill
    \begin{minipage}{0.31\textwidth}
        \centering
        \includegraphics[width=\textwidth]{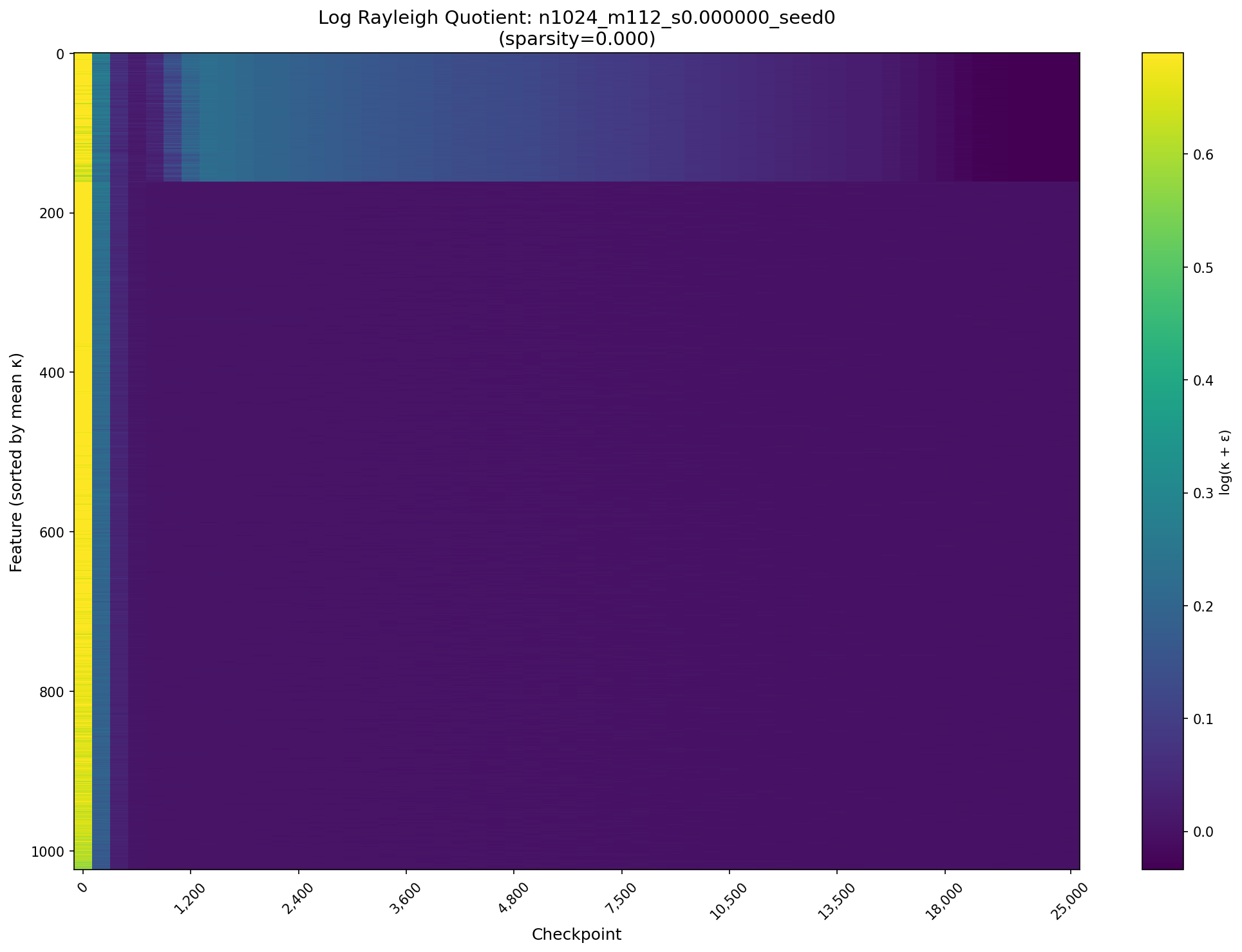}
        \captionof{figure}{Rayleigh m112-0}
    \end{minipage}\hfill
    \begin{minipage}{0.31\textwidth}
        \centering
        \includegraphics[width=\textwidth]{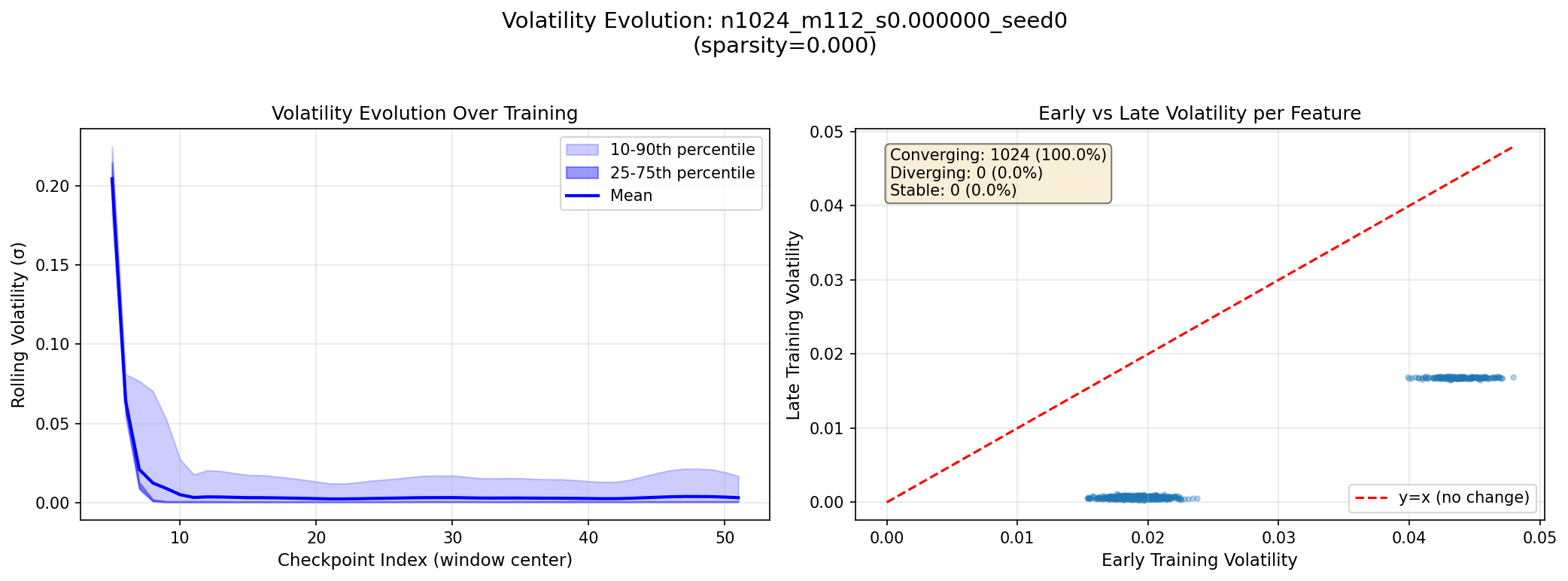}
        \captionof{figure}{Volatility m112-0}
    \end{minipage}
\end{minipage}
\vspace{2em}

\noindent\begin{minipage}{\linewidth}
    \centering
    \begin{minipage}{0.31\textwidth}
        \centering
        \includegraphics[width=\textwidth]{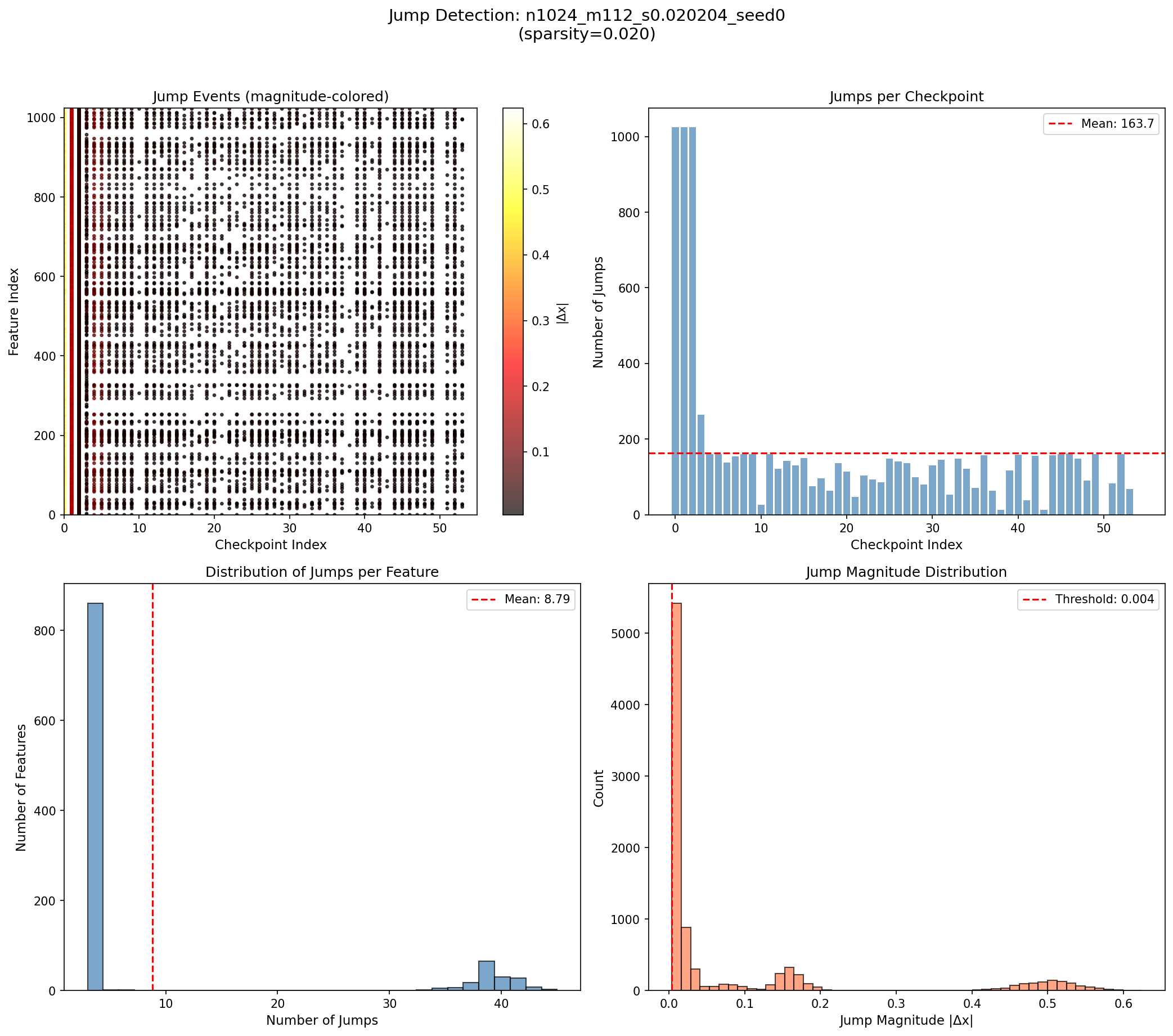}
        \captionof{figure}{Jump m112-2}
        \label{fig:m112_2_jump}
    \end{minipage}\hfill
    \begin{minipage}{0.31\textwidth}
        \centering
        \includegraphics[width=\textwidth]{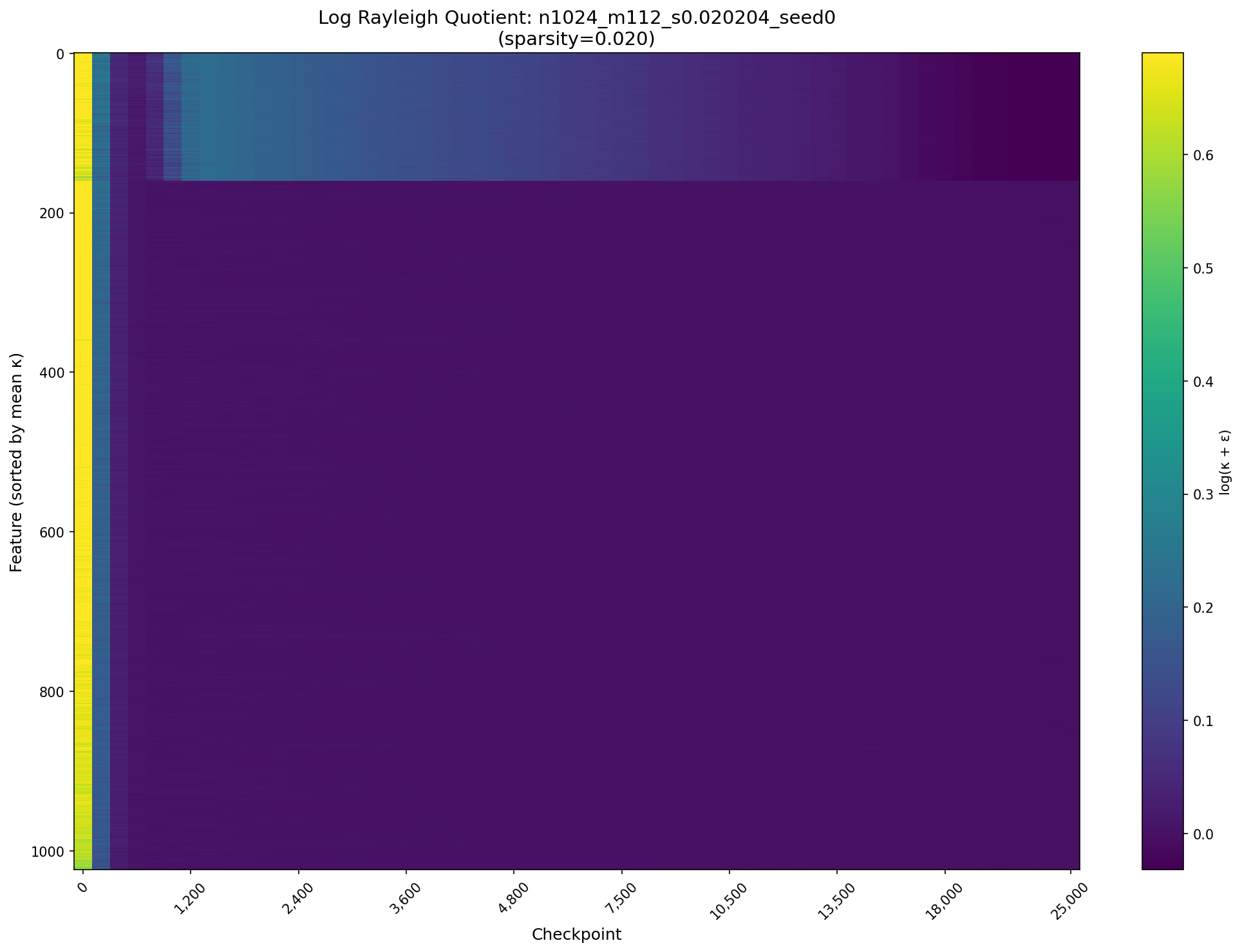}
        \captionof{figure}{Rayleigh m112-2}
    \end{minipage}\hfill
    \begin{minipage}{0.31\textwidth}
        \centering
        \includegraphics[width=\textwidth]{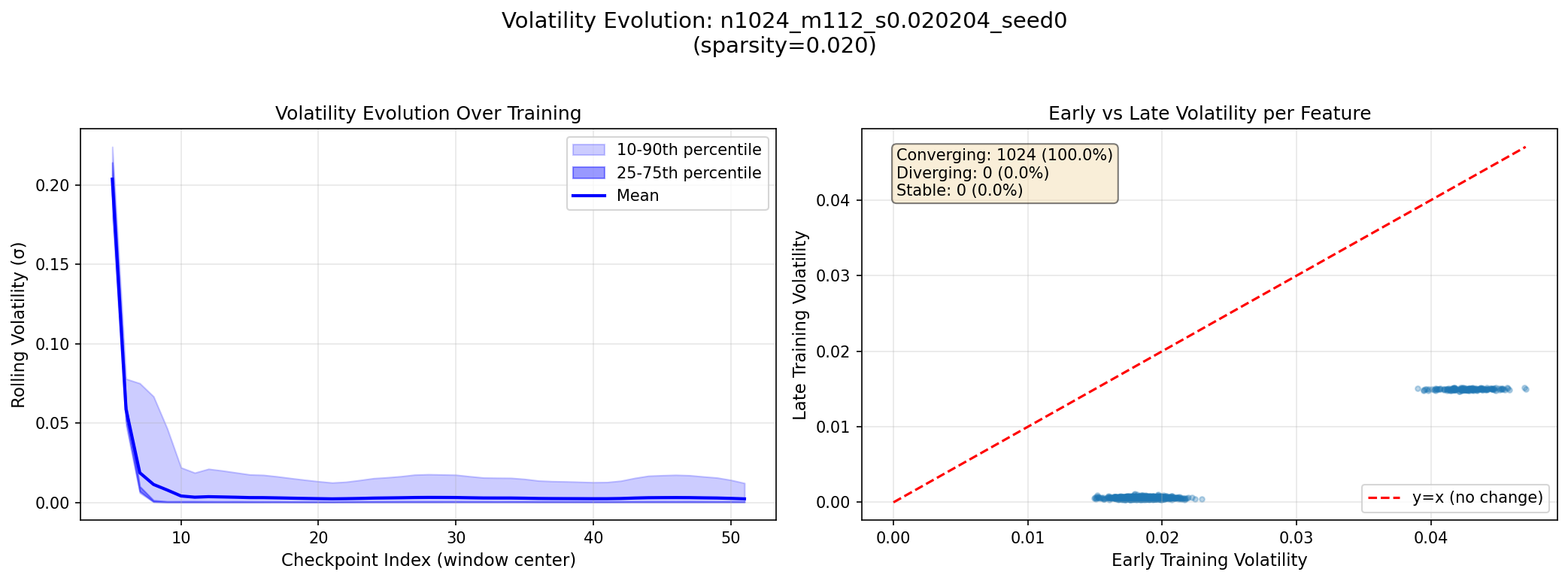}
        \captionof{figure}{Volatility m112-2}
    \end{minipage}
\end{minipage}
\vspace{2em}

\clearpage

\noindent\begin{minipage}{\linewidth}
    \centering
    \begin{minipage}{0.31\textwidth}
        \centering
        \includegraphics[width=\textwidth]{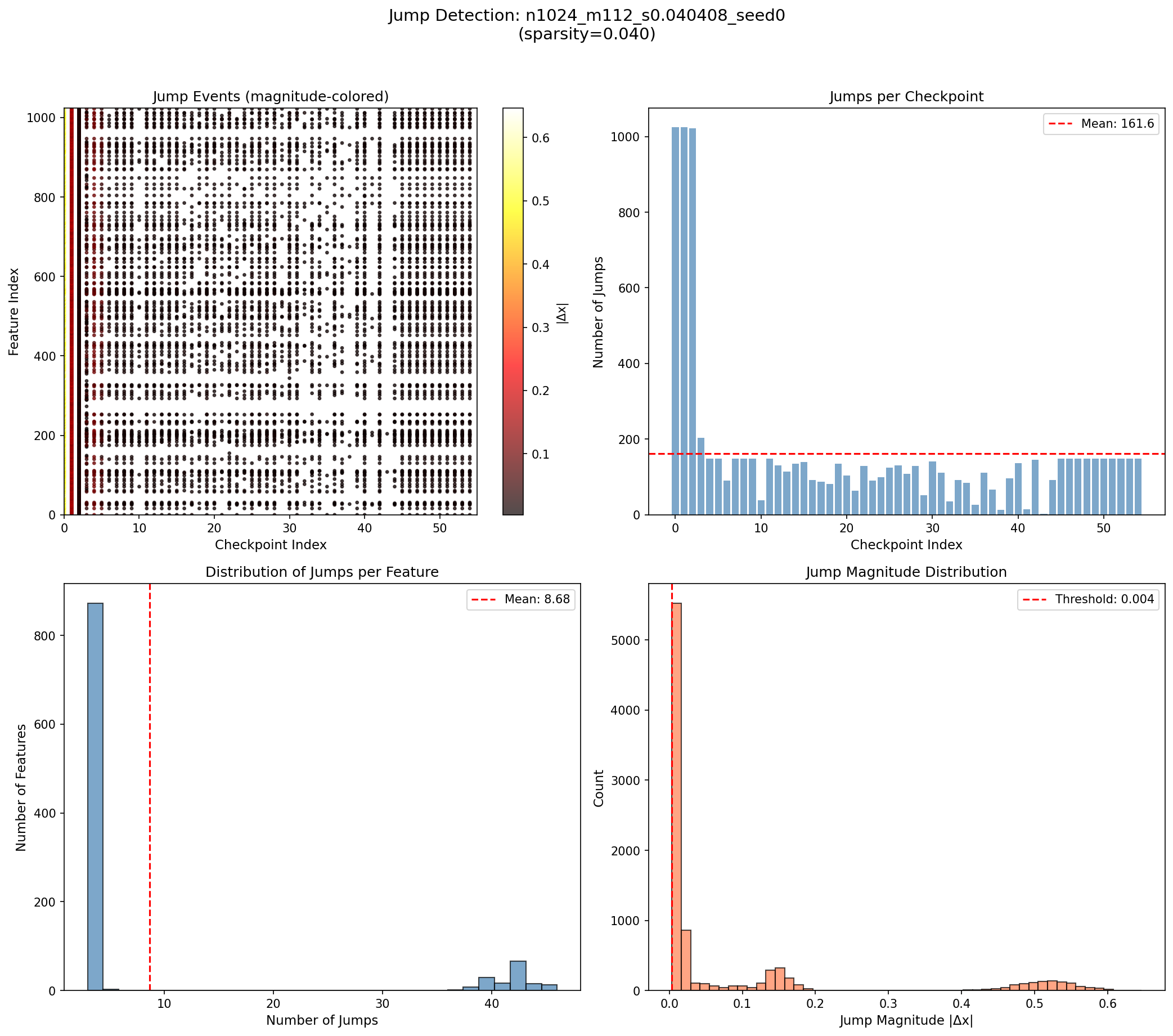}
        \captionof{figure}{Jump m112-4}
        \label{fig:m112_4_jump}
    \end{minipage}\hfill
    \begin{minipage}{0.31\textwidth}
        \centering
        \includegraphics[width=\textwidth]{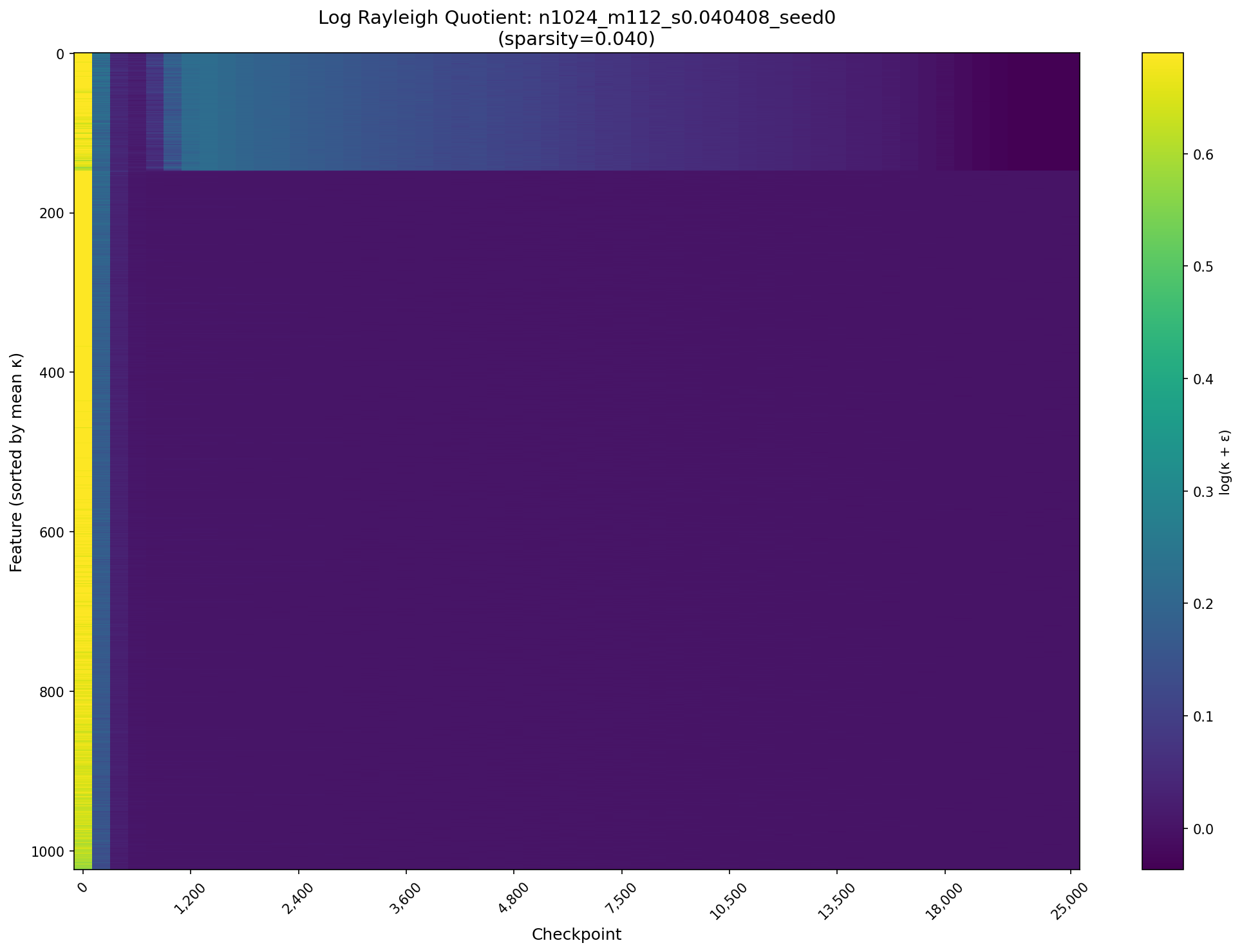}
        \captionof{figure}{Rayleigh m112-4}
    \end{minipage}\hfill
    \begin{minipage}{0.31\textwidth}
        \centering
        \includegraphics[width=\textwidth]{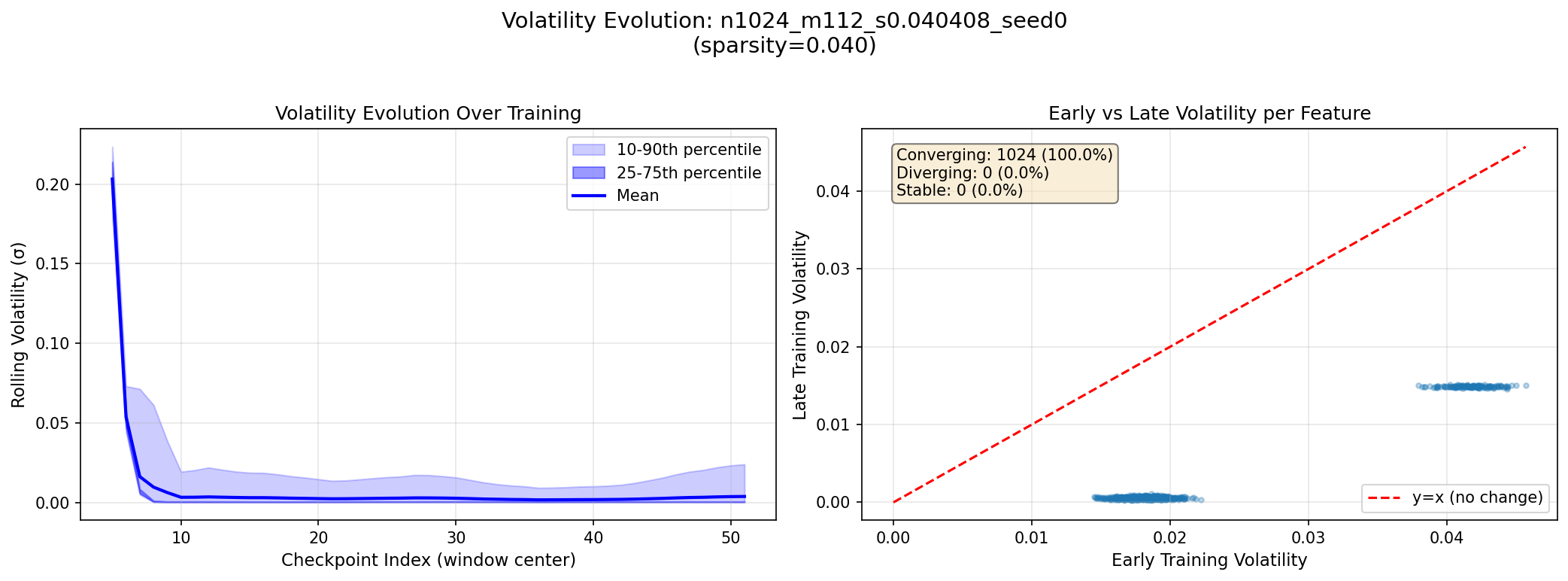}
        \captionof{figure}{Volatility m112-4}
    \end{minipage}
\end{minipage}
\vspace{2em}

\noindent\begin{minipage}{\linewidth}
    \centering
    \begin{minipage}{0.31\textwidth}
        \centering
        \includegraphics[width=\textwidth]{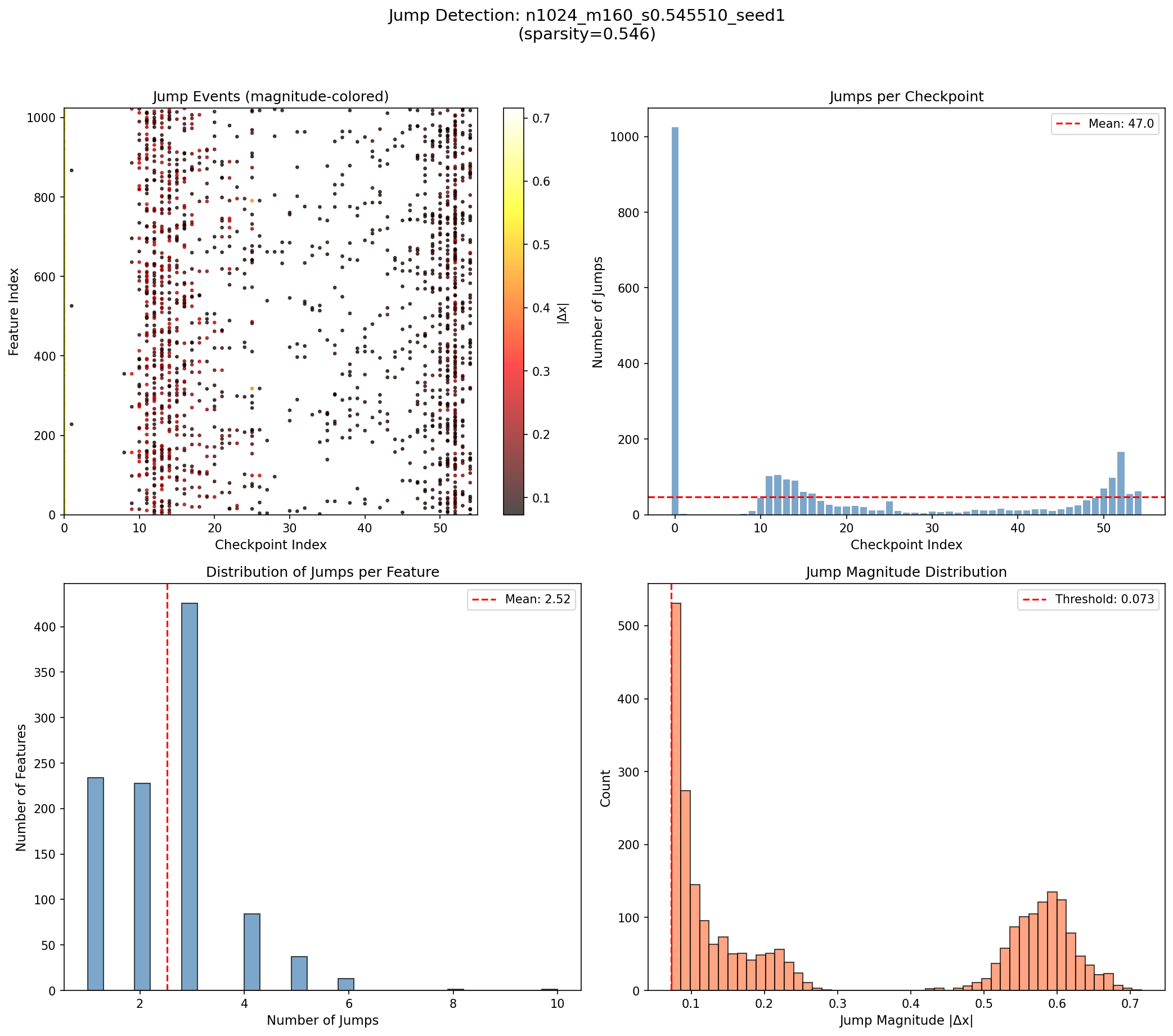}
        \captionof{figure}{Jump m160}
        \label{fig:m160_jump}
    \end{minipage}\hfill
    \begin{minipage}{0.31\textwidth}
        \centering
        \includegraphics[width=\textwidth]{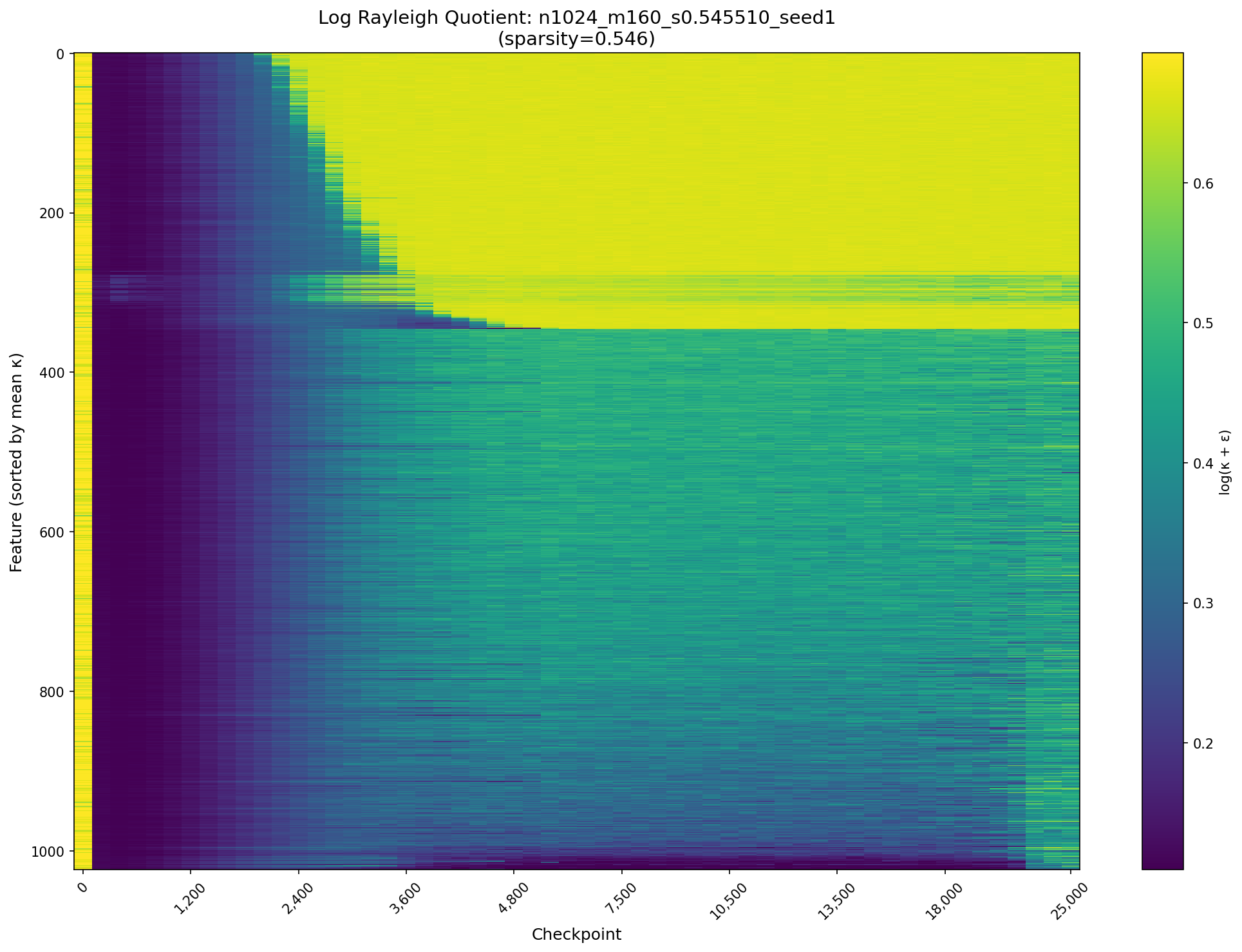}
        \captionof{figure}{Rayleigh m160}
    \end{minipage}\hfill
    \begin{minipage}{0.31\textwidth}
        \centering
        \includegraphics[width=\textwidth]{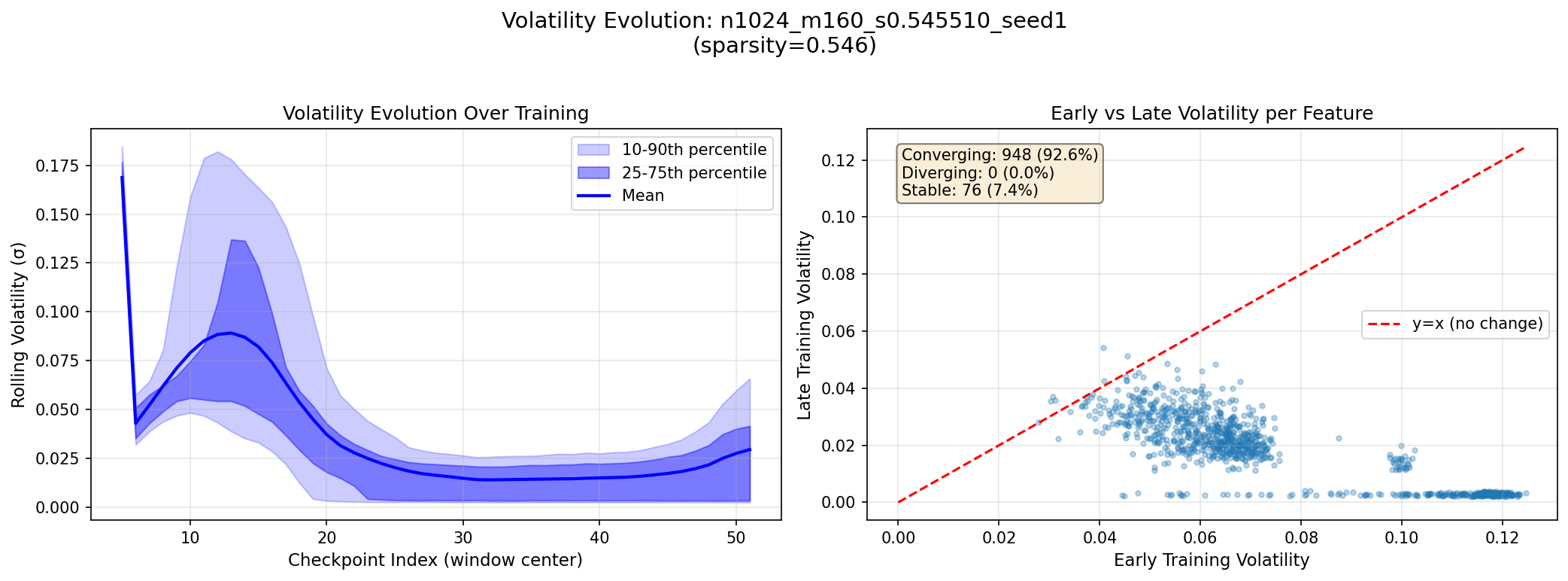}
        \captionof{figure}{Volatility m160}
    \end{minipage}
\end{minipage}
\vspace{2em}

\clearpage

\noindent\begin{minipage}{\linewidth}
    \centering
    \begin{minipage}{0.31\textwidth}
        \centering
        \includegraphics[width=\textwidth]{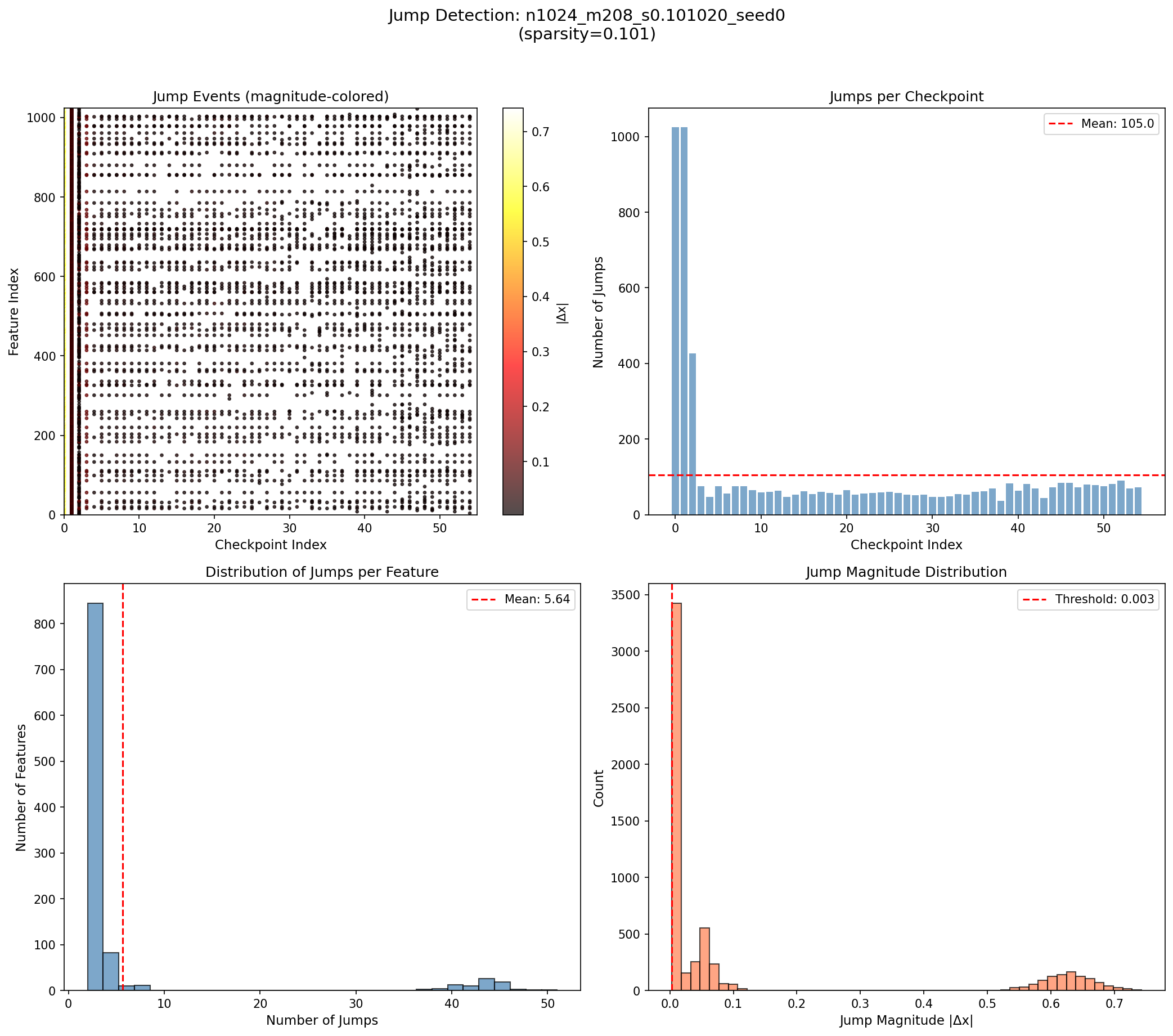}
        \captionof{figure}{Jump m208}
        \label{fig:m208_jump}
    \end{minipage}\hfill
    \begin{minipage}{0.31\textwidth}
        \centering
        \includegraphics[width=\textwidth]{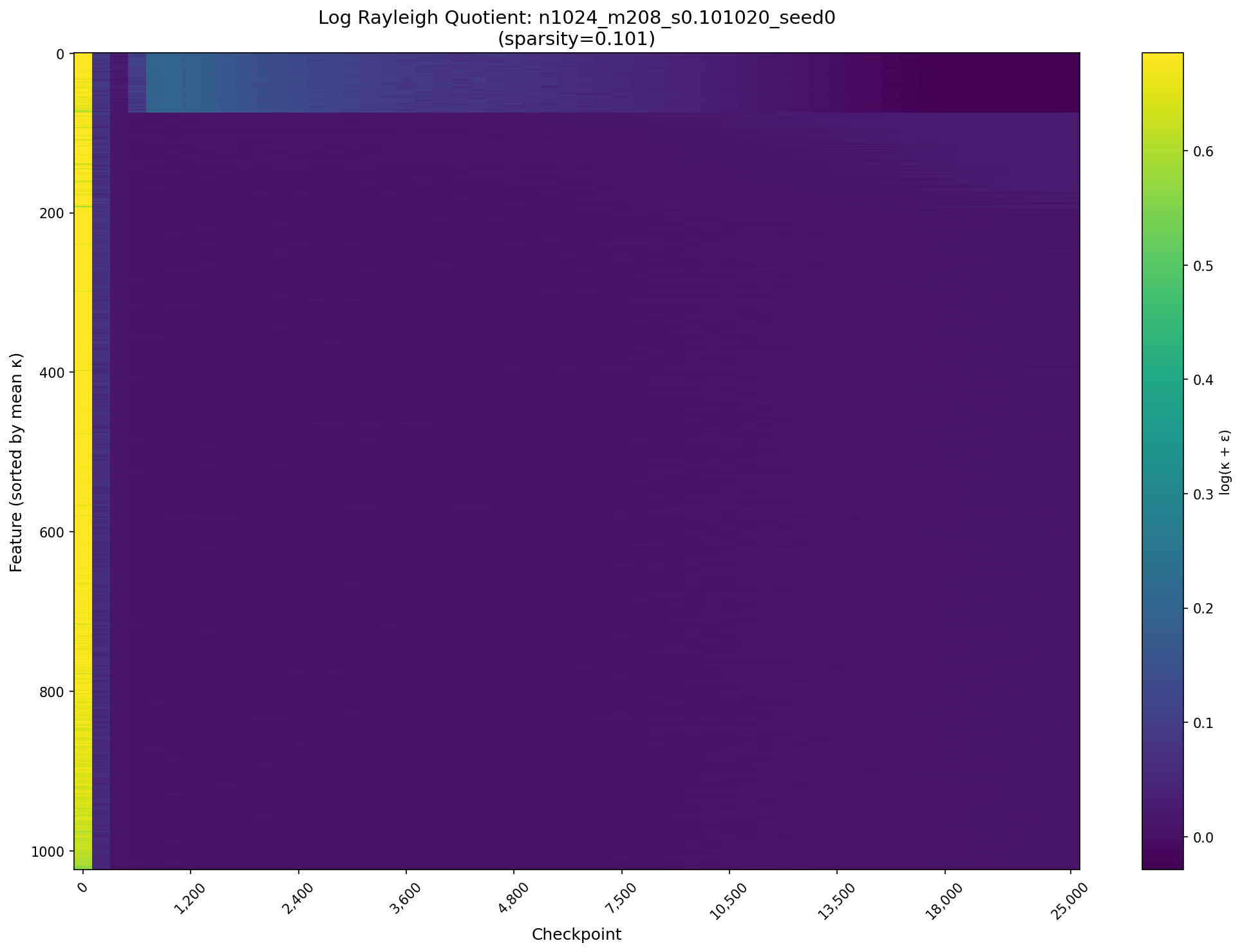}
        \captionof{figure}{Rayleigh m208}
    \end{minipage}\hfill
    \begin{minipage}{0.31\textwidth}
        \centering
        \includegraphics[width=\textwidth]{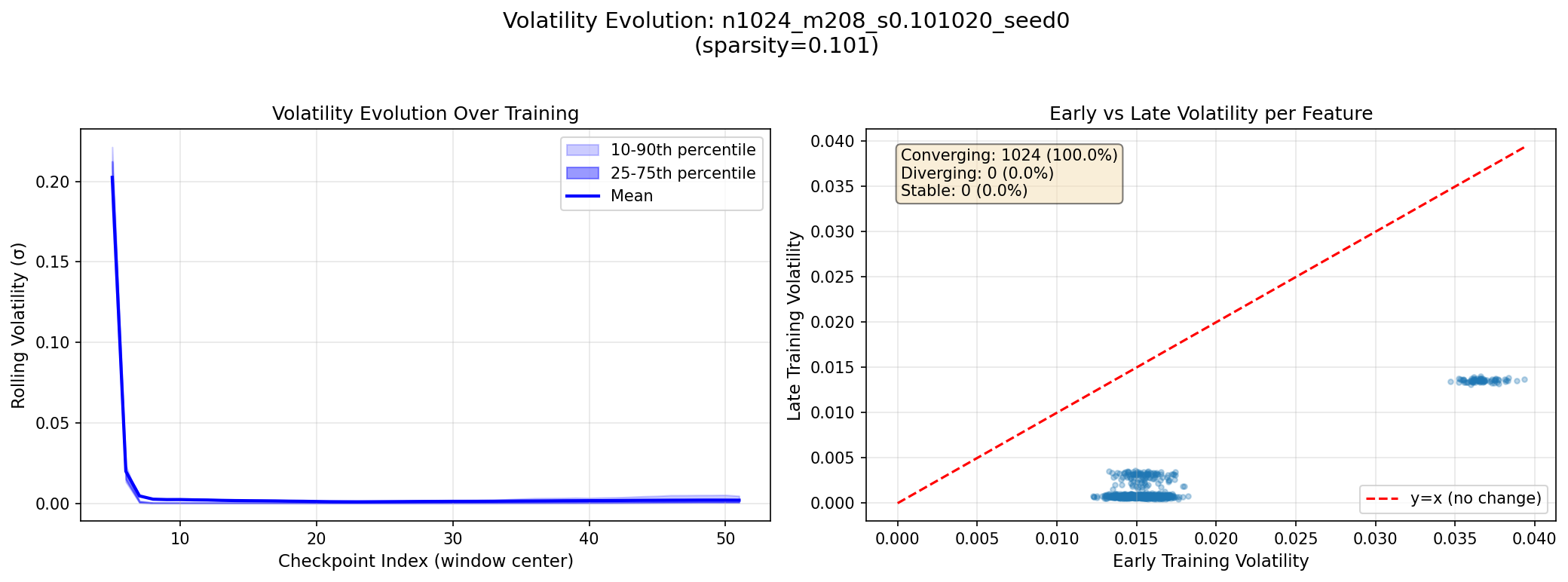}
        \captionof{figure}{Volatility m208}
    \end{minipage}
\end{minipage}
\vspace{2em}

\noindent\begin{minipage}{\linewidth}
    \centering
    \begin{minipage}{0.31\textwidth}
        \centering
        \includegraphics[width=\textwidth]{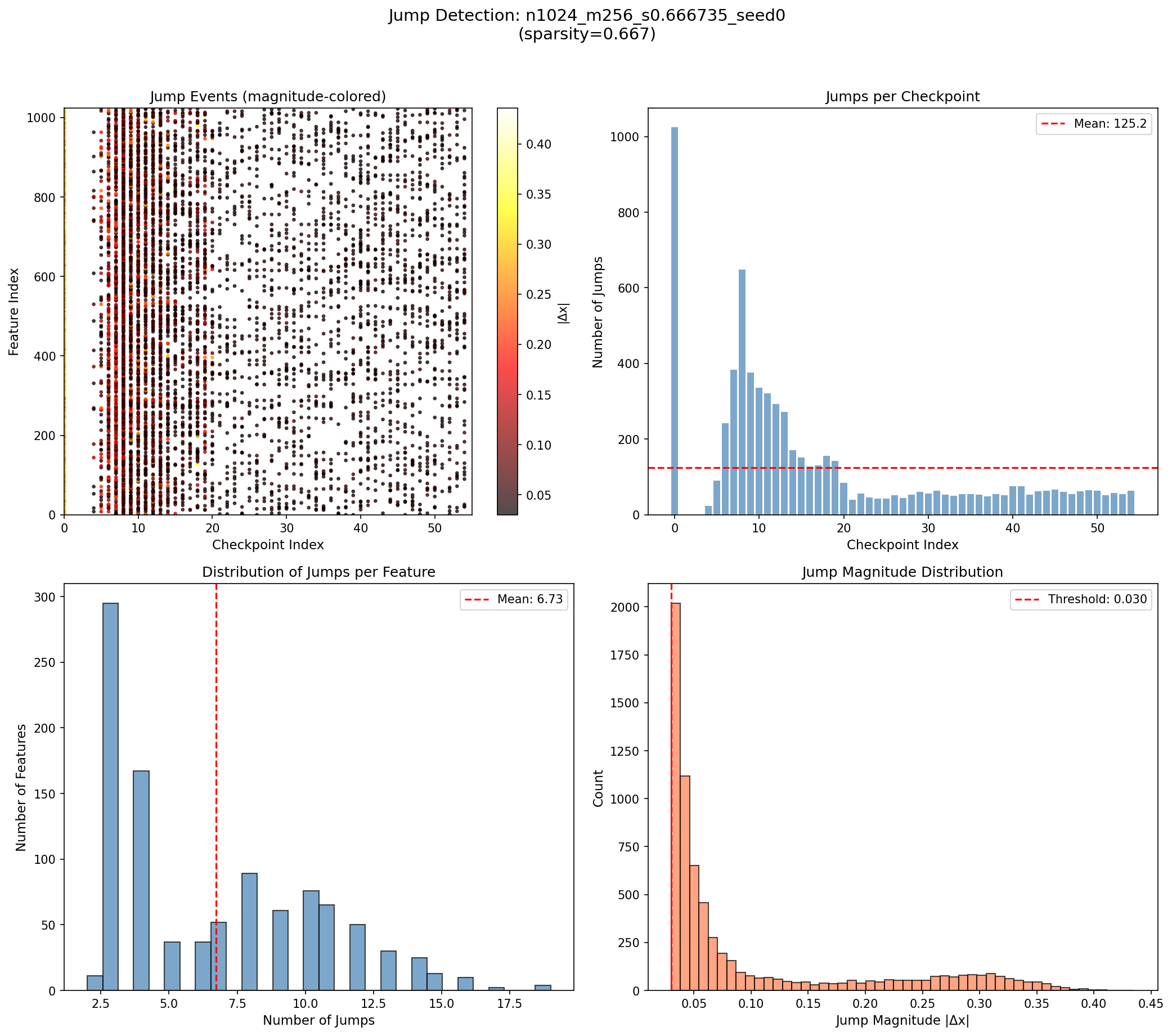}
        \captionof{figure}{Jump m256}
        \label{fig:m256_jump}
    \end{minipage}\hfill
    \begin{minipage}{0.31\textwidth}
        \centering
        \includegraphics[width=\textwidth]{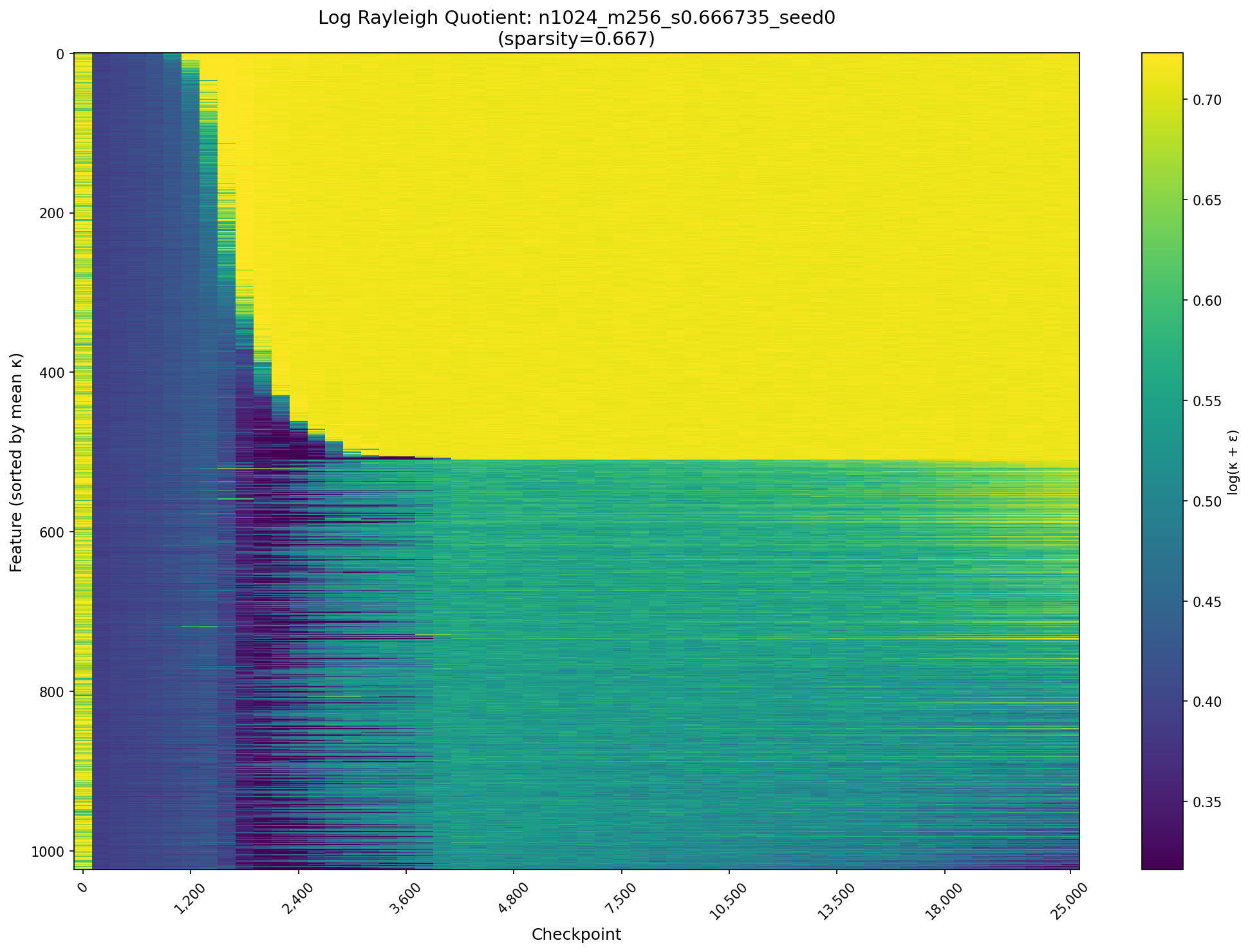}
        \captionof{figure}{Rayleigh m256}
    \end{minipage}\hfill
    \begin{minipage}{0.31\textwidth}
        \centering
        \includegraphics[width=\textwidth]{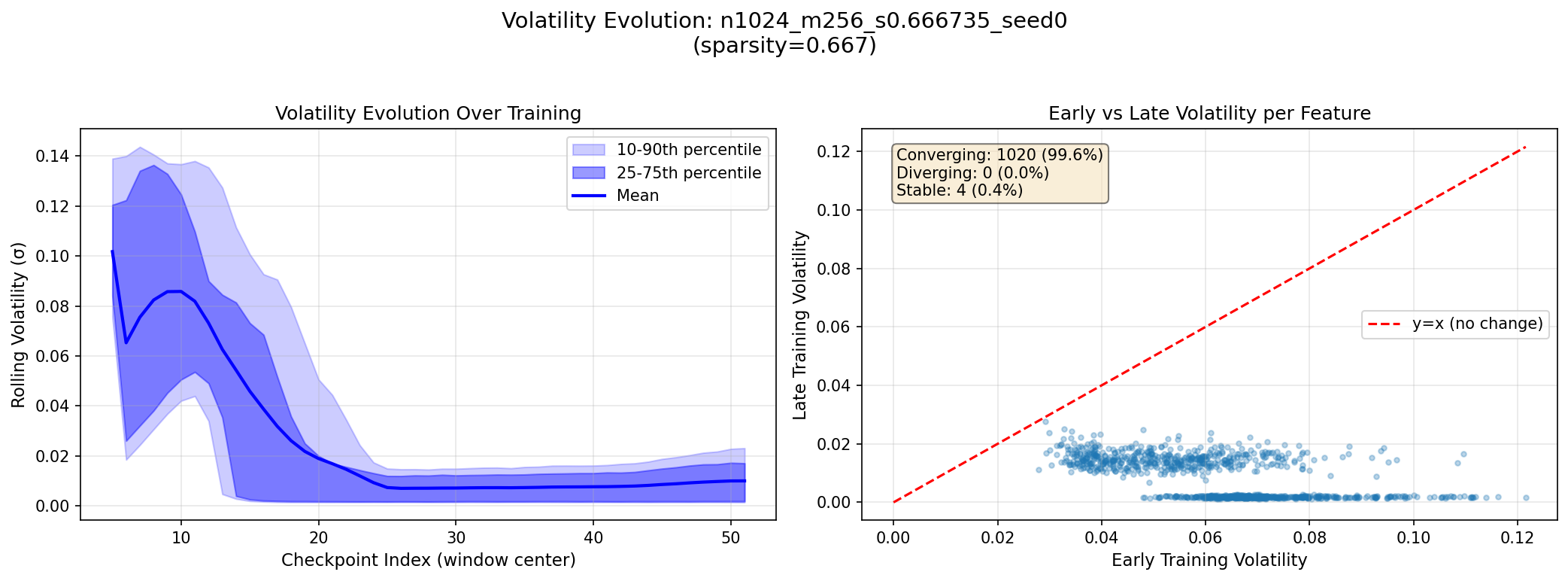}
        \captionof{figure}{Volatility m256}
    \end{minipage}
\end{minipage}
\vspace{2em}

\clearpage

\end{document}